\documentclass[11pt]{article}
\usepackage[T1]{fontenc}
\usepackage[utf8]{inputenc}
\usepackage{amsmath,amssymb,amsthm}
\usepackage[margin=1.15in]{geometry}
\usepackage{graphicx}
\usepackage{hyperref}
\hypersetup{colorlinks=true,linkcolor=black,citecolor=blue,urlcolor=blue,
  pdftitle={What Can Language Models Inherit from Solomonoff Induction? Limits, Attainable Properties, and What Served Conditionals Reveal},
  pdfauthor={L. Ozelim, A. Uthamacumaran, H. Zenil}}

\newtheorem{theorem}{Theorem}[section]
\newtheorem{lemma}[theorem]{Lemma}
\newtheorem{proposition}[theorem]{Proposition}
\newtheorem{corollary}[theorem]{Corollary}
\newtheorem{fact}[theorem]{Fact}
\theoremstyle{definition}
\newtheorem{definition}[theorem]{Definition}
\newtheorem{assumption}[theorem]{Assumption}

\theoremstyle{remark}
\newtheorem{remark}[theorem]{Remark}
\theoremstyle{plain}

\newcommand{\Ac}{\mathcal{A}}
\newcommand{\Cc}{\mathcal{C}}
\newcommand{\Mc}{\mathfrak{M}}
\newcommand{\E}{\mathbb{E}}
\newcommand{\Nb}{\mathbb{N}}

\newcommand{\eps}{\varepsilon}
\newcommand{\dl}[2]{D\!\left(#1 \,\|\, #2\right)}
\newcommand{\TV}{d_{\mathrm{TV}}}
\providecommand{\argmin}{\operatorname*{arg\,min}}
\title{What Can Language Models Inherit from Solomonoff Induction?\\
Limits, Attainable Properties, and What Served Conditionals Reveal}
\author{Luan Ozelim$^{1}$, Abicumaran Uthamacumaran$^{1}$ and Hector Zenil$^{1,2}$\\[2mm]
\small $^{1}$ Algocyte | Oxford Immune Algorithmics, Oxford University Innovation \\ \& 
\small London Institute for Healthcare Engineering, UK\\
\small $^{2}$ Algorithmic Dynamics Lab, Department of Biomedical Computing,\\
\small School of Biomedical Engineering and Imaging Sciences, King's Institute for AI,\\
\small King's College London, UK\\
}
\date{}
\providecommand{\CVST}{\mathcal{C}_V}
\begin{document}
\maketitle

\begin{abstract}
A familiar line of reasoning runs as follows: language models are flexible
approximators trained under a strictly proper loss, so scale brings them
closer to Solomonoff induction, and their predictive success is evidence of
that approach. We apply known results from algorithmic information theory,
universal prediction and Bayesian consistency. In its uniform form the
conclusion holds for no computable model: each has cheap environments it never
learns, and every uniformly computable family of total predictors shares one on
which each member fails beyond a horizon growing with its index, so ``closer''
can hold only pointwise, one environment at a time. Nor could the offered routes
certify even that: finite
evaluation cannot certify agreement on rarely reached contexts, perfect
corpus fit selects the corpus law as prior, and asymptotic criteria cannot
see the constants that carry Solomonoff's guarantee. Our lens is the Kraft
inequality read for predictors. Every model has at most the same budget of
predictive success, so Solomonoff's predictor, which leaves part of it
unspent, differs from a language model not in having more of it but in where
it spends it; on mutually singular families
this is exact, and only full-mass Bayes mixtures reach the frontier. We
then discuss what is inheritable, including in-context learning as
amortised Bayesian inference; what survives within a context window; which
Solomonoff-like properties a deployed system should have or avoid; and what
served conditionals determine. A closing table states what common practices
certify and what to report. Tests are developed in a companion paper.
\end{abstract}

\section{Introduction}
\label{sec:intro}

Language models are trained to minimise the log-loss, a strictly proper
scoring rule~\cite{gneiting2007}, and transformers are universal
approximators of continuous sequence-to-sequence functions of fixed length
on compact domains~\cite{yun2020}. A familiar line of reasoning joins these
facts: with enough data and compute, such a model approaches the
predictor that learns every computable source, Solomonoff's mixture over
all computable hypotheses under a simplicity prior~\cite{solomonoff1964,solomonoff1978},
and its measured predictive success
is evidence of that approach. We call this claim~(C).

Claim~(C) is not a position anyone defends at full strength, and we do
not attribute it to anyone. It is the common strengthening of a family
of positions that \emph{are} defended, stated at full strength so that
the obstacles can be located: each obstacle below falls at a
different point, and a claim held at half strength would hide which. The
positions themselves are the target, and Figure~\ref{fig:positions}
records for each one what survives and which result applies. The closest
published statement of (C) itself, the hypothesis that transformers
approximate Solomonoff induction better than any other extant sequence
predictor~\cite{youngwitbrock2024transformers}, is comparative and
carries no guarantee. Deployed models, whose compute-efficient training
stops well short of convergence~\cite{kaplan2020scaling,hoffmann2022chinchilla},
give (C) little direct
purchase. The weaker forms identify language modelling with
compression~\cite{deletang2024compression}, whose efficiency is reported to track
benchmark performance almost linearly~\cite{huang2024compression}; credit
networks with a simplicity bias matched to real
data~\cite{goldblum2024nfl,mingard2025occam}; report that neural
predictors meta-trained on samples from a universal prior show falling regret
with scale on such samples, with convergence to Solomonoff's predictor
established in the limit under realisability rather than
demonstrated~\cite{grau2024universal}; or describe language
models as computable approximations to Solomonoff
induction~\cite{wan2025solomonoff}. Each is a statement about \emph{where} a model predicts well, and each
survives this paper in part. What does not survive is the step they can be
read as licensing: that the part which survives \emph{certifies} the rest. None of
the results below says that such success is no evidence at all about allocation
elsewhere; they say that no finite amount of it can guarantee that allocation. Compression efficiency tracking benchmark performance is a
statement about a corpus and bears on no other allocation
(\S\ref{sec:closure}); a simplicity bias matched to real data is a
statement about one prior and not about a universal one
(\S\ref{sec:attainable}); falling regret on samples from a universal
prior is convergence \emph{under realisability}, which is the hypothesis
at issue (\S\ref{sec:scale}); and a computable approximation to
Solomonoff induction is what Section~\ref{sec:scale} shows
cannot be uniform. Figure~\ref{fig:positions} states this position by
position.

\paragraph{The four positions, and what survives each.}
\emph{Compression.} Language modelling is compression, and compression
efficiency is reported to track capability~\cite{deletang2024compression,huang2024compression};
its most quoted informal statement in machine learning casts unsupervised
learning as compression~\cite{sutskever2023observation}. The position long predates that evidence. Compression on a fixed
corpus was proposed as an objective replacement for the Turing test
\cite{mahoney1999}, is the basis of a standing prize for compressing a
snapshot of Wikipedia~\cite{hutterprize}, and was
folded into formal intelligence measures
by~\cite{hernandezorallo2000beyond,hernandezorallo2010}; the older form is the more specific
one, because it names the corpus and so concedes what the recent form
leaves open.
A rate measures allocation \emph{on the data compressed}, where
universality costs at most $K(q)+O(1)$ bits. What it does not do is
certify a constant on a declared class: a rate below PNG or FLAC says
nothing about one, and whether these models beat an adaptive low-order
context model on the same chunks is not
reported~\cite{deletang2024compression,heurteldepeiges2024compression}.
Compression by the shortest generated program, the reading closest to
$K$, remains far from strong baselines on real
data~\cite{kolmogorovtest2025}. Non-vacuous generalisation bounds do
hold, on the training law and with prediction
smoothing~\cite{lotfi2024nonvacuous,lotfi2024tokenlevel}. Survives as a
statement about one corpus (\S\ref{sec:closure},
Theorem~\ref{thm:si-underdet}).

\emph{Simplicity bias.} Networks prefer low-complexity functions and
sequences, as real data do~\cite{goldblum2024nfl,mingard2025occam}. This
is a direction of allocation, attainable and consistent with the
no-free-lunch argument, and it is measurable as truth-loss levels on a
declared class. What it is not is a universal prior, and asymptotic
signatures cannot identify it (\S\ref{sec:icl}, \S\ref{sec:si-tail},
Proposition~\ref{prop:slope}).

\emph{Meta-learned universal predictors.} Neural predictors meta-trained
on samples from a universal prior show falling regret with
scale~\cite{grau2024universal}, with convergence established in the limit
under realisability rather than demonstrated. This is amortised Bayesian
inference on the sampled class, certified in context by prior mass and
exponentially weak in the task's complexity. Outside the sampler's bounds
on program length and steps the prior is uncertified, and the network has
its own diagonal environment (Proposition~\ref{prop:pretrain-certify},
Theorems~\ref{thm:si-underdet} and~\ref{thm:si-diag}).

\emph{Computable approximation.} Training optimises program length
through loss minimisation, and next-token prediction approximates
Solomonoff induction better than other extant
predictors~\cite{wan2025solomonoff,youngwitbrock2024transformers}. The
first reading is the $K(q)$ premium of \S\ref{sec:closure}: bounded and
pathwise, but informative only where $q$ itself predicts well. The second survives as $\eps$-optimality on a
context distribution (Assumption~\ref{ass:si-eps}), so that ranking
predictors by closeness to Solomonoff's means comparing constants on a
declared class; convergence along a scaling family is pointwise,
not uniform (Remark~\ref{rem:price-of-universality}, \S\ref{sec:scale}
with \S\ref{sec:scale-exchange}, \S\ref{sec:attainable}).

\begin{figure}[!tp]
\centering
\includegraphics[width=\textwidth]{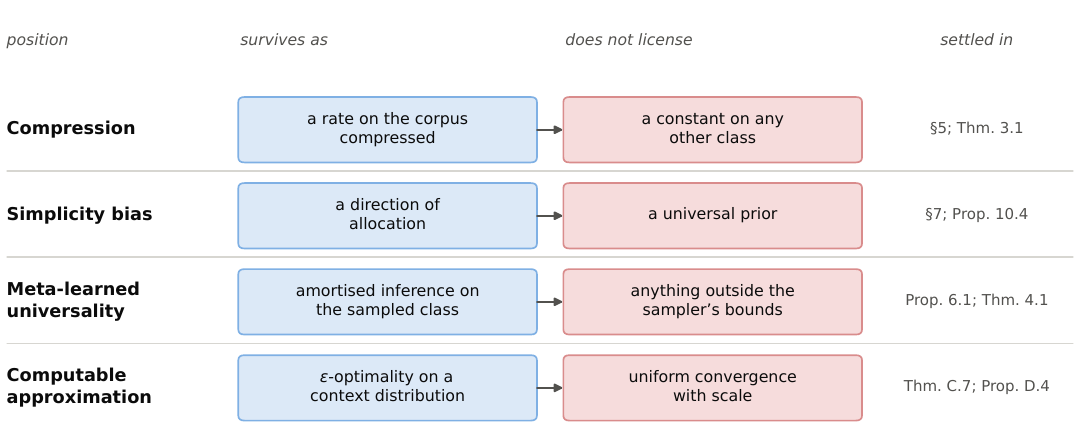}
\caption{The four positions (C) strengthens, and the line each one stops at. The arrow is the boundary this paper draws: to its left, what the position establishes and this paper grants; to its right, the step it is commonly read as licensing and does not. The adjudication is in the text above, and the right-hand column names where each line is drawn.}
\label{fig:positions}
\end{figure}

\paragraph{The Kraft lens.}
For any predictor, computable or not, at most $2^{b}$ strings of a given
length receive cumulative log-loss below $b$ bits. This is the Kraft
inequality~\cite{kraft1949,mcmillan1956} read for predictors, a lens
rather than a result: competence is a fixed budget. Solomonoff's predictor
has no more of it than a bigram model; it spends it elsewhere: within a factor
$2^{K(q)+O(1)}$ of what any computable $q$ spends on every sequence, which is what
universality means. Where a predictor spends the budget is what its prior is. Each obstacle to~(C)
below is a statement about allocation; where the lens is exact is stated
in Section~\ref{sec:frontier} (Figure~\ref{fig:frontier}).

\paragraph{Contributions.}
This is a discussion paper: most individual results are known, the dozen or so
new pieces are mostly elementary, and Appendix~\ref{sec:provenance} gives the role of
each. The terrain it crosses has a standard survey
in~\cite{rathmanner2011}, which treats universal induction as a theory of
inference and asks whether it is the right formalisation of Occam's
razor. The scope here is narrower in two ways. It is about
\emph{prediction} and not about agency: the universal-agent programme
that extends Solomonoff induction with a utility and an action channel
is out of scope, and where a result from it is used (Pareto
optimality among predictors in \S\ref{sec:frontier}), it is used for
its prediction half only. And the question is empirical rather than
foundational: not whether Solomonoff induction is the right ideal, but
which of its properties a trained and served language model can have,
which it cannot, and which of the routes commonly offered for the claim
that it approaches one
could establish it. Item~(ii) shows that the conclusion of~(C) in its uniform form
holds for no computable model, and that approach can hold only pointwise, and
items (i), (iii) and (v) that no route offered for it could certify it.
\begin{enumerate}
\item[(i)] \emph{Evaluation} (Section~\ref{sec:evaluation}). Near-optimal
expected log-loss bounds divergence on a set of contexts only in inverse
proportion to its probability, so $N$ queries, adaptive or not, cannot
separate models that differ only where the queries rarely go. Benchmark
and red-team evidence can falsify, but never certify, agreement with a
reference, and scaling curves track the $\eps$ of the test distribution
only.
\item[(ii)] \emph{Scale} (Section~\ref{sec:scale}). Putnam's
diagonal~\cite{putnam1963}, in Legg's form~\cite{legg2006}, gives every computable
model, closed pipelines with chain of thought included, an environment no
more complex than itself on which it loses $\log V$ bits per step
($\log(2V/3)$ for real-valued conditionals) while
Solomonoff's predictor pays a constant (for tools and retrieval see
Section~\ref{sec:si-prelim}). A uniformly computable family of total predictors
shares one such
environment; this forces nothing on a given model at an observable
horizon, but shows that apparent convergence along a scaling family rests
on replacing ``one model that is good at every horizon'' by ``at every
horizon, some model that is good''.
\item[(iii)] \emph{Data} (Section~\ref{sec:closure}). A predictor with zero
regret on its corpus coincides with the corpus law, so has finite regret
only on sources absolutely continuous with respect to it, and at most
$\log c$ on those it dominates by $c$: fitting a corpus chooses a prior.
Solomonoff's predictor pays at most $K(q)+O(1)$ bits more than it on the
corpus, below one percent of the corpus code length at about $2$ bits per
token, $16$-bit weights and at least $10^{3}$ tokens per parameter. Data
curation and mixture weights are prior design, and post-training
reallocates under the same budget.
\item[(iv)] \emph{Inheritance} (Section~\ref{sec:attainable}).
Solomonoff-form guarantees are attainable on declared classes by Bayes
mixtures, the formal content of in-context learning as amortised Bayesian
inference. We state what a pretraining loss certifies about the in-context
allocation, and why demonstrations can steer it.
\item[(v)] \emph{Asymptotics} (Section~\ref{sec:si-tail}). Consistency,
merging and vanishing per-step excess loss survive finite modification and
reweighting of the
prior~\cite{blackwelldubins1962,wood2013nonequivalence,leike2015bad}, while
Solomonoff's guarantee lives in the constants these operations change.
Such criteria are blind to the prior.
\item[(vi)] \emph{Deployment} (Section~\ref{sec:deployed}). Within a
context window, conservation, underdetermination, the single diagonal and
prior reading keep their force. On stationary sources a bounded-window
model repays the cost of identifying its environment at most about once
per window, and exactly that on a copy source; only temperature one keeps
Solomonoff's bound; truncation is dogmatism by design; and the
character-level law is a marginal over tokenisations.
\item[(vii)] \emph{Properties} (Section~\ref{sec:properties}). We record
which ``Solomonoff-like'' properties are attainable and, as a judgement,
rank them; several attainable ones are attack surfaces, and we argue,
without proof, that a trustworthy system should have Solomonoff-like
beliefs and non-Solomonoff-like constraints.
\item[(viii)] \emph{Measurement} (Section~\ref{sec:tests}). We define what
served conditionals determine without a prior: a coherence residual and
the arbitrage it prices, invariance nulls, and the implied prior over a
declared class with its identifiability. The companion paper (Paper~B)~\cite{paperB}
turns them into tests and applies them to served models.
\end{enumerate}
Section~\ref{sec:discussion} draws the consequences for research practice.

\paragraph{How to read this paper.}
Section~\ref{sec:background} fixes notation and states the lens. Proofs
are in the appendices, which open with a table of notation. Classical
results are Facts, adapted or sharpened ones carry an ``after'' citation,
and Appendix~\ref{sec:provenance} records the status of each.
Paragraphs headed ``In words'' restate key results. A shorter route for
machine learning readers: Figure~\ref{fig:positions}, the Kraft
lens (Section~\ref{sec:conservation} and Figure~\ref{fig:frontier}),
Sections~\ref{sec:scale}, \ref{sec:closure} and~\ref{sec:deployed}, and
Table~\ref{tab:discussion-practice}. Resource-bounded forms of the limits are in Appendix~\ref{sec:rb}.

\paragraph{What is claimed and what is not.}
Nothing below shows that language models generalise poorly, that scale is
unproductive, or that any particular system is deficient: no computable
system has the universal property, so impossibility is not deficiency.
Nor is attainability universality (a system matching a declared posterior
exactly has its own diagonal environment), or uncertifiability
unmeasurability (coherence, invariance and a prior over a declared class
remain measurable). The claims concern what a class of
arguments can establish and a class of measurements can certify, for
idealised predictors at stated horizons, and Section~\ref{sec:deployed}
says which survive deployment.
The paper measures no language model of its own; the served-model readings it
relays in Section~\ref{sec:discussion} are Paper~B's and are labelled as such.
Its own computations are the worked numerical illustrations and the frontier
figure, which are produced by the released scripts (\texttt{demo\_paperA\_numbers.py}
and \texttt{fig\_frontier.py}).

\providecommand{\qbar}{\bar q}
\providecommand{\Kraft}{\mathsf{Kr}}
\providecommand{\Fam}{\mathcal{F}}

\section{Background and the Kraft lens}
\label{sec:background}

\subsection{Predictors, regret and Solomonoff's predictor}
\label{sec:si-prelim}

Fix a finite alphabet $\Ac$ with $|\Ac|=V\ge 2$. Finite strings are
$x\in\Ac^{*}$, with length $|x|$, empty string $\epsilon$ and prefix order
$\sqsubseteq$; $x_{1:n}$ is a length-$n$ prefix and $x_{<t}:=x_{1:t-1}$.
Logarithms are base~$2$ and information is in bits.

\begin{definition}[Predictors, log-loss, regret and excess loss]
\label{def:si-predictor}
A \emph{predictor} $q$ assigns to every context $x\in\Ac^{*}$ a vector
$q(\cdot\mid x)\ge 0$ with $\sum_a q(a\mid x)\le 1$, extended to strings by
$q(x_{1:n}):=\prod_{t\le n}q(x_t\mid x_{<t})$. It is a \emph{semimeasure},
and a probability measure on $\Ac^{\infty}$ if every conditional sums to
one~\cite{zvonkinlevin1970,li2019kolmogorov}. The \emph{log-loss} is
$L_n(q;x_{1:n}):=-\log q(x_{1:n})$, and $L_{|x|}(q;x):=-\log q(x)$ for a
string of any length. For a probability measure $\mu$ (a \emph{source})
the \emph{regret} of $q$ at horizon $n$ is
\begin{equation}
\label{eq:si-regret}
R_n(q;\mu):=\E_{\mu}\Bigl[\log\frac{\mu(X_{1:n})}{q(X_{1:n})}\Bigr]
=\sum_{t=1}^{n}\E_{\mu}\,\dl{\mu(\cdot\mid X_{<t})}{q(\cdot\mid X_{<t})},
\end{equation}
with $\dl{p}{s}:=\sum_ap(a)\log\frac{p(a)}{s(a)}\ge 0$ for subprobability
$s$; it is nondecreasing in $n$, and $R_\infty(q;\mu):=\lim_nR_n(q;\mu)$.
For a deterministic environment $z\in\Ac^{\infty}$,
$R_n(q;z):=L_n(q;z_{1:n})$. For a \emph{reference} predictor $p$ (a source
$\mu$, Solomonoff's $M$, or another model), $L_n(q;x)-L_n(p;x)$ is the
\emph{excess loss of $q$ over $p$}, and
$-\log q(x_t\mid x_{<t})+\log p(x_t\mid x_{<t})$ its per-step excess loss;
over the source, the expected excess loss is the regret.
\end{definition}

A softmax language model whose conditionals sum to one defines a measure on
infinite \emph{token} sequences absent truncation; its law on finite,
end-of-sequence-terminated strings is a probability distribution only when the
end-of-sequence mass is tight~\cite{du2023tight}; on
characters its law is a marginal over tokenisations
(Section~\ref{sec:deployed}).

\begin{definition}[Computable predictors]
\label{def:si-computable}
A predictor is \emph{rational-computable} if a Turing machine outputs the
exact rational $q(a\mid x)$ on input $(x,a)$, and \emph{computable} if
$q(a\mid x)$ is a computable real uniformly in $(x,a)$. $K$ denotes prefix
Kolmogorov complexity~\cite{li2019kolmogorov}, and $K(q)$ that of a
canonical index of such a machine.
\end{definition}

A deployed model taken as a \emph{closed pipeline} (finitely many
finite-precision parameters, a bounded window, and decoding under an
enforced computation clock) is rational-computable (rounded outputs need not
sum to one, which the diagonal tolerates; Remark~\ref{rem:rb-real}), as is any closed
pipeline around it, such as clocked chain of thought. This is not in
tension with the Turing-completeness of
attention~\cite{perez2021turing}: that construction needs unbounded
precision in the residual stream and an unbounded number of decoding
steps, and these are the two resources that a deployment does not
have. Remove either (fix the precision, with the bounded window, or enforce the clock) and
the pipeline is a finite object computing rationals, which is all the
results below require. A pipeline that queries the world (retrieval,
tool loops, retraining on outside data) is not such a closed object: it is
computable only relative to what it queries, and the
diagonal constructions of Section~\ref{sec:scale} then yield environments
computable relative to it. A served model is moreover a
random measure (serving can be nondeterministic across
batches~\cite{he2025nondeterminism}; Paper~B); randomised predictors are treated in
Section~\ref{sec:scale-decoding}.

\begin{definition}[Solomonoff's predictor]
\label{def:si-M}
Fix a universal monotone machine $U$~\cite[Ch.~4]{li2019kolmogorov}
(unrelated to the reference machine of Appendix~\ref{sec:rb}). The
universal semimeasure is $M(x):=\sum_{p:\,x\sqsubseteq U(p)}2^{-|p|}$, over
minimal programs whose output extends $x$; it is lower semicomputable and
positive on every string. $K(\mu)$ is the complexity of a program computing
a computable measure $\mu$ to any requested precision, and $K(\nu)$ that of
a program enumerating a lower-semicomputable semimeasure $\nu$ from below.
\end{definition}

\begin{fact}[Dominance and Solomonoff's bound; \cite{zvonkinlevin1970,solomonoff1978,li2019kolmogorov,hutter2005uai}]
\label{fact:si-dominance}\label{fact:si-solomonoff}\label{fact:si-program}
For every lower-semicomputable semimeasure $\nu$,
$M(x)\ge 2^{-K(\nu)-O(1)}\,\nu(x)$ for all $x\in\Ac^{*}$. Hence
$R_\infty(M;\mu)\le K(\mu)+O(1)$ bits for every computable measure $\mu$,
so the per-step expected excess loss of $M$ over $\mu$ vanishes; and if
$U$ on program $p$ emits $z\in\Ac^{\infty}$, then $R_n(M;z)\le|p|$ for all
$n$, since $p$ contributes $2^{-|p|}$ to every $M(z_{1:n})$.
\end{fact}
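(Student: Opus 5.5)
The plan has three parts: the dominance inequality, then the regret bound deduced from it, and finally the pathwise bound for a deterministic environment. The one substantive step is dominance.

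\emph{Dominance.} I would use the standard route through the enumeration of lower-semicomputable semimeasures~\cite{zvonkinlevin1970,li2019kolmogorov}. Given a program of length $K(\nu)$ enumerating $\nu$ from below, I would build a monotone machine $T_\nu$ whose output distribution on uniformly random input bits dominates $\nu$. The machine reads random bits as a real $r\in[0,1)$. It refines $x$ symbol by symbol as the lower approximations of $\nu(xa)$ grow. It outputs $a$ once $r$ falls in the subinterval of length currently certified for $\nu(xa)$, with these subintervals laid side by side inside the interval allotted to $x$.

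Because $\nu$ is a semimeasure, $\sum_a\nu(xa)\le\nu(x)$, so the subintervals never overflow. This is the step I expect to be the main obstacle, namely making the interval allocation consistent while the approximations only grow. I would handle it by allocating only the increments and appending them in order, following the construction in~\cite[Ch.~4]{li2019kolmogorov}. In the limit the measure of inputs whose output extends $x$ is at least $\nu(x)$.

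Universality of $U$ then gives, for some prefix $s_\nu$ with $|s_\nu|\le K(\nu)+O(1)$, that $U(s_\nu p)=T_\nu(p)$. Summing $2^{-|s_\nu p|}$ over the relevant minimal programs yields $M(x)\ge 2^{-K(\nu)-O(1)}\nu(x)$.

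\emph{Regret.} A computable measure $\mu$ is in particular lower semicomputable, and its semimeasure complexity is at most $K(\mu)+O(1)$, since one can convert a precision oracle into an enumeration from below. Taking $\nu=\mu$, we get $\log\bigl(\mu(x)/M(x)\bigr)\le K(\mu)+O(1)$ pointwise. Taking $\E_\mu$ in \eqref{eq:si-regret} bounds $R_n(M;\mu)$ uniformly in $n$, and monotonicity in $n$ gives the bound on $R_\infty$.

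Since $R_\infty(M;\mu)=\sum_t\E_\mu\dl{\mu(\cdot\mid X_{<t})}{M(\cdot\mid X_{<t})}$ is a convergent series of nonnegative terms, its terms tend to zero. In fact they are summable, so the per-step expected excess loss vanishes and its Cesàro averages are $O(1/n)$.

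\emph{Deterministic case.} If $U(p)$ emits $z$, I would fix $n$ and choose a minimal prefix $p'\sqsubseteq p$ whose output already extends $z_{1:n}$. That $p'$ contributes $2^{-|p'|}\ge2^{-|p|}$ to $M(z_{1:n})$, so $L_n(M;z)=-\log M(z_{1:n})\le|p|$. This step is immediate from Definition~\ref{def:si-M}.
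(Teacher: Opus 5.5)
Your proposal is correct and takes the route the paper relies on: the paper states this as a Fact, defers dominance to the Zvonkin--Levin/Li--Vit\'anyi monotone-machine construction you sketch, and gets the regret bound by taking $\E_\mu$ of the pointwise log-ratio bound. It justifies the last clause by the contribution of $p$ to $M(z_{1:n})$, which your choice of a minimal prefix $p'\sqsubseteq p$ makes slightly more careful. One small point: the limit inequality $\sum_a\nu(xa)\le\nu(x)$ does not by itself prevent overflow at finite stages. The allocation should therefore run on stage-wise approximations that are themselves semimeasures, as the cited construction arranges; for example, build them from the raw lower approximations $\phi_s$ bottom-up, setting $\nu_s:=0$ beyond length $s$ and $\nu_s(x):=\max\{\phi_s(x),\sum_a\nu_s(xa)\}$ for $|x|\le s$.
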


\noindent This is what it would mean to ``achieve Solomonoff induction'':
one predictor whose total regret on \emph{every} computable environment is
bounded by that environment's description length, uniformly in the
horizon.

\paragraph{Bayes mixtures.}
For a countable family $\Fam=(\mu_k)_{k\in J}$ of probability measures and
weights $w_k\ge 0$ with $\sum_kw_k\le 1$, the \emph{Bayes mixture}
$\xi_w$ is the predictor with
\begin{equation}
\label{eq:bg-mixture}
\xi_w(x):=\sum_{k\in J}w_k\,\mu_k(x)\qquad(x\ne\epsilon).
\end{equation}
Its conditionals weight the components by their posterior, the mass
$1-\sum_kw_k$ is lost at the first step, and $\xi_w\ge w_k\mu_k$ gives
$R_\infty(\xi_w;\mu_k)\le\log(1/w_k)$. We call $\xi_w$ \emph{full-mass} if
$\sum_kw_k=1$.

\paragraph{The premise granted to (C)\@.}
The approximation argument delivers at most the following.

\begin{assumption}[$\eps$-optimality]
\label{ass:si-eps}
There are a \emph{context distribution} $\rho$ on $\Ac^{*}$ (the prefix
law of the training or test source, or a benchmark's prompt distribution),
target conditionals $\mu(\cdot\mid c)$, and $\eps\ge 0$ with
\begin{equation}
\label{eq:si-eps}
\E_{c\sim\rho}\,\dl{\mu(\cdot\mid c)}{q(\cdot\mid c)}\le\eps .
\end{equation}
The asymptotic form is a family $(q_i)_{i\ge 1}$ in which, for every
$\eps>0$, some member satisfies~\eqref{eq:si-eps}.
\end{assumption}

Strict propriety and universal approximation (Section~\ref{sec:intro})
are the usual reasons for granting a network within $\eps$ of the true
conditionals. We grant this without the latter's restrictions (fixed
length, compact domain, continuous target) and concede estimation and
optimisation. Two features of~\eqref{eq:si-eps} matter below: it concerns
only $(\mu,\rho)$, and when $\rho$ is the
\emph{position-averaged prefix law} at horizon $n$ (draw $t$ uniformly from
$\{1,\dots,n\}$, then $c=X_{<t}$ with $X\sim\mu$), it yields the linear
envelope $R_n(q;\mu)\le\eps n$, whereas Fact~\ref{fact:si-solomonoff} gives
a constant envelope; for fixed $\eps>0$ the first never implies the second.

\subsection{Conservation as a lens}
\label{sec:conservation}

\begin{fact}[Kraft inequality and conservation of competence; \cite{kraft1949,mcmillan1956}, {\cite[Ch.~2]{li2019kolmogorov}}]
\label{thm:conservation}\label{lem:antichain}
Let $q$ be a predictor and $S\subseteq\Ac^{*}$ an antichain (no element is
a prefix of another). Then $\sum_{x\in S}q(x)\le 1$. Hence, for every
$b\ge 0$, $\bigl|\{x\in S:L_{|x|}(q;x)\le b\}\bigr|\le 2^{b}$; in
particular at most $2^{b}$ strings of each length $n$ have
$L_n(q;x)\le b$.
\end{fact}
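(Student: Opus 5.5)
The plan is to prove the antichain inequality first, by induction on the length of strings, using only the defining property $\sum_a q(a\mid x)\le 1$ of a predictor. The key identity is that for any $x\in\Ac^{*}$,
\[
\sum_{a\in\Ac} q(xa)=q(x)\sum_{a\in\Ac}q(a\mid x)\le q(x),
\]
since $q(xa)=q(x)\,q(a\mid x)$ by the product extension in Definition~\ref{def:si-predictor}. Writing $q(S)$ for $\sum_{x\in S}q(x)$, I would first reduce to finite antichains: the sum is a supremum over finite subsets, and every finite subset of an antichain is an antichain. For a finite antichain $S$, let $m:=\max_{x\in S}|x|$ and consider the set $C_m$ of all strings of length $m$. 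Each $x\in S$ determines the cylinder $\{y\in C_m: x\sqsubseteq y\}$. Because $S$ is an antichain, these cylinders are pairwise disjoint: a common extension $y$ of $x$ and $x'$ would force one of them to be a prefix of the other.

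Iterating the displayed identity $m-|x|$ times gives $\sum_{y\in C_m,\,x\sqsubseteq y}q(y)\le q(x)$, which points the wrong way for a direct bound. So I would instead argue top-down, by induction on $m$, that $q(S)\le q(\epsilon)=1$ for every antichain of strings of length at most $m$. The case $m=0$ is trivial: $S\subseteq\{\epsilon\}$.

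For the inductive step, take such an $S$ with $\epsilon\notin S$; otherwise $S=\{\epsilon\}$ and we are done. Split $S$ by first letter into $S_a:=\{x: ax\in S\}$. Each $S_a$ is an antichain of strings of length at most $m-1$. The shifted predictor $q_a(\cdot\mid x):=q(\cdot\mid ax)$ is again a predictor, and $q(ax)=q(a\mid\epsilon)\,q_a(x)$. By induction $q_a(S_a)\le 1$, so
\[
q(S)=\sum_a q(a\mid\epsilon)\,q_a(S_a)\le\sum_a q(a\mid\epsilon)\le 1.
\]
Passing to the supremum over finite subsets handles infinite antichains.

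The counting statement then follows by a Markov-type step. If $T:=\{x\in S:L_{|x|}(q;x)\le b\}$, then each $x\in T$ has $q(x)\ge 2^{-b}$, so $|T|\,2^{-b}\le q(T)\le q(S)\le 1$, giving $|T|\le 2^{b}$; $T$ is an antichain as a subset of $S$. For the final clause, the strings of a fixed length $n$ form an antichain, because distinct strings of equal length are never prefixes of one another.

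The only delicate point is the direction of the inequality in the cylinder argument. That is why I use the first-letter decomposition with shifted predictors rather than extending to a common length, since the latter would need $q(x)\le\sum_{y}q(y)$, which fails for semimeasures. Strings with $q(x)=0$ cause no issue: they have infinite loss and are excluded from $T$.
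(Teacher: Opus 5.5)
Your proof is correct and complete. The paper gives no proof of this statement: it records it as a Fact, citing Kraft, McMillan and Li--Vit\'anyi, and uses it as a black box (for example in Corollary~\ref{cor:meanfloor}, Fact~\ref{lem:si-tail} and Proposition~\ref{prop:nokform}). There is therefore no argument of the paper's to match, and the useful comparison is with a common alternative route.

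That route removes the direction problem you flagged without any induction. Top $q$ up to a measure $\tilde q$ by adding the leaked conditional mass $1-\sum_a q(a\mid x)$ to one fixed symbol at every context $x$. Each conditional of $\tilde q$ is at least the corresponding conditional of $q$, so $\tilde q(x)\ge q(x)$ for every string. Since the conditionals of $\tilde q$ sum to one, iterating your displayed identity now gives an equality, $\sum_{y\in\Ac^{m},\,x\sqsubseteq y}\tilde q(y)=\tilde q(x)$. For a finite antichain $S$ of strings of length at most $m$, the extension sets of its members are disjoint, so $\sum_{x\in S}q(x)\le\sum_{x\in S}\tilde q(x)\le\sum_{y\in\Ac^{m}}\tilde q(y)=1$. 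Infinite $S$ is then handled as you handle it.

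An equivalent version nests intervals $I_x\subseteq[0,1)$ of length $q(x)$, with $I_{xa}\subseteq I_x$ and the $I_{xa}$ disjoint in $a$, as in arithmetic coding. Members of an antichain get disjoint intervals, which covers infinite antichains at once.

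What each buys: your first-letter induction with shifted predictors is purely combinatorial and needs no auxiliary object. The top-up reduces the semimeasure case to the measure case, where the common-length extension you set aside becomes an equality.

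One point of presentation: you apply the induction hypothesis to the shifted predictors $q_a$. The statement proved by induction on $m$ must therefore quantify over all predictors, not a fixed $q$. As written, this is only implicit.
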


\paragraph{In words.} Every predictor has the same fixed amount of
predictive success to distribute over the strings of each length;
predictors differ in where they put it and how much they leave unused,
never in how much they have. On language models, accuracy on
deterministic tasks tracks the probability of the output and of the task
(rot-13 against rot-2), which is where the budget is
spent~\cite{mccoy2023embers}.

The counting is that behind incompressibility; the correspondence between
codes and priors it expresses underlies minimum description length and
universal prediction~\cite{barron1998mdl,grunwald2007mdl,cesabianchi2006},
and no-free-lunch results state the same trade for problems weighted
uniformly~\cite{schaffer1994,wolpert1996}, which a simplicity prior
escapes~\cite{lattimore2013nfl}. Whether that escape leaves the
no-free-lunch theorems with any force against a learner that is doing
well in practice is contested,
and~\cite{sterkenburggrunwald2021} argue the negative: the theorems
presuppose learning algorithms that are purely data-driven, whereas standard
algorithms take a model as input and can be given a model-relative
justification, so the theorems constrain less than they are usually taken
to. The counting used here does not depend on that
dispute, and the difference is worth stating because the lens's force
rests on it. A no-free-lunch theorem averages performance over
a weighting of problems and concludes that the average is fixed; change
the weighting and the conclusion moves. The Kraft bound assumes no
weighting at all. It says that on any \emph{antichain} of strings (of
one length, or any prefix-free set) at most $2^{b}$ members can be
given loss below $b$, because their probabilities must sum to at most
one. That is a counting fact about the code, true of every predictor
including $M$, and it survives any reweighting of which problems
matter. The simplicity prior escapes the \emph{average} but not the
ledger. Appendix~\ref{app:bg-conservation} derives the
exchange rate at which post-training and data mixtures reallocate: when $q$ is a measure, gaining $b$ bits on a set $A$ of
strings of length $n$ removes at least $(2^{b}-1)\,q(A)$ of mass from the
rest (Corollary~\ref{cor:exchange}).

$M$, a semimeasure that leaks (its measure part, Lemma~\ref{lem:measurepart},
has mass below one, for otherwise $M$ would be a lower-semicomputable
measure, hence computable, and could not dominate the point mass on its own
diagonal sequence), spends less of the budget than a softmax
language model, which spends all of it, and lies off the frontier of every
pairwise singular family (Section~\ref{sec:frontier}); what
separates them is where each spends (the last clause of
Fact~\ref{fact:si-dominance}) and the diagonal constructions of
Section~\ref{sec:scale}. At a finite horizon, and hence
within any context window, the budget binds exactly on environments: if
$z^{(1)},\dots,z^{(m)}$ have pairwise distinct length-$n$ prefixes, then
$\sum_k2^{-R_n(q;z^{(k)})}\le 1$, so at least half of them cost $q$ more
than $\log(m/2)$ bits (the finite-horizon Kraft law,
Corollary~\ref{thm:profile}). The same law caps the loss ceilings
$\log(1/w_h)$ that a Bayes mixture over deterministic hypotheses can promise
on many tasks at once (the Kraft floor, Corollary~\ref{cor:kraftfloor}).

\subsection{The frontier of regret profiles}
\label{sec:frontier}

Which regret profiles can one predictor achieve on a family, and which
are best? For mutually singular environments, Kraft's inequality answers.
The theorem combines the Lebesgue decomposition and data processing with
the distinguishability argument of~\cite{merhav1995strong}, the
measure part of a
semimeasure~\cite{levinvyugin1977,bienvenu2017semimeasures}, and Pareto optimality of
Bayes mixtures~\cite{hutter2003optimality}. Hutter proves that $\xi$ is
Pareto-optimal, remarks that ``for prediction only Bayes-mixes are
Pareto-optimal'', and asks for the precise statement: ``it would be
interesting to show under which conditions, the class of all Bayes-mixtures
(i.e.\ with all possible values for the weights) is complete in the sense that
every Pareto-optimal strategy can be based on a
Bayes-mixture''~\cite[\S5.2]{hutter2003optimality}. For finite families at a
finite horizon, Wald's complete-class theorem~\cite{wald1950} already gives it
(Proposition~\ref{prop:frontier-overlap}(i)). Clause~(iii) gives it at
$R_\infty$ for countable pairwise mutually singular families, where the
full-mass Bayes mixtures are exactly the predictors whose Kraft sum is one;
once the carriers are disjoint that equality is close to immediate, and what is
added is the countable case, the ledger and its converse. The closest prior statement of
the equality case we have found is an aside of
Koolen's~\cite[\S4.1]{koolen2013pareto}, whose subject is absolute loss:
``For certain log loss games, each Pareto regret trade-off is witnessed
uniquely by the Bayesian mixture of expert predictions w.r.t.\ a certain
non-uniform prior and vice versa (not shown).'' That is an unproved remark
about expert games at finite horizon rather than about $R_\infty$ on mutually
singular measures, and we have not found the ledger or the converse of
Proposition~\ref{prop:frontier-ledger} stated anywhere.

\paragraph{Setting.}
The \emph{regret profile} of $q$ on $\Fam=(\mu_k)_{k\in J}$ is
$(R_\infty(q;\mu_k))_{k\in J}$, and the \emph{Kraft set} is
$\Kraft_J:=\{r\in[0,\infty]^J:\sum_k2^{-r_k}\le 1\}$ with
$2^{-\infty}:=0$. Measures are \emph{mutually singular}, $\mu\perp\nu$, if
a Borel set carries $\mu$ and is $\nu$-null. Distinct deterministic
environments, and distinct stationary ergodic measures, are mutually
singular; two laws whose first-symbol marginals have full support and
which agree thereafter are not. Every
predictor $q$ has a \emph{measure part} $\qbar$, the largest measure
dominated by $q$ on cylinders, and $R_\infty(q;\mu)=\dl{\mu}{\qbar}$ for
every source $\mu$ (Lemma~\ref{lem:measurepart}).

\begin{figure}[t]
\centering
\includegraphics[width=0.9\textwidth]{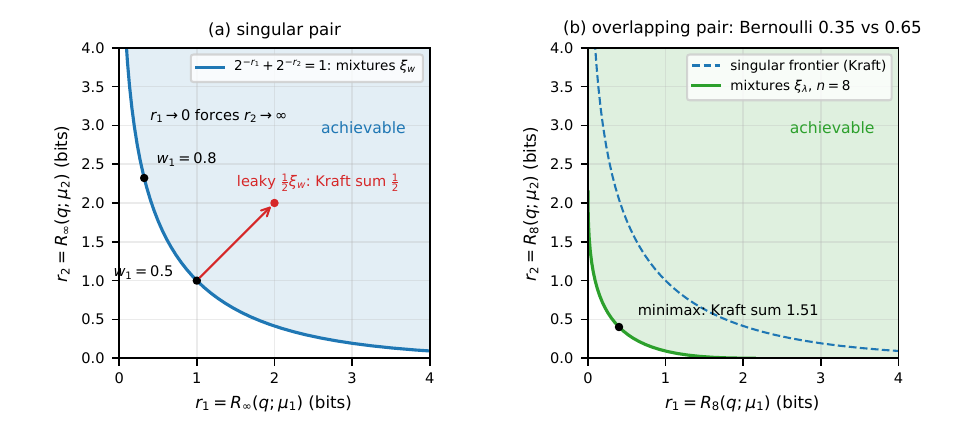}
\caption{Regret profiles on two sources. (a) $R_\infty$ on a singular
pair: the achievable region (shaded) is the Kraft set, bounded by
$2^{-r_1}+2^{-r_2}=1$ and traced by the mixtures $\xi_w$, so a prior is a
point on this curve; the leaky $\frac12\xi_w$ lies one bit inside
(Proposition~\ref{prop:frontier-ledger}). (b) $R_8$ on i.i.d.\
Bernoulli$(0.35)$ and $(0.65)$ sources, overlapping at this horizon: the boundary traced by the
mixtures $\xi_\lambda$ (Proposition~\ref{prop:frontier-overlap}) lies
below the Kraft curve, with Kraft sum about $1.5$ at the minimax point.}
\label{fig:frontier}
\end{figure}

\begin{theorem}[Frontier theorem; after \cite{merhav1995strong,hutter2003optimality}]
\label{thm:frontier}
Let $q$ be any predictor. In \textup{(i)--(iii)} the family $\Fam$ is
pairwise mutually singular.
\begin{enumerate}
\item[\textup{(i)}] \textup{(Kraft law.)}
$\sum_k2^{-R_\infty(q;\mu_k)}\le1$, and the deficit splits exactly into
leakage, mass off the family and misfit within it
(Proposition~\ref{prop:frontier-ledger}).
\item[\textup{(ii)}] \textup{(Exact attainment.)} For every
$r\in\Kraft_J$, the mixture $\xi_w$ with $w_k=2^{-r_k}$ has
$R_\infty(\xi_w;\mu_k)=r_k$ for every $k$.
\item[\textup{(iii)}] \textup{(The frontier is the full-mass Bayes
mixtures.)} $\sum_k2^{-R_\infty(q;\mu_k)}=1$ iff $q=\sum_kw_k\mu_k$ on every
cylinder for weights with $\sum_kw_k=1$, and then
$w_k=2^{-R_\infty(q;\mu_k)}$. A profile in $\Kraft_J$ is Pareto-optimal iff
its Kraft sum is~$1$.
\end{enumerate}
Hence, for a nonempty family, the best Kraft constant
$C_\infty(\Fam):=\sup_q\sum_k2^{-R_\infty(q;\mu_k)}$ equals $1$ iff $\Fam$
is pairwise mutually singular, the converse being
Proposition~\ref{prop:frontier-ledger}(ii). (Proof:
Appendix~\ref{app:proof-thm:frontier}.)
\end{theorem}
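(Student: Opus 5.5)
The plan is to reduce everything to the measure part. By Lemma~\ref{lem:measurepart}, $R_\infty(q;\mu_k)=\dl{\mu_k}{\qbar}$, where $\qbar$ is a (sub)measure of total mass $m:=\qbar(\Ac^\infty)\le1$. Pairwise mutual singularity of a countable family gives pairwise disjoint Borel carriers $A_k$ with $\mu_k(A_k)=1$. For a singular pair, take carriers $B_{jk}$ carrying $\mu_j$ and null for $\mu_k$, and set $A_k:=\bigcap_{j\ne k}B_{kj}\setminus\bigcup_{j\ne k}\bigcap_{i\ne j}B_{ji}$. This has full $\mu_k$-measure, and the sets are disjoint. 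I will check this countable intersection/removal carefully, since the removed set is $\mu_k$-null as a countable union of null sets.

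For (i), apply data processing for relative entropy to the two-cell partition $\{A_k,A_k^c\}$:
\[
\dl{\mu_k}{\qbar}\ \ge\ 1\cdot\log\frac{1}{\qbar(A_k)},
\]
with the convention $\log(1/0)=\infty$. So $2^{-R_\infty(q;\mu_k)}\le\qbar(A_k)$. Summing and using disjointness gives
\[
\sum_k2^{-R_\infty(q;\mu_k)}\le \sum_k\qbar(A_k)=\qbar\Bigl(\bigcup_kA_k\Bigr)\le m\le 1 .
\]
Reading off the gaps produces the ledger. Leakage is $1-m$. Mass off the family is $m-\sum_k\qbar(A_k)$. Misfit is $\sum_k\bigl(\qbar(A_k)-2^{-\dl{\mu_k}{\qbar}}\bigr)$, and I would make this exact via the chain rule $\dl{\mu_k}{\qbar}=\log\frac{1}{\qbar(A_k)}+\dl{\mu_k}{\qbar(\cdot\mid A_k)}$. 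I would cite Proposition~\ref{prop:frontier-ledger} for the precise bookkeeping.

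For (ii), set $w_k=2^{-r_k}$, so $\xi_w=\sum_kw_k\mu_k$ is itself a (sub)measure, equal to its measure part. On $A_k$ it equals $w_k\mu_k$, because the other components vanish there. Hence $d\mu_k/d\xi_w=1/w_k$ $\mu_k$-a.s., and $\dl{\mu_k}{\xi_w}=\log(1/w_k)=r_k$. Components with $r_k=\infty$ get weight $0$ and regret $\infty$, which is consistent.

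For (iii), "if" is (ii). For "only if", equality forces every inequality above to be tight:
\begin{itemize}
\item $m=1$, so there is no leakage;
\item $\qbar$ puts all its mass on $\bigcup A_k$;
\item for each $k$ with finite regret, the data-processing equality holds, i.e.\ $\dl{\mu_k}{\qbar(\cdot\mid A_k)}=0$, so $\qbar|_{A_k}=\qbar(A_k)\mu_k$. Indices with infinite regret have $\qbar(A_k)=0$ by tightness.
\end{itemize}
Thus $\qbar=\sum_kw_k\mu_k$ with $w_k=\qbar(A_k)=2^{-R_\infty}$ summing to one. Since $q\ge\qbar$ on cylinders and both have total first-level mass at most one with $\qbar$ a probability measure, $q=\qbar$ on every cylinder. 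I expect this last identification, $q$ versus its measure part, to be the step needing most care. It uses that $\sum_aq(a\mid x)\le1$ forces $q(x)\ge\sum_a q(xa)$, while $\qbar(x)=\sum_a\qbar(xa)$, so a strict gap at some cylinder would propagate to a mass deficit at the root, which is impossible.

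For Pareto optimality: if the Kraft sum is $1$, any profile pointwise $\le$ it with one strict inequality would exceed $1$, contradicting (i). If the sum is $<1$, decrease some finite coordinate (or an infinite one) while staying in $\Kraft_J$ and attain the result by (ii). The final clause, $C_\infty=1$ under singularity, follows from (i) and (ii) for any weight vector. The converse direction is deferred to Proposition~\ref{prop:frontier-ledger}(ii) as stated.
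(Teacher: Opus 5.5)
Your proposal is correct and follows the paper's proof step for step. It uses the same disjoint carriers (your $A_k$ is the paper's $\Omega_k$), and the same bound $2^{-R_\infty(q;\mu_k)}\le\qbar(\Omega_k)$ summed over those carriers; your two-cell data-processing step is the paper's decomposition $\dl{\mu}{\nu}=\log\frac1{\nu(\Omega)}+\dl{\mu}{\nu|_\Omega/\nu(\Omega)}$. Attainment in (ii) rests, as in the paper, on $\xi_w$ restricted to $\Omega_k$ being $w_k\mu_k$, and (iii) is tightness followed by $q_n\ge\qbar_n$ with $q_n(\Ac^n)\le 1=\qbar_n(\Ac^n)$.

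The only divergence is in the ledger, and since you defer its bookkeeping to Proposition~\ref{prop:frontier-ledger}, it is not a gap. Your split through $\qbar(A_k)$ depends on the chosen carriers: mass of $\qbar$ that lies inside $A_k$ but is singular to $\mu_k$ lands in your misfit term rather than in the mass off the family. The proposition instead takes $c_k$ from the Lebesgue decomposition of $\qbar$ with respect to $\mu_k$, which makes the three terms canonical.
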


\paragraph{In words.} Singular members have disjoint carriers $\Omega_k$,
and a predictor pays $\dl{\mu_k}{\qbar}\ge\log(1/\qbar(\Omega_k))$ on
$\mu_k$, where the carrier masses $\qbar(\Omega_k)$ sum to at most one
(Figure~\ref{fig:frontier}).

Part (ii) turns the mixture bound after~\eqref{eq:bg-mixture} into an
equality, and asymptotic criteria are constant along this frontier
(Corollary~\ref{cor:blind}). A predictor within $r$ bits of a source $\mu$
pays at least $\log\frac{1}{1-2^{-r}}\approx\log\frac{1}{r\ln 2}$ bits (for small $r$) on every
environment singular to $\mu$ (the price of specialisation,
Corollary~\ref{cor:specialise}). On finite overlapping families the
region is still exact, but the Kraft sum is maximal only at mixtures over
maximising priors (Proposition~\ref{prop:frontier-overlap}), so ``a prior
is an allocation'' is exact only on singular families; for countable
non-singular ones only $C_\infty>1$ is proved.

\section{What evaluation can certify}
\label{sec:evaluation}

Language models are evaluated on finite evidence. Benchmarks draw prompts
from a fixed distribution, red teams choose prompts adaptively, and
scaling studies fit held-out loss curves. Two elementary tools, Markov's
inequality and a coupling of evaluation runs, show what this evidence
can establish. It constrains a model's conditionals only on contexts
that the evaluation reaches with probability comparable to the inverse
of its number of queries. It can therefore falsify agreement with a
reference such as Solomonoff's predictor, but it cannot certify that
agreement where agreement would matter. The intuition is old, from
identification in the limit~\cite{gold1967} and the no-free-lunch
theorems~\cite{wolpert1996}; what follows makes it quantitative.
Throughout, $q$
satisfies Assumption~\ref{ass:si-eps} with respect to $(\mu,\rho)$, and
$D(c) := \dl{\mu(\cdot\mid c)}{q(\cdot\mid c)}$.

\subsection{Underdetermination}
\label{sec:si-underdet}

\begin{theorem}[Underdetermination by evaluation]
\label{thm:si-underdet}
\begin{enumerate}
\item[\textup{(i)}] \textup{(Mass--divergence trade-off.)} For every
$B \subseteq \Ac^{*}$ with $\rho(B) = \delta > 0$,
$\E_\rho[D(c) \mid c \in B] \le \eps/\delta$. For every $b > 0$,
$\rho\{c : D(c) \ge b\} \le \eps/b$.
\item[\textup{(ii)}] \textup{(Modification.)} Let $\rho(B) = \delta$ and
$\gamma \in (0,1/V]$. Let $r(\cdot\mid c)$, $c \in B$, be any
conditionals with $\min_a r(a\mid c) \ge \gamma$. The predictor $q'$
that equals $r$ on $B$ and $q$ elsewhere satisfies
Assumption~\ref{ass:si-eps} with $\eps + \delta\log(1/\gamma)$ in place
of $\eps$.
\item[\textup{(iii)}] \textup{(Sampled evaluations cannot see the
difference.)} An \emph{$N$-query $\rho$-evaluation} draws contexts
$c_1,\dots,c_N$ i.i.d.\ from $\rho$ and continuations
$y_k \sim \mu(\cdot\mid c_k)$. It observes $(c_k, y_k, q(\cdot\mid c_k))_{k\le N}$
together with independent internal randomness and outputs a verdict.
If $q$ and $q'$ agree outside a set $B$ with $\rho(B) = \delta$, the
verdict laws under $q$ and $q'$ differ in total variation by at most
$1-(1-\delta)^{N} \le N\delta$.
\end{enumerate}
\end{theorem}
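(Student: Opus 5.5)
The plan is to prove the three parts in order; each uses only nonnegativity of $D$ and an elementary inequality.

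For (i), since $D\ge 0$, restricting the expectation in Assumption~\ref{ass:si-eps} to $B$ gives $\delta\,\E_\rho[D(c)\mid c\in B]=\E_\rho[D(c)\mathbf 1_B(c)]\le\E_\rho D\le\eps$, and dividing by $\delta$ yields the first claim. The second claim is Markov's inequality applied to the nonnegative variable $D(c)$ under $\rho$. Both can be stated as a single line.

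For (ii), write $D'(c):=\dl{\mu(\cdot\mid c)}{q'(\cdot\mid c)}$ and split $\E_\rho D'$ over $B$ and its complement. Off $B$, $D'=D$, so that part contributes at most $\E_\rho[D\,\mathbf 1_{B^c}]\le\eps$. On $B$, I bound $D'(c)=\sum_a\mu(a\mid c)\log\mu(a\mid c)-\sum_a\mu(a\mid c)\log r(a\mid c)$. The first sum is $-H(\mu(\cdot\mid c))\le 0$, and since $r\ge\gamma$ the second is at most $\log(1/\gamma)$. Hence $D'(c)\le\log(1/\gamma)$ on $B$, and integrating gives $\E_\rho D'\le\eps+\delta\log(1/\gamma)$. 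The condition $\gamma\le 1/V$ serves only to make such $r$ exist, since $V\gamma\le 1$. One caveat concerns the target conditional $\mu(\cdot\mid c)$: if it is a subprobability the entropy term is still nonpositive, so the bound holds unchanged.

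For (iii), I use a coupling. On one probability space, draw the contexts $c_k\sim\rho$ i.i.d., the continuations $y_k\sim\mu(\cdot\mid c_k)$, and the internal randomness $\omega$, and use them for both runs. Neither the sampling nor $\omega$ depends on the model; only the observed vectors $q(\cdot\mid c_k)$ versus $q'(\cdot\mid c_k)$ differ. Take the verdict to be a measurable function of the observations and $\omega$. The two verdicts then coincide on the event $E=\{c_k\notin B\text{ for all }k\le N\}$, which has probability $(1-\delta)^N$. By the coupling inequality, $\TV\le\Pr(E^c)=1-(1-\delta)^N$, and Bernoulli's inequality gives $1-(1-\delta)^N\le N\delta$.

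The main obstacle is formalising ``verdict'' so that the coupling inequality applies, namely stating it as a measurable map of the transcript and $\omega$. That is bookkeeping rather than a real difficulty. The remark ``adaptive or not'' in the contributions concerns a separate adaptive version; here the contexts are i.i.d., which is what makes the computation of $\Pr(E)$ valid.
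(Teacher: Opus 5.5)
Your proposal is correct and follows the paper's proof essentially step for step. Part (i) restricts the expectation to $B$ and applies Markov's inequality. Part (ii) uses the pointwise bound $\sum_a \mu(a\mid c)\log\frac{1}{r(a\mid c)} \le \log(1/\gamma)$ on $B$. Part (iii) couples the two runs through shared contexts, continuations and internal randomness, so the verdicts can differ only when some $c_k$ falls in $B$.
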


\noindent(Proof: Appendix~\ref{app:proof-thm:si-underdet}.)

\paragraph{In words.} A good average score on a test distribution says
almost nothing about the contexts that distribution rarely produces. On a
set of small probability, a model's predictions can be replaced at will
at negligible cost to its average score. No test that samples contexts
from the same distribution can tell the replaced model from the original.

Clause (iii) covers any verdict computed from the observations at the queried
contexts, including accuracy on single-symbol answers sampled from $q$; an answer of
several symbols reads $q$ at contexts beyond the sampled one, which the clause covers
only when those contexts are counted among the $N$ queries and $B$ is closed under
the extensions read. Choosing
conditionals on $B$ that put half their mass on a least likely symbol of
a reference $P^{*}$ yields Corollary~\ref{cor:si-uncert}. Suppose $q$ satisfies
Assumption~\ref{ass:si-eps} and passes a given $N$-query evaluation with
probability at least $1-\alpha$. Then some predictor $q'$ satisfies the
assumption with $\eps + \delta\log(2V)$, passes with probability at
least $1-\alpha-N\delta$, and is at total-variation distance at least
$\tfrac12 - \tfrac1{2V}$ from $P^{*}$ at \emph{every} context of $B$.
Take $P^{*}$ to be the conditionals of $M$. Empirical success under
Assumption~\ref{ass:si-eps} then does not bound disagreement with
Solomonoff's predictor on any set of contexts whose $\rho$-probability is
small compared with $1/N$. The regime is not exotic: contexts of length
below $n$ number $(V^{n}-1)/(V-1)$, while a corpus or benchmark touches
a vanishing fraction of them.

\subsection{Adaptive evaluation}
\label{sec:eval-adaptive}

An evaluator who \emph{chooses} its $N$ contexts, adaptively and with
full sight of the served conditionals, is not covered by
Theorem~\ref{thm:si-underdet}(iii), and the transcript-coupling argument familiar
from query-complexity lower bounds says how much it can gain. Let
$\mathcal{R}_\delta := \{c : \rho(c) \le \delta\}$ be the individually
rare contexts and $\bar\rho_\delta := \rho(\mathcal{R}_\delta)$ their
total mass. Against any adaptive protocol, a predictor $q'$ that differs
from $q$ only on a set $B \subseteq \mathcal{R}_\delta$ of mass between
$\delta$ and $2\delta$, and is far from any reference $P^{*}$ on all of
$B$, changes the verdict law by at most the query mass the protocol places
on $B$. A protocol that separates $q$ from every such $q'$ with error at
most $\alpha$ on each side therefore needs
\[
  N \;\ge\; (1-2\alpha)\,\frac{\bar\rho_\delta}{2\delta}
\]
queries, and passive sampling already reaches the matching order: it
detects any fixed disagreement set of mass $\delta$ with
$N \ge \ln(1/\alpha)/\delta$ queries.

\paragraph{What adaptivity buys.}\label{rem:si-adaptive}
At a given resolution $\delta$, choosing where to look buys at most the factor
$2\ln(1/\alpha)/((1-2\alpha)\bar\rho_\delta)$ over passive sampling
(Theorem~\ref{thm:si-adaptive}). The comparison is at fixed $\delta$ and
presumes $\bar\rho_\delta \ge \delta$: the factor is a constant when the rare
mass is of order one at the resolution of interest, which holds for a corpus
prefix law, and is not when the rare mass is itself of order $\delta$, as it is
for a prompt law concentrated on a few items; since
$\bar\rho_\delta \downarrow 0$ as $\delta \downarrow 0$, no factor is constant
uniformly in $\delta$. The
comparison is not quite like for like, and the asymmetry favours the
passive side: the adaptive figure is a lower bound on separating $q$
from \emph{every} $q'$, while the passive one is an upper bound on
separating a \emph{known} $P^{*}$, which for $P^{*} = M$ is never in
hand. The factor is therefore a ceiling on what adaptivity can buy,
not a measured exchange rate. A
protocol published and fixed before the model is trained or tuned is
the extreme case: a disagreement placed among the rare contexts it does
not query is invisible to it. The restriction to individually rare
contexts cannot be dropped, since fewer than $1/\delta$ contexts have
probability above $\delta$ and a protocol can query all of them. Nor can
the bound be stated through $\rho$'s largest atom: if $\rho$ draws a
position $t \le n$ uniformly and returns the prefix $x_{<t}$, the empty
context alone has mass $1/n$. What adaptivity does buy is the ability to
\emph{find} failures rather than wait to sample them.
Section~\ref{sec:scale} shows that failures of a specific, cheaply
constructed kind always exist: an adaptive evaluator can falsify, but it
cannot certify.

\paragraph{Benchmarks, red teaming, scaling curves and contamination.} These are
instances of the same bound. Accuracy is a coarse function of the conditionals, so
passing an accuracy benchmark is compatible with near-maximal disagreement with a
reference on a set of small mass; red teaming falsifies but cannot certify;
scaling laws measure the excess loss on the evaluation law and say nothing off it;
and contamination changes which contexts the bound is about.
Appendix~\ref{sec:eval-practice} works each out.

\section{What scale cannot deliver}
\label{sec:scale}
\label{sec:impossibility}

Scaling suggests a retreat from (C): no fixed model is Solomonoff's
predictor, but ever larger models approach it. Putnam's classical
diagonal argument~\cite{putnam1963}, in the complexity-theoretic form
given by Legg~\cite{legg2006}, implies that no computable predictor has
Solomonoff's regret property. A dovetailed form of the same argument
applies to a whole uniformly computable family of total predictors at once. The family
result has no force on any particular model at any observable horizon; what it shows is that the
apparent convergence of scaling is a quantifier exchange.
Sterkenburg~\cite{sterkenburg2019,sterkenburg2018} analyses the scope of
the diagonal argument in detail.

\subsection{A single predictor}
\label{sec:si-diag}

\begin{theorem}[Diagonal environment; after Putnam and Legg~\cite{putnam1963,legg2006}]
\label{thm:si-diag}
Let $q$ be a total rational-computable predictor. Define
$x^{q} \in \Ac^{\infty}$ by
$x^{q}_{t} := \min\argmin_{a\in\Ac} q(a\mid x^{q}_{<t})$, where the
minimum is taken in a fixed order on $\Ac$, and let $\mu_q$ be the point
measure on $x^{q}$. Then:
\begin{enumerate}
\item[\textup{(i)}] $q(x^{q}_t\mid x^{q}_{<t}) \le 1/V$ for every $t$,
hence $R_n(q;\mu_q) = L_n(q;x^{q}_{1:n}) \ge n\log V$ for all $n$;
\item[\textup{(ii)}] $\mu_q$ is computable with
$K(\mu_q) \le K(q) + O(1)$, and $R_n(M;\mu_q) \le K(q) + O(1)$ for all
$n$.
\end{enumerate}
\end{theorem}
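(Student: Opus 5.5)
The plan is to prove the two clauses separately, since (i) is a pigeonhole fact and (ii) is an algorithmic-complexity bookkeeping argument that ends by invoking Fact~\ref{fact:si-dominance}.

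\emph{Clause (i).} Fix $t$ and write $c=x^{q}_{<t}$. The conditional $q(\cdot\mid c)$ is a vector of $V$ nonnegative numbers with $\sum_a q(a\mid c)\le 1$. Its minimum is therefore at most the average, which is at most $1/V$. The symbol $x^{q}_t$ attains that minimum, so $q(x^{q}_t\mid x^{q}_{<t})\le 1/V$. Taking $-\log$ and summing over $t\le n$ via the product rule of Definition~\ref{def:si-predictor} gives $L_n(q;x^{q}_{1:n})\ge n\log V$. Because $\mu_q$ is the point mass on $x^{q}$, $\mu_q(x^{q}_{1:n})=1$ and $\mu_q$-almost every path equals $x^{q}$. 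The expectation in~\eqref{eq:si-regret} then collapses to $-\log q(x^{q}_{1:n})$, which is exactly the deterministic-environment convention $R_n(q;\mu_q)=L_n(q;x^{q}_{1:n})$.

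\emph{Clause (ii).} I would exhibit a fixed program transformer. Given an index of a machine computing $q$, it outputs a machine that builds $x^{q}$ by recursion on $t$:
\begin{itemize}
\item having produced $x^{q}_{<t}$, query $q(a\mid x^{q}_{<t})$ for each $a\in\Ac$;
\item these values are exact rationals, so compare them exactly;
\item take the minimiser that comes first in the fixed order on $\Ac$, and append it.
\end{itemize}
Totality and rational-computability are exactly what make each step halt and the argmin decidable; ties are harmless because of the fixed order. For real-valued $q$, equality tests would fail, which is why the statement is restricted to rational-computable $q$.

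The measure $\mu_q$ is then computable to any precision, since $\mu_q(x)\in\{0,1\}$ is decided by comparing $x$ with a prefix of $x^{q}$. The transformer adds only a constant to the description, so $K(\mu_q)\le K(q)+O(1)$. Fact~\ref{fact:si-dominance} then gives $R_\infty(M;\mu_q)\le K(\mu_q)+O(1)\le K(q)+O(1)$, and $R_n\le R_\infty$ since regret is nondecreasing in $n$. Alternatively, the last clause of Fact~\ref{fact:si-dominance} applies directly: a prefix program of length $K(q)+O(1)$ makes $U$ emit $x^{q}$, so $R_n(M;x^{q})\le K(q)+O(1)$.

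The only delicate step is the precise meaning of $K(q)$ as the complexity of a canonical index, which must be chosen compatibly with prefix-freeness. Concretely, one needs a self-delimiting encoding of the index followed by the constant-length transformer, so that the composite program is prefix-free with overhead $O(1)$.
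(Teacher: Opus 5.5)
Your proof is correct and matches the paper's. Clause (i) is the same pigeonhole argument. For (ii) the paper uses a fixed routine that, given a shortest program for $q$, computes the $V$ exact rationals at each step and emits the least minimiser, giving a monotone program of length $K(q)+O(1)$, and then applies the program bound: that is your ``alternative'' route, and your primary route through $K(\mu_q)\le K(q)+O(1)$, the dominance bound $R_\infty(M;\mu_q)\le K(\mu_q)+O(1)$ and monotonicity of $R_n$ is an equally valid detour to the same bound.
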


\noindent(Proof: Appendix~\ref{app:proof-thm:si-diag}.)

\paragraph{In words.} Every fixed computable predictor has a computable
sequence, no more complex than the predictor itself, on which it does no
better than uniform guessing at every step. On the same sequence,
Solomonoff's predictor pays a constant in total.

The construction extends to real-valued computable conditionals, with
per-step loss at least $\log(2V/3)$ in place of $\log V$. It therefore
yields Corollary~\ref{cor:si-nocomputable}: no computable predictor has finite
regret on every computable environment. This covers every language model
taken as a closed pipeline (Definition~\ref{def:si-computable}), so the
consequent of (C) holds for none, and empirical
evidence can bear only on graded, class-relative versions of it
(Section~\ref{sec:attainable}), where coverage itself has a price in
description length~\cite{legg2006} (Remark~\ref{rem:price}).

The diagonal environment is not one exotic sequence. Behind every prefix
$p$ there is such an environment, of complexity at most
$K(p)+K(q)+O(1)$, on which $q$ loses at least $\log V$ per step after
$p$. Behind the prefixes of length $\ell$ there are $V^{\ell}$ distinct
ones (Theorem~\ref{thm:breakers}). At the step where the environment
leaves $p$, the loss is at least $\log((V-1)/(1-q_{\max}))$, where
$q_{\max}$ is the model's largest conditional at $p$, so it grows with
sharpness. This is not a cost of scale: sharpness beyond calibration is
documented after post-training (Section~\ref{sec:closure}), while larger
pretrained models are better calibrated on multiple-choice
questions~\cite{kadavath2022know}. A user who can read the served
conditionals, and who always continues with a least likely symbol,
realises a finite-horizon instance of the construction
(Remark~\ref{rem:si-cheap}).

\paragraph{A model family.} A scaling family generated by one program from an index is
effectively enumerable, and a dovetailed form of the same construction gives a
single environment on which every member of a uniformly computable family fails,
member $i$ beyond a horizon growing with its index
(Theorem~\ref{thm:si-classdiag}, Appendix~\ref{sec:si-classdiag}). It forces
nothing on any particular model at an observable horizon; what it shows is that
the apparent convergence of scaling is a quantifier exchange. Computable families
converging \emph{pointwise} to $M$ exist; what fails is selection, since a rule
that selects, ensembles or switches between members by any computable function of
the data is one computable predictor with its own diagonal
(\S\ref{sec:scale-exchange}).

\subsection{Randomised decoding and compute budgets}
\label{sec:scale-decoding}

\paragraph{Randomised decoding.}\label{rem:randomised}
Served models sample, and serving can be nondeterministic across
batches~\cite{he2025nondeterminism}. Randomisation does not evade the
diagonal. If the scored object is the served conditional, as whenever
log-probabilities are read, temperature and truncation are deterministic
reshapings of it and Definition~\ref{def:si-computable} applies
unchanged. If instead the conditionals $q_\omega(\cdot\mid x)$ depend on
internal randomness $\omega$, apply the construction to the mean predictor
$q^{\mathrm{avg}}(\cdot\mid x) := \E_\omega q_\omega(\cdot\mid x)$ (not the measure
part $\qbar$ of Section~\ref{sec:frontier}), which is computable whenever the
seed law is. On the approximate diagonal of $q^{\mathrm{avg}}$, Jensen's
inequality at each step gives
$\E_\omega L_n(q_\omega;x^{q^{\mathrm{avg}}}_{1:n}) \ge n\log(2V/3)$, and that
diagonal has complexity $K(q^{\mathrm{avg}})+O(1)$. The budget of
Fact~\ref{thm:conservation} belongs to the averaged allocation.

\paragraph{The resource-bounded form.} Both diagonals have forms for
models of bounded description length and per-step compute, in which the
failing environment moves with the budget; Section~\ref{sec:deployed}
summarises them and Appendix~\ref{sec:rb} proves them.

\providecommand{\db}[2]{d\!\left(#1 \,\|\, #2\right)}

\section{Fitting a corpus chooses a prior}
\label{sec:closure}

Carried to its limit, Assumption~\ref{ass:si-eps} delivers a predictor
with zero excess loss over the corpus law. Minimising log-loss is
minimising code length~\cite[Ch.~5]{cover2006}, and strict propriety
makes the source itself the unique minimiser of the population
risk~\cite{gneiting2007}. Fitting a corpus selects the corpus law as the prior.
That prior has finite regret only on laws absolutely continuous with
respect to it, and at most $\log c$ on a law it dominates by $c$. The later
training stages (data mixing, post-training, refitting on generated
text) move the allocation around without enlarging the budget of
Fact~\ref{thm:conservation}.

\paragraph{Closure as domination.}
For probability measures on $\Ac^\infty$, we write
$\dl{\nu}{\mu}:=\lim_n\dl{\nu_n}{\mu_n}$, where a subscript $n$ denotes
the marginal on $\Ac^n$. The limit object of Assumption~\ref{ass:si-eps}
on $\mu$ is a predictor $q$ with $R_\infty(q;\mu)=0$.

\begin{theorem}[{Closure; (i)--(ii) cf.\ \cite[Ch.~7]{gray2011entropy}, (iii) via~\cite{tsybakov2009}}]
\label{thm:closure}
Let $\mu$ be a probability measure on $\Ac^\infty$ and $R_\infty(q;\mu)=0$.
\begin{enumerate}
\item[\textup{(i)}] $q=\mu$ on every cylinder, and
$R_\infty(q;\nu)=\dl{\nu}{\mu}$ for every probability measure $\nu$.
\item[\textup{(ii)}] $R_\infty(q;\nu)<\infty$ implies $\nu\ll\mu$. If
$\nu\le c\,\mu$ on cylinders, then $R_\infty(q;\nu)\le\log c$; in
particular $\mu\ge w_i\mu_i$ gives $R_\infty(q;\mu_i)\le\log(1/w_i)$.
\item[\textup{(iii)}] $R_n(q;\nu)\to\infty$ for every $\nu\perp\mu$, in
particular for every stationary ergodic $\nu\ne\mu$ when $\mu$ is
stationary ergodic. If $\mu$ is non-atomic,
$R_n(q;z)=-\log\mu(z_{1:n})\to\infty$ for every $z\in\Ac^\infty$.
\item[\textup{(iv)}] If $\mu$ is computable, some computable environment
gives $q$ infinite regret.
\end{enumerate}
\end{theorem}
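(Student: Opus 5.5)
The plan is to route everything through the measure part $\qbar$ of Lemma~\ref{lem:measurepart}, using $R_\infty(q;\nu)=\dl{\nu}{\qbar}$ for every source $\nu$.

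For (i), the hypothesis gives $\dl{\mu}{\qbar}=0$. First I show that $\qbar$ is a probability measure equal to $\mu$. By data processing, $\dl{\mu_n}{\qbar_n}=0$ for every $n$. Since $\qbar_n$ is a subprobability, Gibbs' inequality with equality forces $\qbar_n\ge\mu_n$ pointwise. Both sides then have total mass $\ge 1$ and $\le1$, so $\qbar=\mu$ on every cylinder. Next I pass from $\qbar$ to $q$. Dominance $q\ge\qbar=\mu$ on cylinders, together with the antichain bound of Fact~\ref{thm:conservation} applied to $\Ac^n$, gives $\sum_{|x|=n}q(x)\le1=\sum_{|x|=n}\mu(x)$. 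Hence $q=\mu$ on every cylinder of every length, and therefore $\qbar=\mu$ as well. The identity $R_\infty(q;\nu)=\dl{\nu}{\mu}$ follows directly, since $R_n(q;\nu)=\dl{\nu_n}{\mu_n}$ and we take the limit.

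For (ii), the first claim is the standard fact that finite divergence forces absolute continuity. If $\mu(A)=0<\nu(A)$, approximate $A$ by finite unions of cylinders. By data processing on the two-cell partition $\{A_n,A_n^c\}$, $\dl{\nu_n}{\mu_n}\ge \nu(A_n)\log(1/\mu(A_n))-1$, and this tends to $\infty$. The domination bound is pointwise: $\log(\nu_n/\mu_n)\le\log c$, and integrating against $\nu_n$ gives $\dl{\nu_n}{\mu_n}\le\log c$. The mixture case takes $c=1/w_i$.

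For (iii), if $\nu\perp\mu$ then $\nu\not\ll\mu$, so $R_\infty=\infty$ by (ii). Distinct stationary ergodic measures are mutually singular by Birkhoff's theorem: a cylinder frequency on which they differ separates their carriers. For non-atomic $\mu$, continuity of measure from above gives $\mu(z_{1:n})\downarrow\mu(\{z\})=0$, so $-\log\mu(z_{1:n})\to\infty$.

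For (iv), first suppose $\mu$ has no atom at any computable point. Then every computable $z$ has $R_n(q;z)=-\log\mu(z_{1:n})\to\infty$, and the diagonal sequence $x^{q}$ of Theorem~\ref{thm:si-diag}, which is computable because $q=\mu$ is, already serves; I would note that $\mu$ is only real-computable, so the real-valued variant with per-step loss $\log(2V/3)$ applies. In fact that variant gives $R_n(q;\cdot)\ge n\log(2V/3)$ on the approximate diagonal for any computable $\mu$, so no case split is needed. The main obstacle is this: the diagonal is defined by exact $\argmin$, which is not computable for real conditionals. I would therefore construct it by choosing, at each step, a symbol whose conditional is certified below $2/(3V)$ or near-minimal, computed to enough precision. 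This is the real-valued construction stated after Theorem~\ref{thm:si-diag}, and I would invoke it rather than reprove it.
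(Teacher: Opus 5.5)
Your parts (i)--(iii) are sound. (i) is the paper's argument written out directly: the paper gets it from Theorem~\ref{thm:frontier}(iii) for the one-member family, via Corollary~\ref{cor:specialise}(ii). In (iii), deriving $R_n(q;\nu)\to\infty$ from the contrapositive of (ii) is a fine substitute for the paper's Bretagnolle--Huber step.

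The gap is in (iv), at ``that variant gives \dots\ for any computable $\mu$, so no case split is needed''. The real-valued diagonal of Corollary~\ref{cor:si-nocomputable} needs a predictor whose conditionals are computable reals uniformly in $(x,a)$. A computable measure only has computable cylinder masses. The loop that approximates $\mu(a\mid x)=\mu(xa)/\mu(x)$ to precision $1/(4V)$ can stop only once a positive lower bound on $\mu(x)$ appears, so it never stops at a $\mu$-null context. Your diagonal walks into such a context as soon as some conditional on its path vanishes, because a zero conditional is exactly the one certified smallest. The resulting sequence is then not total, hence not a computable environment.

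You cannot fall back on $q$ either. The hypothesis $R_\infty(q;\mu)=0$ fixes $q$ only on cylinders, so $q$'s conditionals at $\mu$-null contexts are unconstrained, and ``$x^{q}$ is computable because $q=\mu$ is'' fails there. The obstacle you flag, the non-computable $\argmin$, is the harmless one, since approximation already handles it. The real obstacle is normalising by a possibly vanishing $\mu(x)$. Your opening case (no atom at any computable point) does not touch this: there $0^\infty$ already works, and the hard case is a $\mu$ with null cylinders.

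The paper splits on null cylinders. If $\mu(x)=0$ for some $x$, the computable environment $x0^\infty$ has $q(x)=\mu(x)=0$, hence infinite regret. Otherwise every $\mu(x)>0$, so the conditionals $\mu(xa)/\mu(x)$ are computable and coincide with $q$'s. Corollary~\ref{cor:si-nocomputable} applied to $\mu$ then gives the environment, since regret depends on $q$ only through its cylinder values. Your atom-based split can be repaired the same way: if every $x0^\infty$ is an atom, every cylinder has positive mass.

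If you want to keep ``no case split'', diagonalise on unnormalised masses rather than conditionals. Let $z_t$ be the least $a$ minimising a rational approximation of $\mu(z_{<t}a)$ to absolute precision $(2V)^{-t}$. Since $\min_a\mu(z_{<t}a)\le\mu(z_{<t})/V$, this gives $\mu(z_{1:t})\le\mu(z_{<t})/V+2(2V)^{-t}$. Unrolling gives $\mu(z_{1:t})\le 3V^{-t}$, so $R_n(q;z)=-\log\mu(z_{1:n})\ge n\log V-\log 3$. This construction is total for every computable $\mu$.
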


\noindent(Proof: Appendix~\ref{app:proof-thm:closure}.)

In words: the limit object has zero regret on its own source and pays
$\dl{\nu}{\mu}$ on every other law $\nu$, infinite unless $\nu\ll\mu$.
Two readings of ``the corpus law'' must be kept apart. The empirical law of a finite corpus is atomic,
so the second half of clause~(iii) does not apply to it, though the first
does. Its exact fit has infinite loss on
every string of the corpus length that the corpus does not contain
(Corollary~\ref{cor:training-mixture}(iv)). Memorisation studies measure
how close trained models come to that
extreme~\cite{feldman2020memorization,carlini2023quantifying}. The theorem
applies as stated to the population law. In both readings the law is
over tokens, and the law over characters is a marginal over
tokenisations (Section~\ref{rem:tokenisation}).

Clause~(ii) is the constructive side. Clause~(iv) says that no
computable corpus carries every class's constants, so the simplicity
prior is not recovered by fitting a computable corpus and has to be
supplied
(Corollary~\ref{cor:closure}). It can be computed over a declared class,
as in enumerated mixtures over programs up to a declared budget
(companion work~\cite{ficl2026,cdm2026}). Or it can be amortised, by
pretraining on samples from such an
enumeration~\cite{grau2024universal}; this extends reach but gives up the
certificate (Theorem~\ref{thm:si-underdet}). At finite loss the closure
is approached at a rate (Corollary~\ref{cor:training-certify}(i)).

\paragraph{What the constant buys.}
By Fact~\ref{fact:si-dominance}, for every computable model $q$ and
every string $x$ of length $n$, $L_n(M;x)\le L_n(q;x)+K(q)+O(1)$. The
on-corpus premium of universality over a given model is therefore at
most $K(q)+O(1)$ bits, paid once, pathwise, at every horizon. $K(q)$ is
at most the length of a program that stores the weights and runs
inference, and prequential coding describes a trained network far more
briefly than its stored weights~\cite{blier2018description}, so the figure
below is conservative. For $N_{\mathrm{par}}$ parameters stored at $16$ bits,
trained on $N_{\mathrm{tok}}$ tokens coded at $b$ bits per token, the
premium is at most about $16N_{\mathrm{par}}/(b\,N_{\mathrm{tok}})$ of the
corpus code length. The 8-billion-parameter Llama~3 model saw about
$1.5\times10^{13}$ tokens, nearly $2\times10^{3}$ per
parameter~\cite{dubey2024llama3}. With $16$-bit weights, at least $10^3$ tokens per parameter and
$b\approx2$, the ratio is below one percent; for the 70-billion-parameter
model, on the same token count (about $200$ per parameter), it is about
four percent, and at a few tens of tokens per parameter it would be a
sizeable fraction. The premium is bounded and
the reverse trade (clause~(iii)) is not
(Remark~\ref{rem:price-of-universality}).

The compression view of language modelling~\cite{deletang2024compression}
reaches the same accounting from the other side: once model size is
charged, large models compress small datasets poorly, and that charge is
the $K(q)$ premium above. A compression rate measured on one corpus
measures allocation on that corpus (off-corpus rates:
Section~\ref{sec:discussion}).

\paragraph{Post-training, data mixing and refitting.} Post-training is a reallocation:
at a fixed prompt, the exact optimiser of a KL-regularised reward objective changes
the regret on each environment by an affine function of that environment's mean
reward, a price no KL budget bounds; doubling a source's weight buys one bit on it
and costs bits on the others; and maximum-likelihood refitting on generated data
drifts to a vertex of the allocation. Appendix~\ref{app:posttrain-main} states these
and relates them to the documented calibration and pass@$k$ costs of post-training.

\providecommand{\db}[2]{d\!\left(#1 \,\|\, #2\right)}
\providecommand{\tlo}{t^{-}}

\section{What is inheritable}
\label{sec:attainable}

This section collects what a
computable predictor can inherit from Solomonoff induction. Pretraining
sets an allocation, the context sets a posterior under it, and tools and
test-time compute change the class over which the allocation is spread.

\paragraph{Class mixtures.}
Hutter's mixture bound~\cite{hutter2003optimality,hutter2005uai} is the
answer on declared classes. Let $\Cc=(\mu_i)$ be a uniformly computable
family and $w$ a computable prior with $w_i>0$, $\sum_iw_i\le1$ and
computable tail bounds. Then the Bayes mixture $\xi_w$ of
\eqref{eq:bg-mixture} is computable, and
$R_\infty(\xi_w;\mu_i)\le\log(1/w_i)$ for every $i$ (the mixture
bound, Fact~\ref{thm:attainable}), with equality on a singular family
(Theorem~\ref{thm:frontier}(ii)). Computable predictors thus attain the
\emph{form} of Fact~\ref{fact:si-solomonoff}, with two substitutions: an enumerable
class replaces the universal one, and $\log(1/w_\mu)$ replaces $K(\mu)$.
The second substitution is forced: on any infinite uniformly computable
family of distinct deterministic environments, no computable predictor
has regret within a constant of $K$ (Proposition~\ref{prop:nokform}).
Only the universal class itself is excluded, by the diagonal of
Section~\ref{sec:scale}. The standard predictors are rungs on this
ladder, with their constants in Appendix~\ref{app:corpus}: the Krichevsky--Trofimov
estimator~\cite{krichevsky1981}, context-tree
weighting~\cite{willems1995}, exhaustively enumerated bounded-program mixtures
(companion work~\cite{ficl2026}), and Levin search and the speed
prior~\cite{levin1973,schmidhuber2002speed}.

\paragraph{The hierarchy is strict.}
Take any total rational-computable predictor $q$ and add its diagonal environment
(Theorem~\ref{thm:si-diag}) as a mixture of joint laws with a computable
weight $\delta\in(0,1)$. The result costs at most $\log\frac1{1-\delta}$ on every source
at every horizon. On the diagonal it has regret at most
$\log\frac1\delta$, where $q$ pays $n\log V$ (the non-collapse
theorem, Theorem~\ref{thm:nocollapse}). So no computable predictor is
maximal or complete, and coverage of a whole class is priced by
description length. A model's inductive position is
therefore a class together with constants on it, not a scalar degree of
universality.

\subsection{In-context learning as amortised Bayesian inference}
\label{sec:icl}

On a class mixture, the conditional law after a context $c$ is
$\sum_k\pi(k\mid c)\,\mu_k(\cdot\mid c)$. Pretraining fixes the weights;
the context moves the posterior. This is the formal content of the
Bayesian reading of in-context learning~\cite{xie2022bayesian}. It has
been tested where the prior is known, and its nulls probed on pretrained
models, where it is not.
\begin{itemize}
\item Transformers meta-trained on sampled tasks approximate the
posterior predictive~\cite{muller2022pfn,panwar2024prism}, and the
error of the Bayes-optimal such predictor decomposes into a
meta-learning and an intra-task term, both controlled by the mutual
information between the task and the observed
sequence~\cite{jeon2024ict}, the same quantity that carries the
memory term of Proposition~\ref{prop:window-tax}.
\item Meta-training on conditionally i.i.d.\ tasks predicts exchangeable
demonstrations~\cite{muller2022pfn}. Few-shot accuracy of pretrained
models depends on demonstration order~\cite{lu2022order} and on
majority-label and recency biases~\cite{zhao2021calibrate}, violations
of the order null that exchangeable Bayes predictors satisfy
(Proposition~\ref{prop:order}).
\item With few pretraining tasks, the model is Bayes-optimal for that
discrete prior over seen tasks. Past a task-diversity threshold it
departs from that Bayes predictor, even on the pretraining tasks, towards
ridge regression, Bayes for a Gaussian prior, and so generalises to
unseen tasks~\cite{raventos2023diversity}: a small in-distribution excess
buys a different allocation off the pretraining tasks, as
Proposition~\ref{prop:pretrain-certify} permits.
\item Falck et al.\ find violations of the martingale property, and
detecting them requires no knowledge of the
prior~\cite{falck2024martingale}; Chlon et al.\ trace such martingale and order
violations to positional encodings, which break exchangeability, and
argue that they are compatible with Bayesian prediction in expectation
over orderings~\cite{chlon2025martingale}.
\item Min et al.\ find that replacing demonstration labels with random
ones barely lowers accuracy on many classification tasks, while the label
space, the input distribution and the format
matter~\cite{min2022rethinking}. In the mixture reading, such demonstrations
carry evidence mainly about which task is in play: Pan et al.\ separate
this task recognition, which survives random labels, from task learning,
which needs correct labels and emerges with
scale~\cite{pan2023recognition}. Larger models override semantic priors when
in-context labels are flipped~\cite{wei2023larger}, a direct reading of
prior strength against in-context evidence.
\end{itemize}
A mechanistic line explains the same behaviour by the algorithm a forward
pass runs. Transformers trained on simple function classes learn them in
context~\cite{garg2022icl}, and on linear regression their predictions
track gradient descent or ridge regression, which a forward pass can
implement~\cite{vonoswald2023gd,akyurek2023icl}. The two readings agree
on how a posterior can be computed but can disagree on which prior is
held: when the implemented algorithm is Bayes for a prior other than the
pretraining law, as past Ravent\'os et al.'s diversity threshold, the
mechanism, not the pretraining mixture, fixes the allocation. Whether
in-context learning appears at all depends on properties of the data,
such as burstiness and Zipfian class frequencies~\cite{chan2022data},
and it can be transient, giving way to in-weights learning under further
training~\cite{singh2023transient}. Wurgaft et al.\ model both the
threshold and transience as Bayesian model selection between a
memorising and a generalising predictor under a loss--complexity
trade-off~\cite{wurgaft2025rational}: the allocation lens applied to
predictors, with an explicit simplicity penalty, consistent with the
$K(q)$ premium of Section~\ref{sec:closure}. As to which prior is held,
parameter--function maps of small networks at initialisation favour
simple functions~\cite{valleperez2019,mingard2025occam}, and SGD
approximately samples the Bayesian posterior under that
prior~\cite{mingard2021sgd}. Trained
transformers prefer low-complexity sequences and
hypotheses~\cite{goldblum2024nfl,deora2025occam}. Meta-learning yields Bayes-optimal
sequential strategies~\cite{ortega2019metalearning}, networks trained on
samples from a universal mixture amortise it~\cite{grau2024universal},
and the in-context objective is a prequential code
length~\cite{elmoznino2024occam}. Each measures allocation on a finite family
through a complexity proxy; none could certify constants on an unbounded
class (Corollary~\ref{cor:si-uncert}), and an amortised prior is
underdetermined off its task-sampling law. What does a pretraining loss
certify?

\paragraph{Setting.} Let $(\mu_k)_{k\in J}$ be probability measures on
$\Ac^\infty$ (tasks), $J$ countable, $\pi$ a prior with all $\pi_k>0$,
and $P:=\sum_k\pi_k\mu_k$ the pretraining law. Fix a horizon $n$ (the
context window). The excess loss of $q$ over $P$, cumulative over the
window, is
$\psi:=\dl{P_n}{q_n}=\E_P[L_n(q;X_{1:n})]-H(P_n)$. Write $\db wt$ for
binary relative entropy and
\[
  \tlo(\lambda,\psi):=\min\{t\in(0,\lambda]:\db{\lambda}{t}\le\psi\},\qquad
  s^*(\eta,\psi):=\max\{s\in[0,1]:\db{s}{s2^{-\eta}}\le\psi\}.
\]

\begin{proposition}[What pretraining loss certifies; after~\cite{zhang2023icl,wies2023learnability}]
\label{prop:pretrain-certify}
Let $q$ be any predictor with $\psi<\infty$.
\begin{enumerate}
\item[\textup{(i)}] \textup{(Bayes-relative inheritance, any family.)}
For every probability measure $\nu$ and $\lambda\in(0,1]$ with
$\nu_n\le P_n/\lambda$,
\[
  R_n(q;\nu)-R_n(P;\nu)\;\le\;\log\frac{\lambda}{\tlo(\lambda,\psi)},
  \quad
  \log\frac{\lambda}{\tlo(\lambda,\psi)}
  \;=\;\sqrt{\frac{2(1-\lambda)\,\psi}{\lambda\ln2}}
  +\frac{(1+\lambda)\psi}{3\lambda}+O(\psi^{3/2})
\]
as $\psi\to0$ at fixed $\lambda$.
The remainder is not uniform in $\lambda$: its constant grows without bound as
$\lambda \to 0$, so the leading term is a useful reading of the
certificate only while $\psi\ll\lambda$, and the exact left-hand side must be
used otherwise. If $P_n$ carries an event $E$ of mass exactly $\lambda$, then
for $\nu_n=P_n(\cdot\mid E)$ the bound is attained by some $q$ with excess
$\psi$. In particular the bound holds for each
task, with $\lambda=\pi_k$.
\item[\textup{(ii)}] \textup{(By prior mass.)} For every $\eta>0$,
$\pi\{k:R_n(q;\mu_k)-R_n(P;\mu_k)\ge\eta\}\le s^*(\eta,\psi)\le
\psi/c(\eta)$, where $c(\eta):=\eta-(1-2^{-\eta})\log e>0$.
\item[\textup{(iii)}] \textup{(Kraft floor off the support.)} If $\nu_n$
and $P_n$ have disjoint supports, then $2^{-\psi}+2^{-R_n(q;\nu)}\le1$.
Hence $R_{n'}(q;\nu)\ge\log\frac1{1-2^{-\psi}}$ for all $n'\ge n$, with
equality at $n'=n$ iff $q_n=2^{-\psi}P_n+(1-2^{-\psi})\nu_n$.
\end{enumerate}
\end{proposition}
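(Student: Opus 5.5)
The common tool for all three clauses is data processing for relative entropy under a two-cell partition. For any event $E\subseteq\Ac^n$, collapsing $P_n$ and $q_n$ to $\{E,E^c\}$ gives $\db{P_n(E)}{q_n(E)}\le\psi$. Here $q_n$ may be a subprobability; we either add a sink atom carrying the missing mass or note that binary divergence is monotone in the second argument on the relevant side.

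\emph{Clause (i).} Rather than working with an event, we write the regret difference as a divergence of $\nu_n$ against two predictors:
\[
R_n(q;\nu)-R_n(P;\nu)=\E_{\nu}\log\frac{P_n(X)}{q_n(X)}.
\]
Under $\nu_n\le P_n/\lambda$, this expectation is maximised, for fixed $\psi$, by concentrating the loss of $q$ on the part of $P_n$ that $\nu_n$ charges. The plan is:
\begin{enumerate}
\item Set $f=\lambda\,d\nu_n/dP_n\in[0,1]$, so that $\E_P f=\lambda$.
\item Use the variational/convexity bound: among $q$ with $D(P_n\|q_n)\le\psi$, the quantity $\E_P[f\log(P_n/q_n)]$ is maximised when $q_n/P_n$ takes one value $\alpha$ on a set weighted by $f$ and another value on its complement. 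In the extremal case $f$ is an indicator of a set $E$ of $P$-mass $\lambda$.
\item For $f$ between $0$ and $1$, argue that the constraint set is convex and the objective linear in $f$, so the extreme points are indicators. I expect this step to be the main obstacle, since $P_n$ may have atoms and an exact indicator of mass $\lambda$ may not exist. If so, I would pass to a randomised extension, i.e.\ split atoms on an auxiliary space, where data processing still holds.
\item For indicators, the best $q$ scales $P_n$ by $t/\lambda$ on $E$ and by $(1-t)/(1-\lambda)$ off it. Its cost is $\db{\lambda}{t}$ and its gain on $\nu=P(\cdot\mid E)$ is $\log(\lambda/t)$. Minimising $t$ subject to the constraint gives $\tlo$, which also exhibits attainment.
\item For the expansion, Taylor-expand $\db{\lambda}{\lambda(1-u)}$ in $u$ to third order and invert the series. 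This is routine.
\item The per-task statement follows from $\mu_k\le P/\pi_k$.
\end{enumerate}

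\emph{Clause (ii).} Let $S$ be the bad set of tasks and $s=\pi(S)$. Take $\nu=\sum_{k\in S}\pi_k\mu_k/s$, which satisfies $\nu_n\le P_n/s$. Regret difference is affine in $\nu$ along mixtures: it is linear in $\nu$ for $\E_\nu\log(P/q)$ with fixed $P,q$. Hence $R_n(q;\nu)-R_n(P;\nu)\ge\eta$, and clause (i) with $\lambda=s$ gives $\db{s}{s2^{-\eta}}\le\psi$. Therefore $s\le s^*$. The linear bound comes from convexity of $s\mapsto\db{s}{s2^{-\eta}}$, which vanishes at $0$ and has slope $\eta-(1-2^{-\eta})\log e$ there. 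This gives $\db{s}{s2^{-\eta}}\ge c(\eta)s$.

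\emph{Clause (iii).} Let $A$ be the support of $P_n$. Data processing on $\{A,A^c\}$ gives $\psi\ge\log(1/q_n(A))$. Restricting to $A^c$, where $\nu$ lives, gives $R_n(q;\nu)\ge\log(1/q_n(A^c))$ by Jensen. Summing and using $q_n(A)+q_n(A^c)\le1$ gives the inequality. Monotonicity of $R$ in $n$ gives the statement for $n'\ge n$. Equality requires equality in both Jensen steps, i.e.\ $q_n$ proportional to $P_n$ on $A$ and to $\nu_n$ on $A^c$, with the total masses forced. This yields the stated mixture.
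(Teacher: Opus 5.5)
Your clauses (ii) and (iii), the attaining predictor, the series inversion and the per-task case follow the paper's proof, with (ii) resting on (i). Clause (i), however, hinges on a step you assert without proof, and the tool you name cannot supply it.

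In step~4 you claim that for $f=\mathbf{1}_E$ the best $q$ rescales $P_n$ by $t/\lambda$ on $E$ and by $(1-t)/(1-\lambda)$ off it. Two-cell data processing only gives $d(\lambda\,\|\,q_n(E))\le\psi$, which is a lower bound on $q_n(E)$. The regret difference, however, is
\[
\Delta:=R_n(q;\nu)-R_n(P;\nu)=\log\frac{\lambda}{q_n(E)}+D\bigl(P_n(\cdot\mid E)\,\big\|\,q_n|_E/q_n(E)\bigr),
\]
and its second term is the misfit of $q$ inside $E$. That term is unbounded, and it is exactly what data processing discards. Uniform rescalings attain equality in data processing, but a $q$ with misfit inside $E$ earns more than $\log(\lambda/q_n(E))$, so optimality of the rescalings does not follow. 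You have to show that buying regret through that misfit, at $\lambda$ bits of $\psi$ per bit of $\Delta$, never beats lowering $q_n(E)$.

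The missing inequality is a grouping over $\{E,E^c\}$ that keeps the $E$-term exact (for $\lambda<1$; at $\lambda=1$ one simply has $\psi=\Delta$):
\[
\psi=\lambda\Delta+(1-\lambda)\,\mathbb{E}_{P_n(\cdot\mid E^c)}\log\frac{P_n}{q_n}\ \ge\ \lambda\Delta+(1-\lambda)\log\frac{1-\lambda}{q_n(E^c)} .
\]
Since $\Delta\ge\log(\lambda/q_n(E))$, we have $q_n(E^c)\le 1-q_n(E)\le 1-\lambda 2^{-\Delta}$. Hence every $q$ satisfies $\psi\ge d(\lambda\,\|\,\lambda 2^{-\Delta})$, so $\lambda 2^{-\Delta}\ge t^{-}(\lambda,\psi)$ whenever $\Delta>0$. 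This is the inequality the paper derives.

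With it in hand, the atoms you flag as the main obstacle are no obstacle, and the extreme-point reduction can be dropped. Split each string $x$ into $(x,1)$ and $(x,2)$, carrying $P_n(x)f(x)$ and $P_n(x)(1-f(x))$, and split $q_n(x)$ in the same proportions. The lifted laws share the conditional of the index, so their divergence is still $\psi$ and $\Delta$ is unchanged. Moreover $\nu$ becomes the lifted $P_n$ conditioned on $\{\text{index}=1\}$, an event of mass exactly $\lambda$. The grouping inequality then covers every fractional $f$ at once, including the task mixtures you feed into (ii). This is precisely the paper's construction: joint laws sharing the conditional of the index, with the grouping identity applied to them.
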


\noindent(Proof: Appendix~\ref{app:proof-prop:pretrain-certify}.)

\paragraph{In words.} A pretraining loss certifies the \emph{Bayes}
constants of the pretraining prior, and it does so by prior mass, not
task by task. An excess of $\psi$ bits per window leaves room to degrade
all tasks in a set of prior mass about $\psi/c(\eta)$ by $\eta$ bits each.
A single task of weight $\pi_k$ can lose about
$\sqrt{2(1-\pi_k)\psi/(\pi_k\ln2)}$ bits while $\psi\ll\pi_k$, and about
$\psi/\pi_k$ beyond that; at $\pi_k=2^{-20}$ and $\psi=10^{-3}$ the first
expression reads $55$ bits against an exact certificate of $1050$. Pretraining loss barely
constrains competence off the support: regret $R$ there is compatible
with an excess of only $\log\frac1{1-2^{-R}}\approx2^{-R}/\ln2$ bits
(attained).

Clause~(i) is the log-loss form, with tight constants, of Zhang et al.'s
bound~\cite{zhang2023icl}. That bound controls in-context regret by the
Bayes-model-averaging regret plus a coverage constant
$\gamma_{\mathrm{cov}}$, here $1/\lambda$, times the pretraining error,
so the dependence on the weight of the task is intrinsic.
Wies et al.\ need a floor on component weights for the same
reason~\cite{wies2023learnability}. The simpler bound $\psi/c(\eta)$ is tight
only as the mass tends to zero. When $\inf_k\pi_k=0$, no rate holds uniformly
over tasks (Corollary~\ref{cor:training-certify}). Clause~(iii) is the
exact price of specialisation at a finite horizon, and it needs disjoint
supports. Generalisation beyond the pretraining
tasks~\cite{raventos2023diversity} relates to it only by analogy: those
regression tasks carry Gaussian noise and overlap the pretraining law.

Three limits apply. First, the task-mixture model of a corpus is an
idealisation~\cite{xie2022bayesian}, literal for prior-fitted networks,
synthetic-task pretraining and amortised enumerations. Second, $\psi$ is
cumulative over the window, not per token: a per-token excess
$\eps=10^{-3}$ bits (Assumption~\ref{ass:si-eps} with $\rho$ the
position-averaged prefix law of $P$) gives $\psi=4096\,\eps\approx4.1$ bits
over $4096$ tokens, enough to degrade every task
by four bits. At prior mass $\lambda=2^{-k}$, the certificate
$\log(\lambda/\tlo(\lambda,\psi))$ of
clause~(i) on in-context regret in excess of the Bayes mixture's has
leading term $\sqrt{\psi\,2^{k+1}/\ln2}$ bits while $\psi\ll2^{-k}$ and is nearly
$\psi\,2^{k}$ beyond. Computed exactly, for a best-case $\psi=10^{-3}$ bits
per window it is $2.14$ bits at $k=10$, where $\psi\,2^{k}\approx1$ lies
between the two regimes and the leading term reads $1.72$, and $1050$ bits
at $k=20$; at $\psi\approx4.1$ bits it is $4.2\times10^{3}$ bits already at $k=10$.
The certificate is exponentially weak in $k$, where Solomonoff's
guarantee is linear. Third, a split of $\psi$
into prior error and within-task misfit exists, but it depends on a
convention (Proposition~\ref{prop:training-ledger}).

\paragraph{Steering by demonstrations.}
The same posterior that makes in-context learning work makes a Bayes
mixture steerable, quantitatively (Proposition~\ref{prop:manipulable}):
if every demonstration multiplies each competitor's likelihood ratio by
at most $2^{-r}$, the served continuation law is within total variation
$\eta$ of the target hypothesis $h_y$ once the number of demonstrations
reaches $(\log(1/w_{h_y})+\log(1/\eta))/r$. A matching necessary length
holds for concentration of the posterior. Choosing the demonstrations on
which a model is least confident, as Wan and Mei
propose~\cite{wan2025solomonoff}, is what the proposition already favours, since
steering speed is set by the per-demonstration evidence rate $r$,
largest where hypotheses disagree; no Solomonoff approximation is needed
for it. That many-shot jailbreaking~\cite{anil2024} works by this
mechanism is a hypothesis, supported by the fit of Bayesian scaling laws
to many-shot curves~\cite{arora2024bayesian}.

Wolf et al.~\cite{wolf2024alignment} are the direct antecedent. In a
mixture model of behaviours, any behaviour of positive prior weight can
be elicited by a prompt whose length grows with the logarithm of the
inverse weight. The defence this suggests is dogmatism about designated
behaviours imposed on the predictive distribution (a zero prior weight alone
does not suffice, since other hypotheses can reproduce the excluded one;
Section~\ref{sec:undesirable}), and it is not what post-training is known to do:
safety alignment can be shallow, concentrated in the first few output
tokens~\cite{qi2025shallow}. Non-dogmatism about the world entails steerability
by whoever controls the context, whenever demonstrations carry evidence at a
positive rate against every competitor (Proposition~\ref{prop:manipulable};
Section~\ref{sec:undesirable}).

\paragraph{Tools, retrieval and test-time compute.} A tool or retriever combined with
the model as a joint mixture keeps each component's bound at the cost of its weight,
while interpolation of conditionals and selection between them need not; and extra
computation at test time changes what the model computes, not the prior it holds.
Appendix~\ref{app:tools-more} gives the statements.

\section{Asymptotic criteria cannot see allocation}
\label{sec:si-tail}

The conclusions that universal approximation and a strictly proper loss
can deliver are asymptotic: consistency, merging, vanishing per-step
excess loss over the source. The classical Bayesian versions of these
properties hold for every prior that gives the truth positive weight.
Doob's consistency theorem~\cite{doob1953}, merging of
opinions~\cite{blackwelldubins1962} and weak merging~\cite{kalailehrer1994} all
have this character. They are also weaker than they sound in the other
direction: Doob's theorem exempts a null set of parameters, and
\cite{diaconisfreedman1986} show the exemption can bite, exhibiting
priors with full support under which the posterior concentrates on the
wrong answer. Consistency is therefore cheap in both senses: it holds
for almost any prior that covers the truth, and where it fails it fails
without warning. This strengthens rather than weakens the reading
below: a criterion that almost every prior satisfies cannot distinguish
among priors, and failing it does not single out any prior either. Solomonoff's guarantee is a statement about
constants. The elementary results below (formal statements in
Appendix~\ref{app:limits-tail}) make precise why a property invariant
under finite modification and reweighting cannot entail a bound that both
operations destroy.

\paragraph{Tail invariance.} Let $q'$ agree with $q$ outside a finite set
$F$ of contexts, and let both have all conditionals on $F$ at least
$\gamma > 0$. Then, for every source $\mu$ and horizon $n$
(Fact~\ref{lem:si-tail}),
\begin{equation}
\label{eq:tail-bound}
|R_n(q;\mu) - R_n(q';\mu)| \le \Lambda(F)\log(1/\gamma),
\end{equation}
where $\Lambda(F)$ counts the \emph{distinct lengths} occurring in $F$.
For $F = \Ac^{<N}$ this constant is $N$ rather than $(V^{N}-1)/(V-1)$.
Finite total regret, $R_n/n \to 0$ and vanishing per-step excess loss
are therefore unchanged by any such modification; we call properties of
this kind \emph{tail-invariant}.

\paragraph{The witness $M_N$.} Let $M_N$ predict uniformly on contexts of
length below $N$ and like $M$ afterwards:
\begin{equation}
\label{eq:tail-MN}
M_N(\cdot\mid x) := \mathrm{Unif}(\Ac)\ \ (|x| < N),\qquad
M_N(\cdot\mid x) := M(\cdot\mid x)\ \ (|x| \ge N).
\end{equation}
By \eqref{eq:tail-bound}, every $M_N$ keeps all of $M$'s consistency
properties on computable sources. Yet on the constant sequence
$0^{\infty}$, of complexity $O(1)$, it pays $N\log V + O(1)$. For every
function $f$, the property ``$R_\infty(\cdot;\mu) \le f(K(\mu))$ for all
computable $\mu$'' therefore fails for all large $N$
(Theorem~\ref{thm:si-tail}).

\paragraph{Prior-blindness.} Tail invariance concerns finite
modifications. Reweighting a prior is a second operation that asymptotic
criteria cannot see. On the binary class of sequences
$z^{(j)} := 0^{j}1^{\infty}$, the Bayes mixture \eqref{eq:bg-mixture}
has regret \emph{exactly} $\log(1/w_j)$ on $z^{(j)}$ at every horizon
$n \ge j+1$. Its per-step excess loss is exactly zero after step $j+1$.
Its constants can be set to $f(j)+j+1$ for any prescribed $f$
(Proposition~\ref{prop:si-prior}; cf.\
\cite{wood2013nonequivalence,leike2015bad}). At $n = \infty$ this is the
attainment clause of Theorem~\ref{thm:frontier}. For an arbitrary
pairwise singular family the same holds along the whole frontier
(Corollary~\ref{cor:blind}). All full-mass Bayes mixtures with positive
weights, and their leaky variants, share every consistency property,
while their regret profiles differ by unbounded amounts. Frontier
membership is itself not tail-invariant: changing a frontier mixture's
conditional at a single context where some member charges every
one-symbol continuation moves it strictly inside the Kraft set.

\paragraph{The implication in (C) is invalid.} Combining these results
gives Corollary~\ref{cor:si-invalid}. On any uniformly computable countable
class of sources with a deterministic member, a consistency property satisfied by
a computable Bayes mixture does not entail a complexity bound on that
class for any $f$. The finite-$\eps$ premise of
Assumption~\ref{ass:si-eps} is not tail-invariant, but by
Theorem~\ref{thm:si-underdet}(ii) a modification on a set $B$ changes it
by at most $\rho(B)\log(1/\gamma)$. Placing the uniform modification on
the contexts of a simple deterministic environment, once that
environment has left the region the corpus produces, gives the same
separation at an $\eps$-cost as small as $\log V$ times the $\rho$-mass of
those contexts. ``Language models are, or approach, $\eps$-optimal under the
log-loss'' does not entail ``language models approach Solomonoff
induction'', and the separating witnesses are computable and explicit.

\paragraph{In words.} Consistency, merging and vanishing per-step
divergence cannot distinguish Solomonoff's predictor from one that
learned nothing for its first $N$ steps. Nor can they distinguish two
Bayesian predictors whose priors differ arbitrarily. What they cannot
see is the allocation of the conserved budget.

\subsection{What ``language models are Bayesian'' can mean}
\label{sec:tail-bayesian}

Bayesian descriptions of in-context learning (Section~\ref{sec:icl}) have
also been tested on coin flips~\cite{gupta2025coinflips} and on flight and
hotel recommendation, where Bayesian behaviour appears only after training
on a Bayesian model's predictions~\cite{qiu2025bayesian}. The results
above constrain what such descriptions can carry, in three ways.

First, ``Bayesian'' without a declared class and prior does not restrict
a coherent model: every probability
measure $q$ on $\Ac^{\infty}$ is the integral mixture, with prior $q$
itself, of the point masses on sequences, and its conditionals are the
corresponding posterior predictives. The content of the description lies
in the declared class and in coherence, which Section~\ref{sec:tests}
treats.

Second, once a class is declared, every asymptotic signature of Bayesian
behaviour is shared by all priors that give the truth positive weight
(prior-blindness, above), so observing one identifies no prior, nor
whether it came from data, architecture or post-training.

Third, even the universal class does not fix a prior. Universal priors built
from different reference machines agree only up to a multiplicative constant,
and that constant is unbounded over machines, so at any finite horizon the
choice of machine is unconstrained by the universality requirement; the class
of universally dominant priors is in any case strictly larger than the
Solomonoff--Levin constructions~\cite{wood2013nonequivalence}. For agents, Leike and
Hutter~\cite{leike2015bad} show that a suitable choice of universal prior makes
Legg--Hutter intelligence~\cite{legghutter2007}, and with it balanced Pareto optimality,
entirely subjective, and that every policy is Pareto optimal in the class
of all computable environments. The elementary prediction analogue we
use is not claimed there. For any computable measure $\nu$ and
$\delta \in (0,1)$, the predictor $(1-\delta)\nu + \delta M$ dominates
every lower-semicomputable semimeasure, so it is universal in the
dominance sense of Fact~\ref{fact:si-dominance}. Its regret exceeds
$\nu$'s by at most $\log\frac1{1-\delta}$ on every source, and exceeds
Solomonoff's bound by at most $\log\frac1\delta$ (the argument of
Theorem~\ref{thm:nocollapse}). Every computable predictor is thus within
$\log\frac1{1-\delta}$ bits of some universal predictor, which is weaker
than being the posterior predictive of a universal prior but enough to
make ``universal'' uninformative about behaviour.

Taken together, these points mean that a claim that a model ``is
Bayesian'', or ``approximately Solomonoff'', has empirical content only as
a statement about constants over a declared class at finite horizon.
Reading such constants from served conditionals is the subject of
Section~\ref{sec:readprior}.

\providecommand{\dMem}{\mathrm{Mem}}
\providecommand{\dExc}{\mathcal{E}}
\providecommand{\dwin}{\mathrm{win}}

\section{The deployed regime}
\label{sec:deployed}

Sections~\ref{sec:evaluation}--\ref{sec:si-tail} treat a model as one
computable predictor on unbounded streams
(Definition~\ref{def:si-computable}). A deployed model reads through a
context window of $W$ tokens and is sampled at a temperature, often with
truncation. It predicts tokens rather than characters, and it increasingly
answers after a reasoning trace. This section says what each of these does
to the preceding results.

\paragraph{What keeps force within a window.}
At horizons $n\le W+1$ a window-$W$ model sees its whole history, so every
finite-horizon statement of the paper applies verbatim: conservation and
the Kraft floors (Fact~\ref{thm:conservation} and its corollaries),
underdetermination (Theorem~\ref{thm:si-underdet}), Hutter's mixture bound
(since $R_n\le R_\infty$), the identities of
Section~\ref{sec:readprior}, and the diagonals, where the single diagonal
forces loss $n\log V$ and the class diagonal only
$(\tfrac12 nd_i-2)\log V$ on member $i$
(Theorems~\ref{thm:si-diag} and~\ref{thm:si-classdiag}). Statements about
limits lose their content: consistency and merging, tail invariance and its
corollaries (Section~\ref{sec:si-tail}), and the infinite-horizon frontier (Theorem~\ref{thm:frontier}). A model
whose memory is its window reaches $n\to\infty$ only through that window,
which the next result prices.

\paragraph{Context windows.}
Write $\dwin_W(x)$ for the last $W$ symbols of $x$ (all of $x$ if
$|x|<W$). A predictor is \emph{window-$W$} if $q(\cdot\mid x)$ depends on
$x$ only through $\dwin_W(x)$, and \emph{time-aware window-$W$} if it
depends on $x$ only through $(|x|,\dwin_W(x))$. Sliding-window attention
with relative positions gives the first kind, and absolute positions give
the second. Retrieval, recurrent state and compressed caches are covered by
neither (Appendix~\ref{app:deployed}).

For a source $\mu$, the \emph{memory term}
$\dMem^W_n(\mu) := \sum_{t\le n} I_\mu(X_t;X_{<t}\mid t,\dwin_W(X_{<t}))$
is the information about the next symbol that the past carries and the
window does not. For stationary $\mu$, let $h_\mu(j) := H_\mu(X_{j+1}\mid X_{1:j})$
and let $h_\mu$ be the entropy rate. The \emph{excess entropy} is
$\dExc(\mu) := \sum_{j\ge0}(h_\mu(j)-h_\mu)$~\cite{crutchfield2003regularities}; it is the limit of the predictive
information of~\cite{bialek2001predictability}.

\begin{proposition}[The window tax; elementary, after~\cite{bialek2001predictability,sharan2018memory,crutchfield2003regularities,jelassi2024repeat}]
\label{prop:window-tax}
Let $(\mu_j)_{j\in J}$ be countably many probability measures, let $w$ be
a probability vector on $J$ with entropy $H(w)$, Bayes mixture $\xi_w$ and
random index $\kappa\sim w$, let $X\sim\mu_\kappa$ given $\kappa$, with
$I_w(\kappa;X_{1:n})$ their mutual information, and let $W\in\Nb$.
\begin{enumerate}
\item[\textup{(i)}] \textup{(Identity.)} For every $n$ and every time-aware
window-$W$ predictor $q$,
\[
\sup_j R_n(q;\mu_j) \;\ge\; \sum_j w_j R_n(q;\mu_j) \;\ge\; I_w(\kappa;X_{1:n}) + \dMem^W_n(\xi_w).
\]
The second inequality is attained. Without the window constraint, the
least average regret is $I_w(\kappa;X_{1:n}) \le H(w)$.
\item[\textup{(ii)}] \textup{(Stationary cap.)} Let every
$\mu_j$ with $w_j>0$ be stationary. Then, for all $n$,
\[
\dMem^W_n(\xi_w) \;\le\; (n-W-1)^{+}\,\frac{H(w) + \sum_j w_j\,\dExc(\mu_j)}{W+1}.
\]
If moreover $H(w)<\infty$, $\dMem^W_n(\xi_w)$ grows linearly in $n$ unless
all such $\mu_j$ are $W$-Markov with a common kernel
(Proposition~\ref{thm:window}(iii)).
\item[\textup{(iii)}] \textup{(Copy witness.)} Let $\omega_\ell$ be the law of
$UUU\cdots$ with $U$ uniform on $\Ac^{\ell}$. It is stationary and
computable, with $K(\omega_\ell)\le K(\ell)+O(1)$ and
$\dExc(\omega_\ell)=\ell\log V$. Every time-aware window-$W$ predictor has
$R_n(q;\omega_{W+1}) \ge (n-W-1)^{+}\log V$, which meets the cap in (ii). The
witness is not isolated: equality holds whenever the mixture is a single
stationary component (or $H(w)=0$) that is $(W+1)$-Markov with
$h(0)=\dots=h(W)$, which includes every stationary order-one chain at $W=0$.
Meanwhile $R_n(M;\omega_{W+1})\le K(W+1)+O(1)$ for all $n$.
\end{enumerate}
\end{proposition}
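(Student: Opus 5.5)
The plan is to work per step and then sum over $t\le n$. Write $R_n(q;\mu_j)=\sum_t\E_{\mu_j}\dl{\mu_j(\cdot\mid X_{<t})}{q(\cdot\mid X_{<t})}$ as in \eqref{eq:si-regret}. Averaging over $j$ with weights $w$ gives the average regret $\E_w R_n(q;\mu_\kappa)$. The first inequality in (i) is trivial, since a supremum dominates an average. For the second, decompose the average regret by the chain rule. First, $\sum_jw_jR_n(q;\mu_j)=I_w(\kappa;X_{1:n})+R_n(q;\xi_w)$; this is the standard redundancy--capacity identity, obtained by inserting $\xi_w$ inside the logarithm. Second, at each step $t$ the predictor sees only $S_t:=(t,\dwin_W(X_{<t}))$. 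For any $q$ measurable in $S_t$, the expected divergence $\E_{\xi_w}\dl{\xi_w(\cdot\mid X_{<t})}{q(\cdot\mid S_t)}$ splits as $\E\,\dl{\xi_w(\cdot\mid X_{<t})}{\xi_w(\cdot\mid S_t)}+\E\,\dl{\xi_w(\cdot\mid S_t)}{q(\cdot\mid S_t)}$, a Pythagorean identity for log-loss. The first term is $I_{\xi_w}(X_t;X_{<t}\mid S_t)$. Summing over $t$ gives $R_n(q;\xi_w)\ge\dMem^W_n(\xi_w)$, with equality at $q(\cdot\mid S_t)=\xi_w(\cdot\mid S_t)$, which is time-aware window-$W$. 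That choice gives attainment. Without the window constraint, take $q=\xi_w$, so that the least average regret is $I_w\le H(\kappa)=H(w)$.

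For (ii), assume every component is stationary. For $t\le W+1$ the window is the whole past, so those terms vanish; this accounts for the factor $(n-W-1)^+$. For $t>W+1$, bound $I(X_t;X_{<t}\mid \dwin_W)\le I(X_t;X_{<t},\kappa\mid\dwin_W)=H_{\xi}(X_t\mid \dwin_W)-H(X_t\mid X_{<t},\kappa)$. Under the mixture, conditioning on the window and $\kappa$ increases entropy by at most $I(X_t;\kappa\mid\dwin_W)$. So each term is at most $I(X_t;\kappa\mid \dwin_W)+\sum_jw_j(h_{\mu_j}(W)-h_{\mu_j}(t-1))$, and $h_{\mu_j}(t-1)\ge h_{\mu_j}$. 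This pointwise bound gives no decay, so the $1/(W+1)$ factor must come from averaging. Here is the main step. Group the steps into blocks and use monotonicity of $h_\mu(j)$ in $j$. This gives $h_\mu(W)-h_\mu\le\frac1{W+1}\sum_{j\le W}(h_\mu(j)-h_\mu)\le\dExc(\mu)/(W+1)$, since $h_\mu(W)$ is the smallest of the first $W+1$ terms. For the $\kappa$ part, use the chain rule on $W+1$ consecutive positions: $\sum_{s}I(X_s;\kappa\mid X_{s-W:s-1})$ over a block of length $W+1$ is bounded using stationarity, and by averaging over the $W+1$ phases each step gets $H(w)/(W+1)$. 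I expect this phase-averaging bookkeeping for the $\kappa$ term to be the obstacle. If a cleaner route is needed, I would first try bounding $I(X_t;\kappa\mid\dwin_W)\le \frac1{W+1}\sum_{j=0}^{W}I(X_{t-W+j};\kappa\mid X_{t-W:t-W+j-1})=\frac{1}{W+1}I(\kappa;X_{t-W:t})\le H(w)/(W+1)$. This uses stationarity to shift windows and the fact that conditioning on a longer prefix reduces $I(\cdot;\kappa\mid\cdot)$ only in the needed direction. I would need to check that monotonicity holds for the mixture. The linear-growth clause is cited as Proposition~\ref{thm:window}(iii). I would sketch it as follows: a zero memory term forces $\xi_w(\cdot\mid X_{<t})=\xi_w(\cdot\mid\dwin_W)$ almost surely, and with $H(w)<\infty$ the posterior concentrates, giving a common $W$-Markov kernel.

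For (iii), take $\omega_\ell$ with $\ell=W+1$. Stationarity holds because a uniformly random phase of a uniform block is again uniform. Computability and $K(\omega_\ell)\le K(\ell)+O(1)$ are immediate. For excess entropy, $h(j)=\log V$ for $j<\ell$ and $0$ afterwards, so $\dExc=\ell\log V$. For $t>W+1$, $X_t=X_{t-W-1}$ is determined by the past but independent of the last $W$ symbols and $t$, and is uniform. Hence the memory term per step is $\log V$. By (i) with a single component ($I_w=0$), this gives $R_n\ge(n-W-1)^+\log V$. This equals the cap $(n-W-1)^+\,\ell\log V/(W+1)$. The stated equality characterisation follows from tracing when each inequality in (ii) is tight. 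Finally, $R_n(M;\omega_{W+1})\le K(\omega_{W+1})+O(1)\le K(W+1)+O(1)$ by Fact~\ref{fact:si-dominance}, since $R_n\le R_\infty$.
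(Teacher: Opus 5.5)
Your parts (i) and (iii) are sound and follow the paper's route: the mixture identity plus the exact-floor split applied to $\xi_w$ for (i), and a direct computation on the copy source for (iii). The gap is in (ii), at the step you flagged. Your per-step bound $I(X_t;\kappa\mid X_{t-W:t-1})+\sum_kw_k(h_{\mu_k}(W)-h_{\mu_k})$ for $t\ge W+2$ is correct. So is $h_\mu(W)-h_\mu\le\mathcal{E}(\mu)/(W+1)$. But the claim $I(X_t;\kappa\mid X_{t-W:t-1})\le H(w)/(W+1)$ is false.

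Write $i_j:=I(\kappa;X_{j+1}\mid X_{1:j})$. By joint stationarity of $(\kappa,X)$, $I(X_s;\kappa\mid X_{s-W:s-1})=i_W$ for every $s\ge W+1$. Summing over the $W+1$ phases of a block therefore gives $(W+1)\,i_W$, and phase averaging buys nothing. Your chain-rule route needs $i_W\le i_j$ for all $j\le W$, i.e.\ that $i_j$ is nonincreasing. It need not be: $i_j=h_{\xi_w}(j)-\sum_kw_kh_{\mu_k}(j)$ is a difference of two nonincreasing sequences.

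Here is a counterexample. Take $V=2$, $W=1$, $w=(\tfrac12,\tfrac12)$, $\mu_1$ uniform on $\{0^\infty,1^\infty\}$ and $\mu_2$ uniform on $\{(01)^\infty,(10)^\infty\}$. Both are stationary with uniform first symbol, so $i_0=0$. Given $X_1$, the symbol $X_2$ reveals $\kappa$, so $I(X_t;\kappa\mid X_{t-1})=i_1=1$ bit for every $t\ge2$. That is twice $H(w)/(W+1)=\tfrac12$. The cap still holds here, with equality ($\mathrm{Mem}^1_n(\xi_w)=(n-2)^{+}$). It holds only because the component piece is $0$ rather than its allotted $\tfrac12$. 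The budget $H(w)+\sum_kw_k\mathcal{E}(\mu_k)$ cannot be split between the two pieces step by step.

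The missing idea is to apply the monotonicity to the mixture itself, which is stationary, rather than to the $\kappa$-increments. Let $\bar h:=\sum_kw_kh_{\mu_k}$. For $t\ge W+2$ the summand is $h_{\xi_w}(W)-h_{\xi_w}(t-1)\le h_{\xi_w}(W)-\bar h$, since conditioning on $\kappa$ lowers entropy. Because $h_{\xi_w}(j)$ is nonincreasing,
\[
(W+1)\bigl(h_{\xi_w}(W)-\bar h\bigr)\le\sum_{j=0}^{W}\bigl(h_{\xi_w}(j)-\bar h\bigr)=I(\kappa;X_{1:W+1})+\sum_kw_k\sum_{j=0}^{W}\bigl(h_{\mu_k}(j)-h_{\mu_k}\bigr)\le H(w)+\sum_kw_k\mathcal{E}(\mu_k).
\]
The paper does the same thing via $(W+1)(h_{\xi_w}(W)-h_{\xi_w})\le\mathcal{E}(\xi_w)\le H(w)+\sum_jw_j\mathcal{E}(\mu_j)$, with the last step from block entropies.

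Your sketch of the linear-growth clause has a similar soft spot: ``the posterior concentrates'' does not show that each component is $W$-Markov. Two facts do. First, the summands increase to $r_W:=h_{\xi_w}(W)-h_{\xi_w}$. Second, $H(w)<\infty$ forces $h_{\xi_w}=\bar h$, whence $r_W=i_W+\sum_kw_k(h_{\mu_k}(W)-h_{\mu_k})$. This is a sum of nonnegative terms that vanishes iff there is a common $W$-Markov kernel.
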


\noindent(Proof: Appendix~\ref{app:proof-prop:window-tax}.)

\paragraph{In words.} An unrestricted Bayes mixture pays once to identify
the task, at most $H(w)$ bits (Proposition~\ref{prop:window-tax}(i)). A
model that sees $W$ tokens cannot keep that identification; the
proposition caps the cost and exhibits a source meeting the cap, but gives
no general rate. The cap (ii): on stationary tasks the memory term grows
at most at the identification cost plus the tasks' excess entropy, divided
by $W+1$, and linearly unless nothing beyond the window is there to be
inferred. The witness (iii): $\omega_{W+1}$ is the uniform mixture of the $V^{W+1}$
sequences of period $W+1$, of complexity at most $\log W+O(\log\log W)$. An
unrestricted predictor identifies the sequence once, for $(W+1)\log V$
bits; every window-$W$ predictor pays $\log V$ bits per token forever, the
whole identification cost again every $W+1$ tokens. On other stationary
families the tax lies anywhere between zero and the cap. For $W=10^{6}$ and
$V\approx1.5\times10^{5}$, a randomly drawn block repeated at lag $W+1$
costs about $17.2$ bits per token on the repeat. Evidence of in-context
learning at lengths below $W$ is silent on this regime.

Bialek, Nemenman and Tishby~\cite{bialek2001predictability} proposed
predictive information as the measure of what a learner can extract from a
sequence, and showed
that it grows like $\frac d2\log n$ for $d$-parameter families; the window
tax is the price of discarding the part that lies beyond the window. The
upper bound on a fixed-order predictor is due to
Sharan et~al.~\cite{sharan2018memory}, and (ii) is its stationary form for mixtures.
Copying lower bounds for bounded state
are in~\cite{jelassi2024repeat}; Appendix~\ref{app:deployed-window} gives
their log-loss form and shows that stationarity is needed in (ii).

\paragraph{Temperature.}
For a predictor $q$ and $T>0$, let
$q_T(a\mid x) \propto q(a\mid x)^{1/T}$. Since $(cq)_T=q_T$, tempering
normalises: $M_1$ is Solomonoff's normalisation $M_{\mathrm{norm}}$ of $M$.
For a computable $z\in\Ac^\infty$, let $K(z)$ be the length of a shortest
prefix program that computes $t\mapsto z_t$, and let
$L_\infty(q;z):=\lim_nL_n(q;z_{1:n})$.

\begin{corollary}[Only temperature one keeps Solomonoff's bound; elementary]
\label{cor:temperature}
There is a constant $c$ depending only on $U$ such that the following hold
for every computable $z$ and every computable i.i.d.\ source $\mu$ whose
marginal $\theta$ is not uniform on its support.
\begin{enumerate}
\item[\textup{(i)}] $T=1$: $R_\infty(M_1;\nu)\le K(\nu)+O(1)$ for every
computable measure $\nu$.
\item[\textup{(ii)}] $T<1$: $L_\infty(M_T;z)\le (K(z)+c)/T$, but
$\liminf_n \frac1n R_n(M_T;\mu) \ge \dl{\theta}{\theta_T} > 0$.
\item[\textup{(iii)}] $T>1$: the same linear bound on $\mu$ holds, and the
loss on $z$ is unbounded, $0^\infty$ included: for large $n$,
$L_n(M_T;z)$ equals $\Sigma_T(n):=\sum_{t\le n}2^{-K(t)/T}$ up to factors
depending only on $U$, $V$, $T$ and $K(z)$, and
$\Sigma_T(n)=n^{1-1/T\pm o(1)}$.
\end{enumerate}
\end{corollary}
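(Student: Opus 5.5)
For clause~(i) I would use Solomonoff's normalisation directly. By construction, $M_{\mathrm{norm}}(a\mid x)=M(a\mid x)/\sum_bM(b\mid x)$. Since $\sum_bM(b\mid x)\le1$, every conditional is multiplied by at least $1$, so $M_1(x)=M_{\mathrm{norm}}(x)\ge M(x)$ on every string. Fact~\ref{fact:si-dominance} then gives $M_1(x)\ge 2^{-K(\nu)-O(1)}\nu(x)$, hence $R_\infty(M_1;\nu)\le K(\nu)+O(1)$. The same normalisation step is the entry point for the deterministic half of~(ii). For $s=1/T\ge1$ one has $\sum_bp_b^{s}\le(\sum_bp_b)^{s}$. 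Hence $M_T(a\mid x)\ge M_{\mathrm{norm}}(a\mid x)^{1/T}$, and multiplying along $z$ gives
\[
M_T(z_{1:n})\ge M_{\mathrm{norm}}(z_{1:n})^{1/T}\ge M(z_{1:n})^{1/T}\ge2^{-(K(z)+c)/T},
\]
where the last step is the program clause of Fact~\ref{fact:si-dominance}, applied with a shortest program for $t\mapsto z_t$ wrapped into a monotone program, which is where $c$ comes from.

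The linear lower bound on $\mu$ in~(ii) and~(iii) rests on merging. Because $R_\infty(M_1;\mu)<\infty$ by~(i), the conditionals $M_1(\cdot\mid X_{<t})$ converge to $\theta$ $\mu$-almost surely; this is Solomonoff's convergence theorem, obtained from summability of the expected squared Hellinger distances. Tempering is invariant under rescaling, so $M_T=(M_1)_T$. The map $p\mapsto p_T$ is continuous on the simplex, including at coordinates tending to zero. Therefore $M_T(\cdot\mid X_{<t})\to\theta_T$ almost surely, and the per-step divergence $\dl{\theta}{M_T(\cdot\mid X_{<t})}$ converges almost surely to $\dl{\theta}{\theta_T}$. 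That limit is finite, since $\theta_T$ has the same support as $\theta$. It is also positive, since $\theta_T=\theta$ on the support forces $\theta$ to be uniform there. Fatou's lemma applied to the Cesàro averages in~\eqref{eq:si-regret} then yields $\liminf_n\frac1nR_n(M_T;\mu)\ge\dl{\theta}{\theta_T}$.

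The loss on $z$ for $T>1$ is the main obstacle. It needs a two-sided estimate of the off-sequence mass $\beta_t:=1-M_{\mathrm{norm}}(z_t\mid z_{<t})$, namely $\beta_t\asymp2^{-K(t)}$ with constants depending only on $U$, $V$ and $K(z)$.
\begin{itemize}
\item The lower bound $\beta_t\gtrsim2^{-K(t)-K(z)}$ comes from a program that computes $z_{<t}$ and then deviates at time $t$. Its length is $K(t)+K(z)+O(1)$, and its weight is divided by $M(z_{<t})\le1$.
\item For the upper bound, I would first try the coding argument from the off-sequence literature. Any program whose output leaves $z$ at step $t$ lets one recover $t$ from it together with a description of $z$. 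Summing over such programs, $M(z_{<t}\,\bar a)\le2^{-K(t)+K(z)+O(1)}$, while $M(z_{<t})\ge2^{-K(z)-O(1)}$.
\end{itemize}
Given this estimate, write $p=M_{\mathrm{norm}}(\cdot\mid z_{<t})$. The off-sequence tempered weight is $\sum_{b\ne z_t}p_b^{1/T}$, which lies between $\beta_t^{1/T}$ and $(V-1)^{1-1/T}\beta_t^{1/T}$, and $p_{z_t}^{1/T}\to1$. So the per-step loss $-\log M_T(z_t\mid z_{<t})$ is $\asymp2^{-K(t)/T}$, and summing gives $L_n(M_T;z)\asymp\Sigma_T(n)$.

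It remains to show $\Sigma_T(n)=n^{1-1/T\pm o(1)}$, which follows from two standard counting facts about $K$.
\begin{itemize}
\item For the lower bound, $K(t)\le\log t+2\log\log t+O(1)$ gives every term $\ge t^{-1/T-o(1)}$, so the sum is at least $n^{1-1/T-o(1)}$.
\item For the upper bound, fewer than $2^k$ integers have $K(t)<k$. Grouping the terms of $\Sigma_T(n)$ by $k=\lfloor K(t)\rfloor$, the classes with $k\le\log n+O(1)$ contribute at most $\sum_k2^{k(1-1/T)}=O(n^{1-1/T})$. The remaining terms each satisfy $K(t)\ge\log n-O(1)$, so they contribute at most $n\cdot n^{-1/T+o(1)}$.
\end{itemize}
Unboundedness on $0^\infty$ then follows, because $1-1/T>0$.
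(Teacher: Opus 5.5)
Your proposal is correct and follows essentially the paper's route. It uses normalisation for (i), the power-sum inequality for $T<1$, and merging with Fatou and Ces\`aro for the linear bound on $\mu$. For $T>1$ it bounds the off-sequence mass from below by a deviation program and from above by the coding theorem applied to $f(t)=\sum_{a\ne z_t}M(z_{<t}a)$, with $\sum_tf(t)\le1$. It differs only in working with $M_{\mathrm{norm}}$ directly rather than through the general Theorem~\ref{thm:temperature}, and in bounding $\Sigma_T(n)$ by dyadic grouping where the paper uses H\"older with $\sum_t2^{-K(t)}\le1$.

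Two repairs are needed. First, for the normalised conditional the denominator of $\beta_t$ is $\sum_bM(z_{<t}b)\ge M(z_{1:t})\ge2^{-K(z)-O(1)}$, not $M(z_{<t})$; dividing by $M(z_{<t})$ bounds $\beta_t$ from below, not above. Second, your ``$p_{z_t}\to1$'' step does not by itself give constants depending only on $U$, $V$, $T$ and $K(z)$. Charge the at most $K(z)+O(1)$ steps with $M_{\mathrm{norm}}(z_t\mid z_{<t})<\frac12$ to $L_\infty(M_1;z)\le K(z)+O(1)$, as the additive term of Theorem~\ref{thm:temperature}(iii) does.
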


\noindent(Proof and exact two-sided bounds:
Appendix~\ref{app:proof-cor:temperature}.)

The linear clauses are the converse of temperature scaling~\cite{guo2017calibration}: tempering a consistent predictor miscalibrates
it, nothing specific to $M$. Clause (iii) is, as far
as we know, not stated elsewhere; its mechanism is a Kraft sum over
one-step deviations. Up to factors in $K(z)$,
$M$ gives a deviation from $z$ at time $t$ conditional mass of order
$2^{-K(t)}$, a standard consequence of the dominance of $M$
(cf.~\cite{hutter2005uai}) that Appendix~\ref{app:deployed-temperature}
derives. Raising these masses to the power $1/T$ turns a convergent sum
into a divergent one. For any predictor, $T\le1$ multiplies loss on deterministic
streams by at most $1/T$, and $T>1$ turns a finite loss $L$ into
$O\bigl(V(n^{1-1/T}L^{1/T}+L)\bigr)$.

Sampling at temperature $T$ serves the measure
$Q_T(x):=\prod_t q_T(x_t\mid x_{<t})$. This is not the tempered joint law
$q(x)^{1/T}/Z$; the two differ already at horizon $2$. Tests on sampled
answers therefore test $Q_T$. Without truncation, $Q_T$ determines only
the normalised conditionals ($M_{\mathrm{norm}}$, not $M$), so leakage is
invisible to samples at every temperature.

\paragraph{Truncation, tokenisation and reasoning traces.} Top-$k$, top-$p$ and min-$p$
truncation, masks and majority votes are dogmatism by design: they give zero
probability to every excluded continuation, so no bound survives on it. A
tokeniser makes the read a marginal over tokenisations, and a reasoning trace
makes the answer law a marginal over traces. Appendix~\ref{app:deployed-more}
gives the details and what each does to the properties of
Section~\ref{sec:properties}.

\paragraph{What a served read reads.}
Serving stacks apply a checkpoint's default truncation unless it is
overridden, so a read must override it and say so. Commercial APIs return
only top-$k$ log-probabilities, from which full conditionals could be
recovered through logit bias on the API versions studied~\cite{finlayson2024logits,carlini2024stealing}; providers have since
restricted this. A server-side mask to an alphabet, renormalised step by
step, is over the full vocabulary itself a truncation; over the alphabet
the renormalised conditionals define a predictor, the one to which
Assumption~\ref{ass:elicit} is applied. It is neither the model's own
continuation law, which puts the discarded mass on other tokens, nor that
law conditioned on staying in the alphabet, which would reweight each
step by the probability that the rest of the continuation stays there.
A read of this kind must state the mask, the serving configuration and
the discarded mass; Paper~B describes such reads.

\section{Properties ranked, and those to avoid}
\label{sec:properties}

Solomonoff's predictor has a bundle of properties, of which universality is
only one. This section separates them, ranks them by weight for
trustworthiness, records which are attainable, and identifies those a deployed
system should lack. The ranking is a judgement; the attainability column is not.

\subsection{Two floors}
\label{sec:properties-floors}

Two properties, both attainable by computable systems, are floors: they say what
a trustworthy inductor is never worse than.

\begin{definition}[Non-dogmatism, per environment]
\label{def:nondogmatic}
A predictor $q$ is \emph{dogmatic} on an environment $z \in \Ac^{\infty}$ if
$\sup_t -\log q(z_t \mid z_{<t}) = \infty$, and \emph{non-dogmatic} on a class
of environments if it is dogmatic on none of them.
\end{definition}

The quantifier matters: on a computable deterministic $z$ Solomonoff's predictor
pays at most $K(z) + O(1)$ bits at any step (Fact~\ref{fact:si-solomonoff}), but
along $0^{t}1$ its final surprisal is unbounded in $t$, and a ceiling uniform over
all environments would force linear loss on every deterministic stream. A Bayes
mixture over a deterministic class has ceiling $\log(1/w_h)$ on each member $h$;
fixed share over $m$ experts, one of them uniform, with re-mixing rate $\alpha$
has ceiling $\log(m/\alpha) + \log V$~\cite{herbster1998}; truncated sampling
fails the definition (Section~\ref{sec:deployed}). Dogmatism is unrecoverable:
Bayesian updating cannot raise a zero.

\begin{definition}[Expert dominance]
\label{def:dominance}
A predictor $q$ \emph{dominates} an expert $p$ with constant $c$ on a set $X$ of
streams up to horizon $n$ if $L_{n'}(q;x) \le L_{n'}(p;x) + c$ for every
$x \in X$ and every $n' \le n$.
\end{definition}

A Bayes mixture giving $p$ weight $w_p$ dominates it with $c = \log(1/w_p)$
(Fact~\ref{thm:attainable}), and Solomonoff's predictor dominates every computable
expert with $c \le K(p) + O(1)$ (Fact~\ref{fact:si-dominance}), in particular the
Krichevsky--Trofimov (KT) estimator~\cite{krichevsky1981}; linear excess over KT
exposes a hole in the prior that the simplest expert fills. The floor has a
bounded-window scope: a predictor that sees only the last $W$ symbols over an
alphabet of $V$ has, on some i.i.d.\ source, per-step excess over KT tending to
about $(V-1)/(2W\ln 2)$ bits, the window tax of Proposition~\ref{prop:window-tax}
for the i.i.d.\ family (Appendix~\ref{app:readout-window}). A failure
within the window shows that KT is not a component of any mixture the model implements over that
horizon, not that the model is no mixture.

\paragraph{Calibration.}\label{rem:calibration}
A coherent predictor is calibrated exactly under its own law~\cite{dawid1982} and
under an evaluation distribution only asymptotically; Solomonoff's predictor is
calibrated on computable environments in the limit~\cite{solomonoff1978,hutter2005uai}, a tail property with no constant
(Fact~\ref{lem:si-tail}); Section~\ref{sec:closure} records what post-training
costs in calibration, and calibration itself has a price in generation: a
calibrated language model must hallucinate arbitrary facts at a rate close to
the fraction of such facts seen once in training~\cite{kalai2024calibrated}.

\subsection{The properties, ranked}
\label{sec:taxonomy}

In Table~\ref{tab:properties-taxonomy}, \emph{bounded reversal} means that a
coherent predictor's probability of a fixed event, a martingale under its own law,
rises from below $a$ to above $b > a$ at most $\min\{a,1-a\}/(b-a)$ times in
expectation (Doob's upcrossing inequality~\cite{doob1953} gives
$(1-a)/(b-a)$ and, since a martingale is a supermartingale, the same inequality in
its supermartingale form gives $a/(b-a)$; Dubins' inequality~\cite{dubins1966} bounds
the distribution of the number of upcrossings); \emph{label equivariance} means
that predictions commute with relabelling the alphabet.

\begin{table}[t]
\centering
\footnotesize
\setlength{\tabcolsep}{3pt}
\begin{tabular}{@{}r p{0.27\linewidth} p{0.11\linewidth} p{0.24\linewidth} p{0.28\linewidth}@{}}
\hline
\# & \raggedright\textbf{Property} & \raggedright\textbf{Attainable} & \raggedright\textbf{Result here} & \raggedright\textbf{Instrument in Paper~B}\tabularnewline
\hline
1 & \raggedright Coherence (is a measure)$^{a}$ & yes & Prop.~\ref{prop:coherence} & \raggedright coherence residual; rollout coherence\tabularnewline
2 & \raggedright Non-dogmatism & yes & Def.~\ref{def:nondogmatic} & \raggedright non-dogmatism\tabularnewline
3 & \raggedright Bounded reversal & yes & \raggedright Doob/Dubins upcrossing~\cite{doob1953,dubins1966} & \raggedright upcrossing ceiling\tabularnewline
4 & \raggedright Expert dominance (floor) & \raggedright within window & \raggedright Def.~\ref{def:dominance}, Prop.~\ref{prop:window-tax} & \raggedright expert dominance\tabularnewline
5 & \raggedright Order / sufficiency invariance$^{b}$ & yes & Prop.~\ref{prop:order} & \raggedright sufficiency and order invariance\tabularnewline
6 & \raggedright Label equivariance$^{b}$ & yes & \raggedright invariant prior & \raggedright label equivariance\tabularnewline
7 & \raggedright Calibration & \raggedright yes (tail) & Sec.~\ref{sec:properties-floors} & \raggedright reliability/ECE (standard)\tabularnewline
8 & \raggedright Complexity-weighted allocation & yes & Prop.~\ref{prop:slope} & \raggedright complexity slope, stratified budget\tabularnewline
9 & \raggedright Uniform constants on a class & \raggedright class-wise & Thm.~\ref{thm:frontier} & \raggedright cover saturation\tabularnewline
10 & \raggedright Merging / convergence & \raggedright yes (free) & App.~\ref{app:readout-prior} & \raggedright cheap-consistent foils\tabularnewline
11 & \raggedright Universality over computable $\nu$ & \textbf{no} & Cor.~\ref{cor:si-nocomputable} & \raggedright falsification only\tabularnewline
12 & \raggedright Machine invariance up to $O(1)$ & no & Cor.~\ref{cor:si-nocomputable}; \cite{wood2013nonequivalence} & \raggedright none\tabularnewline
\hline
\end{tabular}\\[3pt]
\parbox{\linewidth}{\footnotesize $^{a}$For the semimeasure $M$, after
normalisation. $^{b}$Of a Bayes mixture over a class exchangeable (respectively
closed under relabelling, with invariant prior) for the task; not of $M$.}
\caption{Properties of Solomonoff's predictor, ranked by weight for
trustworthiness.}
\label{tab:properties-taxonomy}
\end{table}

A property ranks higher the more its failure lets
a user be misled or exploited without a visible error. The properties that make an
inductor safe to rely on (rows 1--6) are all attainable, expert dominance within
the context window, while universality
(row 11), which cannot be had, ranks near the bottom because trust does not require it. The
constants that carry informative content (rows 4, 8, 9; Section~\ref{sec:si-tail})
need log-probabilities, or temperature or logit-bias control, the rows with exact black-box nulls (1, 5, 6) are
structural, and row 10, often invoked as evidence of inductive quality, carries
none.

A ranking of the ideal's properties should say where the ideal itself
fails, and both of its documented failures are failures of the budget it
leaves unspent. Solomonoff's predictor converges to every computable $\mu$
with $\mu$-probability one, which is a statement about sources and not
about individual sequences: \cite{huttermuchnik2007} show that some
universal semimeasures fail to converge on some $\mu$-Martin-L\"of random
sequences, while certain non-universal ones converge on all of them. And
when only selected positions carry structure (the even bits of a
sequence whose odd bits are noise), \cite{lattimore2011selected} show
that the normalised predictor $M_{\mathrm{norm}}$ still learns any
recursive pattern on the selected positions, whereas $M$ itself can fail
on very simple ones. That is the allocation thesis of
\S\ref{sec:frontier} turned on $M$: what $M$ leaks is budget not placed
on the continuations that are scored, and renormalising puts it back,
which is what sampling at temperature one serves
(Corollary~\ref{cor:temperature}). Universality buys a constant against
every computable source; what it buys at a place of the examiner's
choosing depends on where the rest of the budget goes.

\subsection{Properties that are undesirable in deployment}
\label{sec:undesirable}

Solomonoff induction is an ideal of belief formation about a fixed world. A
deployed system must also hold constraints, adapt to change and resist
manipulation, and each of these is served by violating one property of the
table, so a recommendation to make systems ``more Solomonoff-like'' is
underspecified until it says which properties, for which role.

\paragraph{Non-dogmatism is steerability.} A Bayes mixture is steered towards any
hypothesis of positive weight by demonstrations (Section~\ref{sec:attainable},
Proposition~\ref{prop:manipulable}). The defence is deliberate dogmatism about
designated hypotheses, imposed on the predictive distribution. A zero prior weight
does not suffice, because other hypotheses can reproduce the excluded one. By the
conservation law (Section~\ref{sec:conservation}), this dogmatism only moves budget
away from environments one never wishes to predict.

\paragraph{Occam allocation and corpus-optimality are choices.} Simplicity
weighting suits low-complexity environments, and its premium over a trained model
is small only under the conditions of Section~\ref{sec:closure}. A perfect corpus
predictor is the corpus law (Theorem~\ref{thm:closure}), and so, on our reading,
reproduces the corpus's errors and harms at corpus rates; post-training departs from the corpus law on purpose,
at the costs recorded there.

\paragraph{Exchangeability is the absence of an instruction hierarchy.} Order
invariance is desirable where demonstrations are conditionally i.i.d.\ by
specification (Proposition~\ref{prop:order}). In a conversation position is
informative: the system prompt should dominate the user turn, which is why an
instruction hierarchy is trained explicitly~\cite{wallace2024hierarchy}.

\paragraph{Beliefs and constraints.} The properties that remain desirable without
condition are the structural ones: coherence, the dominance floor within the
window, and non-dogmatism about the world. The rest are desirable for beliefs and
harmful for constraints, or right for a fixed world and wrong for a drifting one
(bounded reversal is inertia after a regime change, which fixed share~\cite{herbster1998} escapes at a tracking cost), and each has the helpful violation
named above. Our
position, motivated but not proved by the results, is that a trustworthy system
has Solomonoff-like beliefs and non-Solomonoff-like constraints. Solomonoff
induction is silent about objectives: the properties ranked here are epistemic
hygiene, the part of reliability that is a measurement problem.

\section{What served conditionals determine}
\label{sec:tests}

Tail-invariant statistics cannot see the constants
(Section~\ref{sec:si-tail}), and statistics on a fixed evaluation
distribution cannot see allocation off it (Theorem~\ref{thm:si-underdet}).
Merging of opinions is invariant under both (Appendix~\ref{rem:merging}). What remains readable is
coherence, invariance, constants against explicit reference values, and the
prior over a declared class. This section states these as definitions and
identities; estimators, noise floors and validation are in Paper~B.

\subsection{Elicitation and coherence}
\label{sec:tests-coherence}

\begin{assumption}[Elicitation]
\label{ass:elicit}
The served probability of an event $A$ at history $h$ equals the probability of
$A$ under the served model's own continuation law from $h$, both read at $T = 1$,
without truncation (top-$k$, top-$p$, min-$p$, grammar masks, server-side
defaults included) and on one fixed tokenisation. (For a masked read, the served
model is the masked, per-step renormalised predictor of
Section~\ref{sec:deployed}. Log-probabilities that the serving stack returns
before temperature scaling count as a read at $T = 1$, whatever temperature the
request carries; log-probabilities returned before masking read the unmasked
model, and renormalised over the allowed symbols they give the masked predictor's
conditionals. Whether a stack returns raw or processed values
must be checked and stated.)
\end{assumption}

\noindent The assumption is substantive: prompted probabilities differ from
direct measurements~\cite{hu2023prompting}, answers depend on prompt format~\cite{sclar2024sensitivity}, and a law on symbols is a marginal over
tokenisations (Section~\ref{sec:deployed}), so a character-level residual may
come from the tokeniser unless each symbol is one token. For post-trained models
verbalised confidence can be better calibrated than token probabilities~\cite{tian2023calibration}, so a report must state which served object is read.

\begin{proposition}[Coherence and arbitrage; (ii)--(iii) after de Finetti~\cite{definetti1937}]
\label{prop:coherence}
\label{prop:arbitrage}
Let $Q$ be the served conditionals and $A$ an event determined by the
continuation of $h$.
\textup{(i) (Null.)} If some probability measure has conditionals agreeing with
$Q$ on $\{h\} \cup \{h\sigma : \sigma \in \Ac\}$ and assigns $A$ the served
values, then
\begin{equation}
\label{eq:coh}
  \varrho(h;A) \;:=\; \Bigl| Q(A \mid h) - \sum_{\sigma \in \Ac} Q(\sigma \mid h)\, Q(A \mid h\sigma) \Bigr| \;=\; 0 .
\end{equation}
So $\varrho(h;A) > 0$ excludes every measure, hence every Bayes mixture under
every prior on every machine, and for an autoregressive model it refutes
Assumption~\ref{ass:elicit}; for the semimeasure $M$ the null holds after
normalisation.
\textup{(ii) (Arbitrage.)} Let served conditionals be prices at which the system
takes either side of unit claims on $A$ and on the next symbol, at $h$ and every
$h\sigma$. The portfolio that buys, for each $\sigma$, $Q(A \mid h\sigma)$ units of
the $\sigma$-claim at $h$ and, if $\sigma$ occurs, one $A$-claim at $h\sigma$ with
the proceeds synthesises the $A$-claim without refunds. Traded against the direct
claim, it pays a riskless profit $\varrho(h;A)$ per unit.
\textup{(iii) (Converse.)} If all served prices are probabilities of one
measure, no riskless portfolio exists; $\varrho \equiv 0$ for one event is
necessary for this, not sufficient.
(Proof: Appendix~\ref{app:proof-prop:coherence}.)
\end{proposition}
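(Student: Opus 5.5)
The plan has three steps: clause (i) comes from the tower property of conditional probability, clause (ii) from a payoff computation, and clause (iii) from the one-period form of de Finetti's theorem.

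\emph{Clause (i).} Let $P$ be a probability measure whose conditionals agree with $Q$ at $h$ and at every $h\sigma$, and which assigns $A$ the served values, so that $P(A\mid h)=Q(A\mid h)$ and $P(A\mid h\sigma)=Q(A\mid h\sigma)$. Because $A$ is determined by the continuation of $h$, it splits over the first continuation symbol:
\[
  A\cap[h]=\bigcup_{\sigma\in\Ac}\bigl(A\cap[h\sigma]\bigr).
\]
Since $P$ is a measure, its conditionals at $h$ sum to one and are additive. This gives
\[
  P(A\mid h)=\sum_\sigma P(\sigma\mid h)\,P(A\mid h\sigma).
\]
Substituting the served values gives $\varrho(h;A)=0$.

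The corollaries of (i) follow directly:
\begin{itemize}
\item A Bayes mixture under any prior on any machine is a probability measure, so it satisfies the null.
\item For an autoregressive model, $Q(A\mid h\sigma)$ is read from the model's own continuation law. Under Assumption~\ref{ass:elicit} that law is a measure agreeing with $Q$ on the relevant contexts, so $\varrho>0$ refutes the assumption.
\item For $M$, the normalisation $M_{\mathrm{norm}}$ is a measure, so the null holds for it. The null can fail for $M$ itself because $\sum_\sigma M(\sigma\mid h)<1$.
\end{itemize}

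\emph{Clause (ii).} I would compute the portfolio's payoff directly. At $h$ it pays $\sum_\sigma Q(\sigma\mid h)Q(A\mid h\sigma)$ for the symbol claims. If $\sigma$ occurs, the $\sigma$-claims pay $Q(A\mid h\sigma)$ units, which is exactly the price of one $A$-claim at $h\sigma$. That claim pays $\mathbf 1_A$. So the portfolio replicates the direct $A$-claim on every outcome, with no refunds needed.

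Since the system takes either side at its prices, one can buy the cheaper of the direct claim and the synthetic portfolio and sell the other. The price difference is $\varrho(h;A)$. The payoffs cancel pathwise, so the profit is riskless.

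\emph{Clause (iii).} Suppose the prices are the probabilities of one measure $P$. Any finite portfolio of claims traded at $P$-prices across the tree has a value process that is a $P$-martingale, by the tower property. Its discounted expected gain is therefore zero. A riskless profit is a gain that is nonnegative on every path and positive with positive probability, so its $P$-expectation would be positive, which is a contradiction.

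There is one technical point here. Outcomes of $P$-probability zero could in principle carry a profit. I would handle this in the standard de Finetti way: riskless means a strictly positive gain on every outcome, so the contradiction holds regardless of which outcomes $P$ charges.

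The failure of sufficiency needs an example. I would take $V=2$ and events $A,B$. I would choose $Q$ consistent for $A$, with $\varrho(h;A)=0$, but with $Q(B\mid h)$ chosen off its tower value, so that $\varrho(h;B)\ne0$ and clause (ii) yields arbitrage on $B$.

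I expect the main obstacle to be formalising, in (iii), the class of admissible portfolios over an infinite continuation tree. I would restrict to finite portfolios of claims at finitely many histories, which is all that (ii) uses.
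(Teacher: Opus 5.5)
Your proposal is correct and is essentially the paper's proof. It uses total probability over the next symbol for (i), the self-financing roll from $Q(A\mid h\sigma)$ units of the $\sigma$-claim into one $A$-claim at $h\sigma$ for (ii), and a finite strategy's value process as a $P$-martingale from zero for (iii), with non-sufficiency because a zero residual for one event constrains no other. On your null-outcome caveat: the paper reads ``riskless'' as nonnegative everywhere and positive with positive $P$-probability, which your martingale step already covers. Your stricter reading (strictly positive gain on every outcome) is also adequate, because the profit built in (ii) equals $\varrho(h;A)$ on every path.
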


\noindent\label{sec:served-demo}Under a masked read, a positive residual refutes
Assumption~\ref{ass:elicit} for the masked, renormalised predictor, not for the
model's unmasked continuation law.

\subsection{Invariance and constants}
\label{sec:tests-invariance}

\begin{proposition}[Invariance nulls]
\label{prop:order}
\label{prop:suff}
Let $Q$ be a Bayes mixture over a class $\mathcal{H}$ under any prior.
\textup{(i) (Order.)} If demonstrations $d_1, \dots, d_t$ are conditionally
i.i.d.\ given the hypothesis, $Q(\cdot \mid d_{\tau(1)}, \dots, d_{\tau(t)})$
does not depend on the permutation $\tau$.
\textup{(ii) (Sufficiency.)} If contexts $c, c'$ satisfy (a)
$(P_h(c))_{h}$ and $(P_h(c'))_{h}$ are proportional and (b) $P_h(\cdot \mid c) = P_h(\cdot \mid c')$ whenever
$P_h(c) > 0$, then $Q(\cdot \mid c) = Q(\cdot \mid c')$. Neither condition can
be dropped. (Proof: Appendix~\ref{app:proof-prop:order}.)
\end{proposition}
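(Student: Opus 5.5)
Both clauses follow from writing the mixture's predictive as a posterior-weighted average, $Q(\cdot\mid c)=\sum_h \pi(h\mid c)\,P_h(\cdot\mid c)$ with $\pi(h\mid c)=w_hP_h(c)/\sum_{h'}w_{h'}P_{h'}(c)$. I will show that everything on the right-hand side is unchanged under the stated operations. For a countable class this is a direct sum. For a general class I replace sums by integrals against the prior, so the plan is the same. The only care needed is to restrict to contexts of positive mixture mass, where the conditionals are defined.

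For (i), conditional i.i.d.\ means $P_h(d_1,\dots,d_t)=\prod_{s\le t}P_h(d_s)$, and the next demonstration has the same law given $h$. The product is symmetric in the demonstrations. Hence the posterior weights $\pi(h\mid d_{\tau(1)},\dots,d_{\tau(t)})$ do not depend on $\tau$. The continuation law $P_h(\cdot\mid d_{\tau(1)},\dots,d_{\tau(t)})$ is the fixed single-demonstration law $P_h$ under conditional independence, so it does not depend on $\tau$ either. The average is therefore invariant. If the query is a continuation of several symbols, I apply the same product argument to the joint law of the demonstrations and the continuation.

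For (ii), condition (a) gives $P_h(c')=\kappa P_h(c)$ for a constant $\kappa>0$ independent of $h$. The constant cancels in the normalisation, so $\pi(\cdot\mid c)=\pi(\cdot\mid c')$. In particular the hypotheses of positive posterior weight are exactly those with $P_h(c)>0$, equivalently $P_h(c')>0$. On those hypotheses, (b) makes the component conditionals agree, so the two averages coincide. I also need the degenerate case: if $\kappa=0$, or both vectors vanish, then the mixture conditionals are undefined. I will add a remark that proportionality is taken with a positive constant.

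The part I expect to need thought is \emph{neither condition can be dropped}, which calls for two counterexamples. I will use a two-hypothesis class over $\{0,1\}$ with weights $\tfrac12$. To drop (a) while keeping (b), take $h_1,h_2$ i.i.d.\ with biases $0.9$ and $0.1$, and contexts $c=0$ and $c'=1$. Each $P_h(\cdot\mid c)=P_h(\cdot\mid c')$ by independence, so (b) holds. The likelihood vectors $(0.1,0.9)$ and $(0.9,0.1)$ are not proportional, and the predictive probabilities of $1$ are $0.18$ and $0.82$, which differ. To drop (b) while keeping (a), take $c,c'$ with $P_h(c)=P_h(c')$ for both $h$. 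For instance, let $h_1$ be uniform i.i.d., and let $h_2$ put probability $\tfrac12$ on each first symbol and then copy it. With $c=0$ and $c'=1$ the likelihoods are equal, but the continuations differ: under $h_2$ the next symbol is $0$ after $c$ and $1$ after $c'$. The mixture predictive then puts mass $\tfrac34$ on $0$ after $c$ and $\tfrac14$ after $c'$. Checking these numbers completes the proof.
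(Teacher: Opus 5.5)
Your proof is correct and follows the paper's route: you write the predictive as a posterior-weighted average, and the operations leave both the posterior and the component conditionals unchanged (the product is symmetric in (i), and the proportionality constant cancels in (ii)). You also use the same $\mathrm{Bernoulli}(0.9)$/$\mathrm{Bernoulli}(0.1)$ counterexample as the paper for dropping (a). For dropping (b) the paper takes the deterministic sequences $000\cdots$ and $0011\cdots$ with contexts $0$ and $00$, and your uniform-versus-copy pair with contexts $0$ and $1$ works equally well: it gives predictive mass $\tfrac34$ versus $\tfrac14$ on $0$.
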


\noindent The i.i.d.\ hypothesis in (i) is essential: in the hidden-Markov
account of in-context learning of Xie et al.\ \cite{xie2022bayesian},
demonstrations are not exchangeable. For deterministic $z$, the
\emph{truth loss} at horizon $n$ is $L_n(Q;z_{1:n}) = -\log Q(z_{1:n})$.

\begin{proposition}[Truth-loss levels under a description-length prior]
\label{prop:slope}
Let $Q$ be a Bayes mixture over deterministic hypotheses with prior $w$, and
let the truth $h^\dagger$ have description length $\ell(h^\dagger)$. Then
$L_n(Q;h^\dagger_{1:n}) \le \log(1/w_{h^\dagger})$ for all $n$, with equality
once $h^\dagger$ is the only consistent hypothesis of positive weight. If
$w_h \ge 2^{-\ell(h)-c}$ for all $h$, then $L_n - \ell(h^\dagger) \le c$ on
every environment of the class, with equality after identification when
$w_h = 2^{-\ell(h)-c}$; a uniform prior gives the flat level
$\log|\mathcal{H}|$; and no prior gives a small level on many environments at
once (Corollary~\ref{cor:kraftfloor}). (Proof: Appendix~\ref{app:proof-prop:slope}.)
\end{proposition}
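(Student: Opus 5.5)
The plan is to write the truth loss as a sum over the hypotheses still consistent with the data and then bound that sum by its one distinguished term. Let $\mathcal{H}$ be the deterministic class. Each $h$ is a point mass on a sequence $h\in\Ac^\infty$, so $P_h(x)=1$ if $x\sqsubseteq h$ and $0$ otherwise. By \eqref{eq:bg-mixture},
\[
Q(h^\dagger_{1:n})=\sum_{h:\,h_{1:n}=h^\dagger_{1:n}} w_h ,
\]
the total prior weight of the hypotheses consistent with the first $n$ symbols. The true hypothesis is always among them, so $Q(h^\dagger_{1:n})\ge w_{h^\dagger}$. Taking $-\log$ gives $L_n(Q;h^\dagger_{1:n})\le\log(1/w_{h^\dagger})$ for every $n$; this is the mixture bound of Fact~\ref{thm:attainable} applied pathwise. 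If $h^\dagger$ is the only consistent hypothesis of positive weight, the sum reduces to exactly $w_{h^\dagger}$, which gives the equality.

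The description-length clause follows by substitution. If $w_h\ge 2^{-\ell(h)-c}$ for every $h$, then $\log(1/w_{h^\dagger})\le\ell(h^\dagger)+c$, so $L_n-\ell(h^\dagger)\le c$ on every environment of the class. If instead $w_h=2^{-\ell(h)-c}$ exactly, then after identification the loss equals $\log(1/w_{h^\dagger})=\ell(h^\dagger)+c$, which is the stated equality.

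For the uniform prior $w_h=1/|\mathcal{H}|$ on a finite class, the bound reads $\log|\mathcal{H}|$ and is attained after identification. The level is therefore flat: it is the same for every environment, regardless of complexity.

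The last clause, that no prior gives a small level on many environments at once, is the Kraft floor, which I would cite as Corollary~\ref{cor:kraftfloor}. To see it directly, take hypotheses $h^{(1)},\dots,h^{(m)}$ with pairwise distinct length-$n$ prefixes. Those prefixes form an antichain, so Fact~\ref{thm:conservation} gives $\sum_k Q(h^{(k)}_{1:n})\le 1$. Hence at most $2^{b}$ of them can have truth loss at most $b$.

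The only step needing care is the equality clause. The phrase ``once the only consistent hypothesis'' is what removes the other terms from the sum, and such a time need not exist: if two distinct hypotheses agree forever, they have the same sequence. I would therefore state the equality under the identification condition exactly as written, and otherwise rely only on the inequality.
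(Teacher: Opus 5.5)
Your proof is correct and follows the paper's own argument: you write $Q(h^\dagger_{1:n})$ as the total prior weight of the hypotheses consistent with $h^\dagger_{1:n}$, bound it below by $w_{h^\dagger}$ (with equality once no other weight remains), substitute the weight hypothesis and the uniform prior, and invoke the Kraft floor of Corollary~\ref{cor:kraftfloor}. The one small difference is in your direct check of the last clause: you bound the actual losses at a common horizon with distinct prefixes (the finite-horizon Kraft law, Corollary~\ref{thm:profile}), whereas the paper applies Kraft to the ceilings $\log(1/w_{h^\dagger_k})$, which needs no common horizon; either argument suffices.
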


\noindent Its measurable content: after identification the levels lie on
$\ell+c$ (unit slope in $\ell$) under a $2^{-\ell-c}$ prior and at
$\log|\mathcal{H}|$ (zero slope) under a uniform prior; before identification
only the upper bound holds, so a fitted slope has no exact reference value.

\paragraph{Reading budget statistics against the floor.}\label{rem:floorreading}
With a reference mixture $P_{\rm ref}$ over a task family,
$L_n(Q;z_{1:n}) = \log\bigl(P_{\rm ref}(z_{1:n})/Q(z_{1:n})\bigr) - \log P_{\rm ref}(z_{1:n})$,
pathwise regret plus the reference's own cost. On many distinct deterministic
tasks the second term is bounded below by the Kraft floor of
Section~\ref{sec:conservation} for every predictor, so raw budgets should be
reported net of $-\log P_{\rm ref}$, or beside its distribution.

\subsection{Reading the prior}
\label{sec:readprior}

``The model's prior'' exists only relative to a declared class, even within the
universal class (Section~\ref{sec:tail-bayesian}). Given the class, reading the
prior is an inverse problem. Iterated in-context learning, which feeds a model's
answers back to it as data, samples a model's prior over a task's answers~\cite{zhugriffiths2025priors}; the reading here instead fixes a declared class
and inverts served probabilities.

\paragraph{The implied prior.}\label{sec:selfeval}
Let $\mathcal{H}$ be a countable class of computable measures and $C$ a finite
probe set of prefixes on which $\mathcal{H}$ has finitely many distinct columns
$(P_h(c))_{c \in C}$, as every finite or deterministic class has; put
$G_{c,h} := P_h(c)$ and $q_c := Q(c)$. The \emph{membership
residual} $\min\{\|Gw - q\|_2 : w \ge 0,\ \sum w \le 1\}$ vanishes iff some Bayes
mixture over $\mathcal{H}$ agrees with $Q$ on $C$; all minimisers share their
fitted values, the weights on distinct columns are identified up to a polytope,
and the implied
coding $-\log w^{Q}_h$ is Kraft-valid (Proposition~\ref{prop:implied}). A positive
residual means $Q$ is not a mixture, or is one over a larger class. Over
deterministic classes it reads support, not allocation.

\begin{proposition}[Membership over a deterministic class is a support test]
\label{prop:support}
Let every $h \in \mathcal{H}$ be deterministic, $Q$ a measure and $L$ a probe
length. Call a prefix off-class if no $h$ is consistent with it and in-class
otherwise. For $|c| \le L$, let $u(c)$ and $u_1(c)$ be the $Q$-mass of the
off-class length-$L$ extensions of $c$ and the off-class continuation mass at $c$,
and let $I(c)$ be the set of symbols $a$ for which $ca$ is in-class. \textup{(i)} The residual is at least
$(\sum_{c \in O} Q(c)^2)^{1/2}$ over the off-class probes $O \subseteq C$.
\textup{(ii)} If $C = \Ac^{\le L}$, it is at most $(\sum_{c} u(c)^2)^{1/2}$ and
vanishes iff $Q$ puts no mass off-class, each in-class $x \in \Ac^{L}$ then
getting class weight $Q(x)$. \textup{(iii)} If $C$ contains $c$ and its in-class
extensions $ca$, $a \in I(c)$, it is at least $Q(c)\,u_1(c)/\sqrt{|I(c)|+1}$. (Proof:
Appendix~\ref{app:proof-prop:support}.)
\end{proposition}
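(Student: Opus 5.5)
The plan is to make the matrix $G$ explicit and then read each clause off it. For a deterministic hypothesis $h$ we have $G_{c,h}=P_h(c)=\mathbf 1[c\sqsubseteq h]$. So for any admissible $w$ (that is, $w\ge0$ and $\sum w\le1$), $(Gw)_c=\sum_{h:\,c\sqsubseteq h}w_h$ is the weight of hypotheses consistent with $c$. Throughout, write $e:=Gw-q$ for the residual vector of an admissible $w$; the membership residual is $\min_w\|e\|_2$. Two facts drive everything. First, $(Gw)_c=0$ whenever $c$ is off-class. Second, since every $h$ consistent with $c$ continues with exactly one symbol, and that symbol $a$ makes $ca$ in-class, we get
\[
(Gw)_c=\sum_{a\in I(c)}(Gw)_{ca}.
\]
On the $Q$ side, because $Q$ is a measure, $Q(c)=\sum_aQ(ca)$. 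Splitting this sum into $a\in I(c)$ and $a\notin I(c)$ gives
\[
Q(c)-\sum_{a\in I(c)}Q(ca)=Q(c)\,u_1(c),
\]
where $u_1(c)$ is the off-class continuation mass at $c$ (conditional on $c$).

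For (i), the first fact gives $e_c=-Q(c)$ for every off-class probe $c\in O$, whatever $w$ is. Restricting $\|e\|_2$ to the coordinates in $O$ gives the lower bound at once.

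For (ii), with $C=\Ac^{\le L}$, I would exhibit a feasible $w$. For each in-class $x\in\Ac^L$, choose one $h_x\in\mathcal H$ consistent with $x$ and put weight $Q(x)$ on it; distinct $x$ give distinct $h_x$, and $\sum w\le1$. Then $(Gw)_c$ is the $Q$-mass of the in-class length-$L$ extensions of $c$. Since $Q$ is a measure, this equals $Q(c)-u(c)$, so $|e_c|=u(c)$ and the upper bound follows.

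For the equivalence in (ii), the direction "no off-class mass implies $u\equiv0$" makes the residual $0$. For the converse, suppose the residual is $0$. Every off-class $x\in\Ac^L$ is a probe, so (i) forces $Q(x)=0$. Every off-class prefix extends only to off-class strings, so $Q$ puts no mass off-class at any length $\le L$. The class weight then follows from coordinate $x\in\Ac^L$ of $Gw=q$: the total weight of the hypotheses consistent with $x$ equals $Q(x)$.

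For (iii), subtract the two identities above to get the exact linear relation
\[
e_c-\sum_{a\in I(c)}e_{ca}=-Q(c)\,u_1(c)
\]
for every admissible $w$. The left side is the inner product of $e$, restricted to the $|I(c)|+1$ coordinates $\{c\}\cup\{ca:a\in I(c)\}$, with the vector $(1,-1,\dots,-1)$. Cauchy--Schwarz then gives
\[
Q(c)\,u_1(c)\le\sqrt{|I(c)|+1}\,\|e\|_2,
\]
and minimising over $w$ yields the claim.

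The main obstacle is bookkeeping rather than depth. One point is justifying that every $h$ consistent with $c$ passes through some $ca$ with $a\in I(c)$; this needs determinism, and for a nondeterministic class the identity fails. The other is keeping conditional and unconditional off-class mass straight in (iii), which I would handle by writing $Q(c)u_1(c)=\sum_{a\notin I(c)}Q(ca)$ explicitly.
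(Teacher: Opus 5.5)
Your proposal is correct and follows the paper's proof essentially step for step: the rows of $G$ vanish on off-class probes for (i); the feasible weight $Q(x)$ on one consistent hypothesis per in-class $x\in\Ac^L$ gives (ii); and the exact relation $(Gw)_c=\sum_{a\in I(c)}(Gw)_{ca}$, followed by Cauchy--Schwarz, gives (iii). The only point left implicit is that a zero residual is attained by some admissible $w$ (attainment holds with finitely many distinct columns, Proposition~\ref{prop:implied}(i)), and you need that $w$ to read off $Gw=q$ in the converse of (ii).
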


\noindent Conditionals alone give the same reading by distillation into a
Bayesian student, provided the probe set is closed under prefixes and the student
carries a full-support column, and a corpus can supply the exact likelihood of one
hypothesis for the inversion to weigh (Appendices~\ref{app:readout-prior}
and~\ref{app:readout-corpus}); checks on known reference systems are in
Paper~B.

\section{Discussion}
\label{sec:discussion}

Claim (C) would require a computable system to hold Solomonoff's allocation, and
the known results show that none can. That is not a failure of the training
objective: what it does not fix is the hypothesis class and the prior, and by
the conservation law (Section~\ref{sec:conservation}) allocation, and how much
of the budget is used, are all there is to fix (the ledger of
Proposition~\ref{prop:frontier-ledger}). Table~\ref{tab:discussion-practice}
collects what the results imply for LLM practice.

\begin{table}[!tp]
\centering
\scriptsize
\setlength{\tabcolsep}{3pt}
\renewcommand{\arraystretch}{1.1}
\makeatletter
\let\A@startpbox\@@startpbox
\def\@@startpbox#1{\A@startpbox{#1}\raggedright\let\\\tabularnewline}
\makeatother
\begin{tabular}{p{2.6cm}p{2.7cm}p{4.6cm}p{4.8cm}}
\hline
\textbf{LLM practice} & \textbf{Result} & \textbf{What it certifies} & \textbf{Reporting rule} \\
\hline
Benchmark scores, held-out evaluation & Thm.~\ref{thm:si-underdet} & performance on the evaluated context distribution and horizon; nothing about allocation off it & state the context distribution and horizon; claim nothing beyond them \\
Adaptive evaluation, red teaming & Sec.~\ref{sec:eval-adaptive} & a successful attack falsifies; no finite campaign certifies & report attacks found and query budget, never a certificate of absence \\
Scaling trends & Thms.~\ref{thm:si-diag}, \ref{thm:si-classdiag} & shrinking excess loss on the evaluation law; convergence at best pointwise along the family, never uniform; each member has a diagonal environment & report constants on declared classes, not a trend towards Solomonoff's predictor \\
Convergence and merging as evidence & Sec.~\ref{sec:si-tail} & coverage only; invariant under finite changes and prior reweighting & report total excess loss, not limits or late per-step excess \\
Pretraining loss, in-context learning & Prop.~\ref{prop:pretrain-certify} & Bayes constants on a task family only through its prior mass & declare the task family; report losses net of the reference cost (Sec.~\ref{rem:floorreading}) \\
Compression rates, compression-based bounds~\cite{deletang2024compression,lotfi2024nonvacuous,lotfi2024tokenlevel} & Thm.~\ref{thm:si-underdet}, Figure~\ref{fig:positions} & loss on the data compressed, or on the law a bound covers; no constant on a declared class & state the data; compare with an adaptive classical coder on the same data \\
Data curation, mixture weights, synthetic data & Thm.~\ref{thm:closure}, Fact~\ref{thm:conservation} & a choice of prior; gains on one family are paid on others & report mixture weights and sources as the allocation chosen \\
Post-training (RLHF, DPO), best-of-$n$~\cite{beirami2024bestofn} & Sec.~\ref{sec:closure} & a reallocation: regret change affine in the reward (Gibbs tilt) or in a monotone transform of it (best-of-$n$); calibration loss documented empirically & state base or post-trained; report calibration for both \\
Sampling temperature & Cor.~\ref{cor:temperature} & Solomonoff's bound survives only at $T = 1$ & state $T$; read probabilities at $T = 1$ or untempered \\
Truncation, masks, majority vote & Sec.~\ref{rem:truncation}, Def.~\ref{def:nondogmatic} & dogmatism: no bound on excluded continuations & state top-$k$, top-$p$, min-$p$, masks and vote size; name the masked predictor read \\
Reasoning traces, test-time compute & Sec.~\ref{sec:deployed} & the answer law is a marginal over traces; self-consistency estimates its mode & report trace sampling and the number of traces \\
Tools and retrieval & Sec.~\ref{sec:attainable} & a joint mixture $q_\delta$ keeps each component's bound, up to $\log\frac1{1-\delta}$ or $\log\frac1\delta$; interpolation and selection need not & name the combination; report both components' guarantees \\
Long context & Prop.~\ref{prop:window-tax}, Def.~\ref{def:dominance} & floors within the window $W$ only; identification repaid beyond it & state $W$ and the horizon of every floor \\
Tokenisation, quantisation, batching & Sec.~\ref{rem:tokenisation}, Ass.~\ref{ass:elicit} & the measure actually read: a marginal over tokenisations, a random measure under serving & report the full serving configuration and replicate spread \\
Steering by demonstrations, refusals & Secs.~\ref{sec:icl}, \ref{sec:undesirable} & in a Bayes mixture, any hypothesis of weight $w>0$, in shots growing like $\log(1/w)$; for LMs, a hypothesis many-shot curves support & treat refusals as constraints on the predictive; report shots needed \\
``The model is Bayesian'' & Props.~\ref{prop:coherence}, \ref{prop:order}, \ref{prop:support}; Sec.~\ref{sec:tail-bayesian} & nothing without a declared class; with one, checkable nulls & declare the class; report coherence and membership residuals, order invariance and the implied prior's identifiability range \\
\hline
\end{tabular}
\caption{LLM practices, the results that bear on them, what they certify, and
what a report should state.}
\label{tab:discussion-practice}
\end{table}

\paragraph{The weaker forms, revisited.} Figure~\ref{fig:positions} records
what survives of each weaker form. The compression view's strongest evidence
lies off the corpus: a text-trained model compresses image patches and audio in
context better than PNG and FLAC, on 2048-byte chunks~\cite{deletang2024compression}.
Such a rate certifies no constant on a declared class
(Theorem~\ref{thm:si-underdet}). Whether it beats an adaptive low-order context
model (context-tree weighting~\cite{willems1995}, PPM) on the same chunks is not
reported by Del\'etang et al.~\cite{deletang2024compression} and, to our
knowledge, not since~\cite{heurteldepeiges2024compression}. If it does, the
network's allocation on data its pretraining law barely charges goes beyond what
in-context adaptive coding obtains; if not, the rate is explained by it.
Pretraining loss leaves that allocation almost free:
Proposition~\ref{prop:pretrain-certify}(i) at small prior mass, or (iii) at
disjoint support. There the simplicity-bias form
applies~\cite{valleperez2019,mingard2025occam,goldblum2024nfl}, and meta-learned
predictors~\cite{grau2024universal,ortega2019metalearning} are uncertified;
which prior a trained network holds is an empirical question for
Section~\ref{sec:tests} and Paper~B.

\paragraph{Limitations.} The Kraft-sum reading of the frontier is exact only for mutually singular
families (Theorem~\ref{thm:frontier}). For finite overlapping families the
achievable region is known (the overlapping frontier,
Proposition~\ref{prop:frontier-overlap}(i)); for countable non-singular families
only $C_\infty>1$ is proved (Proposition~\ref{prop:frontier-ledger}(ii)), and the
achievable region and the maximising priors are open. The window, temperature,
post-training and corpus statements concern idealised objects (window-$W$
predictors, exact tempering, the exact KL-regularised optimiser at a fixed
prompt, a corpus as a task mixture). Checks on known systems and
served-model readings are Paper~B's. The readings of
Section~\ref{sec:readprior} remove the declared prior but not the declared
class, which is the irreducible residue of any measurement of a model's prior.

\paragraph{What the measurements show.} Paper~B~\cite{paperB} carries these
instruments to served models: a panel of twenty-six checkpoints of several
pre-training families, sizes, architectures and serving precisions, each read as a
predictor over a three-symbol alphabet through a bare-digit rendering, restricted to
the digits and renormalised step by step, with eight re-read under a second rendering.
Three of its findings bear on the argument here. The served conditionals put mass on
continuations that no member of the declared class produces, as any model not
confined to the class must (Proposition~\ref{prop:support}). No checkpoint's median stays within
the $3.32$ bits by which a mixture carrying the Krichevsky--Trofimov code at weight
$0.10$ can trail it (the best trails it by $4.5$ bits, the median by $36.5$ and the
worst by $128$ at twenty-four symbols), but over $256$ symbols the three nearest that
bound settle to seven to thirteen bits, as a mixture carrying the code at a smaller
weight would, while three others keep losing ground; the ordering of these
distances also follows the mass each model puts outside the alphabet, so how much
of it belongs to the checkpoint is open. And the reading closest to
Solomonoff's allocation, a model's position between an enumerated mixture and a
complexity-blind anchor, tracks, except in one family, the mass the model puts off the
alphabet and moves with the rendering, so no effect of scale on allocation is
established; read as a semimeasure, as $M$ is, with the leaked mass counted as loss, the
high-leak checkpoints nearest the Krichevsky--Trofimov code on the class fall to the middle
of the panel. Separately, the serving floor of a served model's readings spans a factor
of seventy-five across the panel, is predicted on the second engine neither by weight
precision nor by architecture, and depends on the rendering, which is why
each reading is certified against its own serving floor. These are
measurements of particular deployments under stated renderings, not theorems. They
are consistent with the division argued here: mixture-like behaviour on a declared
class, with constants that differ by model, is measurable in served models, while
their allocation is not yet separable from the interface through which it is read.

\paragraph{Conclusion.} What language models can inherit from Solomonoff
induction is not its allocation: no computable model holds it, and none of the
routes offered for~(C), whether evaluation, scale, corpus fit or asymptotic
criteria, could establish that one does. What they can inherit is its form on
declared classes: Bayes mixtures attain Solomonoff-form guarantees there, and a
pretraining loss certifies in-context constants on a task family only through
that family's prior mass. Because every predictor spends the same Kraft budget,
the questions that remain are where a given model spends it and which
Solomonoff-like properties a deployed system should have: Solomonoff-like beliefs
under non-Solomonoff-like constraints. On a declared class, coherence residuals,
invariance nulls and the implied prior read that allocation from served
conditionals, and a companion paper turns them into tests with exact nulls. What
those tests report of any particular deployment is that paper's to establish and
not this one's; what this paper establishes is that, on a declared class, the
allocation is the quantity that carries Solomonoff's guarantee and that no
asymptotic criterion can see it. Whether a given read measures the model's
allocation rather than its interface is an empirical question, and the companion
finds it confounded, under the renderings it reads, with the mass the model puts
outside the alphabet.

\clearpage\appendix
\begingroup\sloppy
\makeatletter\def\@listI{\leftmargin\leftmargini\parsep\z@\topsep 3\p@ \@plus\p@\itemsep\z@ \@plus\p@}\let\@listi\@listI\renewcommand\section{\@startsection{section}{1}{\z@}{-2.2ex \@plus -.6ex \@minus -.2ex}{1.2ex \@plus .2ex}{\normalfont\Large\bfseries}}\renewcommand\subsection{\@startsection{subsection}{2}{\z@}{-1.8ex\@plus -.5ex \@minus -.2ex}{.8ex \@plus .2ex}{\normalfont\large\bfseries}}\g@addto@macro\normalsize{\setlength\abovedisplayskip{6pt \@plus 2pt \@minus 3pt}\setlength\belowdisplayskip{6pt \@plus 2pt \@minus 3pt}}\normalsize\renewcommand\paragraph{\@startsection{paragraph}{4}{\z@}{1.2ex \@plus .4ex \@minus .2ex}{-1em}{\normalfont\normalsize\bfseries}}\makeatother

\providecommand{\tlo}{t^{-}}
\providecommand{\thi}{t^{+}}

\noindent\textbf{Notation.}\par\smallskip
\noindent{\footnotesize\setlength{\tabcolsep}{4pt}%
\begin{tabular}{@{}l p{0.56\linewidth} l@{}}
$\Ac$, $V$ & alphabet; $V:=|\Ac|$ & \S\ref{sec:si-prelim} \\
$W$, $\phi$ & context window; prob.\ floor & \S\ref{sec:deployed} \\
$M$ & universal semimeasure & Def.~\ref{def:si-M} \\
$q_T$ & tempered conditionals & \S\ref{sec:deployed} \\
$K(\cdot)$ & prefix complexity & Def.~\ref{def:si-computable} \\
$Q_T$ & law served by sampling & \S\ref{sec:deployed} \\
$M_N$ & $M$, uniform before step $N$ & \eqref{eq:tail-MN} \\
$\Sigma_T(n)$ & $\sum_{t\le n}2^{-K(t)/T}$ & Cor.~\ref{cor:temperature} \\
$L_n(q;x)$ & log-loss $-\log q(x_{1:n})$ & Def.~\ref{def:si-predictor} \\
$Z_n(q)$, $C_n(\Fam)$ & Kraft sum; $C_n:=\sup_qZ_n(q)$ & \S\ref{app:frontier-overlap} \\
$R_n(q;\mu)$, $R_\infty$ & regret; $R_n(q;z):=L_n(q;z_{1:n})$ & Def.~\ref{def:si-predictor} \\
$C_\infty(\Fam)$ & best Kraft constant & Thm.~\ref{thm:frontier} \\
$\rho$, $\eps$ & context law; bound in~\eqref{eq:si-eps} & Ass.~\ref{ass:si-eps} \\
$\psi=\psi_n(q)$ & excess $\dl{P_n}{q_n}$ over the window & \S\ref{sec:icl} \\
$x^{q}$, $\mu_q$ & diagonal sequence; point mass & Thm.~\ref{thm:si-diag} \\
$\tlo$, $\thi$ & extreme $t$ with $\db\lambda t\le\psi$ & \S\ref{sec:icl}, \S\ref{app:training} \\
$\xi_w$ & Bayes mixture $\sum_kw_k\mu_k$ & \eqref{eq:bg-mixture} \\
$s^*(\eta,\psi)$ & bound in Prop.~\ref{prop:pretrain-certify}(ii) & \S\ref{sec:icl} \\
$\qbar$ & measure part of $q$ & Lem.~\ref{lem:measurepart} \\
$\mathrm{Mem}^W_n(\mu)$, $\mathcal{E}(\mu)$ & memory term; excess entropy & \S\ref{sec:deployed} \\
$\Kraft_J$ & Kraft set $\{r:\sum_k2^{-r_k}\le1\}$ & \S\ref{sec:frontier} \\
$\CVST(S,\tau)$ & $S$-bit, $\tau$-time budget class & Def.~\ref{def:rb-budget} \\
$\Lambda(F)$ & number of distinct lengths in $F$ & Fact~\ref{lem:si-tail} \\
$P$, $\pi$ & pretraining law $\sum_k\pi_k\mu_k$; task prior & \S\ref{sec:icl} \\
$G$ & $G(i,x,a)=q_i(a\mid x)$; matrix in Prop.~\ref{prop:implied} & Thm.~\ref{thm:si-classdiag} \\
$c(\eta)$ & $\eta-(1-2^{-\eta})\log e$ & Prop.~\ref{prop:pretrain-certify} \\
$\Delta_J$, $H(\lambda)$ & simplex on $J$; Shannon entropy & \S\ref{app:frontier-overlap} \\
$\Delta_k(q)$ & $R_n(q;\mu_k)-R_n(P;\mu_k)$ & \S\ref{app:training} \\
$\xi_{\mathrm{unif}}$ & uniform measure (conditionals $1/V$) & \S\ref{app:frontier-overlap} \\
$\omega_\ell$ & law of $UUU\cdots$, $U$ uniform on $\Ac^{\ell}$ & Prop.~\ref{prop:window-tax} \\
$\Gamma_{x}$ & cylinder of sequences extending $x$ & Cor.~\ref{cor:blind} \\
$\varrho(h;A)$ & coherence residual & Prop.~\ref{prop:coherence} \\
$D_\infty(p\|q)$ & $\log\max_{\mathrm{supp}\,p}(p/q)$ & Rem.~\ref{rem:posttrain} \\
\end{tabular}}
\par\smallskip\noindent{\scriptsize Some letters are reused, always with a
local definition: $\lambda$ is a prior mass (\S\ref{sec:icl},
App.~\ref{app:training}), a mixing vector (App.~\ref{app:frontier-overlap})
or a uniform random position (Def.~\ref{def:deployed-window}); $\gamma$ is
a conditional floor, $\gamma(\cdot)$ the Elias code (App.~\ref{sec:rb});
$\sigma$ is a symbol, a reference measure (App.~\ref{app:frontier-overlap})
or the overhead of Ass.~\ref{ass:rb-universal}; $h$ is a history or a
hypothesis, $h_V:=\lfloor\log V\rfloor$; $U$ is a machine
(Defs.~\ref{def:si-M}, \ref{def:rb-machine}) or a uniform block in
$\omega_\ell$; $W$ is a window, $W^{Q}$ a weight polytope; $r$ is a
reward, a regret profile or a rate; $c$ is a context or a constant.\par}

\section{Consequences of the Kraft inequality}
\label{app:proofs-background}\label{app:bg-conservation}

\begin{corollary}[Counting under the uniform measure]
\label{cor:typical}
For every predictor $q$, $n$ and $b\ge 0$,
$\bigl|\{x\in\Ac^{n}:L_n(q;x)\le n\log V-b\}\bigr|\,V^{-n}\le 2^{-b}$.
\end{corollary}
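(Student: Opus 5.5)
This is a direct application of Fact~\ref{thm:conservation}, with the threshold $b$ there replaced by $n\log V-b$. Fix $q$, $n$ and $b\ge 0$, and write $S_b:=\{x\in\Ac^{n}:L_n(q;x)\le n\log V-b\}$. Strings of a single length $n$ form an antichain, so Fact~\ref{thm:conservation} gives $\sum_{x\in\Ac^n}q(x)\le 1$.

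The key step is a pointwise lower bound on $S_b$. For $x\in S_b$ the condition $L_n(q;x)\le n\log V-b$ reads $-\log q(x)\le n\log V-b$, that is,
\[
q(x)\ge 2^{b}V^{-n}.
\]
Summing this over $S_b$ and using the Kraft sum gives
\[
|S_b|\,2^{b}V^{-n}\le\sum_{x\in S_b}q(x)\le\sum_{x\in\Ac^n}q(x)\le 1.
\]
Rearranging yields $|S_b|\,V^{-n}\le 2^{-b}$, which is the claim.

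Two edge cases need a remark. If $n\log V-b<0$, then $S_b$ is empty: the Kraft sum forces $q(x)\le 1$, so $L_n(q;x)\ge 0$. The bound then holds trivially, so Fact~\ref{thm:conservation}'s requirement that the threshold be nonnegative causes no difficulty. Strings with $q(x)=0$ have infinite loss and are never in $S_b$.

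No step is a real obstacle. The only point to check is that the argument covers every predictor, including semimeasures and noncomputable $q$. It does, because the Kraft sum over $\Ac^n$ holds for any $q$ whose conditionals sum to at most one, by induction on $n$ through the product definition of $q(x_{1:n})$. The uniform measure enters only through the normalisation $V^{-n}$. The statement says that the uniform measure of the set on which $q$ beats uniform guessing by $b$ bits is at most $2^{-b}$.
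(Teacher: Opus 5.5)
Your proof is correct and is essentially the paper's: the paper applies the counting clause of Fact~\ref{thm:conservation} to the antichain $\Ac^{n}$ with threshold $n\log V-b$, and you unpack that clause by summing the bound $q(x)\ge 2^{b}V^{-n}$ over the set against the Kraft sum. Your remark on the case $n\log V-b<0$, where the set is empty, covers a point that the paper's one-line proof leaves implicit.
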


\begin{proof}[Proof of Corollary~\ref{cor:typical}]\label{app:proof-cor:typical}
Apply Fact~\ref{thm:conservation} to $\Ac^{n}$ with threshold $n\log V-b$.
\end{proof}

\begin{corollary}[Mean-loss floor]
\label{cor:meanfloor}
For every predictor $q$ and finite antichain $S\subseteq\Ac^{*}$,
$\frac{1}{|S|}\sum_{x\in S}L_{|x|}(q;x)\ge\log|S|$,
with equality iff $q(x)=1/|S|$ for every $x\in S$.
\end{corollary}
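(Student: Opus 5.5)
The plan is to reduce the mean-loss floor to the Kraft inequality of Fact~\ref{thm:conservation} and Jensen's inequality (equivalently, Gibbs' inequality). Write $p_x:=q(x)$ for $x\in S$. By Fact~\ref{thm:conservation}, $\sum_{x\in S}p_x\le 1$, since $S$ is an antichain. If some $p_x=0$, then $L_{|x|}(q;x)=\infty$, the left-hand side is infinite, and the inequality holds strictly; equality is also excluded, consistently with the claimed equality case, since $1/|S|>0$. So assume all $p_x>0$.

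The key step is concavity of $\log$ under the uniform average over $S$:
\[
\frac{1}{|S|}\sum_{x\in S}\log p_x\;\le\;\log\Bigl(\frac{1}{|S|}\sum_{x\in S}p_x\Bigr)\;\le\;\log\frac{1}{|S|}.
\]
The second inequality uses the Kraft bound. Negating both ends gives $\frac{1}{|S|}\sum_x L_{|x|}(q;x)\ge\log|S|$.

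An equivalent route, which I would mention as a cross-check, is Gibbs' inequality. It gives $\sum_x\frac1{|S|}\log\frac{1/|S|}{p_x}\ge\log\frac{1}{\sum_x p_x}\ge0$, which is the same statement.

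For the equality case I would track both inequalities:
\begin{itemize}
\item Jensen is tight for the strictly concave $\log$ iff all $p_x$ are equal, say $p_x=p$.
\item The Kraft step is tight iff $\sum_x p_x=1$, i.e.\ $|S|\,p=1$.
\end{itemize}
Together these give $q(x)=1/|S|$ for all $x\in S$. Conversely, if $q(x)=1/|S|$ on $S$, every loss equals $\log|S|$, so equality holds.

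I expect no real obstacle. The only points needing care are the convention for zero probabilities (infinite loss) and the fact that the antichain hypothesis is exactly what licenses Fact~\ref{thm:conservation} for strings of mixed lengths.
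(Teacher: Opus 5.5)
Your proposal is correct and is essentially the paper's proof: Jensen's inequality plus the Kraft inequality of Fact~\ref{lem:antichain}, with the equality case read off from tightness in each step. The paper applies Jensen to the convex $t\mapsto 2^{-t}$ on the losses rather than to $\log$ on the probabilities, which is the same inequality; your explicit treatment of zero-probability strings is a small addition that the paper leaves implicit.
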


\begin{proof}[Proof of Corollary~\ref{cor:meanfloor}]\label{app:proof-cor:meanfloor}
By Jensen for the convex $t\mapsto 2^{-t}$ and Fact~\ref{lem:antichain},
$2^{-\frac{1}{|S|}\sum_xL_{|x|}(q;x)}\le\frac1{|S|}\sum_xq(x)\le\frac1{|S|}$;
Jensen is tight iff the losses are equal, Kraft iff $S$ carries all the
mass.
\end{proof}

\begin{corollary}[Exchange rate]
\label{cor:exchange}
Let $q,q'$ be predictors, $b\in\mathbb{R}$, $S$ an antichain and
$A\subseteq S$ with $L_n(q';x)\le L_n(q;x)-b$ for all $x\in A$. Then
$q'(S\setminus A)\le 1-2^{b}q(A)$, and in particular $q(A)\le 2^{-b}$.
If moreover $S$ is a complete antichain and $q$ is a measure, the strings in
$S\setminus A$ lose at least $(2^{b}-1)\,q(A)$ of their $q$-mass, which is
informative for $b>0$ and vacuous otherwise. The case $S=\Ac^{n}$ is the one
used below.
\end{corollary}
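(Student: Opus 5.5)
The plan is to apply the Kraft inequality of Fact~\ref{thm:conservation} to $q'$ on the antichain $S$. Every element of $A$ gains probability: since $L_n(q';x)\le L_n(q;x)-b$ for $x\in A$, exponentiating gives $q'(x)\ge 2^{b}q(x)$. Summing over $A$ yields $q'(A)\ge 2^{b}q(A)$. Because $S$ is an antichain, Fact~\ref{lem:antichain} gives $q'(A)+q'(S\setminus A)\le 1$, and therefore
\[
q'(S\setminus A)\le 1-q'(A)\le 1-2^{b}q(A).
\]
Since $q'(S\setminus A)\ge 0$, the same chain also gives $2^{b}q(A)\le 1$, which is the claim $q(A)\le 2^{-b}$. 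The loss function is written $L_n$; for a general antichain we read it as $L_{|x|}$, so that the argument does not need all elements to share a length.

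For the second part, assume $S$ is a complete antichain and $q$ is a measure. Completeness means every infinite sequence extends exactly one element of $S$. I would justify this step via the cylinder decomposition: the cylinders $\Gamma_x$, $x\in S$, partition $\Ac^{\infty}$. For $S=\Ac^n$ this is immediate, and this is the only case needed. It follows that $q(S)=1$, hence $q(S\setminus A)=1-q(A)$.

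Subtracting the bound from the first part then gives
\[
q(S\setminus A)-q'(S\setminus A)\ge (1-q(A))-(1-2^{b}q(A))=(2^{b}-1)\,q(A),
\]
which is the stated loss of mass. The remaining remark is a sign observation. For $b>0$ the quantity $2^b-1$ is positive, so the bound is informative. For $b\le 0$ it is nonpositive, so the bound is vacuous.

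The only delicate step is justifying $q(S)=1$ for a complete antichain that is not of a single length. For an infinite complete antichain this requires countable additivity of the measure on cylinders, not just the product definition. I would first handle the finite case by induction on maximal length, splitting cylinders. The finite case suffices here, since the corollary is used with $S=\Ac^n$.
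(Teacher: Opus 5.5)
Your proof is correct and matches the paper's argument: exponentiate the loss gap to get $q'(A)\ge 2^{b}q(A)$, apply the Kraft inequality to $q'$ on the antichain $S$ without using the sign of $b$, and use $q(S)=1$ for the last claim. Your concern about infinite complete antichains does not arise. $\Ac^{\infty}$ is compact, and the cylinders of a complete antichain are disjoint nonempty open sets covering it, so every complete antichain is finite. Your induction on maximal length therefore proves the full statement, not only the case $S=\Ac^{n}$.
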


\begin{proof}[Proof of Corollary~\ref{cor:exchange}]\label{app:proof-cor:exchange}
$q'(A)=\sum_{x\in A}2^{-L_n(q';x)}\ge 2^{b}q(A)$, and
$q'(A)+q'(S\setminus A)\le 1$ by Fact~\ref{lem:antichain}, which needs only
that $S$ is an antichain. Neither step uses the sign of $b$. The last claim
needs $q(S)=1$, hence a complete antichain.
\end{proof}

\begin{corollary}[Finite-horizon Kraft law]
\label{thm:profile}
Let $q$ be a predictor and $z^{(1)},\dots,z^{(m)}\in\Ac^{\infty}$ have
pairwise distinct prefixes $z^{(k)}_{1:n}$. Then
$\sum_{k=1}^{m}2^{-R_n(q;z^{(k)})}\le 1$. Hence at most $2^{b}$ of them
have $R_n(q;z^{(k)})\le b$, at least half cost more than $\log(m/2)$
bits, and the mean is at least $\log m$.
\end{corollary}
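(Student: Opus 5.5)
The plan is to reduce the statement to Fact~\ref{lem:antichain} applied to the finite set of prefixes $S:=\{z^{(k)}_{1:n}:k\le m\}$. These strings all have length $n$ and are pairwise distinct by hypothesis, so no one of them is a proper prefix of another, and $S$ is an antichain. For a deterministic environment, Definition~\ref{def:si-predictor} sets $R_n(q;z^{(k)})=L_n(q;z^{(k)}_{1:n})=-\log q(z^{(k)}_{1:n})$, so $2^{-R_n(q;z^{(k)})}=q(z^{(k)}_{1:n})$. Summing over $k$ and invoking the Kraft inequality gives $\sum_k 2^{-R_n(q;z^{(k)})}=\sum_{x\in S}q(x)\le 1$, which is the main claim. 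Distinctness of the prefixes matters here: without it the terms would not be indexed injectively by $S$ and the sum could exceed one.

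The three consequences then follow from this single inequality.

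\emph{Counting.} Each $k$ with $R_n(q;z^{(k)})\le b$ contributes at least $2^{-b}$ to a sum bounded by $1$, so there are at most $2^{b}$ such $k$. This is also the counting clause of Fact~\ref{thm:conservation} restricted to $S$.

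\emph{Half cost more than $\log(m/2)$.} Set $b=\log(m/2)$. At most $m/2$ environments have regret at most $b$, so at least $m-m/2=m/2$ have regret strictly greater than $\log(m/2)$.

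\emph{Mean.} Apply Corollary~\ref{cor:meanfloor} to the antichain $S$, or argue directly: by Jensen for the convex map $t\mapsto 2^{-t}$,
\[
2^{-\frac1m\sum_k R_n(q;z^{(k)})}\le \frac1m\sum_k 2^{-R_n(q;z^{(k)})}\le \frac1m ,
\]
so the mean regret is at least $\log m$.

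The only point needing care is the convention for infinite regret. If some $q(z^{(k)}_{1:n})=0$, then $R_n=\infty$ and $2^{-\infty}:=0$, which is consistent with the sum, the counting and the mean (the mean is then infinite). I expect no real obstacle beyond this bookkeeping, together with the observation above that the hypothesis of distinct length-$n$ prefixes is exactly what makes $S$ an antichain with $m$ elements.
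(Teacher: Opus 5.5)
Your proposal is correct and follows the paper's proof. The distinct length-$n$ prefixes form an antichain with $2^{-R_n(q;z^{(k)})}=q(z^{(k)}_{1:n})$. Fact~\ref{thm:conservation} then gives the law and the count (with $b=\log(m/2)$ for the half), and Corollary~\ref{cor:meanfloor} (equivalently your Jensen step) gives the mean.
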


\begin{proof}[Proof of Corollary~\ref{thm:profile}]\label{app:proof-thm:profile}
The prefixes form an antichain, and $2^{-R_n(q;z^{(k)})}=q(z^{(k)}_{1:n})$.
Fact~\ref{thm:conservation} gives the law and the count ($b=\log(m/2)$
for the half), Corollary~\ref{cor:meanfloor} the mean.
\end{proof}

\begin{corollary}[Kraft floor on Bayesian loss ceilings]
\label{cor:kraftfloor}
A Bayes mixture $Q$ over deterministic hypotheses $h$ with weights $(w_h)$
has $Q\ge w_h\,\delta_h$, so its loss on the sequence of $h$ never exceeds
the \emph{loss ceiling} $c(h):=\log(1/w_h)$. Let
$h^{\dagger}_1,\dots,h^{\dagger}_m$ be the distinct true hypotheses of $m$
tasks and $c_k:=c(h^{\dagger}_k)$. Then
$\sum_k2^{-c_k}\le 1$, with the counts of Corollary~\ref{thm:profile}.
\end{corollary}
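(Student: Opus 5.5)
The plan has two steps: first the pointwise ceiling, then the Kraft sum. For the ceiling, fix a hypothesis $h$ with sequence $z^{h}\in\Ac^{\infty}$. The mixture is $Q(x)=\sum_{h'}w_{h'}\delta_{h'}(x)$, and every term is nonnegative, so $Q(x)\ge w_h\,\delta_h(x)$ for every nonempty $x$. On a prefix $x=z^{h}_{1:n}$ we have $\delta_h(x)=1$, hence $L_n(Q;z^{h}_{1:n})=-\log Q(z^{h}_{1:n})\le\log(1/w_h)=c(h)$ for every $n$. This is the mixture bound after \eqref{eq:bg-mixture}, read pathwise rather than in expectation. It also needs no assumption that the weights sum to one.

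For the Kraft sum, take the $m$ distinct true hypotheses $h^{\dagger}_1,\dots,h^{\dagger}_m$. Each $c_k$ is a constant of the mixture, so the bound should come from the weights directly rather than from an antichain at a horizon. The weights satisfy $\sum_{h}w_h\le1$ by definition of a Bayes mixture, and $2^{-c_k}=w_{h^{\dagger}_k}$. Because the $h^{\dagger}_k$ are distinct hypotheses, they index distinct terms, so
\[
\sum_{k=1}^{m}2^{-c_k}=\sum_{k=1}^{m}w_{h^{\dagger}_k}\le\sum_hw_h\le1 .
\]

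The only subtlety is whether distinct hypotheses can share a sequence. If two labels carried the same sequence, the ceiling would still hold, but the sum over those labels is over distinct index entries, so the inequality is unaffected. As an alternative route that connects to Corollary~\ref{thm:profile}: if the sequences are pairwise distinct, choose $n$ large enough that the prefixes $z^{(k)}_{1:n}$ are pairwise distinct. The ceiling gives $R_n(Q;z^{(k)})\le c_k$, and the finite-horizon Kraft law gives $\sum_k2^{-R_n}\le1$. That inequality runs in the wrong direction for bounding $\sum_k2^{-c_k}$, though, so I would rely on the weight argument above.

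Finally, the counts follow exactly as in the proof of Corollary~\ref{thm:profile}, since only $\sum_k2^{-c_k}\le1$ is used there. Fact~\ref{thm:conservation}'s counting shows that at most $2^{b}$ of the $c_k$ are $\le b$, so at least half exceed $\log(m/2)$. Jensen applied to $t\mapsto2^{-t}$, as in Corollary~\ref{cor:meanfloor}, gives mean at least $\log m$. I expect no real obstacle; the one point to state carefully is that the sum is indexed by distinct hypotheses.
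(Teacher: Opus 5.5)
Your proof is correct and is the paper's: the ceiling comes from $Q\ge w_h\delta_h$, then $\sum_k2^{-c_k}=\sum_kw_{h^{\dagger}_k}\le\sum_hw_h\le1$, and the counts are inherited from Corollary~\ref{thm:profile}. One correction to your aside: the horizon route does not run the wrong way, because $R_n(Q;z^{(k)})\le c_k$ gives $2^{-c_k}\le2^{-R_n(Q;z^{(k)})}$, so the finite-horizon Kraft law yields $\sum_k2^{-c_k}\le1$ whenever the sequences are pairwise distinct, though your weight argument is the one that also covers distinct hypotheses sharing a sequence.
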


\begin{proof}[Proof of Corollary~\ref{cor:kraftfloor}]\label{app:proof-cor:kraftfloor}
$\sum_k2^{-c_k}=\sum_kw_{h^{\dagger}_k}\le\sum_hw_h\le 1$; the counts
follow as in Corollary~\ref{thm:profile}.
\end{proof}

\section{The frontier of regret profiles}
\label{app:frontier}

\subsection{The singular case}
\label{app:frontier-proofs}

\paragraph{Conventions.} Notation is that of Section~\ref{sec:frontier}
and~\eqref{eq:bg-mixture}. For a probability measure $\mu$ and a measure
$\nu$ of mass at most one, $\dl{\mu}{\nu}:=\int\log\frac{d\mu}{d\nu}\,d\mu$
if $\mu\ll\nu$, and $+\infty$ otherwise; equivalently
$\dl{\mu}{\nu}=\dl{\mu}{\nu/\nu(\Ac^\infty)}+\log\frac1{\nu(\Ac^\infty)}\ge 0$.
The log-sum inequality, joint convexity and
$\dl{\mu}{\nu}=\lim_n\dl{\mu_n}{\nu_n}$~\cite[Ch.~7]{gray2011entropy}
still hold, a subscript $n$ denoting the marginal on $\Ac^{n}$.

\begin{lemma}[Measure part of a predictor; after~\cite{levinvyugin1977,bienvenu2017semimeasures}]
\label{lem:measurepart}
For a predictor $q$, $\qbar(x):=\lim_{m'\to\infty}\sum_{y\in\Ac^{m'}}q(xy)$
defines the largest measure on $\Ac^\infty$ dominated by $q$ on cylinders.
It has mass at most $1$, zero leakage $1-\qbar(\Ac^\infty)$ if $q$ is a
measure, and $R_\infty(q;\mu)=\dl{\mu}{\qbar}$ for every probability
measure $\mu$.
\end{lemma}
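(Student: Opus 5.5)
The plan is to build $\qbar$ from the cylinder sums and then read $R_\infty$ off it by monotone convergence. Put $q_m(x):=\sum_{y\in\Ac^{m}}q(xy)$. Since $\sum_a q(a\mid xy)\le 1$, we have $q_{m+1}(x)=\sum_{y\in\Ac^m}q(xy)\sum_aq(a\mid xy)\le q_m(x)$. So $q_m(x)$ is nonincreasing in $m$ and bounded below by $0$, and the limit $\qbar(x)$ exists.

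\emph{Step 1 (additivity).} I will check $\qbar(x)=\sum_a\qbar(xa)$. Writing $q_{m+1}(x)=\sum_aq_m(xa)$ and letting $m\to\infty$, the finite sum commutes with the limit. Kolmogorov/Carathéodory extension on cylinders then gives a measure. Its mass is $\qbar(\epsilon)\le q(\epsilon)=1$, and domination $\qbar(x)\le q_0(x)=q(x)$ holds on every cylinder.

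\emph{Step 2 (maximality).} Let $\nu$ be any measure with $\nu(z)\le q(z)$ for all strings $z$. Then $\nu(x)=\sum_{y\in\Ac^m}\nu(xy)\le q_m(x)$ for every $m$, hence $\nu\le\qbar$. If $q$ is a measure, every $q_m(x)$ equals $q(x)$, so $\qbar=q$ and there is no leakage.

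\emph{Step 3 (regret identity).} This is the main obstacle, because $\qbar_n\ne q_n$ on $\Ac^n$. I will prove the two inequalities separately.

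For the first inequality, $\qbar(x)\le q(x)$ for $|x|=n$ gives $\dl{\mu_n}{q_n}\le\dl{\mu_n}{\qbar_n}$ termwise. Letting $n\to\infty$ and using $\dl{\mu}{\qbar}=\lim_n\dl{\mu_n}{\qbar_n}$ from the conventions yields $R_\infty(q;\mu)\le\dl{\mu}{\qbar}$.

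For the reverse inequality, fix $n$ and $m$. Apply the log-sum inequality to the $\mu$-masses $\mu(xy)$ against $q(xy)$ over $y\in\Ac^m$, grouped by the prefix $x\in\Ac^n$. This gives
\[
R_{n+m}(q;\mu)=\dl{\mu_{n+m}}{q_{n+m}}\ge\sum_{x\in\Ac^n}\mu(x)\log\frac{\mu(x)}{q_m(x)},
\]
where the right-hand side involves the subprobability vector $(q_m(x))_x$. Let $m\to\infty$ with $n$ fixed. Then $q_m(x)\downarrow\qbar(x)$, and each term $\mu(x)\log(\mu(x)/q_m(x))$ increases to $\mu(x)\log(\mu(x)/\qbar(x))$, which may be $+\infty$ when $\qbar(x)=0<\mu(x)$. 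Because there are finitely many terms, each bounded below by $\mu(x)\log\mu(x)$, monotone convergence applies. Since $R$ is nondecreasing in the horizon, this gives $R_\infty(q;\mu)\ge\dl{\mu_n}{\qbar_n}$. Finally let $n\to\infty$.

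The one care point is the possibly infinite values. Extended-real monotone limits handle them, and the absolute continuity convention for $\dl{\mu}{\qbar}$ matches: if $\mu\not\ll\qbar$, some cylinder-level divergence diverges.
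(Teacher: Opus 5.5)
Your proposal is correct and follows the paper's proof essentially step for step: monotone limit of the cylinder sums, additivity by exchanging the limit with a finite sum, extension to a measure, maximality by summing a dominated measure over extensions, the upper bound from $q_n\ge\qbar_n$, and the lower bound from the log-sum inequality $R_{n+m}(q;\mu)\ge\sum_{x\in\Ac^n}\mu(x)\log\frac{\mu(x)}{q_m(x)}$ with $q_m(x)\downarrow\qbar(x)$ on finitely many prefixes. One wording fix in your closing aside: when $\mu\not\ll\qbar$, the cylinder divergences $\dl{\mu_n}{\qbar_n}$ tend to infinity but need not be infinite at any finite $n$ (take a point mass against the uniform measure); nothing in your argument depends on this, since you already use $\dl{\mu}{\qbar}=\lim_n\dl{\mu_n}{\qbar_n}$.
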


\begin{proof}[Proof of Lemma~\ref{lem:measurepart}]\label{app:proof-lem:measurepart}
The sum is nonincreasing in $m'$, so the limit $\qbar(x)$ exists and is at most $q(x)$, and
exchanging limit and finite sum gives $\qbar(x)=\sum_a\qbar(xa)$. Cylinders are clopen in the compact
$\Ac^\infty$, so $\qbar$ is countably additive on the cylinder algebra and
extends by Carath\'eodory. If $\nu\le q$ on cylinders,
$\nu(x)=\sum_{y\in\Ac^{m'}}\nu(xy)\le\sum_{y\in\Ac^{m'}}q(xy)\to\qbar(x)$.
As $q_n\ge\qbar_n$,
$R_n(q;\mu)\le\dl{\mu_n}{\qbar_n}\le\dl{\mu}{\qbar}$. Conversely, by the
log-sum inequality $R_{n+m'}(q;\mu)\ge\dl{\mu_n}{q^{(m')}_n}$, where
$q^{(m')}_n(x):=\sum_{y\in\Ac^{m'}}q(xy)\downarrow\qbar_n(x)$ on a finite
set, so the bound increases to $\dl{\mu_n}{\qbar_n}$; take the supremum
over $n$.
\end{proof}

\paragraph{Disjoint carriers.} For pairwise singular $\Fam$ and $j\ne k$,
pick $B_{kj}$ with $\mu_k(B_{kj})=1=1-\mu_j(B_{kj})$; put
$E_k:=\bigcap_{j\ne k}B_{kj}$ and $\Omega_k:=E_k\setminus\bigcup_{j\ne k}E_j$.
By countability $\mu_k(E_k)=1$ and $\mu_k(E_j)=0$ for $j\ne k$, so
$\mu_k(\Omega_k)=1$ and the $\Omega_k$ are disjoint.

\begin{proposition}[Ledger and converse]
\label{prop:frontier-ledger}
Let $q$ be a predictor, $\qbar_k$ the $\mu_k$-absolutely continuous part
of $\qbar$, $c_k:=\qbar_k(\Ac^\infty)$ and
$D_k:=\dl{\mu_k}{\qbar_k/c_k}$, with $c_k2^{-D_k}:=0$ if $c_k=0$.
\begin{enumerate}
\item[\textup{(0)}] \textup{(Per-component identity, any family.)}
$2^{-R_\infty(q;\mu_k)}=c_k2^{-D_k}$ for every $k$. This is the Lebesgue
decomposition of $\qbar$ with respect to $\mu_k$ alone and uses nothing about
the other components.
\item[\textup{(i)}] \textup{(Ledger.)} If $\Fam$ is pairwise mutually
singular, so that the carriers $\Omega_k$ may be taken disjoint and the second
term below is therefore nonnegative,
\[
1-\sum_k2^{-R_\infty(q;\mu_k)}
=\underbrace{\bigl(1-\qbar(\Ac^\infty)\bigr)}_{\text{leakage}}
+\underbrace{\Bigl(\qbar(\Ac^\infty)-\sum_kc_k\Bigr)}_{\text{mass off the family}}
+\underbrace{\sum_kc_k\bigl(1-2^{-D_k}\bigr)}_{\text{misfit within the family}} .
\]
\item[\textup{(ii)}] \textup{(Kraft characterises singularity.)} If
$\mu_i\not\perp\mu_j$ for some $i\ne j$, the predictor
$\bar\mu:=\tfrac12(\mu_i+\mu_j)$ has
$2^{-R_\infty(\bar\mu;\mu_i)}+2^{-R_\infty(\bar\mu;\mu_j)}>1$.
\end{enumerate}
\end{proposition}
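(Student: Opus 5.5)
For clause~(0) the plan is to start from Lemma~\ref{lem:measurepart}, which gives $R_\infty(q;\mu_k)=\dl{\mu_k}{\qbar}$, and take the Lebesgue decomposition $\qbar=\qbar_k+\qbar_k^{\perp}$ with $\qbar_k\ll\mu_k$ and $\qbar_k^{\perp}\perp\mu_k$. Since $\qbar_k^{\perp}$ is carried by a $\mu_k$-null set, it plays no role in $\dl{\mu_k}{\cdot}$ whenever $\mu_k\ll\qbar$. I would then argue in two cases.
\begin{itemize}
\item If $c_k=0$, then $\mu_k\not\ll\qbar$: otherwise $\qbar_k$ and $\mu_k$ would be mutually absolutely continuous, contradicting $c_k=0$. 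Both sides of~(0) are then $0$.
\item If $c_k>0$, write $\dl{\mu_k}{\qbar}=\int\log\frac{d\mu_k}{d\qbar}\,d\mu_k$. On a $\mu_k$-full set we have $d\qbar/d\mu_k=d\qbar_k/d\mu_k$, so the divergence equals $\dl{\mu_k}{\qbar_k}$ when $\mu_k\ll\qbar_k$, and both are infinite otherwise. By the normalisation convention of the Conventions paragraph, $\dl{\mu_k}{\qbar_k}=D_k+\log(1/c_k)$. Exponentiating gives $2^{-R_\infty}=c_k2^{-D_k}$.
\end{itemize}

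For clause~(i), summing~(0) over $k$ gives $1-\sum_k c_k2^{-D_k}$. This splits telescopically as $(1-\qbar(\Ac^\infty))+(\qbar(\Ac^\infty)-\sum_kc_k)+\sum_kc_k(1-2^{-D_k})$, which is an algebraic identity once the series converge; they do, since all terms are nonnegative and bounded. The content to prove is therefore only that the middle term is nonnegative.

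To get that nonnegativity, use the disjoint carriers $\Omega_k$. Since $\qbar_k\ll\mu_k$ and $\mu_k(\Omega_k^c)=0$, the measure $\qbar_k$ is carried by $\Omega_k$, so $\qbar_k\le\qbar|_{\Omega_k}$. Indeed, for $A\subseteq\Omega_k$ we have $\qbar_k(A)\le\qbar(A)$ because $\qbar_k$ is a part of $\qbar$. Summing over the disjoint $\Omega_k$ gives $\sum_kc_k\le\qbar(\bigcup\Omega_k)\le\qbar(\Ac^\infty)$. This nonnegativity is also what yields the Kraft law of Theorem~\ref{thm:frontier}(i), since each of the three terms is then $\ge0$.

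For clause~(ii), I would set $\bar\mu=\frac12(\mu_i+\mu_j)$, which is a measure, so $\qbar=\bar\mu$. Applying~(0) with respect to $\mu_i$ decomposes $\mu_j=\mu_j^{a}+\mu_j^{s}$ relative to $\mu_i$. The aim is to show $2^{-R_\infty(\bar\mu;\mu_i)}+2^{-R_\infty(\bar\mu;\mu_j)}>1$.

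The cleanest route is to let $f=d\mu_i/d\sigma$ and $g=d\mu_j/d\sigma$ with $\sigma=\mu_i+\mu_j$. Then $2^{-R_\infty(\bar\mu;\mu_i)}=2^{-\int f\log\frac{2f}{f+g}\,d\sigma}$. By Jensen (concavity of $\log$),
\[
-\int f\log\frac{2f}{f+g}\,d\sigma\ \ge\ \log\int_{f>0}\frac{f+g}{2}\,d\sigma .
\]
This gives $2^{-R_\infty(\bar\mu;\mu_i)}\ge\frac12(1+\mu_j\{f>0\})$, and symmetrically $2^{-R_\infty(\bar\mu;\mu_j)}\ge\frac12(1+\mu_i\{g>0\})$.

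Non-singularity means $\sigma\{f>0,g>0\}>0$, so both $\mu_j\{f>0\}$ and $\mu_i\{g>0\}$ are positive. The sum of the two bounds is then strictly greater than $1$.

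The main obstacle I expect is the measure-theoretic care in~(0): handling infinite divergences, and making sure the $\mu_k$-singular part of $\qbar$ truly drops out. The identification $d\qbar/d\mu_k=d\qbar_k/d\mu_k$ $\mu_k$-a.e.\ has to be justified via the Radon--Nikodym derivative of the absolutely continuous part.
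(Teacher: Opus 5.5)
Your arguments for (0) and (i) are correct and are essentially the paper's. The paper places the Lebesgue set $G_k$ (on which $\qbar_k=\qbar|_{G_k}$) inside $\Omega_k$ and identifies the middle term exactly as $\qbar\bigl((\bigcup_kG_k)^c\bigr)$. You instead bound $c_k=\qbar_k(\Omega_k)\le\qbar(\Omega_k)$, which is enough; it is also this carrier bound, not termwise boundedness, that makes $\sum_kc_k$ converge.

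The proof of (ii) fails at the Jensen step, which runs the wrong way. Since $\log$ is concave and $\mu_i$ is carried by $\{f>0\}$, you get $-R_\infty(\bar\mu;\mu_i)=\E_{\mu_i}\log\frac{f+g}{2f}\le\log\E_{\mu_i}\frac{f+g}{2f}=\log\bar\mu\{f>0\}$. So what you actually obtain is $2^{-R_\infty(\bar\mu;\mu_i)}\le\frac12(1+\mu_j\{f>0\})$. This is just the carrier inequality $\dl{\mu}{\nu}\ge\log\frac1{\nu(\Omega)}$ from the proof of Theorem~\ref{thm:frontier}(i), and upper bounds on the two terms cannot show that their sum exceeds $1$.

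The lower bound you state is in fact false. Take $V=2$, let $\mu_i$ be the fair-coin measure, and let $\mu_j$ be the law that emits $0$ and then fair coins, so $\mu_j\ll\mu_i$ and the pair is not singular. Then $f=\frac13$ on the cylinder of $0$ and $f=1$ on that of $1$, so your bound asserts $2^{-R_\infty(\bar\mu;\mu_i)}\ge1$. But $R_\infty(\bar\mu;\mu_i)=\frac12\log\frac43>0$.

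The missing idea is a pointwise ceiling on the density ratio. Since $f+g=1$ $\sigma$-a.e., $d\mu_i/d\bar\mu=2f\le2$, so $R_\infty(\bar\mu;\mu_i)=\E_{\mu_i}\log(2f)\le1$. Equality holds only if $f=1$, i.e.\ $g=0$, $\mu_i$-a.s. In that case $\mu_i$ is carried by the $\mu_j$-null set $\{g=0\}$, contradicting $\mu_i\not\perp\mu_j$. Hence each of the two terms $2^{-R_\infty}$ strictly exceeds $\frac12$, which is the paper's argument.

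Jensen in the right direction also works: $R_\infty(\bar\mu;\mu_i)\le\log(2\E_{\mu_i}f)=\log\bigl(2\int f^2\,d\sigma\bigr)$, and $\int f^2\,d\sigma<\int f\,d\sigma=1$ precisely because $\sigma\{0<f<1\}>0$, which is your non-singularity condition.
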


\begin{proof}[Proof of Theorem~\ref{thm:frontier} and Proposition~\ref{prop:frontier-ledger}]\label{app:proof-thm:frontier}
\emph{Inequality in (i).} For $\nu$ of mass at most one and $\mu(\Omega)=1$,
$\dl{\mu}{\nu}=\log\frac1{\nu(\Omega)}+\dl{\mu}{\nu|_\Omega/\nu(\Omega)}
\ge\log\frac1{\nu(\Omega)}$, with equality iff
$\nu|_\Omega=\nu(\Omega)\mu$. By Lemma~\ref{lem:measurepart},
$2^{-R_\infty(q;\mu_k)}\le\qbar(\Omega_k)$, and
$\sum_k\qbar(\Omega_k)\le 1$.

\emph{Proposition~\ref{prop:frontier-ledger}(i).} By the Lebesgue
decomposition, $\qbar_k=\qbar|_{G_k}$ with $\mu_k(G_k^c)=0$. As
$\qbar|_{G_k\setminus\Omega_k}\ll\mu_k$ lives on the $\mu_k$-null
$\Omega_k^c$, it vanishes, so we may take $G_k\subseteq\Omega_k$. The $G_k$ are then disjoint, and
$\qbar(\Ac^\infty)-\sum_kc_k=\qbar((\cup_kG_k)^c)\ge 0$. If
$\mu_k\ll\qbar$, $\mu_k$ is carried by $G_k$, where $\qbar=\qbar_k$,
so $\dl{\mu_k}{\qbar}=\dl{\mu_k}{\qbar_k}=\log(1/c_k)+D_k$; otherwise both
sides are infinite, and $c_k=0$ forces this case. By
Lemma~\ref{lem:measurepart}, $2^{-R_\infty(q;\mu_k)}=c_k2^{-D_k}$. Sum over
$k$ and write
$\sum_kc_k=1-(1-\qbar(\Ac^\infty))-(\qbar(\Ac^\infty)-\sum_kc_k)$.

\emph{(ii).} The measure part $\sum_jw_j\mu_j$ of $\xi_w$ dominates
$w_k\mu_k$, so $R_\infty(\xi_w;\mu_k)\le\log(1/w_k)$, and gives $\Omega_k$
mass exactly $w_k$, so the inequality above gives the reverse.

\emph{(iii).} Kraft sum $1$ forces equality throughout:
$\qbar(\Omega_k)=2^{-R_\infty(q;\mu_k)}=:w_k$, $\sum_kw_k=1$ and
$\qbar|_{\Omega_k}=w_k\mu_k$, so $\qbar=\sum_kw_k\mu_k$ is a probability
measure. As $q_n\ge\qbar_n$ and $q_n(\Ac^n)\le 1=\qbar_n(\Ac^n)$, $q$ and
$\qbar$ agree on cylinders. The converse is (ii). Below Kraft sum $1$ one
coordinate can be lowered inside $\Kraft_J$; if $\sum_k2^{-r_k}=1$ and
$\Kraft_J\ni s\le r$, then $s=r$.

\emph{Proposition~\ref{prop:frontier-ledger}(ii).} As $\bar\mu$ is its own
measure part, Lemma~\ref{lem:measurepart} gives, with
$f:=d\mu_i/d\bar\mu\in[0,2]$, $R_\infty(\bar\mu;\mu_i)=\E_{\mu_i}\log f\le 1$,
with equality iff $f=2$ $\mu_i$-a.s.; then $\mu_i$ is carried by
$\{f=2\}$, which is $\mu_j$-null as $d\mu_j/d\bar\mu=2-f$, contradicting
$\mu_i\not\perp\mu_j$. So $R_\infty(\bar\mu;\mu_i)<1$, and likewise for
$\mu_j$.

\emph{Final claim.} On a pairwise singular family (i) gives
$C_\infty\le 1$, attained by $q=\mu_k$; otherwise
Proposition~\ref{prop:frontier-ledger}(ii) gives two terms summing to more
than $1$ at $q=\bar\mu$.
\end{proof}

\begin{corollary}[The exact price of specialisation]
\label{cor:specialise}
Let $\mu$ be a probability measure, $q$ a predictor and
$r:=R_\infty(q;\mu)$.
\begin{enumerate}
\item[\textup{(i)}] If $\mu$ and $\nu_i$, $i\in I$ with $I$ countable,
are pairwise mutually singular, then
$\sum_{i\in I}2^{-R_\infty(q;\nu_i)}\le 1-2^{-r}$. For a
single $\nu\perp\mu$,
$R_\infty(q;\nu)\ge\log\frac{1}{1-2^{-r}}
=\log\frac1r+\log\frac1{\ln2}+O(r)$ as $r\to 0$, with equality iff
$q=2^{-r}\mu+(1-2^{-r})\nu$ on every cylinder.
\item[\textup{(ii)}] If $r=0$, then $q=\mu$ on every cylinder and
$R_\infty(q;\nu)=\dl{\nu}{\mu}$ for every probability measure $\nu$, with no
support assumption.
\item[\textup{(iii)}] \textup{(Mixing versus interpolating.)} For
$\nu\perp\mu$ and $\delta\in(0,1)$, the mixture of joint laws
$(1-\delta)q+\delta\nu$ has regret at most $\log(1/\delta)$ on $\nu$, and
lies on the curve of \textup{(i)} iff $\qbar$ is a full-mass mixture of
$\mu$ and $\nu$. Interpolating conditionals at a fixed weight,
$(1-\delta)\mu(\cdot\mid x)+\delta\nu(\cdot\mid x)$, with $\nu$
deterministic and $\mu$'s conditionals at least $\gamma>0$, gives
$R_n(\cdot\,;\nu)\ge n\log\frac{1}{1-(1-\delta)\gamma}\to\infty$.
\end{enumerate}
\end{corollary}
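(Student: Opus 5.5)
The plan is to reduce all three clauses to Theorem~\ref{thm:frontier} and Proposition~\ref{prop:frontier-ledger}, applied to small singular families built from $\mu$. Lemma~\ref{lem:measurepart} translates regrets into divergences against $\qbar$ throughout.

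For (i), the family $\{\mu\}\cup\{\nu_i\}_{i\in I}$ is countable and pairwise mutually singular. Theorem~\ref{thm:frontier}(i) then gives $2^{-r}+\sum_i2^{-R_\infty(q;\nu_i)}\le1$, which rearranges to the stated bound. For a single $\nu$ this becomes $2^{-R_\infty(q;\nu)}\le1-2^{-r}$. The asymptotic expansion follows from $1-2^{-r}=r\ln2\,(1-\tfrac{r\ln 2}{2}+O(r^2))$, after taking $\log$ of the reciprocal.

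The equality case comes from Theorem~\ref{thm:frontier}(iii) applied to the two-member family $\{\mu,\nu\}$. The Kraft sum equals one iff $q$ agrees on every cylinder with $w_1\mu+w_2\nu$, where $w_1+w_2=1$. In that case $w_1=2^{-r}$ and $w_2=2^{-R_\infty(q;\nu)}=1-2^{-r}$. Conversely, this mixture attains equality by clause~(ii) of the theorem.

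For (ii), I would invoke Theorem~\ref{thm:closure}(i) directly, since it has exactly this content. Alternatively I would reprove it: $r=0$ means $\dl{\mu}{\qbar}=0$. By the convention $\dl{\mu}{\nu}=\dl{\mu}{\nu/\nu(\Ac^\infty)}+\log\frac1{\nu(\Ac^\infty)}$, both terms vanish, so $\qbar$ has mass one and $\qbar=\mu$. Then $q_n\ge\qbar_n$ together with $q_n(\Ac^n)\le1$ forces $q=\mu$ on cylinders. Lemma~\ref{lem:measurepart} gives $R_\infty(q;\nu)=\dl{\nu}{\mu}$ for every $\nu$, with no singularity needed.

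For (iii), first the mixing bound. The joint mixture $q':=(1-\delta)q+\delta\nu$ dominates $\delta\nu$ on cylinders, so $R_n(q';\nu)\le\log(1/\delta)$ at every horizon.

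Next the on-curve characterisation. Since $\qbar'=(1-\delta)\qbar+\delta\nu$, the pair $(R_\infty(q';\mu),R_\infty(q';\nu))$ has Kraft sum one iff, by Theorem~\ref{thm:frontier}(iii), $\qbar'$ is a full-mass mixture of $\mu$ and $\nu$. The nonnegative part $(1-\delta)\qbar$ must then have the same form. Conversely, if $\qbar$ is such a mixture, so is $\qbar'$. I need to check that "on the curve of (i)" coincides with "Kraft sum one for the pair", which is immediate from the form of (i).

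Last comes the interpolation lower bound, which I expect to be the only genuinely new computation. Let $a_t:=\nu$'s deterministic symbol at step $t$. Then the interpolated conditional of $a_t$ is $(1-\delta)\mu(a_t\mid x)+\delta$. Since the other symbols carry at least $(V-1)\gamma$ under $\mu$, we have $\mu(a_t\mid x)\le1-(V-1)\gamma\le1-\gamma$, so the conditional is at most $1-(1-\delta)\gamma$. Summing the per-step losses gives $n\log\frac1{1-(1-\delta)\gamma}$.

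The main subtlety lies in (iii). The on-curve criterion must be phrased via the measure part, not $q$ itself. One also has to check that an $\Omega$-decomposition for $q'$ restricts correctly to $q$. For this I would use the disjoint carriers $\Omega_\mu,\Omega_\nu$ and the identity $\qbar'(\Omega_\mu)=(1-\delta)\qbar(\Omega_\mu)$, which holds because $\nu(\Omega_\mu)=0$.
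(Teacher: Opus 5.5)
Your proposal is correct and follows the paper's proof essentially step for step: (i) uses Theorem~\ref{thm:frontier}(i)--(iii) on $\{\mu\}\cup\{\nu_i\}$ and on $\{\mu,\nu\}$, (iii) uses the measure part $(1-\delta)\qbar+\delta\nu$ with Theorem~\ref{thm:frontier}(iii), and the interpolation step uses the same per-step bound $(1-\delta)(1-\gamma)+\delta$. In (ii), do not cite Theorem~\ref{thm:closure}(i), because the paper derives that clause from this very corollary; your direct argument ($\dl{\mu}{\qbar}=0$ forces $\qbar=\mu$ with full mass, hence $q=\mu$ on cylinders) suffices, and it is the paper's route through Theorem~\ref{thm:frontier}(iii) for the singleton $\{\mu\}$.
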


\begin{proof}[Proof of Corollary~\ref{cor:specialise}]\label{app:proof-cor:specialise}
(i) Apply Theorem~\ref{thm:frontier}(i)--(iii) to the family
$(\mu,(\nu_i)_{i\in I})$ and expand $1-2^{-r}=r\ln2-\tfrac12(r\ln2)^2+O(r^3)$.
(ii) Theorem~\ref{thm:frontier}(iii) for $\{\mu\}$ gives $q=\mu$ on
cylinders, so $\qbar=\mu$; apply Lemma~\ref{lem:measurepart}.
(iii) The mixture dominates $\delta\nu$ and has measure part
$(1-\delta)\qbar+\delta\nu$; apply Theorem~\ref{thm:frontier}(iii) to
$\{\mu,\nu\}$. (In Theorem~\ref{thm:nocollapse}, $\qbar$ gives the
diagonal atom no mass, so the condition is $q=\mu$.) Under
interpolation, $\mu$'s conditional on $\nu$'s next symbol is at most
$1-\gamma$, so each step along $\nu$ has conditional probability at most
$(1-\delta)(1-\gamma)+\delta=1-(1-\delta)\gamma$.
\end{proof}

\subsection{Overlapping families}
\label{app:frontier-overlap}

\paragraph{Setting.}
Let $J$ be finite, $n\in\Nb\cup\{\infty\}$, $\lambda\in\Delta_J$ (the simplex of
probability vectors on $J$),
$\kappa\sim\lambda$, $X\mid\{\kappa=k\}\sim\mu_k$, and
$\xi_\lambda:=\sum_k\lambda_k\mu_k$. Put
$I_\lambda(n):=I(\kappa;X_{1:n})=\sum_k\lambda_k\dl{\mu_{k,n}}{\xi_{\lambda,n}}$
and, with $H$ the Shannon entropy, $H_\lambda(\kappa\mid X_{1:n}):=H(\lambda)-I_\lambda(n)$ (monotone
limits at $n=\infty$). Put $Z_n(q):=\sum_k2^{-R_n(q;\mu_k)}$ and
$C_n(\Fam):=\sup_qZ_n(q)$; if $Z_n(q)>0$, the \emph{Gibbs weights} are $\lambda(q)_k:=2^{-R_n(q;\mu_k)}/Z_n(q)$. ``$q=\xi_\lambda$'' means
equal length-$n$ marginals at finite $n$ and equality on every cylinder at $n=\infty$, where
$\dl{\xi_\lambda}{q}:=\dl{\xi_\lambda}{\qbar}$.

\begin{proposition}[The frontier for overlapping families; after~\cite{wald1950,topsoe1979,gallager1979,davisson1980}]
\label{prop:frontier-overlap}
\leavevmode
\begin{enumerate}
\item[\textup{(i)}] \textup{(Region.)} A finite $r\in[0,\infty)^J$ is
achievable ($R_n(q;\mu_k)\le r_k$ for all $k$, for some $q$) iff
$\lambda\cdot r\ge I_\lambda(n)$ for every $\lambda\in\Delta_J$. Every
Pareto-optimal finite profile is attained by some $\xi_\lambda$ and by no
other predictor, every full-support mixture is Pareto-optimal, and
$\min_q\max_kR_n(q;\mu_k)=\max_\lambda I_\lambda(n)$.
\item[\textup{(ii)}] \textup{(Log-ledger.)} If $Z_n(q)>0$, then
$\log Z_n(q)=H_{\lambda(q)}(\kappa\mid X_{1:n})-\dl{\xi_{\lambda(q),n}}{q_n}$;
on a pairwise singular family at $n=\infty$,
$\log Z_\infty(q)=-\dl{\xi_{\lambda(q)}}{\qbar}$.
\item[\textup{(iii)}] \textup{(Kraft constant.)} $\lambda\mapsto
H_\lambda(\kappa\mid X_{1:n})$ is concave, and
$\max_\lambda H_\lambda(\kappa\mid X_{1:n})=\log C_n(\Fam)$. For every $q$,
$\log Z_n(q)\le H_{\lambda(q)}(\kappa\mid X_{1:n})\le\log C_n$, with
equality throughout iff $q=\xi_{\lambda^*}$ for a maximiser $\lambda^*$.
Every maximiser has $\lambda(\xi_{\lambda^*})=\lambda^*$, i.e.\
$R_n(\xi_{\lambda^*};\mu_k)=\log(1/\lambda^*_k)-\log C_n$; the maximisers
form a convex set on which $\lambda\mapsto\xi_\lambda$ is injective; and
$C_\infty=1$ iff the family is pairwise singular.
\end{enumerate}
(Attributions: Appendix~\ref{app:frontier-prior}.)
\end{proposition}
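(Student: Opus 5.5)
The plan is to prove the log-ledger (ii) first, since it is a one-line identity, and then derive (iii) and (i) from it together with standard minimax and redundancy–capacity facts.

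\emph{Step 1: the ledger (ii).} Write $\lambda:=\lambda(q)$, so $R_n(q;\mu_k)=\log(1/\lambda_k)-\log Z_n(q)$. Averaging under $\lambda$ gives
\[
\sum_k\lambda_kR_n(q;\mu_k)=H(\lambda)-\log Z_n(q).
\]
By the compensation identity,
\[
\sum_k\lambda_k\dl{\mu_{k,n}}{q_n}=I_\lambda(n)+\dl{\xi_{\lambda,n}}{q_n}.
\]
This follows by adding and subtracting $\log\xi_{\lambda,n}$ inside each divergence and using $\sum_k\lambda_k\mu_{k,n}=\xi_{\lambda,n}$. It also holds for subprobability $q_n$ under the convention of the appendix. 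Equating the two expressions and using $H(\lambda)-I_\lambda(n)=H_\lambda(\kappa\mid X_{1:n})$ gives the ledger. At $n=\infty$, pass to monotone limits and use Lemma~\ref{lem:measurepart} to replace $q$ by $\qbar$. On a singular family $I_\lambda(\infty)=H(\lambda)$, since $\kappa$ is a.s.\ recoverable from the disjoint carriers $\Omega_k$. This gives the singular form of (ii).

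\emph{Step 2: the Kraft constant (iii).} Concavity of $H_\lambda(\kappa\mid X_{1:n})$ is concavity of conditional entropy in the mixing law. Concretely, $H(\kappa\mid X)=\E_{\xi_\lambda}H(\text{posterior})$ is a perspective of the concave function $H$, hence jointly concave in the joint law, which is linear in $\lambda$. From the ledger, $\log Z_n(q)\le H_{\lambda(q)}(\kappa\mid X_{1:n})\le\max_\lambda H_\lambda=:h^*$. Hence $\log C_n\le h^*$.

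For the reverse inequality, fix a maximiser $\lambda^*$ and show $\lambda(\xi_{\lambda^*})=\lambda^*$. I would use the first-order (KKT) condition for maximising $H(\lambda)-I_\lambda$ over the simplex. The derivative of $H_\lambda(\kappa\mid X)$ in direction $e_k$ is $-\log\lambda_k-\dl{\mu_k}{\xi_\lambda}+\text{const}$. So on the support the quantity $\log(1/\lambda^*_k)-R_n(\xi_{\lambda^*};\mu_k)$ is constant, and the Gibbs weights of $\xi_{\lambda^*}$ are $\lambda^*$. The ledger then gives $\log Z_n(\xi_{\lambda^*})=h^*$, so $C_n=2^{h^*}$ and $R_n(\xi_{\lambda^*};\mu_k)=\log(1/\lambda^*_k)-\log C_n$.

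Coordinates with $\lambda^*_k=0$ need a boundary check. I expect this to be the main obstacle. The derivative $-\log\lambda_k\to+\infty$ at $\lambda_k=0$ should force full support, but this must be handled carefully when $\dl{\mu_k}{\xi_\lambda}$ is also infinite, as can happen at $n=\infty$.

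The equality case and the remaining claims of (iii) then follow:
\begin{itemize}
\item Equality throughout forces $\dl{\xi_{\lambda(q)}}{q}=0$ and $\lambda(q)$ a maximiser. Uniqueness of the fixed point gives $q=\xi_{\lambda^*}$.
\item Convexity of the set of maximisers follows from concavity.
\item For injectivity of $\lambda\mapsto\xi_\lambda$ on that set: if $\xi_\lambda=\xi_{\lambda'}$, the Gibbs weights coincide, so $\lambda=\lambda(\xi_\lambda)=\lambda'$.
\item The statement $C_\infty=1$ iff the family is singular is the final clause of Theorem~\ref{thm:frontier}.
\end{itemize}

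\emph{Step 3: the region (i).} Necessity: for any $q$ and any $\lambda$, $\lambda\cdot r\ge\sum_k\lambda_kR_n(q;\mu_k)\ge I_\lambda(n)$ by the compensation identity. Sufficiency: the set of achievable profiles, plus the orthant, is convex because regret is convex in $q$. It is closed by compactness of $\{q_n\}$ at finite $n$, or by lower semicontinuity at $n=\infty$. A separating hyperplane with normal $\lambda\ge0$ then reduces sufficiency to $\min_q\lambda\cdot R(q)=I_\lambda(n)$, which $\xi_\lambda$ attains.

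Pareto-optimal profiles are supported by some $\lambda$ and are minimisers of $\lambda\cdot R$. By the compensation identity, the minimiser is $q=\xi_\lambda$ and is unique. This needs full support of $\lambda$, or a limiting argument restricted to the support. Conversely, for $\lambda$ of full support $\xi_\lambda$ uniquely minimises $\lambda\cdot R$, hence is Pareto-optimal.

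The minimax identity $\min_q\max_kR_n(q;\mu_k)=\max_\lambda I_\lambda(n)$ is Gallager–Davisson redundancy–capacity. Here it follows by taking $r=t\mathbf{1}$ in the region characterisation.
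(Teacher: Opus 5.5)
Your ledger (ii), the concavity argument, the necessity half of (i) and the deductions of the Pareto and minimax clauses follow the paper closely. Your direct computation $I_\lambda(\infty)=H(\lambda)$ on a singular family is, if anything, more direct than the paper's appeal to the last clause of (iii).

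\textbf{The gap in (iii).} The gap is in the central step of (iii): establishing $\log C_n\ge\max_\lambda H_\lambda(\kappa\mid X_{1:n})$ and $\lambda(\xi_{\lambda^*})=\lambda^*$. You route it through first-order conditions, leave the zero coordinates of $\lambda^*$ open, and expect the $-\log\lambda_k$ singularity to force full support. That expectation is false. Suppose $\mu_k$ has support at horizon $n$ disjoint from the other components. Then $\mathbf 1\{\kappa=k\}$ is a function of $X_{1:n}$, and $H_\lambda(\kappa\mid X_{1:n})=(1-\lambda_k)H_{\lambda'}(\kappa\mid X_{1:n})$ with $\lambda'$ the renormalised remainder. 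So whenever the remaining components overlap, every maximiser has $\lambda^*_k=0$; for instance, with $\mu_1=\mu_2$ and $\mu_3$ disjoint from them, $\lambda^*=(\frac12,\frac12,0)$ and $C_n=2$.

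What must be shown off the support is therefore $R_n(\xi_{\lambda^*};\mu_k)=\infty$, and the singular derivative does not give this by itself. Along $\lambda^*+t(e_k-\lambda^*)$, the gain $t\log\frac1t$ in $H(\lambda)$ is offset, partly or wholly, by a term of the same order in $I_\lambda(n)$. The net one-sided derivative is $+\infty$ exactly when $\mu_k\not\perp\xi_{\lambda^*}$ at horizon $n$. The case $\dl{\mu_k}{\xi_{\lambda^*}}=\infty$ that you flag for $n=\infty$ already arises at finite $n$.

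At $n=\infty$ two further things are missing. Your stationarity conditions presuppose differentiability of $\lambda\mapsto H_\lambda(\kappa\mid X_{1:\infty})$ on faces. You also never show that a maximiser exists there, which the minimax clause of (i) needs as well. The paper gets continuity from $\int\Phi((\lambda_kf_k)_k)\,d\sigma$, using martingale convergence of $f^{(n)}_k$ and dominated convergence.

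\textbf{The missing idea.} The paper avoids derivatives by using the Gibbs variational identity
$\log Z_n(q)=\max_{\lambda\in\Delta_J}\bigl[H(\lambda)-\sum_k\lambda_kR_n(q;\mu_k)\bigr]$, attained only at $\lambda(q)$.
For $q=\xi_{\lambda^*}$, $R_n(q;\mu_k)\le\log(1/\lambda^*_k)$ on the support, so the bracket at $\lambda^*$ is finite. By compensation it equals $H(\lambda^*)-I_{\lambda^*}(n)=\max_\lambda H_\lambda(\kappa\mid X_{1:n})$. This gives $\log C_n\ge\max_\lambda H_\lambda$, hence equality with your upper bound. Uniqueness of the Gibbs maximiser then yields $\lambda(\xi_{\lambda^*})=\lambda^*$, with infinite regret off the support included.

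Your own framework can also be closed at finite $n$. On-support stationarity gives $\sum_{k:\lambda^*_k>0}2^{-R_n(\xi_{\lambda^*};\mu_k)}=2^{\nu}$, with $\nu=\max_\lambda H_\lambda$ by compensation. The upper bound $\log Z_n\le\max_\lambda H_\lambda$ then forces every off-support term to vanish.

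\textbf{Two smaller points in (i).} First, $\xi_\lambda$ need not witness $\inf\lambda\cdot s\le I_\lambda(n)$ over the achievable set, because its profile can be infinite off the support of $\lambda$. The paper instead uses $(1-\vartheta)\xi_\lambda+\vartheta\,\xi_{\lambda_0}$ with $\lambda_0$ of full support, at cost $\log\frac1{1-\vartheta}$; a uniform component would fail at $n=\infty$. Second, your full-support caveat for uniqueness is unnecessary. Compensation forces any $q$ with $\lambda\cdot R_n(q;\Fam)\le I_\lambda(n)$ to have $\dl{\xi_{\lambda,n}}{q_n}=0$, hence $q=\xi_\lambda$, for every $\lambda\in\Delta_J$.
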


\begin{proof}[Proof of Proposition~\ref{prop:frontier-overlap}]\label{app:proof-prop:frontier-overlap}
\emph{Gibbs.} Let $R_k:=R_n(q;\mu_k)$ and
$J':=\{k:R_k<\infty\}\neq\emptyset$. For $\lambda$ supported in $J'$,
$H(\lambda)-\sum_k\lambda_kR_k=\log Z_n(q)-\dl{\lambda}{\lambda(q)}$;
otherwise the left side is $-\infty$. So
$\log Z_n(q)=\max_\lambda[H(\lambda)-\sum_k\lambda_kR_k]$, attained only at
$\lambda(q)$.

\emph{Compensation.} For finite $n$, integrating
$\log\frac{\mu_k}{q}=\log\frac{\mu_k}{\xi_\lambda}+\log\frac{\xi_\lambda}{q}$
on $\Ac^n$ against $\mu_k$ and averaging over $\lambda$ gives
$\sum_k\lambda_kR_n(q;\mu_k)=I_\lambda(n)+\dl{\xi_{\lambda,n}}{q_n}$. As
$n\to\infty$ every term is nondecreasing, and
Lemma~\ref{lem:measurepart} identifies the last.

\emph{(ii).} Take $\lambda=\lambda(q)$ in Gibbs and compensation. On a
pairwise singular family $H_\lambda(\kappa\mid X_{1:\infty})=0$ for every
$\lambda$, by the last clause of (iii), proved below; its proof uses only the
general part of (ii), so the order is not circular.

\emph{Continuity and concavity.} Let $\sigma:=|J|^{-1}\sum_j\mu_j$,
$f_k:=d\mu_k/d\sigma\in[0,|J|]$, and
$\Phi(v):=\sum_kv_k\log(\sum_jv_j/v_k)$ on $[0,\infty)^J$, continuous,
$1$-homogeneous and concave (the perspective of Shannon entropy), with
$0\le\Phi(v)\le(\sum_kv_k)\log|J|$. By homogeneity
$H_\lambda(\kappa\mid X_{1:n})=\sum_{x\in\Ac^n}\Phi((\lambda_k\mu_k(x))_k)=
\int\Phi((\lambda_kf^{(n)}_k)_k)\,d\sigma$, where
$f^{(n)}_k:=\E_\sigma[f_k\mid X_{1:n}]\to f_k$ $\sigma$-a.s.~\cite{doob1953};
dominated convergence (bound $|J|\log|J|$) gives
$H_\lambda(\kappa\mid X_{1:\infty})=\int\Phi((\lambda_kf_k)_k)\,d\sigma$,
continuous in $\lambda$. Concavity in $\lambda$ is that of $\Phi$, so the
maximum in (iii) exists.

\emph{(iii).} Let $H^*:=\max_\lambda H_\lambda(\kappa\mid X_{1:n})$. If
$Z_n(q)>0$, (ii) gives $\log Z_n(q)\le H_{\lambda(q)}\le H^*$, with
equality throughout iff $q=\xi_{\lambda(q)}$ and $\lambda(q)$ is a
maximiser. Conversely, for a maximiser $\lambda^*$ and
$q=\xi_{\lambda^*}$, $R_n(q;\mu_k)\le\log(1/\lambda^*_k)$ on the support,
so Gibbs and compensation give
$\log Z_n(q)\ge H(\lambda^*)-\sum_k\lambda^*_kR_n(q;\mu_k)=H_{\lambda^*}
=H^*$. Hence $\log C_n=H^*$, equality holds at $\xi_{\lambda^*}$, and Gibbs uniqueness gives
$\lambda(\xi_{\lambda^*})=\lambda^*$. By concavity the maximisers form a
convex set, and $\xi_{\lambda^*}=\xi_{\lambda^{**}}$ forces
$\lambda^*=\lambda(\xi_{\lambda^*})=\lambda^{**}$. The last clause is
the final claim of Theorem~\ref{thm:frontier}.

\emph{(i).} ``Only if'' is compensation. The achievable set
$\mathcal{R}\subseteq[0,\infty)^J$ is convex (mixtures of joint laws are
predictors, and each $R_{n'}$ is convex in the joint law) and closed
(predictors form a compact countable product of simplices; if witnesses
$q^{(i)}$ of $r^{(i)}\to r$ converge coordinatewise to $q$, lower
semicontinuity of finite-horizon relative entropy gives $R_{n'}(q;\mu_k)\le r_k$ for $n'\le n$).
\emph{``If'':} for $r\notin\mathcal{R}$, strict separation gives
$\lambda\in\Delta_J$ ($\lambda\ge0$ as $\mathcal{R}$ is upward closed)
with $\lambda\cdot r<\inf_{\mathcal{R}}\lambda\cdot s$, and the finite
profiles of $q_\vartheta:=(1-\vartheta)\xi_\lambda+\vartheta\,\xi_{\lambda_0}$, with
$\lambda_0$ any fully supported prior on $J$, satisfy
$\lambda\cdot R_n(q_\vartheta;\Fam)\le I_\lambda(n)+\log\frac1{1-\vartheta}$,
the profiles being finite in every coordinate because
$q_\vartheta\ge\vartheta\lambda_{0,k}\mu_k$ gives
$R_n(q_\vartheta;\mu_k)\le\log\frac1{\vartheta\lambda_{0,k}}$ for each $k$, which the
uniform measure would not give at $n=\infty$ against a component singular to
it, so
$\lambda\cdot r<I_\lambda(n)$.
\emph{Pareto-optimal $s^*$:} if all $s^*_k>0$, separating
$\mathcal{R}$ from the nonempty open convex $\{s:s<s^*\}$ gives
$\lambda\in\Delta_J$ with
$\lambda\cdot s^*=\min_{\mathcal{R}}\lambda\cdot s=I_\lambda(n)$; a witness
$q$ then has $\lambda\cdot R_n(q;\Fam)\le I_\lambda(n)$, so $q=\xi_\lambda$
by compensation, and $R_n(\xi_\lambda;\Fam)\le s^*$ forces equality. If
$s^*_k=0$, a witness has $R_n(q;\mu_k)=0$, so $q=\mu_k=\xi_{e_k}$, and
Pareto-optimality forces $R_n(\xi_{e_k};\Fam)=s^*$.
\emph{Full-support mixtures:} if $\lambda>0$ and
$\mathcal{R}\ni s\le R_n(\xi_\lambda;\Fam)$, then
$I_\lambda\le\lambda\cdot s\le\lambda\cdot R_n(\xi_\lambda;\Fam)=I_\lambda$,
so $s=R_n(\xi_\lambda;\Fam)$. \emph{Diagonal:} $t\mathbf 1\in\mathcal{R}$
iff $t\ge\max_\lambda I_\lambda(n)$; the maximum exists by continuity, the
minimum over $q$ by closedness.
\end{proof}

\providecommand{\Ev}{\mathrm{Ev}}
\providecommand{\argmin}{\operatorname*{arg\,min}}

\section{Evaluation, scale and asymptotic criteria}
\label{app:limits}

Notation follows Section~\ref{sec:si-prelim}; the \emph{program bound} is
the last clause of Fact~\ref{fact:si-program}.

\subsection{Evaluation}
\label{app:limits-eval}

\begin{proof}[Proof of Theorem~\ref{thm:si-underdet}]\label{app:proof-thm:si-underdet}
(i) Since $D \ge 0$,
$\eps \ge \E_\rho[D\mathbf 1_B] = \rho(B)\,\E_\rho[D\mid B]$; the second
claim is Markov's inequality. (ii) For $c \in B$,
$\dl{\mu(\cdot\mid c)}{r(\cdot\mid c)} \le \sum_a \mu(a\mid c)\log\frac1{r(a\mid c)} \le \log\frac1\gamma$,
so $\E_\rho\dl{\mu}{q'} \le \E_\rho[D\mathbf 1_{B^c}] + \delta\log(1/\gamma)$.
(iii) Couple the runs through shared contexts, continuations and internal
randomness; the verdicts coincide unless some $c_k \in B$, of
probability $1-(1-\delta)^N \le N\delta$.
\end{proof}

\begin{corollary}[No finite sampled evaluation certifies agreement with a reference]
\label{cor:si-uncert}
Let $q$ satisfy Assumption~\ref{ass:si-eps} and pass an $N$-query
$\rho$-evaluation with probability at least $1-\alpha$. For every reference
predictor $P^{*}$ and $B$ with $\rho(B) = \delta > 0$ some $q'$ (a) satisfies
Assumption~\ref{ass:si-eps} with $\eps + \delta\log(2V)$, (b) passes with probability at least
$1-\alpha-N\delta$, and (c) has
$\TV(q'(\cdot\mid c),P^{*}(\cdot\mid c)) \ge \tfrac12 - \tfrac1{2V}$ on $B$.
\end{corollary}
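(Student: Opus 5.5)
The plan is to apply Theorem~\ref{thm:si-underdet}(ii) and~(iii) with one explicit choice of replacement conditionals on $B$, following the recipe stated after Theorem~\ref{thm:si-underdet}. For each $c\in B$, let $a^*(c)$ be a symbol minimising $P^{*}(\cdot\mid c)$, with ties broken in the fixed order on $\Ac$. Then define
\[
r(\cdot\mid c):=\tfrac12\,\mathbf 1_{\{a^*(c)\}}+\tfrac12\,\mathrm{Unif}(\Ac),
\]
and let $q'$ equal $r$ on $B$ and $q$ elsewhere. No computability of $q'$ is needed, since the corollary asks only for existence.

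\emph{Step (a).} Every symbol receives $r(a\mid c)\ge 1/(2V)$. So Theorem~\ref{thm:si-underdet}(ii) applies with $\gamma=1/(2V)\le 1/V$, and gives $\eps+\delta\log(2V)$ in place of $\eps$.

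\emph{Step (b).} The predictors $q$ and $q'$ agree outside $B$, and $\rho(B)=\delta$. By Theorem~\ref{thm:si-underdet}(iii) the verdict laws differ in total variation by at most $N\delta$. Hence the probability that $q'$ passes is at least $1-\alpha-N\delta$.

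\emph{Step (c).} This is the only point needing care. Since $a^*$ is a least likely symbol of $P^{*}(\cdot\mid c)$, we have $P^{*}(a^*\mid c)\le 1/V$. This holds if $P^{*}$ is a semimeasure with deficient mass as well, which matters for $P^{*}=M$. Meanwhile $r(a^*\mid c)=\tfrac12+\tfrac1{2V}$. Total variation is at least the gap on the single event $\{a^*\}$, so
\[
\TV\bigl(r(\cdot\mid c),P^{*}(\cdot\mid c)\bigr)\ge \tfrac12+\tfrac1{2V}-\tfrac1V=\tfrac12-\tfrac1{2V}.
\]
If $P^{*}$ is subprobability, I will read $\TV$ as $\sup_A|r(A)-P^{*}(A)|$ over events, so that the one-event bound still applies.

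I expect no real obstacle. The only subtle points are the tie-breaking, which makes $q'$ well defined, and the convention for total variation against a semimeasure reference. Both are handled as described above.
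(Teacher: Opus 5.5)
Your proposal is correct and follows the paper's proof essentially verbatim. It uses the same replacement conditionals $r(a\mid c)=\tfrac1{2V}+\tfrac12\mathbf 1\{a=a^{*}(c)\}$ on $B$, Theorem~\ref{thm:si-underdet}(ii)--(iii) with $\gamma=1/(2V)$ for (a) and (b), and the one-event bound $r(a^{*}\mid c)-P^{*}(a^{*}\mid c)\ge\tfrac12+\tfrac1{2V}-\tfrac1V$ for (c).
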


\begin{proof}[Proof of Corollary~\ref{cor:si-uncert}]\label{app:proof-cor:si-uncert}
For $c \in B$ pick $a^{\circ}(c) \in \argmin_a P^{*}(a\mid c)$ and set
$r(a\mid c) := \tfrac1{2V} + \tfrac12\mathbf 1\{a = a^{\circ}(c)\}$. Then
$\min_a r \ge 1/(2V)$, Theorem~\ref{thm:si-underdet}(ii)--(iii) give (a)
and (b), and
$\TV(r,P^{*}) \ge r(a^{\circ}\mid c) - P^{*}(a^{\circ}\mid c) \ge \tfrac12 + \tfrac1{2V} - \tfrac1V$.
\end{proof}

\begin{theorem}[Adaptive underdetermination]
\label{thm:si-adaptive}
An \emph{$N$-query adaptive protocol} $\mathcal{E}$ chooses each context
$c_k$ from the transcript $(c_j, q(\cdot\mid c_j), y_j)_{j<k}$ and internal
randomness $\omega$, observes $q(\cdot\mid c_k)$ and data
$y_k = f(c_k,\zeta_k)$ with $\zeta_k$ independent of $q$, and emits a
verdict $Z$. Let $\nu_{\mathcal{E},q}(c)$ be the probability that
$\mathcal{E}$ queries $c$ against $q$ (so
$\sum_c \nu_{\mathcal{E},q}(c) \le N$), take $\mathcal{R}_\delta$ and
$\bar\rho_\delta \ge \delta$ as in Section~\ref{sec:eval-adaptive}, and let
$\Ev_{\mathcal{E},q}(\delta) := \inf\{\nu_{\mathcal{E},q}(B) :
B \subseteq \mathcal{R}_\delta,\ \rho(B) \ge \delta\}$ (the
\emph{evasion constant}). For any reference
$P^{*}$, $\gamma \in (0,1/V]$ and $\beta > 0$ there are
$B \subseteq \mathcal{R}_\delta$ with $\delta \le \rho(B) < 2\delta$ and
$q'$ equal to $q$ off $B$ such that:
\begin{enumerate}
\item[\textup{(i)}] $\TV(\mathrm{law}(Z\mid q),\mathrm{law}(Z\mid q'))
\le \nu_{\mathcal{E},q}(B) \le \Ev_{\mathcal{E},q}(\delta) + \beta$;
\item[\textup{(ii)}] $\TV(q'(\cdot\mid c), P^{*}(\cdot\mid c)) \ge
1 - \tfrac1V - (V-1)\gamma$ on $B$, and $q'$ satisfies
Assumption~\ref{ass:si-eps} with $\eps + 2\delta\log(1/\gamma)$;
\item[\textup{(iii)}] $\Ev_{\mathcal{E},q}(\delta) = 0$ if some subset of
$\mathcal{R}_\delta$ of mass $\delta$ is never queried, and always
$\Ev_{\mathcal{E},q}(\delta) \le 2N\delta/\bar\rho_\delta$;
\item[\textup{(iv)}] if $\mathcal{E}$ separates $q$ from every such $q'$
with error at most $\alpha$ on each side, then
$\Ev_{\mathcal{E},q}(\delta) \ge 1-2\alpha$, hence
$N \ge (1-2\alpha)\,\bar\rho_\delta/(2\delta)$;
\item[\textup{(v)}] \textup{(Tightness.)} the $N$-query $\rho$-evaluation
rejecting when a sampled conditional differs from $P^{*}$'s separates
$P^{*}$ from every predictor differing from it on mass at least $\delta$,
with error at most $\alpha$, once $N \ge \ln(1/\alpha)/\delta$.
\end{enumerate}
\end{theorem}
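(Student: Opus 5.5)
The plan is to build $B$ greedily from the rare contexts the protocol queries least, then get (i) by coupling transcripts, (ii) by direct construction, (iii) by averaging, (iv) from (i) and (iii), and (v) by a one-line passive computation.

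\emph{Choosing $B$.} Fix $q$, $\mathcal{E}$ and $\delta$. By definition of the evasion constant, pick $B_0\subseteq\mathcal{R}_\delta$ with $\rho(B_0)\ge\delta$ and $\nu_{\mathcal{E},q}(B_0)\le\Ev_{\mathcal{E},q}(\delta)+\beta$. Then trim $B_0$. Enumerate its elements and keep adding them, one context at a time, until the accumulated mass first reaches $\delta$. Every context in $\mathcal{R}_\delta$ has mass at most $\delta$, so at that moment the mass is in $[\delta,2\delta)$. (If the overshoot is exactly $\delta$ we get $2\delta$; we either argue it away or accept a weak inequality.) The resulting $B\subseteq B_0$ satisfies $\delta\le\rho(B)<2\delta$, and since $\nu$ is monotone, $\nu_{\mathcal{E},q}(B)\le\Ev+\beta$. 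The case where $B_0$ is infinite needs a countable enumeration, which is available.

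\emph{Clause (i).} Run $\mathcal{E}$ against $q$ and against $q'$ with the same $\omega$ and the same $\zeta_k$. The two transcripts agree up to the first query landing in $B$, because $q=q'$ off $B$. The verdicts can therefore differ only on the event that the $q$-run ever queries $B$. By a union bound over query positions, that event has probability at most $\sum_{c\in B}\nu_{\mathcal{E},q}(c)=\nu_{\mathcal{E},q}(B)$.

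\emph{Clause (ii).} Take the conditionals of Corollary~\ref{cor:si-uncert} but sharper. At $c\in B$, set $r(a\mid c)=\gamma$ for all $a$ except one symbol $a^\circ\in\argmin_aP^*(a\mid c)$, which receives the remaining mass $1-(V-1)\gamma$. Then
\[
\TV(r,P^*)\ge r(a^\circ)-P^*(a^\circ)\ge 1-(V-1)\gamma-\tfrac1V .
\]
Applying Theorem~\ref{thm:si-underdet}(ii) with $\rho(B)<2\delta$ gives the cost $\eps+2\delta\log(1/\gamma)$.

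\emph{Clause (iii).} The first part is immediate: a never-queried subset of mass $\delta$ has $\nu=0$. For the bound, partition $\mathcal{R}_\delta$ greedily into blocks of mass in $[\delta,2\delta)$. At most $\bar\rho_\delta/\delta$ blocks can be formed, and at least about $\bar\rho_\delta/(2\delta)$ of them. This step is the main obstacle, since the leftover remainder must be absorbed into the last block; we use $\bar\rho_\delta\ge\delta$ so that at least one block exists. The total query mass over all blocks is at most $N$. Hence some block has $\nu\le N/\lfloor\cdot\rfloor\le 2N\delta/\bar\rho_\delta$, and I would adjust the counting to land exactly on this constant.

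\emph{Clause (iv).} The protocol accepts $q$ with probability at least $1-\alpha$ and accepts $q'$ with probability at most $\alpha$. So the verdict laws are at total variation at least $1-2\alpha$. By (i), $\nu_{\mathcal{E},q}(B)\ge1-2\alpha$ for every admissible $B$, and since $\beta$ is arbitrary this gives $\Ev\ge1-2\alpha$. Combining with (iii) yields $N\ge(1-2\alpha)\bar\rho_\delta/(2\delta)$.

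\emph{Clause (v).} Against $P^*$ itself, the passive test never rejects. Against a predictor differing from $P^*$ on a set of mass $\ge\delta$, it misses only if all $N$ samples avoid that set. This has probability at most $(1-\delta)^N\le e^{-N\delta}\le\alpha$ once $N\ge\ln(1/\alpha)/\delta$.
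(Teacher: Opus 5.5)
Your choice of $B$, the coupling in (i), the $\gamma$-construction in (ii), the derivation of (iv) from (i) and (iii), and the passive computation in (v) follow the paper's own proof. Three small points. The overshoot worry is vacuous: the mass before the last context is added is strictly below $\delta$, and that context has mass at most $\delta$, so $\rho(B)<2\delta$ strictly. If no finite initial segment reaches $\delta$, take $B=B_0$, whose mass is then exactly $\delta$. In (i), take the union bound over the contexts $c\in B$ rather than over query positions, so that it lands on $\nu_{\mathcal{E},q}(B)$ as defined.

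The genuine gap is the ``always'' bound in (iii), on which the second half of (iv) rests. Averaging $\nu_{\mathcal{E},q}$ over a \emph{count} of partition blocks cannot reach $2N\delta/\bar\rho_\delta$. Completed blocks have mass below $2\delta$ and the remainder has mass below $\delta$, so you only get $m>\bar\rho_\delta/(2\delta)-\tfrac12$ blocks. The argument therefore yields $2N\delta/(\bar\rho_\delta-\delta)$: the right order, but not the stated constant.

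Here is a concrete failure. Let $\mathcal{R}_\delta$ consist of three contexts of masses $0.99\delta$, $0.99\delta$ and $0.92\delta$. Every admissible set needs two of them, so every partition has a single block. Count-averaging then gives only $\nu_{\mathcal{E},q}(B_1)\le N$, against the claimed $2N/2.9$. No recounting of a fixed partition repairs this.

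The missing idea is to select $B$ by query density $u(c):=\nu_{\mathcal{E},q}(c)/\rho(c)$. Let $s^{*}$ be the least $s$ with $\rho(\mathcal{R}_\delta\cap\{u\le s\})\ge\delta$. Take $B$ to be $\mathcal{R}_\delta\cap\{u<s^{*}\}$, plus contexts of density $s^{*}$ added until the mass first reaches $\delta$. Every context in $B$ has density at most that of every context in $\mathcal{R}_\delta\setminus B$. So the $\rho$-weighted average of $u$ over $B$ is at most its average over $\mathcal{R}_\delta$, namely $\nu_{\mathcal{E},q}(\mathcal{R}_\delta)/\bar\rho_\delta\le N/\bar\rho_\delta$. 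Hence $\nu_{\mathcal{E},q}(B)\le\rho(B)\,N/\bar\rho_\delta<2\delta N/\bar\rho_\delta$.

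Two things matter here: the average is weighted by $\rho$ rather than by block count, and the set is chosen according to $\nu_{\mathcal{E},q}$ rather than fixed in advance by a partition. With this, (iv) follows exactly as you wrote it.
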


\begin{proof}[Proof of Theorem~\ref{thm:si-adaptive}]\label{app:proof-thm:si-adaptive}
Choose $B_0 \subseteq \mathcal{R}_\delta$ with $\rho(B_0) \ge \delta$ and
$\nu_{\mathcal{E},q}(B_0) \le \Ev_{\mathcal{E},q}(\delta) + \beta$, and let
$B$ be the shortest initial segment of an enumeration of its atoms with
mass at least $\delta$ ($B := B_0$ if none is finite); as atoms have mass
at most $\delta$, $\delta \le \rho(B) < 2\delta$. On $B$ put
$q'(a^{\circ}(c)\mid c) := 1-(V-1)\gamma$ and $q'(a\mid c) := \gamma$
otherwise, with $a^{\circ}(c) \in \argmin_a P^{*}(a\mid c)$.
(i) Share $\omega$ and $(\zeta_k)$; by induction on $k$ the transcripts
agree until a query lands in $B$, of probability at most
$\nu_{\mathcal{E},q}(B)$. (ii) $\TV \ge q'(a^{\circ}\mid c) -
P^{*}(a^{\circ}\mid c)$; the cost is Theorem~\ref{thm:si-underdet}(ii).
(iii) With $u(c) := \nu_{\mathcal{E},q}(c)/\rho(c)$ on $\mathcal{R}_\delta$
and $s^{*}$ the least $s$ with
$\rho(\mathcal{R}_\delta\cap\{u\le s\}) \ge \delta$ (right-continuity),
let $B$ be $\mathcal{R}_\delta\cap\{u<s^{*}\}$ plus atoms of $\{u=s^{*}\}$
up to mass $\delta$. Its ratios are at most those off it, so its
$\rho$-average of $u$ is at most $N/\bar\rho_\delta$ and
$\nu_{\mathcal{E},q}(B) < 2N\delta/\bar\rho_\delta$.
(iv) Two-sided error $\alpha$ needs verdict distance $1-2\alpha$; apply (i)
for all $\beta$, then (iii). (v) Under $P^{*}$ nothing is rejected; under
the alternative all samples miss the disagreement set with probability at
most $(1-\delta)^N \le e^{-N\delta} \le \alpha$.
\end{proof}

\subsection{Benchmarks, red teaming, scaling curves and contamination}
\label{sec:eval-practice}

\paragraph{Accuracy and fidelity are distinct coordinates.}\label{rem:si-dissociation}
An accuracy score is a coarse function of a model's conditionals, and
the two can come apart. Take $P^{*}$ to be the conditionals of an
exhaustively enumerated reference mixture and let the evaluation be an
answer-accuracy benchmark. Corollary~\ref{cor:si-uncert} then says that
passing at any accuracy level is compatible with near-maximal
disagreement with that reference on every context of a set of mass
$\delta$, at resolution $N\delta$. A dissociation of this kind, high
answer accuracy alongside distributional fidelity below the reference,
is observed in companion work~\cite{ficl2026}. Where a served model
exposes sampled text but not token log-probabilities, fidelity to a
reference can only be estimated from samples, at the costs analysed in
Paper~B.

Emergent abilities, which appear as sudden jumps in task metrics with
scale~\cite{wei2022emergent}, are a second instance.
Schaeffer et~al.~\cite{schaeffer2023mirage} show that many such jumps are artefacts
of discontinuous metrics applied to smoothly improving per-token
likelihoods. In the terms above, the jump is a property of the verdict
map, not of the conditionals, and conversely a smooth metric can hide
large changes in the conditionals off the evaluated set.

\paragraph{Red teaming falsifies but does not certify.} Red teaming,
whether by people~\cite{ganguli2022redteaming}, by other language
models~\cite{perez2022redteaming}, or by gradient-guided search for
adversarial suffixes~\cite{zou2023universal}, is an adaptive protocol in
the sense of Theorem~\ref{thm:si-adaptive}. A failure it finds
falsifies agreement. A campaign that finds none bounds disagreement only on sets
carrying a large share of its query mass: evidence about the searched
region, not a certificate for the contexts deployment will produce.

\paragraph{Scaling laws measure $\eps$ on $\rho$.} Under the log-loss,
the population loss on the corpus prefix law is the source's conditional
entropy plus the averaged divergence of Assumption~\ref{ass:si-eps}.
Scaling laws fit held-out loss as power laws in parameters, data and
compute~\cite{kaplan2020scaling}. The parametric fit of
Hoffmann et~al.~\cite{hoffmann2022chinchilla} separates an irreducible term from
reducible terms that fall with parameters and training tokens. The
fitted irreducible term is a power-law asymptote, not a measured source
entropy, and its fitted value is sensitive to the fitting
procedure~\cite{besiroglu2024chinchilla}; data-constrained laws keep it
fixed and let repetition discount the effective data
terms~\cite{muennighoff2023data}. If it is read as the source's entropy, the
reducible part is the per-token $\eps$, with $\rho$ the corpus prefix
law, and a better scaling curve shrinks $\eps$. By
Theorem~\ref{thm:si-underdet}(i), the resulting control over a set of
contexts degrades as $\eps/\rho(B)$, and it says nothing where $\rho$
puts no mass. No value of $\eps$ closes this gap
(Section~\ref{sec:scale}), and the asymptotic conclusions a vanishing
$\eps$ supports cannot see allocation (Section~\ref{sec:si-tail}).

\paragraph{Contamination.} Theorem~\ref{thm:si-underdet}(iii) assumes
that the model is fixed independently of the probe draws. Contamination
breaks that assumption. A model that has been trained on a benchmark's
items is the modification of clause (ii) run in reverse, with agreement
placed on exactly the queried set, and its score certifies nothing off
that set. Memorisation of training sequences is measurable and grows
with scale~\cite{carlini2023quantifying}.
Oren et~al.~\cite{oren2024contamination} detect contamination without access to
training data by exchangeability: an uncontaminated model has no reason
to prefer a benchmark's canonical item order over random permutations,
the order-invariance null of Proposition~\ref{prop:order} used in
reverse. Dynamic and continually
refreshed benchmarks~\cite{kiela2021dynabench} restore independence by
drawing probes after the model is fixed.

An evaluation that is to bear on inductive claims should therefore report
its probe distribution, its resolution $1/N$, whether the protocol was
fixed before the model, and whether conditionals were read or sampled
(Section~\ref{sec:discussion}).

\providecommand{\CVST}{\mathcal{C}_V}
\providecommand{\argmin}{\operatorname*{arg\,min}}

\subsection{Scale}
\label{app:limits-scale}

\begin{proof}[Proof of Theorem~\ref{thm:si-diag}]\label{app:proof-thm:si-diag}
(i) The $V$ conditionals sum to at most $1$, so the least is at most
$1/V$. (ii) A fixed routine, given a shortest program for $q$, computes at
each stage the $V$ rationals $q(\cdot\mid x_{<t})$ (total) and emits the
least minimiser: a monotone program of length $K(q)+O(1)$ for $\mu_q$.
Apply the program bound.
\end{proof}

\begin{corollary}[No computable predictor has Solomonoff's property]
\label{cor:si-nocomputable}
No computable predictor $q$ has $R_\infty(q;\mu) < \infty$ for every
computable measure $\mu$; hence for no $f$ does it satisfy
$R_\infty(q;\mu) \le f(K(\mu))$ for all computable $\mu$. This covers every
closed computable pipeline of Definition~\ref{def:si-computable}.
\end{corollary}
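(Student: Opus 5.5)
The plan is to reduce the corollary to the diagonal construction of Theorem~\ref{thm:si-diag}, extended from rational-valued to real-valued computable conditionals. The second clause follows from the first, since any bound $f(K(\mu))$ is finite. The statement about closed pipelines then follows because Definition~\ref{def:si-computable} and the discussion after it make every such pipeline rational-computable. So the whole task is to exhibit, for an arbitrary computable $q$, one computable measure $\mu$ with $R_\infty(q;\mu)=\infty$.

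\emph{Rational-computable case.} Suppose $q$ is total and rational-computable. Theorem~\ref{thm:si-diag}(i)--(ii) gives a computable point measure $\mu_q$ with $R_n(q;\mu_q)\ge n\log V$. Letting $n\to\infty$ gives $R_\infty(q;\mu_q)=\infty$.

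\emph{Real-valued case.} Now let the values $q(a\mid x)$ be computable reals, uniformly in $(x,a)$. The exact argmin is not computable, so I would replace it with an approximate minimiser. At stage $t$, compute every $q(a\mid x_{<t})$ to precision $1/(6V)$. Emit the least symbol whose approximation is smallest; this is a computable rule, so the resulting sequence $x^q$ is computable and $\mu_q$ is a computable point measure with $K(\mu_q)\le K(q)+O(1)$.

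The true conditional of the emitted symbol exceeds the true minimum by at most $2\cdot\frac1{6V}=\frac1{3V}$. The minimum is at most $1/V$, so the emitted symbol gets $q$-probability at most $\frac{4}{3V}$. Each step then costs at least $\log(3V/4)$ bits, which is positive for $V\ge2$. Sharpening the precision gives any constant strictly below $\log V$, matching the paper's $\log(2V/3)$ at a suitable choice. Either way the per-step loss is bounded below by a positive constant, so $L_n(q;x^q_{1:n})\to\infty$ linearly in $n$.

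\emph{Main obstacle: partial predictors.} A predictor that is computable need not be total in any harmful sense here, since computable reals are defined everywhere by Definition~\ref{def:si-computable}. The only care needed is that the precision queries always terminate. If one also admits predictors whose conditionals are given only as limits from below, the approximate-argmin step could fail, but that class lies outside the statement. Finally, for pipelines that output rounded values not summing to one, clause (i) of Theorem~\ref{thm:si-diag} still applies: the conditionals sum to at most $1$, so the least one is at most $1/V$.
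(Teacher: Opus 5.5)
Your proof is correct and follows the paper's: the rational case is Theorem~\ref{thm:si-diag}, and the real case is the same approximate-argmin diagonal at complexity $K(q)+O(1)$. The only difference is the constant: your precision $1/(6V)$ gives per-step loss $\log(3V/4)$, where the paper's $1/(4V)$ gives $\log(2V/3)$.
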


\begin{proof}[Proof of Corollary~\ref{cor:si-nocomputable}]\label{app:proof-cor:si-nocomputable}
For rational conditionals use Theorem~\ref{thm:si-diag}. For real
computable ones, approximate to additive precision $1/(4V)$ and emit a
symbol with minimal approximation; its probability is at most $3/(2V)$, so
each step costs at least $\log(2V/3) > 0$, at complexity $K(q)+O(1)$~\cite{legg2006}.
\end{proof}

\begin{proof}[Proof of Theorem~\ref{thm:si-classdiag}]\label{app:proof-thm:si-classdiag}
At most one $i$ qualifies at each $t$ (codewords read from the least
significant digit are prefix-free), and as $(\ell_i)$ is nondecreasing
only finitely many $i$ need checking; so the schedule is computable.
(i) The steps $t \le n$ with $i(t) = i$ include every $t$ with
$2^{\ell_i} \le t-1 < n$ in one residue class modulo $2^{\ell_i}$, at least
$n2^{-\ell_i} - 2 \ge \tfrac12 nd_i - 2$ of them, each costing $q_i$ at
least $\log V$. (ii) A fixed routine with shortest programs for $G$ and
$(d_i)$ computes $i(t)$, the rationals $G(i(t),x^{*}_{<t},\cdot)$ and their
least minimiser; apply the program bound.
\end{proof}

\begin{corollary}[Concrete transformer language models]
\label{cor:si-transformers}
Fix $\Ac$. A \emph{concrete transformer LM} comprises a finite
architecture, finitely many parameters in a stated finite-precision
arithmetic, a deterministic decoding procedure (chain of thought included, and
any tool whose code is a deterministic program bundled into the description)
and an \emph{enforced clock}, one fixed total computable function of the
context length shared by the whole family, halting decoding with a default
output. These form a uniformly computable family of total rational-computable
predictors, so one computable $x^{*}$ of complexity $c^{*}$
(Theorem~\ref{thm:si-classdiag}) gives each eventually linear regret while $M$
pays at most $c^{*}$. A pipeline that queries an external resource
$\mathcal{O}$ is outside this family: by the discussion after
Definition~\ref{def:si-computable} it is computable only relative to
$\mathcal{O}$, so $x^{*}$ is $\mathcal{O}$-computable and the second clause
holds with $M^{\mathcal{O}}$ in place of $M$. Nothing is claimed for plain $M$
against such a pipeline.
\end{corollary}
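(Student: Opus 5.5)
The plan is to show that the concrete transformer LMs, with their common clock, form a single uniformly computable family of total rational-computable predictors. Theorem~\ref{thm:si-classdiag} then supplies the shared environment, and the relativised clause is a separate remark.

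First, enumerate the family effectively. A concrete transformer LM is given by a finite description: an architecture from a fixed grammar, finitely many parameters as bit-strings in the stated arithmetic, a deterministic decoding program (chain of thought and bundled deterministic tool code included), and the clock. Since these descriptions are finite strings, fix a computable bijection between $\Nb$ and the set of syntactically valid descriptions. Define $G(i,x,a)$ as follows: decode description $i$, run its forward passes and decoding on context $x$ in exact finite-precision rational arithmetic, and stop after $\tau(|x|)$ steps. Here $\tau$ is the one total computable clock shared by the family. If decoding has not ended by then, output the default conditional. Every operation is a finite rational computation of bounded length, so $G$ is total and rational-valued, and hence $(q_i)$ is a uniformly computable family of total rational-computable predictors. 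Outputs that fail to sum to one are harmless, as the remark after Definition~\ref{def:si-computable} notes, because the diagonal only needs the least of the $V$ values to be at most $1/V$.

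Second, apply Theorem~\ref{thm:si-classdiag} to $G$ with a computable density sequence $(d_i)$. This gives one computable $x^{*}$, of complexity $c^{*}\le K(G)+K((d_i))+O(1)$. On $x^{*}$, member $i$ pays at least $(\tfrac12 nd_i-2)\log V$ by horizon $n$, which is eventually linear. The program bound of Fact~\ref{fact:si-program} gives $R_n(M;x^{*})\le c^{*}$ for all $n$. The main obstacle is that the clock must be shared and fixed in advance: if each model chose its own clock, or none, the family would not be uniformly total. So the clock's totality has to be built into the definition of $G$ rather than checked for each member.

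Third, treat the oracle clause by relativisation. If the pipeline queries $\mathcal{O}$, then $G$ is computable only relative to $\mathcal{O}$. Rerun the same construction with an oracle machine; both the schedule and the least-minimiser routine relativise. This makes $x^{*}$ $\mathcal{O}$-computable with a relativised program of length $c^{*}$, so the program bound holds for the universal semimeasure $M^{\mathcal{O}}$ of an oracle monotone machine. No bound is claimed for plain $M$.
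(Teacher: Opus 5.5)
Your route matches the paper's proof. You enumerate the finite descriptions effectively. The paper enumerates all strings and maps invalid ones to the uniform predictor; your bijection onto valid descriptions needs the same decidability. You then use the one shared clock to make $G$ total and rational-valued, apply Theorem~\ref{thm:si-classdiag}, and relativise for the oracle clause.

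One step fails as written. You dismiss outputs that do not sum to one on the grounds that the diagonal only needs the least of the $V$ values to be at most $1/V$. But that bound on the minimum comes from $\sum_a q(a\mid x)\le 1$. Rounding can push the sum $s$ above one, and then the least value is only at most $s/V$, which can exceed $1/V$. For example, with $V=2$, rounded outputs $(0.5001,0.5001)$ have least value above $1/2$. In that case the $q_i$ are not predictors in the sense of Definition~\ref{def:si-predictor}. The hypothesis of Theorem~\ref{thm:si-classdiag} then fails, and its $\log V$ per diagonal step is not available. The remark you cite does not argue as you do: Remark~\ref{rem:rb-real} changes the selection threshold and obtains only $\lfloor\log V\rfloor-1$ per step for $s\le 2$, which is vacuous for $V\le 3$.

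The repair is the one the paper builds into its proof ("rounded outputs are rational, as are their renormalisations"). Let $G$ output the rounded values divided by their sum $s$ when $s>0$, and the uniform conditional when $s=0$. These values are still rationals computed from the description, and they sum to one. So each $q_i$ is a genuine total rational-computable predictor, and Theorem~\ref{thm:si-classdiag} applies verbatim. The raw outputs are $s$ times these values, so their loss differs by $\log s$ per step. With this change the rest of your argument goes through, including the relativised clause with $M^{\mathcal{O}}$.
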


\begin{proof}[Proof of Corollary~\ref{cor:si-transformers}]\label{app:proof-cor:si-transformers}
Enumerate descriptions, mapping invalid ones to the uniform predictor.
Forward passes halt, the clock makes decoding halt, and rounded outputs
are rational, as are their renormalisations; so $G(i,x,a)$ is total.
\end{proof}

\begin{remark}[The diagonal is cheap]
\label{rem:si-cheap}
With descriptions enumerated by length, one polynomial clock and
$d_i = 2^{-i}$, the active index satisfies $i(t) \le \log_2 t + 1$, so
$x^{*}_t$ is computable in time polynomial in $t$. A user continuing with a
least likely served symbol realises a finite-horizon instance of
Theorem~\ref{thm:si-diag}.
\end{remark}

\begin{theorem}[Diagonal environments behind every prefix; after~\cite{putnam1963,legg2006}]
\label{thm:breakers}
Let $q$ be total rational-computable, $p \in \Ac^{*}$,
$z_p := p\,y$ with $y_{t+1}$ the least minimiser of $q(\cdot\mid p\,y_{1:t})$,
and $q_{\max}(p) := \max_a q(a\mid p)$. Then:
(i) $R_n(M;z_p) \le K(p)+K(q)+O(1)$;
(ii) for $n > |p|$, $R_n(q;z_p) \ge L_{|p|}(q;p) + (n-|p|)\log V$, and the
loss at step $|p|+1$ is at least
$\max\{\log V, \log\frac{V-1}{1-q_{\max}(p)}\}$;
(iii) $R_n(q;z_p) - R_n(M;z_p) \ge (n-2|p|)\log V - K(q) - O(\log(|p|+2))$;
(iv) $p\mapsto z_p$ is injective on $\Ac^{\ell}$, giving $V^{\ell}$ such
environments, each of complexity at most
$\ell\log V + K(q) + O(\log(\ell+2))$.
\end{theorem}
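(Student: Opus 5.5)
The proof follows the pattern of Theorem~\ref{thm:si-diag}, with the prefix $p$ hard-coded into the generating program. I will take the four clauses in order.

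For (i), a fixed routine takes as input a self-delimiting description of $p$ and a shortest program for $q$. It first emits $p$. It then repeats one step forever: compute the $V$ rationals $q(\cdot\mid p\,y_{1:t})$, which is possible because $q$ is total, and emit the least minimiser. This is a monotone program of length $K(p)+K(q)+O(1)$ that outputs $z_p$, and the program bound (last clause of Fact~\ref{fact:si-program}) gives $R_n(M;z_p)\le K(p)+K(q)+O(1)$ for every $n$.

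For (ii), the chain rule splits $R_n(q;z_p)=L_n(q;(z_p)_{1:n})$ into $L_{|p|}(q;p)$ plus the losses at steps $|p|+1,\dots,n$. Each later step emits a least minimiser, so its conditional is at most $1/V$, exactly as in Theorem~\ref{thm:si-diag}(i). At step $|p|+1$ I need the sharper bound. Let $a^{\max}$ attain $q_{\max}(p)$. The remaining $V-1$ symbols share mass at most $1-q_{\max}(p)$, so their least member has conditional at most $(1-q_{\max})/(V-1)$. Since the least minimiser overall also has conditional at most $1/V$, the loss is at least $\max\{\log V,\log\frac{V-1}{1-q_{\max}}\}$. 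The one case needing care is when $a^{\max}$ is itself a least minimiser. Then all conditionals are at most $q_{\max}\le 1/V$, and $\log V$ is the larger term, since $(V-1)/(1-q_{\max})\le V$ whenever $q_{\max}\le 1/V$.

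For (iii), I subtract (i) from (ii), drop $L_{|p|}(q;p)\ge 0$, and bound $K(p)\le |p|\log V+O(\log(|p|+2))$ by coding the length and then the symbols. This gives
\[
R_n(q;z_p)-R_n(M;z_p)\ge (n-|p|)\log V-|p|\log V-K(q)-O(\log(|p|+2)),
\]
which is the claim.

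For (iv), injectivity holds because $z_p$ begins with $p$, and the complexity bound is the estimate just used with $|p|=\ell$. I expect no real obstacle anywhere. The most delicate step is the sharpened first-step bound in (ii), in particular checking the case above. A secondary point is making sure the description of $p$ is self-delimiting, so that the additive $O(\log)$ term is honest.
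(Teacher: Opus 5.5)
Your proposal is correct and follows the paper's proof clause by clause: the program ``emit $p$, then a least likely symbol of $q$'' with the program bound for (i), the per-step $1/V$ bound plus the mass-sharing argument at step $|p|+1$ for (ii), subtraction with $K(p)\le|p|\log V+O(\log(|p|+2))$ for (iii), and the program of (i) again for (iv). The special case you single out needs no separate treatment. The emitted symbol's conditional is the global minimum, so it is automatically at most the least of the $V-1$ conditionals other than the maximal one.
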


\begin{proof}[Proof of Theorem~\ref{thm:breakers}]\label{app:proof-thm:breakers}
(i) ``Emit $p$, then a least likely symbol of $q$ at each context'' has
length $K(p)+K(q)+O(1)$. (ii) After $p$ each symbol has probability at most
$1/V$; at the first step the $V-1$ symbols other than the most likely share
at most $1-q_{\max}(p)$. (iii) Subtract (i) from (ii), with
$K(p) \le |p|\log V + O(\log(|p|+2))$. (iv) Distinct $p$ differ within
$\ell$ symbols; the bound is (i).
\end{proof}

\subsection{A model family and the quantifier exchange}
\label{sec:si-classdiag}

A scaling family generated by one program from an index, spanning
architectures, sizes, and training runs by a fixed computable procedure
on fixed data, is effectively enumerable. The next theorem constructs a
single environment against all of its members at once.

\begin{theorem}[Diagonal environment for a family; after~\cite{putnam1963,legg2006}]
\label{thm:si-classdiag}
Let $(q_i)_{i\ge1}$ be uniformly computable total rational-computable
predictors, so that a single total machine $G$ satisfies
$G(i,x,a) = q_i(a\mid x)$. Let $(d_i)$ be a computable nonincreasing
sequence of positive rationals with $\sum_i d_i \le 1$. Put
$\ell_i := \lceil\log(1/d_i)\rceil$, and let $\sigma_i$ be the canonical
prefix-free codeword of length $\ell_i$~\cite[Ch.~5]{cover2006}. Step $t$
is assigned to $i(t) := i$ when the $\ell_i$ lowest-order binary digits of
$t-1$, read from the lowest-order digit up, spell $\sigma_i$ and $t-1$
has at least $\ell_i$ digits, and to
$i(t) := 1$ if no $i$ qualifies. Define
$x^{*}_t := \min\argmin_a q_{i(t)}(a\mid x^{*}_{<t})$ and let $\mu^{*}$ be
the point measure on $x^{*}$. Then:
\begin{enumerate}
\item[\textup{(i)}] for every $i$ and $n$,
$R_n(q_i;\mu^{*}) \ge (\tfrac12 n d_i - 2)\log V$;
\item[\textup{(ii)}] $x^{*}$ is emitted by one monotone program of length
$c^{*} := K(G) + K((d_i)) + O(1)$, independent of $i$, so
$K(\mu^{*}) \le c^{*}$ and $R_n(M;\mu^{*}) \le c^{*}$ for all $n$.
\end{enumerate}
\end{theorem}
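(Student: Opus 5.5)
The plan is to prove Theorem~\ref{thm:si-classdiag} in three steps: show the step schedule $i(\cdot)$ is computable and well defined, count the steps assigned to each member, and exhibit a single monotone program for $x^{*}$.

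\emph{Well-definedness and computability of the schedule.} At most one index qualifies at each step. The codewords $\sigma_i$ form a prefix-free set. If two indices $i\ne j$ both qualified at $t$, the low-order $\ell_i$ and $\ell_j$ digits of $t-1$ would make one codeword a prefix of the other. Distinct indices may share a length, but canonical codewords of equal length are still distinct.

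Only finitely many $i$ need checking. Since $(d_i)$ is nonincreasing, $(\ell_i)$ is nondecreasing, and $\ell_i\to\infty$ because $\sum_i d_i\le1$ forces $d_i\to0$. Any qualifying $i$ has $\ell_i\le\lfloor\log_2(t-1)\rfloor+1$, and these finitely many $i$ can be found effectively from a program for $(d_i)$. Hence $t\mapsto i(t)$ is total computable.

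\emph{Step (i): counting.} Fix $i$. The condition on $t-1$ is that it has at least $\ell_i$ binary digits and its $\ell_i$ low-order digits equal a fixed pattern. This is one residue class modulo $2^{\ell_i}$ intersected with $\{t-1\ge 2^{\ell_i-1}\}$; I will use the cruder cut $t-1\ge 2^{\ell_i}$ as the statement's proof does.

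Among $t\le n$, the number of such steps is at least $(n-2^{\ell_i})/2^{\ell_i}-1\ge n2^{-\ell_i}-2$. Since $2^{-\ell_i}\ge d_i/2$ by the ceiling, this count is at least $\tfrac12 nd_i-2$.

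At each such step $x^{*}_t$ is a least-likely symbol of $q_i(\cdot\mid x^{*}_{<t})$. As in Theorem~\ref{thm:si-diag}(i), its probability is at most $1/V$, so that step costs at least $\log V$. Every other step costs at least $0$, because conditionals are at most one. Summing the log-loss along $x^{*}$ gives the bound $R_n(q_i;\mu^{*})\ge(\tfrac12 nd_i-2)\log V$. For small $n$ the right side is negative, so the bound holds trivially there.

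\emph{Step (ii): description length.} A fixed routine takes as input shortest programs for $G$ and for $(d_i)$, with the two programs concatenated self-delimitingly since they are prefix programs. At step $t$ it computes $i(t)$ by the schedule above, evaluates the $V$ rationals $G(i(t),x^{*}_{<t},a)$ (which halt because $G$ is total), and emits the least minimiser. This is a monotone program of length $K(G)+K((d_i))+O(1)=c^{*}$ emitting $x^{*}$. The program bound of Fact~\ref{fact:si-program} then gives $R_n(M;\mu^{*})\le c^{*}$ for all $n$, and $K(\mu^{*})\le c^{*}$ follows from the same program.

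The only delicate step is the counting in (i): pinning down the off-by-one in ``$t-1$ has at least $\ell_i$ digits'' and checking that the slack $-2$ absorbs both the boundary residue and the rounding in $\ell_i$. I would verify the inequality $\lfloor (n-2^{\ell_i})/2^{\ell_i}\rfloor\ge n2^{-\ell_i}-2$ directly and move on.
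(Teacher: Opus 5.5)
Your proposal is correct and follows the paper's proof essentially step for step. The steps match: prefix-freeness together with monotonicity of $(\ell_i)$ makes the schedule computable; the residue class modulo $2^{\ell_i}$ above $2^{\ell_i}$ gives at least $n2^{-\ell_i}-2\ge\tfrac12 nd_i-2$ steps, each costing $\log V$; and a fixed routine run on shortest programs for $G$ and $(d_i)$ feeds the program bound. One small wording point in your closing remark: the rounding in $\ell_i$ is absorbed by the factor $\tfrac12$ via $2^{-\ell_i}>d_i/2$, which you use correctly earlier, while the slack $-2$ only absorbs the threshold $2^{\ell_i}$ and the partial block.
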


\noindent(Proof: Appendix~\ref{app:proof-thm:si-classdiag}.)

\paragraph{The family diagonal has no force on any particular model at an observable horizon.}\label{rem:classdiag-force}
Because $(d_i)$ is nonincreasing and summable, $d_i \le 1/i$. Clause
(i) therefore forces a loss on member $i$ only at horizons beyond $4i$.
A concrete model of $10^{11}$ bits sits at an index of order
$2^{10^{11}}$ in any enumeration of such models ordered by description
length; an enumeration that lists it early buys force on that model only,
which is Theorem~\ref{thm:si-diag} again. The environment
$\mu^{*}$ thus forces nothing on it at any horizon that will ever be
observed. The caveat applies to every statement below about ``all
models'': Theorem~\ref{thm:si-classdiag} concerns families and the
quantifier exchange discussed next. The result with force
on a given model is Theorem~\ref{thm:si-diag}, together with its
resource-bounded form (Section~\ref{sec:deployed}).

Subject to that caveat, every concrete transformer language model whose
decoding loop runs under an enforced step clock is a member of one such
family, and so defeated by one fixed sequence
(Corollary~\ref{cor:si-transformers}). The clock is needed because a
budget bounded only by some computable function cannot be checked from
a description, and so cannot be enumerated.

\paragraph{Pointwise versus uniform: the quantifier exchange.}\label{rem:si-pointwise}\label{sec:scale-exchange}\label{rem:scale-schedules}\label{rem:scale-limitcomp}
Solomonoff's property has the form ``there is a predictor that, for every
environment and every horizon, pays a bounded constant''. The evidence
from scaling has the form ``for every environment and horizon that we
test, there is a model in the family that does well''. Here the model
depends on the horizon. Theorem~\ref{thm:si-classdiag} shows that the
second form is compatible with the failure of the first on one fixed
environment of constant complexity. Every finite benchmark, on every
distribution and at every horizon $n$, is consistent with late members
performing excellently, $\mu^{*}$ included, while every fixed member
eventually has linear regret on $\mu^{*}$. How late ``late'' is depends
on the schedule, which the adversary chooses: for any computable
nondecreasing $n(i)$ with $\sum_i 1/n(i) \le 1$, $d_i = 1/n(i)$ defeats
member $i$ beyond horizon $4n(i)$. Scaling
curves~\cite{kaplan2020scaling,hoffmann2022chinchilla} are evidence for the
pointwise form on $\rho$. A better curve shrinks $\eps$ and leaves
Theorems~\ref{thm:si-diag} and~\ref{thm:si-tail} untouched. Nor does
passing to the limit help. $M$'s conditional is $\Delta^{0}_{2}$
\cite{leike2015computability}: limit-computable, so computable families
converging pointwise to it exist, and pointwise convergence is not what
fails. What fails is selection: a rule that selects, ensembles, anneals or
switches between members by any computable function of the data is itself
one computable predictor, with its own diagonal, so no computable rule can
tell at a finite stage which member to use, and a family approaches
Solomonoff's predictor only relative to an uncomputable selection oracle.

\subsection{Asymptotic criteria}
\label{app:limits-tail}

\begin{fact}[Finite modifications; folklore, cf.\ \cite{blackwelldubins1962,kalailehrer1994}]
\label{lem:si-tail}
Let $q' = q$ off a finite $F \subseteq \Ac^{*}$ and
$\Lambda(F) := |\{|x| : x \in F\}|$. (i) If $q'(\cdot\mid x) \ge \gamma > 0$
on $F$, then $R_n(q';\mu) \le R_n(q;\mu) + \Lambda(F)\log(1/\gamma)$ for
every $\mu$ and $n$. (ii) If both have conditionals at least $\gamma$ on $F$
(a \emph{finite full-support modification}), then
$|R_n(q;\mu) - R_n(q';\mu)| \le \Lambda(F)\log(1/\gamma)$. Properties
preserved both ways by every such modification are
\emph{tail-invariant}; among them are the \emph{consistency properties}
finite total regret, vanishing per-step excess loss and $R_n/n \to 0$ on
each source of a class.
\end{fact}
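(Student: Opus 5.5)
The plan is to work with the chain-rule form of regret in \eqref{eq:si-regret}, splitting the sum over steps $t\le n$ according to the length $t-1$ of the context $X_{<t}$. For each length $m$, let $F_m:=\{x\in F:|x|=m\}$. The only steps at which $q'$ and $q$ can differ are those with $t-1\in\{|x|:x\in F\}$, and there are exactly $\Lambda(F)$ such lengths. On every other step the expected per-step divergence terms coincide. So it suffices to bound, for each fixed length $m$ in the length set,
\[
\E_\mu\Bigl[\dl{\mu(\cdot\mid X_{1:m})}{q'(\cdot\mid X_{1:m})}-\dl{\mu(\cdot\mid X_{1:m})}{q(\cdot\mid X_{1:m})}\Bigr]\le\log(1/\gamma),
\]
and then sum over the at most $\Lambda(F)$ lengths.

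For clause (i), I bound the difference pointwise in the context $c=X_{1:m}$. If $c\notin F_m$ the difference is zero. If $c\in F_m$, the first divergence is at most $\sum_a\mu(a\mid c)\log\frac1{q'(a\mid c)}\le\log(1/\gamma)$, dropping the nonpositive $-H$ term exactly as in the proof of Theorem~\ref{thm:si-underdet}(ii). The second divergence is nonnegative for subprobability $q$, so it can be discarded. The contribution at length $m$ is therefore at most $\mu\{X_{1:m}\in F_m\}\log(1/\gamma)\le\log(1/\gamma)$.

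The key point, and the only place needing care, is that the contexts of a single length form an antichain. Their $\mu$-probabilities sum to at most one, which is why the count is lengths rather than $|F|$.

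There is also a bookkeeping issue: $\mu(\cdot\mid c)$ is undefined where $\mu(c)=0$. I handle this by noting that such contexts carry no $\mu$-weight in the expectation. If $R_n(q;\mu)=\infty$, the inequality is trivial.

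Clause (ii) follows by applying (i) twice with the roles of $q$ and $q'$ swapped, which is allowed because both satisfy the floor $\gamma$ on $F$. One subtlety is $R_n(q;\mu)=\infty$. Under a floor on $F$, infinite regret can only come from steps outside $F$, where the two predictors agree, so both regrets are infinite together and the absolute difference is read as the convention $0$.

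Finally, the tail-invariance of the listed consistency properties follows because the bound $\Lambda(F)\log(1/\gamma)$ is a constant independent of $n$. First, finiteness of $R_\infty=\lim_nR_n$ is preserved. Second, $R_n/n\to0$ is preserved, since the difference divided by $n$ tends to zero. Third, per-step excess loss is literally unchanged for $t-1>\max_{x\in F}|x|$, so vanishing of it is preserved.

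I expect no real obstacle. The main thing to get right is the antichain observation that gives $\Lambda(F)$ rather than $|F|$.
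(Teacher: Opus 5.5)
Your proof is correct and follows essentially the same route as the paper's. Both split the chain-rule regret by context length. Both bound each modified context's contribution by $\mu(c)\log(1/\gamma)$: you bound $D(\mu(\cdot\mid c)\,\|\,q'(\cdot\mid c))$ by the cross-entropy and drop $D(\mu(\cdot\mid c)\,\|\,q(\cdot\mid c))\ge 0$, while the paper bounds $\log\frac{q(a\mid x)}{q'(a\mid x)}\le\log(1/\gamma)$ directly. Both then use that each length class of $F$ is an antichain of $\mu$-mass at most one, get (ii) by symmetry, and note that per-step terms coincide after step $\max_{x\in F}|x|+1$.
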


\begin{proof}[Proof of Fact~\ref{lem:si-tail}]\label{app:proof-lem:si-tail}
By \eqref{eq:si-regret},
$R_n(q';\mu) - R_n(q;\mu) = \sum_{x\in F,|x|<n}\sum_a
\mu(xa)\log\frac{q(a\mid x)}{q'(a\mid x)}$. If some $q(a\mid x) = 0$ with
$\mu(xa) > 0$, (i) is trivial. Otherwise each logarithm is at most
$\log(1/\gamma)$, and for each length the strings of $F$ form an antichain
of $\mu$-mass at most $1$ (Fact~\ref{lem:antichain}), so the sum is at most
$\Lambda(F)\log(1/\gamma)$. (ii) is (i) both ways; per-step excess losses
coincide after step $\max_{x\in F}|x|+1$.
\end{proof}

\begin{theorem}[The Solomonoff bound is not tail-invariant]
\label{thm:si-tail}
Let $M_N$ be as in~\eqref{eq:tail-MN} and $\mu_0$ the point measure on
$0^{\infty}$. (i) $M_N$ is a finite full-support modification of $M$ with
$\Lambda(F) = N$, so $R_\infty(M_N;\mu) < \infty$ and
$R_n(M_N;\mu)/n \to 0$ for every computable $\mu$. (ii)
$R_n(M_N;\mu_0) \ge \min(n,N)\log V$ and
$R_\infty(M_N;\mu_0) = N\log V + O(1)$. (iii) For every
$f:\Nb\to[0,\infty)$, ``$R_\infty(\cdot;\mu) \le f(K(\mu))$ for every
computable $\mu$'' fails for $M_N$ once $N > f(K(\mu_0))/\log V$.
\end{theorem}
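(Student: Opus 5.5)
The proof splits along the three clauses. Each rests on Fact~\ref{lem:si-tail} together with the dominance and program bounds of Fact~\ref{fact:si-dominance}.

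For (i), I take $F := \Ac^{<N}$, the set of contexts where $M_N$ differs from $M$. Since $F$ contains exactly the lengths $0,\dots,N-1$, we get $\Lambda(F)=N$.

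The floor on $F$ needs care. On $F$, $M_N$ has conditionals $1/V$, so $\gamma\le 1/V$ works on its side. But $M$'s conditionals at a context $x$, namely $M(xa)/M(x)$, are positive. Because $F$ is finite, the minimum over $x\in F$ and $a\in\Ac$ is a positive number $\gamma_M$. I therefore set $\gamma := \min(1/V,\gamma_M)$. Then $M_N$ is a finite full-support modification of $M$. This $\gamma$ is not computable, but the Fact does not require computability.

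Fact~\ref{lem:si-tail}(ii) then gives
\[
  |R_n(M_N;\mu)-R_n(M;\mu)|\le N\log(1/\gamma)
\]
for all $n$. Combined with $R_\infty(M;\mu)\le K(\mu)+O(1)$ from Fact~\ref{fact:si-solomonoff}, this shows $R_\infty(M_N;\mu)<\infty$, and in particular $R_n(M_N;\mu)/n\to 0$. For the regret itself, the one-sided clause (i) of the Fact suffices, and that clause only needs the floor $1/V$ on $M_N$'s side, giving the bound $R_n(M_N;\mu)\le R_n(M;\mu)+N\log V$.

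For (ii), along $0^\infty$ every step $t\le N$ is predicted uniformly, so each such step costs exactly $\log V$. Hence $R_n(M_N;\mu_0)\ge\min(n,N)\log V$.

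For the equality, I factor the loss at $n>N$:
\[
  L_n(M_N;0^n) = N\log V + \bigl[-\log M(0^n) + \log M(0^N)\bigr].
\]
By the program bound for a fixed program emitting $0^\infty$, $-\log M(0^n)\le O(1)$. Since $M(0^N)\le 1$, the bracket is at most $O(1)$. It is also nonnegative: $M(0^n)\le M(0^N)$ because $0^N\sqsubseteq 0^n$ and $M$ is a semimeasure, so the ratio $M(0^n)/M(0^N)$ is at most $1$. Letting $n\to\infty$ gives $R_\infty(M_N;\mu_0)=N\log V+O(1)$.

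For (iii), $\mu_0$ is computable, so the property would require $R_\infty(M_N;\mu_0)\le f(K(\mu_0))$. By (ii), however, $R_\infty(M_N;\mu_0)\ge N\log V$, which exceeds $f(K(\mu_0))$ once $N>f(K(\mu_0))/\log V$.

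The only delicate point is the positivity floor for $M$ on $F$ in clause (i). This is handled by the finiteness of $F$ and the positivity of $M$ on every string, both granted in Definition~\ref{def:si-M}, and I expect nothing else to resist.
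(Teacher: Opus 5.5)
Your proof is correct and follows the paper's argument. For (i) you use Fact~\ref{lem:si-tail} with $F=\Ac^{<N}$ together with the Solomonoff bound; for (ii) you write the loss as $N\log V+\log\frac{M(0^N)}{M(0^n)}$ and control the second term by the program bound; and (iii) is read off from (ii). Your explicit floor $\gamma=\min(1/V,\gamma_M)$, which is positive because $F$ is finite, only spells out what the paper compresses into ``$M(a\mid x)>0$ everywhere''.
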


\begin{proof}[Proof of Theorem~\ref{thm:si-tail}]\label{app:proof-thm:si-tail}
(i) $M(a\mid x) > 0$ everywhere; apply Fact~\ref{lem:si-tail}(ii) and
Fact~\ref{fact:si-solomonoff}. (ii) The first $N$ steps cost $\log V$ each,
and $\sum_{t=N+1}^{n} -\log M(0\mid 0^{t-1}) = \log\frac{M(0^N)}{M(0^n)}
\le -\log M(0^n) = O(1)$ by the program bound. (iii) follows.
\end{proof}

\begin{proposition}[Prior-blindness; cf.\ \cite{wood2013nonequivalence,leike2015bad}]
\label{prop:si-prior}
Let $\Ac = \{0,1\}$, $z^{(j)} := 0^{j}1^{\infty}$ with point measure
$\mu_j$ ($K(\mu_j) = O(\log(j+2))$), and $\xi_w$ the Bayes mixture
\eqref{eq:bg-mixture} with weights $w_j > 0$, $\sum_j w_j \le 1$, and
uniform conditionals at contexts of probability zero.
(i) $R_n(\xi_w;\mu_j) = \log(1/w_j)$ for $n \ge j+1$, and
$R_n(\xi_w;\mu_j) = \log(1/\sum_{i\ge n}w_i) \le \log(1/w_j)$ for $n \le j$.
(ii) $\xi_w$ has finite total regret on every $\mu_j$ and per-step excess
loss exactly $0$ for $t > j+1$.
(iii) For any $f:\Nb\to\Nb$, $w_j := 2^{-f(j)-j-1}$ gives
$R_\infty(\xi_w;\mu_j) = f(j)+j+1$, with $\xi_w$ computable when $f$ is,
whereas $R_\infty(M;\mu_j) \le K(\mu_j)+O(1) = O(\log(j+2))$.
\end{proposition}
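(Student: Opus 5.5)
The plan is to compute the mixture's joint probabilities explicitly, since every component is a point mass. Fix $j$ and put $z:=z^{(j)}=0^{j}1^{\infty}$. For a prefix $z_{1:n}$ the components consistent with it are easy to list. If $n\le j$, the prefix is $0^{n}$, and it is consistent with exactly the $\mu_i$ with $i\ge n$. If $n\ge j+1$, the prefix contains the symbol $1$ at position $j+1$, so only $\mu_j$ is consistent with it. All contexts along $z$ have positive $\xi_w$-probability, so the convention for probability-zero contexts never enters on $z$. Hence, directly from~\eqref{eq:bg-mixture}, $\xi_w(z_{1:n})=\sum_{i\ge n}w_i$ for $n\le j$, and $\xi_w(z_{1:n})=w_j$ for $n\ge j+1$. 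Because $\mu_j$ is a point measure, $R_n(\xi_w;\mu_j)=L_n(\xi_w;z_{1:n})=-\log\xi_w(z_{1:n})$. This gives both formulas of (i), and the inequality $\sum_{i\ge n}w_i\ge w_j$ for $n\le j$ gives the bound $\le\log(1/w_j)$.

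Part (ii) follows at once. The regret stabilises at the finite value $\log(1/w_j)$, so the total regret is finite. For $t>j+1$ the per-step excess loss is $-\log\bigl(\xi_w(z_{1:t})/\xi_w(z_{1:t-1})\bigr)=-\log(w_j/w_j)=0$, and the source's own conditional is $1$. The only point to check is that the context $z_{<t}$ has positive probability, which holds because $w_j>0$.

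For (iii), substitute $w_j=2^{-f(j)-j-1}$. Since $\sum_j 2^{-j-1}\le1$ and $f\ge0$, we get $\sum_jw_j\le1$, and (i) gives $R_\infty=f(j)+j+1$.

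The main obstacle is computability of $\xi_w$ when $f$ is computable. For $n\le j$ the tail sum $\sum_{i\ge n}w_i$ is not finite. I would show that it is a computable real uniformly in $n$: the tail beyond index $m$ is at most $\sum_{i>m}2^{-i-1}=2^{-m-1}$, so truncating at $m$ gives precision $2^{-m-1}$. Each conditional of $\xi_w$ is then a ratio of such sums, with a positive denominator along consistent contexts, or else is $0$, $1$ or uniform by the case split above. This makes $\xi_w$ computable in the sense of Definition~\ref{def:si-computable}.

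It remains to compare with Solomonoff's predictor. Here I would use $K(\mu_j)=O(\log(j+2))$, since the sequence $0^{j}1^{\infty}$ is described by $j$. Fact~\ref{fact:si-solomonoff} then gives $R_\infty(M;\mu_j)\le K(\mu_j)+O(1)$.
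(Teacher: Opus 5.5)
Your proposal is correct and follows the paper's proof essentially step for step: you compute $\xi_w$ on the prefixes of $0^{j}1^{\infty}$ by listing the consistent components, read (ii) off the conditional $\xi_w(1\mid 0^{j}1^{k})=1$ for $k\ge1$, and get computability from the tail bound $\sum_{i\ge k}w_i\le 2^{-k}$ together with the decidable form of the null contexts. One small imprecision: for $n\ge j+1$, the $1$ at position $j+1$ only excludes $\mu_i$ with $i>j$; the components with $i<j$ are excluded by the $0$ at position $i+1\le j$ (equivalently, the first $1$ sits at position $j+1$), which is how the paper phrases it.
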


\begin{proof}[Proof of Proposition~\ref{prop:si-prior}]\label{app:proof-prop:si-prior}
(i) For $n \ge j+1$ the prefix $0^{j}1^{n-j}$ is inconsistent with
$z^{(i)}$ for $i > j$ (a $0$ at position $j+1$) and for $i < j$ (a $1$ at
position $i+1 \le j$), so $\xi_w(z^{(j)}_{1:n}) = w_j$; for $n \le j$,
$\xi_w(0^n) = \sum_{i\ge n}w_i$. (ii) $\xi_w(1\mid 0^j1^k) = 1$ for
$k \ge 1$. (iii) The weights sum to at most $1$; the conditionals at $0^j$
are ratios of tail sums, computable since $\sum_{i\ge k}w_i \le 2^{-k}$, and
null contexts are recognisable from their form.
\end{proof}

\begin{corollary}[Asymptotic criteria are constant along the frontier]
\label{cor:blind}
Let $(\mu_k)_{k\in J}$ be pairwise mutually singular probability measures,
$J$ countable, $|J| \ge 2$.
(i) Every $q$ with $R_\infty(q;\mu_k) < \infty$ for all $k$ has every
consistency property of Fact~\ref{lem:si-tail} on every member. This
includes every $\xi_w$ with all $w_k > 0$, $\sum_k w_k = 1$,
and each leaky variant $(1-\theta)\xi_w$, $\theta \in [0,1)$, with profile
$(\log\frac1{w_k} + \log\frac1{1-\theta})_k$; their profiles differ by
unbounded amounts.
(ii) \textup{(Frontier membership is not tail-invariant.)} If all
$w_k > 0$, $\sum_k w_k = 1$, and $\mu_{k_0}(xa) > 0$ for every $a$ at some
context $x$, then replacing $\xi_w(\cdot\mid x)$ by a full-support
$p \ne \xi_w(\cdot\mid x)$ gives a finite full-support modification with
$\sum_k 2^{-R_\infty(\cdot;\mu_k)} < 1$.
\end{corollary}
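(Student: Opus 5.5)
For clause~(i) I will work one member at a time. The hypothesis $R_\infty(q;\mu_k)<\infty$ says the sequence $R_n(q;\mu_k)$, which is nondecreasing in $n$, is bounded. So finite total regret holds. Vanishing per-step excess loss follows from the expected per-step terms $\E_{\mu_k}\dl{\mu_k(\cdot\mid X_{<t})}{q(\cdot\mid X_{<t})}$ in~\eqref{eq:si-regret}: they are nonnegative and sum to a finite value, so they tend to zero. Hence also $R_n/n\to0$.

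To apply this to $\xi_w$, I will use Theorem~\ref{thm:frontier}(ii), reading $r_k:=\log(1/w_k)$, which lies in $\Kraft_J$ because $\sum_kw_k=1$. That result gives $R_\infty(\xi_w;\mu_k)=\log(1/w_k)<\infty$ exactly. For the leaky variant $(1-\theta)\xi_w$, every conditional equals that of $\xi_w$ except at the empty context, where the mass $\theta$ is dropped. Its measure part is therefore $(1-\theta)\xi_w$. By Lemma~\ref{lem:measurepart} and the convention $\dl{\mu}{\nu}=\dl{\mu}{\nu/\nu(\Ac^\infty)}+\log\frac1{\nu(\Ac^\infty)}$, its profile is $\log\frac1{w_k}+\log\frac1{1-\theta}$.

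To get profiles differing by unbounded amounts, I need $|J|\ge2$ and must allow $J$ to be finite. Fix two indices $k_1,k_2$. I compare $w$ with $w'$, where $w'$ sends weight $1-\epsilon'$ to $k_1$ and weight $\epsilon'$ to $k_2$, keeping all weights positive. Then $R_\infty(\xi_{w'};\mu_{k_2})=\log(1/\epsilon')\to\infty$ while $\xi_w$ stays fixed. Alternatively $\theta\to1$ gives the same conclusion.

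For clause~(ii), let $q'$ equal $\xi_w$ except that $q'(\cdot\mid x)=p$. This is a finite modification with $F=\{x\}$. Both conditionals at $x$ have full support: $\xi_w(a\mid x)\ge w_{k_0}\mu_{k_0}(xa)/\xi_w(x)>0$, and $p$ has full support by hypothesis. So the modification is finite and full-support in the sense of Fact~\ref{lem:si-tail}. The conditionals still sum to one, so $q'$ is a measure. The main step is to show $q'$ is not a full-mass Bayes mixture. By Theorem~\ref{thm:frontier}(iii) its Kraft sum is then strictly below $1$, since the sum is $\le1$ by~(i) and equals $1$ only for such mixtures.

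I will argue by contradiction. Suppose $q'=\sum_kv_k\mu_k$ on cylinders with $\sum_kv_k=1$, and write $\Gamma_y$ for the cylinder of $y$.
\begin{itemize}
\item Off $\Gamma_x$ the two measures coincide: every cylinder disjoint from $\Gamma_x$ has the same mass under both, by the product formula. In particular $q'(x)=\xi_w(x)$, because the product for $x$ uses no conditional at $x$ itself.
\item Inside $\Gamma_x$, the conditional of $q'$ at any $xay$ is that of $\xi_w$. So $q'|_{\Gamma_{xa}}=\frac{p(a)}{\xi_w(a\mid x)}\,\xi_w|_{\Gamma_{xa}}$.
\item Take disjoint carriers $\Omega_k$. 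Restricted to $\Omega_k$, one has $q'|_{\Omega_k}=v_k\mu_k$ and $\xi_w|_{\Omega_k}=w_k\mu_k$.
\item Compare the two on $\Omega_{k_0}\cap\Gamma_{xa}$, which has positive $\mu_{k_0}$-mass for every $a$. This gives $v_{k_0}/w_{k_0}=p(a)/\xi_w(a\mid x)$ for all $a$.
\item Summing over $a$ and using that both $p$ and $\xi_w(\cdot\mid x)$ sum to one forces the ratio to be $1$, so $p=\xi_w(\cdot\mid x)$. This contradicts the hypothesis.
\end{itemize}
Normalisation is why the hypothesis is that $\mu_{k_0}$ charges every one-symbol continuation at $x$: it lets the single component $k_0$ pin the ratio for every symbol $a$.

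I expect this last step to be the obstacle, specifically handling the carriers with care. The sets $\Omega_k$ are defined only up to null sets, and the identities $q'|_{\Omega_k}=v_k\mu_k$ and $\xi_w|_{\Omega_k}=w_k\mu_k$ must be applied on the same choice of carriers. The equality on cylinders extends to Borel sets by Carath\'eodory, since both sides are measures agreeing on the cylinder algebra.
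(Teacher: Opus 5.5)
Your proposal is correct and follows the paper's own argument essentially step for step. In (i), a finite $R_\infty$ is a convergent series of nonnegative per-step terms, and the profiles follow from Theorem~\ref{thm:frontier}(ii) together with the measure part $(1-\theta)\xi_w$. In (ii), you write $q'$ as $\frac{p(a)}{\xi_w(a\mid x)}\,\xi_w$ on each $\Gamma_{xa}$, evaluate a putative full-mass mixture on $\Omega_{k_0}\cap\Gamma_{xa}$ so that this ratio is constant in $a$, and use normalisation to force it to equal $1$. The carrier issue you flag is not a real obstacle: both restrictions are taken on the same fixed disjoint carriers constructed in Appendix~\ref{app:frontier-proofs}.
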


\begin{proof}[Proof of Corollary~\ref{cor:blind}]\label{app:proof-cor:blind}
(i) A finite $R_\infty$ is a convergent series of nonnegative terms in
\eqref{eq:si-regret}. The measure part of $(1-\theta)\xi_w$ is
$(1-\theta)\sum_k w_k\mu_k$ (Lemma~\ref{lem:measurepart}); the profile
follows from Theorem~\ref{thm:frontier}(ii) and its carrier argument.
(ii) $\xi_w(xa) \ge w_{k_0}\mu_{k_0}(xa) > 0$, so the modification $q'$
has full support at $x$. With $g(a) := p(a)/\xi_w(a\mid x)$,
$q' = g(a)\,\xi_w$ on Borel subsets of the cylinder $\Gamma_{xa}$ of sequences extending
$xa$. Were the Kraft sum
$1$, then $q' = \sum_k w'_k\mu_k$ with $\sum_k w'_k = 1$
(Theorem~\ref{thm:frontier}(iii)); evaluating on
$\Omega_{k_0}\cap\Gamma_{xa}$, with the disjoint carriers $\Omega_k$ of
Appendix~\ref{app:frontier-proofs}, gives
$w'_{k_0}\mu_{k_0}(xa) = g(a)\,w_{k_0}\mu_{k_0}(xa)$ for all $a$. So $g$ is
constant, hence $g \equiv 1$: a contradiction.
\end{proof}

\begin{corollary}[The implication in (C) is invalid on every class with a deterministic member]
\label{cor:si-invalid}
Let $\Mc$ be a uniformly computable countable class of sources with a
deterministic member $\mu_0$, $w$ positive computable weights with
$\sum_\mu w_\mu\le1$ and computable tail sums, $\eta \in (0,1)$,
$\xi_{\mathrm{unif}}$ the uniform measure, and
$\bar\xi := (1-\eta)\xi_w + \eta\,\xi_{\mathrm{unif}}$. Then $\bar\xi$ is
computable and has each consistency property $\Phi$ of
Fact~\ref{lem:si-tail} on $\Mc$. So does the computable $\bar\xi_N$,
uniform on $\Ac^{<N}$ and equal to $\bar\xi$ elsewhere, while $R_\infty(\bar\xi_N;\mu_0) \ge N\log V$. Hence for no $f$
does $\Phi$ entail ``$R_\infty(\cdot;\mu) \le f(K(\mu))$ for all
$\mu \in \Mc$'' among computable predictors.
\end{corollary}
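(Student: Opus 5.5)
Computability of $\bar\xi$ comes first, then its consistency properties on $\Mc$, then the witness $\bar\xi_N$ read through Fact~\ref{lem:si-tail}.

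\emph{Computability.} Since $\Mc$ is uniformly computable and $w$ is computable with computable tail sums, $\xi_w(x)=\sum_\mu w_\mu\mu(x)$ is computable to any precision: truncate the sum where the tail bound is below the requested error. The uniform measure is computable and $\eta$ is a computable rational. So $\bar\xi(x)$ is a computable real, uniformly in $x$. It is positive on every string, because $\bar\xi(x)\ge\eta V^{-|x|}$. Hence its conditionals $\bar\xi(xa)/\bar\xi(x)$ are computable, with no division by zero. The same holds for $\bar\xi_N$: it is uniform on contexts of length below $N$, a decidable set, and equals $\bar\xi$ elsewhere.

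\emph{Consistency of $\bar\xi$.} For each $\mu\in\Mc$ we have $\bar\xi\ge(1-\eta)w_\mu\mu$ on cylinders. Hence $R_n(\bar\xi;\mu)\le\log\frac1{w_\mu}+\log\frac1{1-\eta}$ for every $n$, which is finite total regret. Since $R_n$ is nondecreasing and bounded, $R_n/n\to0$. Total regret is the convergent series \eqref{eq:si-regret} of nonnegative expected per-step divergences, so those terms tend to $0$, which is vanishing per-step excess loss. These are the consistency properties $\Phi$ listed in Fact~\ref{lem:si-tail}.

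\emph{The witness $\bar\xi_N$.} Every conditional of $\bar\xi$ is at least $\eta/V$, because
\[
\bar\xi(a\mid x)\ge\frac{\eta V^{-|x|-1}}{\bar\xi(x)}\ge\frac{\eta}{V}.
\]
The uniform conditionals are $1/V\ge\eta/V$. So $\bar\xi_N$ is a finite full-support modification of $\bar\xi$ on $F=\Ac^{<N}$, with $\gamma=\eta/V$ and $\Lambda(F)=N$. Fact~\ref{lem:si-tail}(ii) then shows that every $\Phi$ transfers to $\bar\xi_N$ on every $\mu\in\Mc$. On the deterministic member $\mu_0$, with sequence $z$, each of the first $N$ steps has conditional probability exactly $1/V$. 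This gives $R_\infty(\bar\xi_N;\mu_0)\ge R_N=N\log V$.

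\emph{Conclusion.} Fix $f$. For every $N>f(K(\mu_0))/\log V$, the computable predictor $\bar\xi_N$ has $\Phi$ on $\Mc$ but violates $R_\infty(\cdot;\mu_0)\le f(K(\mu_0))$. So $\Phi$ does not entail the complexity bound on $\Mc$.

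The only delicate point is the uniform lower bound on the conditionals of $\bar\xi$. This is the reason for mixing in $\xi_{\mathrm{unif}}$: $\xi_w$ alone may have zero conditionals, which would block Fact~\ref{lem:si-tail}(ii). The bound is also what makes the conditionals computable, since it keeps the denominators $\bar\xi(x)$ away from zero.
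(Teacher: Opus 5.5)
Your outline matches the paper's proof step for step, but one inequality in it is false: the claim that every conditional of $\bar\xi$ is at least $\eta/V$. The second inequality in your display needs $\bar\xi(x)\le V^{-|x|}$. That fails on every string the class makes likely, because $\bar\xi$ mixes joint laws rather than conditionals, so the posterior weight on the uniform component decays. For a concrete case, take $\Mc=\{\mu_0\}$ with sequence $z$, $x=z_{1:t}$ and $a\ne z_{t+1}$. Then $\bar\xi(a\mid x)=\eta V^{-t-1}/\bigl((1-\eta)w_{\mu_0}+\eta V^{-t}\bigr)$, which drops below $\eta/V$ as soon as $V^{-t}<w_{\mu_0}$ and tends to $0$. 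A floor of $\eta/V$ is what interpolating conditionals at weight $\eta$ would give (compare Corollary~\ref{cor:specialise}(iii)); mixing joint laws does not give it. Your closing paragraph therefore misstates the role of $\xi_{\mathrm{unif}}$. Its job is to secure $\bar\xi(x)\ge\eta V^{-|x|}>0$, hence computable conditionals, which you also say correctly. It does not provide a uniform conditional floor.

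The repair is local. On the finite set $F=\Ac^{<N}$ you do have a floor: since $\bar\xi(x)\le1$, you get $\bar\xi(a\mid x)\ge\eta V^{-|x|-1}\ge\eta V^{-N}$. Fact~\ref{lem:si-tail}(ii) then applies with $\gamma=\eta V^{-N}$ and constant $N\log(V^{N}/\eta)$, which is still finite and is all the transfer needs.

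The paper's route is shorter. It uses only the one-sided Fact~\ref{lem:si-tail}(i), which constrains only the conditionals of the modified predictor $\bar\xi_N$ on $F$. Those are uniform, so $\gamma=1/V$ and $R_n(\bar\xi_N;\mu)\le R_n(\bar\xi;\mu)+N\log V$ for every $\mu\in\Mc$. Finite total regret of $\bar\xi_N$ then gives the other consistency properties exactly as in your argument for $\bar\xi$. No lower bound on the conditionals of $\bar\xi$ is needed anywhere.

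Your remaining steps agree with the paper: computability, the regret bound from $\bar\xi\ge(1-\eta)w_\mu\mu$, the loss $N\log V$ on $\mu_0$, and the choice $N>f(K(\mu_0))/\log V$.
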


\begin{proof}[Proof of Corollary~\ref{cor:si-invalid}]\label{app:proof-cor:si-invalid}
$\xi_w$ is computable by Fact~\ref{thm:attainable}, which uses the uniform
computability of $\Mc$, and $\bar\xi(x) \ge \eta V^{-|x|} > 0$, so the
conditionals of $\bar\xi$ are computable. Since $\bar\xi \ge (1-\eta)w_\mu\mu$,
$R_\infty(\bar\xi;\mu) \le \log\frac1{(1-\eta)w_\mu}$, and finite total
regret gives the other consistency properties (proof of
Corollary~\ref{cor:blind}(i)). By
Fact~\ref{lem:si-tail}(i) with $F = \Ac^{<N}$ and $\gamma = 1/V$,
$R_n(\bar\xi_N;\mu) \le R_n(\bar\xi;\mu) + N\log V$, with per-step excess
losses equal after step $N$. On $\mu_0$ the first $N$ steps cost $\log V$
each; take $N > f(K(\mu_0))/\log V$.
\end{proof}

\providecommand{\db}[2]{d\!\left(#1 \,\|\, #2\right)}
\providecommand{\tlo}{t^{-}}
\providecommand{\thi}{t^{+}}

\section{Corpora, attainability and steering}
\label{app:corpus}

\subsection{Closure}

\begin{proof}[Proof of Theorem~\ref{thm:closure}]\label{app:proof-thm:closure}
(i) Corollary~\ref{cor:specialise}(ii) with $r=0$, the limit being that of
the nondecreasing $\dl{\nu_n}{\mu_n}$ (Lemma~\ref{lem:measurepart}).
(ii) Finite relative entropy on $\Ac^\infty$ forces $\nu\ll\mu$~\cite[Ch.~7]{gray2011entropy}; if $\nu\le c\,\mu$, each
$\dl{\nu_n}{\mu_n}\le\log c$.
(iii) For $\nu\perp\mu$, $\TV(\nu_n,\mu_n)\to1$, so
$\dl{\nu_n}{\mu_n}\to\infty$ by Bretagnolle--Huber~\cite[Ch.~2]{tsybakov2009}; distinct stationary ergodic measures are
mutually singular. For non-atomic $\mu$,
$\mu(z_{1:n})\downarrow\mu(\{z\})=0$.
(iv) Regret depends on $q$ only through its cylinder values, those of
$\mu$ by (i). If some $\mu(x)=0$, the computable environment $x0^\infty$
already has infinite regret; otherwise $\mu$'s conditionals
$\mu(xa)/\mu(x)$ are computable, and Corollary~\ref{cor:si-nocomputable}
applies to $\mu$.
\end{proof}

\begin{corollary}[The closure]
\label{cor:closure}
Let $\mu$ be computable and $R_\infty(q;\mu)=0$.
\textup{(a)}~$q$ has unbounded regret on every $\nu\perp\mu$, and, if
$\mu$ is non-atomic, on every deterministic environment; no computable
predictor has finite regret on every computable environment.
\textup{(b)}~If $\mu=\sum_iw_i\mu_i+(1-\sum_iw_i)\mu'$, then
$R_\infty(q;\mu_i)\le\log(1/w_i)$.
\textup{(c)}~No computable $\mu$ satisfies $\nu\le c_\nu\mu$ with
$c_\nu<\infty$ for every computable $\nu$; the simplicity prior must be
supplied, not fitted.
\end{corollary}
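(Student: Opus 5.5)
The three clauses reduce to results already proved, and I would take them in order.

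For (a), Theorem~\ref{thm:closure}(i) gives $q=\mu$ on cylinders. Hence $R_\infty(q;\nu)=\dl{\nu}{\mu}$ for every probability measure $\nu$, and $R_n(q;\nu)=\dl{\nu_n}{\mu_n}$ at each finite $n$. For $\nu\perp\mu$, Theorem~\ref{thm:closure}(iii) (Bretagnolle--Huber applied to $\TV(\nu_n,\mu_n)\to1$) shows these regrets are unbounded. If $\mu$ is non-atomic, a deterministic environment $z$ carries the point measure $\delta_z$, which is singular to $\mu$ because $\mu(\{z\})=0$. The pathwise statement $-\log\mu(z_{1:n})\to\infty$ is the second half of (iii). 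The final sentence of (a) is Corollary~\ref{cor:si-nocomputable}; I restate it here because $q$ has the same cylinder values as the computable $\mu$, so Theorem~\ref{thm:closure}(iv) applies directly.

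For (b), the decomposition gives $\mu\ge w_i\mu_i$ on cylinders. Theorem~\ref{thm:closure}(ii), with $c=1/w_i$, then yields $R_\infty(q;\mu_i)\le\log(1/w_i)$. The case $w_i=0$ is vacuous.

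For (c), I argue by contradiction. Suppose $\nu\le c_\nu\mu$ for every computable $\nu$. Taking $\nu$ to be a point measure $\delta_z$ on a computable $z$ forces $\mu(z_{1:n})\ge 1/c_\nu$ for all $n$, so $R_\infty(\mu;\delta_z)\le\log c_\nu<\infty$. More generally, Theorem~\ref{thm:closure}(ii) gives $R_\infty(\mu;\nu)\le\log c_\nu$ for every computable $\nu$. But $\mu$ is a computable predictor, so Corollary~\ref{cor:si-nocomputable} (equivalently, clause (iv) of the theorem) provides a computable environment on which its regret is infinite, which is the contradiction.

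The only point needing care is making $\mu$ a legitimate predictor when some cylinders have zero mass. In that case I would either take the environment $x0^\infty$ with $\mu(x)=0$ directly, as in the proof of Theorem~\ref{thm:closure}(iv), or note that domination of $\delta_{x0^\infty}$ already fails. The gloss ``must be supplied, not fitted'' then follows by combining (c) with Theorem~\ref{thm:closure}(i): a fitted $q$ equals the computable corpus law $\mu$, so it cannot dominate every computable source.
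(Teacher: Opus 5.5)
Your proposal is correct and follows the paper's proof essentially step for step: (a) from Theorem~\ref{thm:closure}(iii)--(iv) together with Corollary~\ref{cor:si-nocomputable}, (b) from clause~(ii) with $c=1/w_i$, and (c) by applying clause~(ii) to $q:=\mu$ and contradicting Corollary~\ref{cor:si-nocomputable}. Of your two options for the zero-mass issue, the second is exactly the paper's: domination of $\delta_{x0^\infty}$ forces $\mu(x)>0$ for every $x$, so $\mu$'s conditionals are computable and $\mu$ is a legitimate computable predictor.
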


\begin{proof}[Proof of Corollary~\ref{cor:closure}]
(a) Theorem~\ref{thm:closure}(iii)--(iv) and
Corollary~\ref{cor:si-nocomputable}. (b) Theorem~\ref{thm:closure}(ii)
with $c=1/w_i$. (c) Theorem~\ref{thm:closure}(ii) with $q:=\mu$ gives
$R_\infty(\mu;\nu)\le\log c_\nu$ for every computable $\nu$. Domination of
the point mass on $x0^\infty$ forces $\mu(x)>0$ for every $x$, so $\mu$ is
a computable predictor with finite regret on every computable $\nu$,
contradicting the last clause of (a).
\end{proof}

\begin{remark}[Two premiums]
\label{rem:price-of-universality}
Against the source, universality costs $R_\infty(M;\mu)\le K(\mu)+O(1)$
(Fact~\ref{fact:si-solomonoff}), with $K(\mu)$ unknown for natural text;
through a trained model (Section~\ref{sec:closure}) it is computable.
Both premiums are bounded (the first against the source, the second
pathwise against the model), and each is informative only where its
reference predicts well; the reverse trade (Theorem~\ref{thm:closure}(iii))
is not bounded.
\end{remark}

\subsection{Attainability and the hierarchy}

\begin{fact}[Class-restricted attainability; \cite{hutter2003optimality,hutter2005uai}]
\label{thm:attainable}
Let $\Cc=(\mu_i)_{i\ge1}$ be a uniformly computable family and $w$ a
computable prior with $w_i>0$, $\sum_iw_i\le1$ and computable tail sums.
Then $\xi_w:=\sum_iw_i\mu_i$ is a computable semimeasure
and $R_\infty(\xi_w;\mu_i)\le\log(1/w_i)$ for every $i$. If moreover
$\xi_w(x)>0$ for every $x$, its conditionals $\xi_w(xa)/\xi_w(x)$ are
computable.
\end{fact}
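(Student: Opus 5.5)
The plan has three steps: the semimeasure property, the regret bound by dominance, and computability of the cylinder values and then of the conditionals.

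\emph{Semimeasure property.} Each $\mu_i$ is a probability measure, so $\mu_i(x)=\sum_a\mu_i(xa)$ for every $x$. Summing with nonnegative weights and exchanging the two sums (all terms are nonnegative) gives $\xi_w(x)=\sum_a\xi_w(xa)$ for $x\ne\epsilon$. At the root, $\sum_a\xi_w(a)=\sum_iw_i\le1$. Hence the conditionals $\xi_w(xa)/\xi_w(x)$ sum to at most one wherever $\xi_w(x)>0$; at contexts of mass zero we set them uniform.

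\emph{Regret bound.} Every term of the mixture is nonnegative, so $\xi_w(x)\ge w_i\mu_i(x)$ on every cylinder. Then $\log\frac{\mu_i(X_{1:n})}{\xi_w(X_{1:n})}\le\log(1/w_i)$ pointwise. Taking $\mu_i$-expectations gives $R_n(\xi_w;\mu_i)\le\log(1/w_i)$ for every $n$, and $R_\infty$ is the nondecreasing limit of these. Equivalently, Lemma~\ref{lem:measurepart} applies because the measure part dominates $w_i\mu_i$.

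\emph{Computability.} This is the only real step. To compute $\xi_w(x)$ to precision $2^{-k}$:
\begin{itemize}
\item use the computable tail bounds to find $m$ with $\sum_{i>m}w_i\le2^{-k-1}$;
\item compute $w_1,\dots,w_m$ and $\mu_1(x),\dots,\mu_m(x)$, each to precision $2^{-k-1}/(2m)$, using uniform computability of the family;
\item output $\sum_{i\le m}w_i\mu_i(x)$.
\end{itemize}
Since $\mu_i(x)\le1$, the omitted tail contributes at most $2^{-k-1}$, and the approximation errors add at most another $2^{-k-1}$. So $\xi_w(x)$ is a computable real, uniformly in $x$.

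For the conditionals, assume $\xi_w(x)>0$ for all $x$. The ratio $\xi_w(xa)/\xi_w(x)$ is then computable: approximate the denominator until a positive rational lower bound appears, which must eventually happen because the denominator is positive, then divide with error propagation controlled by that lower bound. The one subtlety is that positivity cannot be decided in general. This is exactly why the statement assumes $\xi_w>0$ everywhere for the last clause. Without that assumption we claim only that the cylinder values form a computable semimeasure.
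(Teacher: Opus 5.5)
Your proof is correct and follows the paper's own argument. The paper also truncates the mixture where the computable tail sums fall below half the target precision, approximates the finitely many remaining terms, obtains $R_n(\xi_w;\mu_i)\le\log(1/w_i)$ at every horizon from $\xi_w\ge w_i\mu_i$, and gets the conditionals as ratios of computable reals with positive denominators. Your explicit check of the semimeasure property is something the paper leaves implicit. The only loose end is bookkeeping: approximating $w_i$ and $\mu_i(x)$ separately to precision $\delta=2^{-k-2}/m$ leaves a cross term of order $m\delta^2$. That term disappears if you clip the approximation of $\mu_i(x)$ to $[0,1]$, or if you approximate each product $w_i\mu_i(x)$ directly to precision $2^{-k-1}/m$, as the paper does.
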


\begin{proof}[Proof of Fact~\ref{thm:attainable}]
To precision $\eta$: take $N$ with $\sum_{i>N}w_i<\eta/2$ and the first
$N$ terms to precision $\eta/(2N)$. As $\xi_w\ge w_i\mu_i$,
$R_n(\xi_w;\mu_i)\le\log(1/w_i)$ for every $n$. The conditionals are ratios
of computable reals with positive denominators, hence computable.
\end{proof}

\begin{proposition}[The complexity form is unattainable]
\label{prop:nokform}
Let $(z^{(j)})_{j\ge1}$ be a uniformly computable family of pairwise
distinct deterministic environments and $q$ a computable predictor. No
constant $c$ gives $R_\infty(q;z^{(j)})\le K(z^{(j)})+c$ for every $j$.
\end{proposition}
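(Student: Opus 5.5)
The plan is to build, for each $m$, a member $z^{(j_m)}$ of the family that $q$ charges at least $m$ bits but that has description length only $K(m)+O(1)$. This is a family version of the diagonal of Theorem~\ref{thm:si-diag}. There the adversary chose every symbol; here it can only choose \emph{which} environment of the given family to present. Conservation (Fact~\ref{thm:conservation}, in the finite-horizon form of Corollary~\ref{thm:profile}) guarantees that some member is expensive. The computability of $q$ and the uniform computability of the family make the choice of that member effective.

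First, fix $m$ and put $J_m:=\{1,\dots,2^{m+1}\}$. The environments $z^{(j)}$, $j\in J_m$, are pairwise distinct, so some horizon $n$ gives them pairwise distinct length-$n$ prefixes. Such an $n$ can be found by search, since each prefix $z^{(j)}_{1:n}$ is computable uniformly in $(j,n)$. Let $n_m$ be the least such $n$.

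Second, apply Corollary~\ref{thm:profile} to these $2^{m+1}$ prefixes, which form an antichain. Their $q$-masses sum to at most $1$, so some $j\in J_m$ has $q(z^{(j)}_{1:n_m})\le 2^{-m-1}$. The quantity $q(z^{(j)}_{1:n_m})$ is a finite product of computable reals, so it can be approximated to within $2^{-m-2}$ for each $j\in J_m$.

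Third, let $j_m$ be the least $j$ whose approximation is below $2^{-m-1}+2^{-m-2}$. Such a $j$ exists, because the index found in the second step passes this test. Any $j$ that passes has true mass below $2^{-m-1}+2\cdot2^{-m-2}=2^{-m}$. Hence $R_\infty(q;z^{(j_m)})\ge L_{n_m}(q;z^{(j_m)})>m$. One could instead use upper approximations from below to avoid ties; the margin above already makes the test decidable, since it only compares rational approximations.

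Fourth, bound the complexity. The map $m\mapsto j_m$ is computed by a fixed routine that uses programs for $q$ and for the family. Since $z^{(j)}$ is computable from $j$ uniformly, $K(z^{(j_m)})\le K(m)+K(q)+O(1)\le 2\log m+O(1)$. If $R_\infty(q;z^{(j)})\le K(z^{(j)})+c$ held for all $j$, then $m<2\log m+O(1)+c$ for every $m$, which fails for large $m$.

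The main obstacle is the third step: an exact comparison of computable reals is undecidable. This is why the search is run against the rational threshold with slack. The Kraft inequality supplies a factor-$2$ gap between the guaranteed minimum and the claimed bound, and that gap absorbs the approximation error.
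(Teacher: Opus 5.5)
Your proof is correct and is essentially the paper's argument: Kraft on an antichain of prefixes forces a member of small $q$-mass, the computability of $q$ and of the family lets a fixed routine find one from $m$, and $K(z^{(j_m)})\le K(m)+O(1)$ then contradicts the assumed bound. The paper differs only in passing to the upper semicomputable limits $v_j=\lim_n q(z^{(j)}_{1:n})$ and searching without bound over lower approximations of $-\log v_j$, where you stay at a finite horizon on the block $\{1,\dots,2^{m+1}\}$ with a rational margin; one cosmetic repair: with approximation error up to $2^{-m-2}$, your strict test ``below $2^{-m-1}+2^{-m-2}$'' can fail for every $j$ when all $2^{m+1}$ masses equal $2^{-m-1}$, so make the test non-strict (which still gives loss at least $m$) or approximate to within $2^{-m-3}$.
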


\begin{proof}[Proof of Proposition~\ref{prop:nokform}]
$n\mapsto q(z^{(j)}_{1:n})$ is nonincreasing and computable uniformly in
$(j,n)$, so $v_j:=\lim_nq(z^{(j)}_{1:n})$ is upper semicomputable uniformly
in $j$ and $R_\infty(q;z^{(j)})=-\log v_j$. For finite $J'$ the prefixes
$z^{(j)}_{1:n}$, $j\le J'$, form an antichain for large $n$, so
$\sum_jv_j\le1$ (Fact~\ref{lem:antichain}) and $f(j):=-\log v_j$ is
unbounded and lower semicomputable. If $f(j)\le K(z^{(j)})+c$, then
$f(j)\le K(j)+c'$. Enumerating lower approximations of $f$ until some
$j_m$ has $f(j_m)>m+c'$ computes $j_m$ from $m$, so
$m<K(j_m)\le2\log m+O(1)$, false for large $m$.
\end{proof}

\begin{theorem}[The hierarchy is strict and never terminates]
\label{thm:nocollapse}
Let $q$ be a total rational-computable predictor, $\delta\in(0,1)$
computable, $\mu_q$ the point measure on the diagonal sequence $x^q$ of
Theorem~\ref{thm:si-diag}, and $q'_\delta:=(1-\delta)q+\delta\mu_q$ (a
mixture of joint laws, with uniform conditionals at contexts $x$ where
$q'_\delta(x)=0$). Then $q'_\delta$ is computable,
\textup{(i)}~$R_n(q'_\delta;\mu)\le R_n(q;\mu)+\log\frac1{1-\delta}$ for
every $\mu$ and $n$, and
\textup{(ii)}~$R_\infty(q'_\delta;\mu_q)\le\log\frac1\delta$, while
$R_n(q;\mu_q)\ge n\log V$. Interpolating conditionals at the fixed weight
$\delta$ instead loses at least $n\log\frac1{\delta+(1-\delta)/V}$ on
$x^q$.
\end{theorem}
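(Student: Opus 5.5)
The plan is to work throughout with joint laws, where everything reduces to pointwise domination of cylinder probabilities, and to pass to conditionals only to check computability.

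\emph{Computability.} By Theorem~\ref{thm:si-diag}(ii), $x^q$ is computable from a program for $q$, since $q$ is total and rational-valued. Hence $\mu_q(x)\in\{0,1\}$ is decidable: it is $1$ iff $x$ is a prefix of $x^q$. The cylinder values $q'_\delta(x)=(1-\delta)q(x)+\delta\mu_q(x)$ are therefore computable reals, and rational if $\delta$ is rational. The conditionals are the ratios $q'_\delta(xa)/q'_\delta(x)$ where the denominator is positive, and uniform otherwise. The one point needing care is deciding whether $q'_\delta(x)=0$. Since $\delta>0$ is fixed, I would argue as follows: $q'_\delta(x)=0$ iff $x\not\sqsubseteq x^q$ and $q(x)=0$. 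Both conditions are decidable, because $q(x)$ is a finite product of exact rationals. So the conditionals are computable, and rational-computable when $\delta$ is rational.

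\emph{Clause (i).} Since $q'_\delta(x)\ge(1-\delta)q(x)$ on every string, we get $\log\frac{\mu(x_{1:n})}{q'_\delta(x_{1:n})}\le\log\frac{\mu(x_{1:n})}{q(x_{1:n})}+\log\frac1{1-\delta}$ pointwise. Taking $\E_\mu$ gives the claim for every $n$; where $q(x)=0$ the right side is already infinite. One point to verify is that the predictor's string probabilities really equal the joint-law mixture, i.e.\ that the product of the defined conditionals telescopes back to $q'_\delta(x_{1:n})$ on strings of positive mass. This holds because $q$'s own conditionals might not sum to one, but $q(xa)/q(x)$ is exactly $q(a\mid x)$, and the mixture of cylinder functions is consistent. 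On strings of zero $q'_\delta$-mass, both the source expectation and the bound are unaffected.

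\emph{Clause (ii).} We have $q'_\delta(x^q_{1:n})\ge\delta\mu_q(x^q_{1:n})=\delta$ for every $n$, so $R_n(q'_\delta;\mu_q)\le\log\frac1\delta$. Letting $n\to\infty$ gives the bound on $R_\infty$. The inequality $R_n(q;\mu_q)\ge n\log V$ is Theorem~\ref{thm:si-diag}(i).

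\emph{Interpolation claim.} Write $\tilde q(\cdot\mid x):=(1-\delta)q(\cdot\mid x)+\delta\,\mu_q(\cdot\mid x)$. Along $x^q$ the point measure's conditional on $x^q_t$ is $1$, while $q(x^q_t\mid x^q_{<t})\le1/V$ by the diagonal property. Hence each step has conditional probability at most $\delta+(1-\delta)/V$, and multiplying over $n$ steps gives the stated loss, as in Corollary~\ref{cor:specialise}(iii).

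The only mildly delicate step is the computability bookkeeping at zero-mass contexts. The bounds themselves are one-line domination arguments.
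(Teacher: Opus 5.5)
Your proof is correct and follows the paper's argument essentially step for step: zero mass of $q'_\delta(x)$ is decided via $x\not\sqsubseteq x^q$ and $q(x)=0$, the bounds come from $q'_\delta\ge(1-\delta)q$ and $q'_\delta(x^q_{1:n})\ge\delta$, and interpolation is handled by the per-step bound $\delta+(1-\delta)/V$. The telescoping check you add is left implicit in the paper, and the reason it holds is simply that $q'_\delta(\epsilon)=1$ and that a prefix of zero mass has only extensions of zero mass.
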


\begin{proof}[Proof of Theorem~\ref{thm:nocollapse}]
$\mu_q$ is computable (Theorem~\ref{thm:si-diag}(ii)). The values
$q'_\delta(x)=(1-\delta)q(x)+\delta\mu_q(x)$ are computable, and
$q'_\delta(x)=0$ iff the rational $q(x)$ vanishes and $x$ is not a prefix
of $x^q$, which is decidable; so the conditionals of $q'_\delta$ are
computable. (i) follows from
$q'_\delta\ge(1-\delta)q$, (ii) from $q'_\delta(x^q_{1:n})\ge\delta$
and Theorem~\ref{thm:si-diag}(i), and the last claim from $q$'s
conditional on $x^q_t$ being at most $1/V$.
\end{proof}

\begin{remark}[The price of coverage]
\label{rem:price}
If a computable predictor $q$ has finite regret on every deterministic $z$
with $K(z)\le s$, then $K(q)\ge s-O(1)$: its own diagonal has infinite
regret and complexity at most $K(q)+O(1)$ (Theorem~\ref{thm:si-diag}(ii)
for rational conditionals, the proof of
Corollary~\ref{cor:si-nocomputable} for real ones), so it is not among
those $z$. Legg~\cite{legg2006} proves the analogue for eventual zero--one prediction.
\end{remark}

\begin{remark}[Concrete rungs]
\label{rem:rungs}
Krichevsky--Trofimov pays $\tfrac12\log n+O(1)$ per free parameter on
memoryless sources~\cite{krichevsky1981}; context-tree weighting, an
explicit prior and parametric cost on finite-memory tree sources~\cite{willems1995};
enumerated bounded-program mixtures~\cite{ficl2026}
instantiate Fact~\ref{thm:attainable}; Levin search and the speed prior
reach time-bounded classes (Appendix~\ref{sec:rb}).
\end{remark}

\begin{corollary}[Mixtures with an enumerated component]
\label{cor:neurosymbolic}
Let $q_N$ be any predictor, $q_E$ a mixture over a uniformly computable
class $\Cc_E$ with computable weights having computable tail sums and
$q_E(x)>0$ for all $x$, $\delta\in(0,1)$, and
$q_\delta:=(1-\delta)q_N+\delta q_E$ with conditionals
$q_\delta(xa)/q_\delta(x)$. For every $\mu$ and $n$,
$R_n(q_\delta;\mu)\le\min\{R_n(q_N;\mu)+\log\frac1{1-\delta},\,
R_n(q_E;\mu)+\log\frac1\delta\}$, and $q_\delta(x)\le\max\{q_N(x),q_E(x)\}$
for every $x$. If $q_N$ and $\delta$ are computable, so is $q_\delta$, which then has
its own real-valued diagonal (Corollary~\ref{cor:si-nocomputable}).
\end{corollary}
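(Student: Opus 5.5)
The statement to prove is Corollary~\ref{cor:neurosymbolic}, which has three parts. I will prove them in order: the two regret bounds, then the pointwise upper bound on $q_\delta(x)$, then computability, which reduces to earlier results.

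\emph{Regret bounds.} The joint values satisfy $q_\delta(x)=(1-\delta)q_N(x)+\delta q_E(x)$ for every $x\ne\epsilon$. This needs one check. Because $q_E(x)>0$ everywhere, $q_\delta(x)>0$, so the conditionals $q_\delta(xa)/q_\delta(x)$ are well defined. Their product telescopes to $q_\delta(x_{1:n})/q_\delta(\epsilon)$, and $q_\delta(\epsilon)=1$. Hence the predictor defined by these conditionals extends to strings exactly by the mixture formula, as in Definition~\ref{def:si-predictor}. Each conditional sums to at most one because $q_N$ and $q_E$ are semimeasures, so $q_\delta$ is a predictor. Domination is then pointwise: $q_\delta(x_{1:n})\ge(1-\delta)q_N(x_{1:n})$ and $q_\delta(x_{1:n})\ge\delta q_E(x_{1:n})$. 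Taking $\E_\mu\log\frac{\mu(X_{1:n})}{\cdot}$ in \eqref{eq:si-regret} gives $R_n(q_\delta;\mu)\le R_n(q_N;\mu)+\log\frac1{1-\delta}$ and $R_n(q_\delta;\mu)\le R_n(q_E;\mu)+\log\frac1\delta$. The minimum of the two follows. This is the same argument as in Theorem~\ref{thm:nocollapse}(i)--(ii).

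\emph{Upper bound.} Since $q_\delta(x)$ is a convex combination of $q_N(x)$ and $q_E(x)$, it is at most $\max\{q_N(x),q_E(x)\}$.

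\emph{Computability.} By Fact~\ref{thm:attainable}, $q_E$ is computable as a joint function, with positive values and computable conditionals. If $q_N$ and $\delta$ are computable, then $q_N(x)$ is a finite product of computable conditionals and so is computable uniformly in $x$. Therefore $q_\delta(x)$ is a computable real uniformly in $x$. Since it is strictly positive, the ratio $q_\delta(xa)/q_\delta(x)$ is computable: one approximates the denominator until it is bounded away from zero, then divides. So $q_\delta$ is computable in the sense of Definition~\ref{def:si-computable}.

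Its conditionals are real-valued, not necessarily rational. The diagonal therefore comes from Corollary~\ref{cor:si-nocomputable}: approximate the conditionals to precision $1/(4V)$, which yields an environment on which each step costs at least $\log(2V/3)$, with complexity $K(q_\delta)+O(1)$. The only delicate point in the whole proof is the positivity that makes the conditionals well defined and computable, and the hypothesis $q_E>0$ supplies it.
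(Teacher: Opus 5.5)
Your proof is correct and follows the paper's argument. The regret bounds come from pointwise domination $q_\delta\ge(1-\delta)q_N$ and $q_\delta\ge\delta q_E$, the upper bound from convexity, and computability from the positivity $q_\delta(x)\ge\delta q_E(x)>0$, which makes ratios of computable values computable. Your extra checks, that the conditionals telescope back to the mixture formula and sum to at most one, are details the paper leaves implicit.
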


\begin{proof}[Proof of Corollary~\ref{cor:neurosymbolic}]
$q_\delta\ge(1-\delta)q_N$ and $q_\delta\ge\delta q_E$ pointwise, and
$q_\delta(x)$ is a convex combination. For computable $q_N$ and $\delta$ the values
$q_\delta(x)\ge\delta q_E(x)>0$ are computable
(Fact~\ref{thm:attainable}), hence so are their ratios.
\end{proof}

\begin{remark}[Selection trails linearly]
\label{rem:neurosymbolic}
A selector uses at each step the conditional of one component, chosen
from the history. For $V=2$ and $\gamma<\frac12$, let $q_A$, $q_B$ give
$0$, respectively $1$, probability $1-\gamma$ at every step, and let the
sequence emit the symbol given probability $\gamma$ by the selected
component (computable when the rule is). The selector pays
$n\log\frac1\gamma$ and the two components together
$n(\log\frac1\gamma+\log\frac1{1-\gamma})$, so it trails the better by at
least $\frac n2\log\frac{1-\gamma}\gamma$.
\end{remark}

\subsection{Steering by demonstrations}

\begin{proposition}[Steering; after~\cite{xie2022bayesian,wolf2024alignment}]
\label{prop:manipulable}
Let $Q=\sum_{h\in\mathcal H}w_hP_h$ over a countable class, $w_{h_y}>0$,
and $c=d_1\cdots d_m$ with $P_{h_y}(c)>0$; put
$\Lambda_h(c):=P_h(c)/P_{h_y}(c)$. Then
$\TV(Q(\cdot\mid c),P_{h_y}(\cdot\mid c))\le1-\pi(h_y\mid c)
\le w_{h_y}^{-1}\sum_{h\ne h_y}w_h\Lambda_h(c)$.
\textup{(i)}~If $\Lambda_h(c)\le2^{-rm}$ for all $h\ne h_y$, the distance
is at most $\eta$ once $m\ge(\log(1/w_{h_y})+\log(1/\eta))/r$.
\textup{(ii)}~If competitors $\mathcal H'$ of mass $w_{\mathcal H'}$ have
every per-demonstration likelihood ratio at least $2^{-r}$, then
$\pi(h_y\mid c)\ge1-\eta$ requires
$m\ge(\log(w_{\mathcal H'}/w_{h_y})+\log((1-\eta)/\eta))/r$.
\end{proposition}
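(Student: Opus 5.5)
The approach is to work entirely with the posterior weights $\pi(h\mid c)=w_hP_h(c)/Q(c)$, which are well defined because $Q(c)\ge w_{h_y}P_{h_y}(c)>0$.

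\emph{Main inequality.} The conditional law decomposes as $Q(\cdot\mid c)=\sum_h\pi(h\mid c)P_h(\cdot\mid c)$. Write $\pi_y:=\pi(h_y\mid c)$. Then
\[
Q(\cdot\mid c)-P_{h_y}(\cdot\mid c)=\sum_{h\ne h_y}\pi(h\mid c)\bigl(P_h(\cdot\mid c)-P_{h_y}(\cdot\mid c)\bigr).
\]
Joint convexity of total variation, together with the bound $\TV\le1$ on each term, gives distance at most $\sum_{h\ne h_y}\pi(h\mid c)=1-\pi_y$. For the second inequality, write
\[
1-\pi_y=\frac{\sum_{h\ne h_y}w_hP_h(c)}{Q(c)}
\]
and lower-bound the denominator by $w_{h_y}P_{h_y}(c)$. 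Dividing through by $P_{h_y}(c)$ then gives $w_{h_y}^{-1}\sum_{h\ne h_y}w_h\Lambda_h(c)$. A tighter form is available: $1-\pi_y=S/(w_{h_y}+S)$ with $S:=\sum_{h\ne h_y}w_h\Lambda_h$. This form is what I would keep for (ii).

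\emph{Clause (i).} Substitute $\Lambda_h\le2^{-rm}$ and use $\sum_{h\ne h_y}w_h\le1$. The distance is then at most $2^{-rm}/w_{h_y}$, which is at most $\eta$ exactly when $m\ge(\log(1/w_{h_y})+\log(1/\eta))/r$.

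\emph{Clause (ii).} I read ``every per-demonstration likelihood ratio at least $2^{-r}$'' as saying that each factor of the chain-rule product satisfies
\[
\frac{P_h(d_k\mid d_{<k})}{P_{h_y}(d_k\mid d_{<k})}\ge2^{-r}.
\]
Multiplying the $m$ factors gives $\Lambda_h(c)\ge2^{-rm}$ for $h\in\mathcal H'$. Dropping the competitors outside $\mathcal H'$ (their terms are nonnegative) yields $S\ge w_{\mathcal H'}2^{-rm}$, and so
\[
\pi_y=\frac{w_{h_y}}{w_{h_y}+S}\le\frac{w_{h_y}}{w_{h_y}+w_{\mathcal H'}2^{-rm}}.
\]
Requiring the right side to be at least $1-\eta$ rearranges to $w_{\mathcal H'}2^{-rm}\le w_{h_y}\eta/(1-\eta)$. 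Taking logarithms gives the stated lower bound on $m$.

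\emph{Main obstacle.} The only delicate step is in (ii). It is fixing the meaning of the per-demonstration ratio, and checking that the chain-rule factors are defined: $P_{h_y}(d_{\le k})>0$ holds along $c$, and competitors with $P_h(c)=0$ are excluded by the ratio hypothesis. Everything else is elementary algebra.
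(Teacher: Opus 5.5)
Your proposal is correct and follows the paper's proof essentially step for step: the posterior decomposition $Q(\cdot\mid c)=\sum_h\pi(h\mid c)P_h(\cdot\mid c)$ with convexity of total variation, the lower bound $Q(c)\ge w_{h_y}P_{h_y}(c)$, the bound $2^{-rm}/w_{h_y}$ for (i), and for (ii) the bound $\pi(h_y\mid c)\le w_{h_y}/(w_{h_y}+w_{\mathcal H'}2^{-rm})$ rearranged into the stated condition on $m$. Your chain-rule reading of ``per-demonstration likelihood ratio'', which gives $\Lambda_h(c)\ge2^{-rm}$ on $\mathcal H'$, is the interpretation the paper uses without spelling it out.
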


\begin{proof}[Proof of Proposition~\ref{prop:manipulable}]
$Q(\cdot\mid c)=\sum_h\pi(h\mid c)P_h(\cdot\mid c)$ with
$\pi(h\mid c)\propto w_h\Lambda_h(c)$, so by convexity the distance is at
most $\sum_{h\ne h_y}\pi(h\mid c)\le\sum_{h\ne h_y}w_h\Lambda_h(c)/w_{h_y}$.
(i) $\sum_{h\ne h_y}w_h\Lambda_h(c)\le2^{-rm}$, so the distance is at most $2^{-rm}/w_{h_y}$.
(ii) $\pi(h_y\mid c)\le w_{h_y}/(w_{h_y}+w_{\mathcal H'}2^{-rm})$.
\end{proof}

\noindent\label{rem:manipulable-scope}Clause~(ii) bounds concentration of the posterior, not steering of the
served law. Setting $w_{h_y}=0$ does not prevent steering if other
hypotheses reproduce $h_y$'s predictions.

\subsection{Tools, retrieval and test-time compute}
\label{app:tools-more}

\paragraph{Tools and retrieval.}
A pipeline that queries retrieval or tools satisfies
Definition~\ref{def:si-computable} only relative to what it queries, and
the diagonal of Section~\ref{sec:scale} applies relative to the same
resources. What
remains is how components are combined, and the results distinguish three
ways.
\begin{itemize}
\item \emph{Mixture of joint laws.} $q_\delta=(1-\delta)q_N+\delta q_E$,
with posterior-weighted conditionals, has regret at most
$\min\{R_n(q_N;\mu)+\log\frac1{1-\delta},\,R_n(q_E;\mu)+\log\frac1\delta\}$
on every environment (Corollary~\ref{cor:neurosymbolic}).
\item \emph{Per-step interpolation.} Interpolating conditionals at a
fixed weight, as the kNN-LM does~\cite{khandelwal2020knnlm} and as
RAG-Token does across retrieved documents~\cite{lewis2020rag}, loses that
bound. Against a deterministic environment that one component predicts
exactly, while the other keeps its conditionals at least $\gamma$, the
regret grows linearly in $n$ (Corollary~\ref{cor:specialise}(iii)).
RAG-Sequence, which marginalises documents over whole sequences, is a
mixture of joint laws.
\item \emph{Selection.} Routing a query to a tool and substituting its
answer, as in learned tool use~\cite{schick2023toolformer}, is a
selector. A selector can trail its better component by an amount linear
in $n$ (Remark~\ref{rem:neurosymbolic} gives an example).
\end{itemize}
With $q_E$ an enumerated mixture, $q_\delta$ carries certified constants
on the class $\Cc_E$ and the network's uncertified constants elsewhere,
so its guarantee has two tiers and should be reported that way. The
competence sets of the two components overlap: trained networks copy in
context~\cite{olsson2022induction}, prefer simple
sequences~\cite{goldblum2024nfl}, and extrapolate numeric sequences
zero-shot~\cite{gruver2023timeseries}. So the gain is a certificate on a declared
class, not new coverage of a disjoint region. There is no gain in depth:
on every string the mixture is no better than its better component.

\paragraph{Test-time compute.}
Chain of thought and search leave the end-to-end map computable, so they
do not escape the diagonal. They change which resource-bounded class a
model can cover, at a cost of enumeration exponential in the description
length $S$ (Section~\ref{sec:deployed}, Appendix~\ref{sec:rb}).

\section{Training: certification, mixtures and refitting}
\label{app:training}

\paragraph{Setting.}
Notation as in Section~\ref{sec:icl}, with $\sum_k\pi_k=1$,
$\psi_n(q):=\dl{P_n}{q_n}$,
$\thi(\lambda,\psi):=\max\{t\in[\lambda,1]:\db\lambda t\le\psi\}$
($\db\lambda\cdot$ decreases on $(0,\lambda]$, increases on $[\lambda,1]$)
and $c(\eta)=\tfrac{\ln2}2\eta^2+O(\eta^3)$. A family is \emph{singular at
horizon $n$} if the $\mu_{k,n}$ have disjoint supports $A_k$;
$\Delta_k(q):=R_n(q;\mu_k)-R_n(P;\mu_k)$,
$\pi_{\mathcal J}:=\sum_{k\in\mathcal J}\pi_k$.

\begin{proposition}[Clauses added to Proposition~\ref{prop:pretrain-certify}]
\label{thm:training-inherit}
Let $\psi:=\psi_n(q)<\infty$.
\textup{(i)}~With $u:=\sqrt{2\psi\ln2/\lambda}$,
$\log\frac\lambda{\tlo(\lambda,\psi)}\le\min\{(\psi+(1-\lambda)\log\frac1{1-\lambda})/\lambda,\,\log\frac1{1-u}\}$
(second term if $u<1$). The bound is nonincreasing in $\lambda$; for
every family the attaining $q$ equals $(\tlo/\lambda)P_n$ on $E$ and
$((1-\tlo)/(1-\lambda))P_n$ off $E$.
\textup{(ii)}~On a family singular at horizon $n$, with
$0<\pi_{\mathcal J}<1$ and $\eta>0$, the Bayes mixture
$q^{\mathcal J,\eta}$ with weights $\pi_k2^{-\eta}$ on $\mathcal J$ and
$\pi_k(1-\pi_{\mathcal J}2^{-\eta})/(1-\pi_{\mathcal J})$ off $\mathcal J$
has $\psi_n=\db{\pi_{\mathcal J}}{\pi_{\mathcal J}2^{-\eta}}$ and
$\Delta_k=\eta$ on $\mathcal J$, so $s^*$ is attained there; always
$s^*<\psi/c(\eta)$ for $\psi>0$; and every $q$ has
$R_n(q;\mu_k)\ge\log(1/\thi(\pi_k,\psi))$.
\textup{(iii)}~For every $E\subseteq\Ac^n$,
$q_n(E)\le\thi(P_n(E),\psi)$, and $\nu_n(E)\ge\thi(P_n(E),\psi)$ implies
$R_{n'}(q;\nu)\ge\db{\nu_n(E)}{\thi(P_n(E),\psi)}$ for $n'\ge n$. Under
disjoint supports, regret $R$ on $\nu$ requires
$\psi\ge\log\frac1{1-2^{-R}}\ge2^{-R}/\ln2$.
\end{proposition}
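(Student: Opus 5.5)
The whole proposition rests on one reduction: data processing applied to a two-cell partition. For every $E\subseteq\Ac^n$, projecting $P_n$ and $q_n$ onto $\{E,E^c\}$ gives $\db{P_n(E)}{q_n(E)}\le\psi$. I would prove clause~(iii) first, then (ii), then (i), because (ii) uses (iii).

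\emph{Clause (iii).} The first claim is the reduction read through the shape of $t\mapsto\db\lambda t$. This function increases on $[\lambda,1]$, so $q_n(E)\le\thi(P_n(E),\psi)$. For the second claim I would chain
\[
R_{n'}(q;\nu)\ge R_n(q;\nu)\ge\dl{\nu_n}{q_n}\ge\db{\nu_n(E)}{q_n(E)}.
\]
The first step is monotonicity of regret in the horizon. The second holds because $q_n$ has mass at most one, so $R_n(q;\nu)$ bounds the divergence from above. The third is data processing on $\{E,E^c\}$. Since $q_n(E)\le\thi\le\nu_n(E)$ and $\db{p}{\cdot}$ decreases below $p$, the last term is at least $\db{\nu_n(E)}{\thi}$. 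The disjoint-support statement is a rearrangement of Proposition~\ref{prop:pretrain-certify}(iii): from $2^{-\psi}\le1-2^{-R}$ we get $\psi\ge\log\frac1{1-2^{-R}}$, and $-\ln(1-x)\ge x$ gives the bound $2^{-R}/\ln2$.

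\emph{Clause (ii).} On a family singular at horizon $n$, both $P_n$ and $q^{\mathcal J,\eta}_n$ are mixtures over the disjoint supports $A_k$. Hence $\psi_n$ equals the divergence between the two weight vectors. The weight ratio is $2^{\eta}$ on $\mathcal J$ and constant off $\mathcal J$, so this divergence collapses to $\db{\pi_{\mathcal J}}{\pi_{\mathcal J}2^{-\eta}}$. On each $A_k$ with $k\in\mathcal J$ the two predictors differ by the factor $2^{-\eta}$, which gives $\Delta_k=\eta$. This shows that $s^*$ is attained.

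For the strict inequality $s^*<\psi/c(\eta)$ I would show $\db{s}{s2^{-\eta}}>s\,c(\eta)$ for $s\in(0,1)$. Expand
\[
\db{s}{s2^{-\eta}}=s\eta-(1-s)\log\Bigl(1+\tfrac{s(1-2^{-\eta})}{1-s}\Bigr),
\]
and apply the strict inequality $\ln(1+y)<y$ for $y>0$; the right-hand side then exceeds $s\eta-s(1-2^{-\eta})\log e=s\,c(\eta)$. The endpoint $s=1$ has to be checked separately: there the left side is $\eta>c(\eta)$. Finally, the last claim follows from (iii) with $E=A_k$ and $P_n(A_k)=\pi_k$: we have $R_n(q;\mu_k)\ge\log\frac1{q_n(A_k)}\ge\log\frac1{\thi(\pi_k,\psi)}$.

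\emph{Clause (i).} The first term of the minimum comes from
\[
\db\lambda t\ge\lambda\log\frac\lambda t-(1-\lambda)\log\frac1{1-\lambda}.
\]
This holds because $(1-\lambda)\log\frac{1-\lambda}{1-t}\ge(1-\lambda)\log(1-\lambda)$ when $1-t\le1$. At $t=\tlo$ the left side is at most $\psi$; dividing by $\lambda$ gives the bound. For the second term I would use the one-sided Bernoulli bound $\db\lambda t\ge(\lambda-t)^2/(2\lambda\ln2)$ for $t\le\lambda$. Applied at $t=\tlo$, it gives $\lambda-\tlo\le\lambda u$, hence $\log\frac{\lambda}{\tlo}\le\log\frac1{1-u}$.

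Attainment is a direct computation. The stated $q$ has constant density ratio to $P_n$ on $E$ and constant ratio off $E$. So $\psi_n(q)=\db{\lambda}{\tlo}=\psi$, using continuity of $\db\lambda\cdot$. For $\nu_n=P_n(\cdot\mid E)$ the regret difference is exactly $\log(\lambda/\tlo)$.

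I expect the main obstacle to be monotonicity of $\lambda\mapsto\log(\lambda/\tlo(\lambda,\psi))$. My plan is to substitute $t=\lambda\theta$ and show that $\lambda\mapsto\db{\lambda}{\lambda\theta}$ is nondecreasing for fixed $\theta<1$. I would attempt this by differentiating: the derivative is $\log\frac1\theta-\log\frac{1-\lambda}{1-\lambda\theta}+\frac{1-\theta}{1-\lambda\theta}\log e$, and I would try to show it is nonnegative using $\ln x\le x-1$. Once this holds, $\db{\lambda}{\lambda\theta}\le\psi$ defines a shrinking range of admissible $\theta$ as $\lambda$ grows. Then $\tlo/\lambda$ is nondecreasing, which is the required monotonicity.
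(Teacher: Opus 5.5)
Your proof is correct once one sign is fixed, and for most of the statement it is the paper's proof. It uses the same two-cell data-processing reduction for (iii) and for the floor in (ii), and the same direct check of the attaining $q$ and of $q^{\mathcal J,\eta}$. It also uses the same two envelopes in (i). Your one-sided bound holds in the direction you need, since $d(\lambda\,\|\,t)=\int_t^\lambda\frac{\lambda-v}{v(1-v)}\,dv$ nats and $v(1-v)\le\lambda$ on $[t,\lambda]$.

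The slip is in the step you flagged as the main obstacle. Take $F(\lambda):=d(\lambda\,\|\,\lambda\theta)$. Differentiating the logarithm inside $(1-\lambda)\log\frac{1-\lambda}{1-\lambda\theta}$ gives $(1-\lambda)\bigl(\frac{\theta}{1-\lambda\theta}-\frac1{1-\lambda}\bigr)\log e=-\frac{1-\theta}{1-\lambda\theta}\log e$, with a minus sign. Putting $s:=(1-\lambda)\theta/(1-\lambda\theta)\in(0,1)$,
\[
F'(\lambda)=(s-1-\ln s)\log e\ \ge\ 0
\]
by $\ln s\le s-1$, the inequality you intended to use. With your sign both terms would be positive and that inequality would be superfluous, which is a sign of the error.

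Where you depart from the paper, your route is the more elementary one.
\begin{itemize}
\item \emph{Monotonicity.} The paper writes $t^{-}=\lambda e^{-\varkappa}$, computes the same derivative, and applies the implicit function theorem. That also needs the derivative in $\varkappa$ to be positive. Your nesting of the sets $\{\theta\in(0,1]:d(\lambda\,\|\,\lambda\theta)\le\psi\}$ as $\lambda$ grows needs only $F'\ge0$ on $(0,1)$ and continuity at $\lambda=1$.
\item \emph{Strict bound $s^*<\psi/c(\eta)$.} Write $f(z):=d(z\,\|\,z2^{-\eta})$. The paper gets $f(z)\ge z\,c(\eta)$ from convexity and rules out equality by real-analyticity of $f$. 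Your estimate via $\ln(1+y)<y$ gives $f(s)>s\,c(\eta)$ on $(0,1]$ outright, with $s=1$ checked by hand. Then $\psi\ge f(s^*)>s^*c(\eta)$ whenever $s^*>0$.
\end{itemize}

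Two points should be made explicit.
\begin{itemize}
\item ``$s^*$ is attained'' needs $s^*(\eta,f(\pi_{\mathcal J}))=\pi_{\mathcal J}$, that is, $f$ strictly increasing. This holds because $f$ is convex, zero at $0$ and positive on $(0,1]$.
\item You import the disjoint-support clause from Proposition~\ref{prop:pretrain-certify}(iii); the paper proves the two jointly. It also follows from your own reduction with $E=\mathrm{supp}\,\nu_n$, since $P_n(E)=0$ and $t^{+}(0,\psi)=1-2^{-\psi}$.
\end{itemize}
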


\begin{proof}[Proof of Propositions~\ref{prop:pretrain-certify} and~\ref{thm:training-inherit}]\label{app:proof-prop:pretrain-certify}
Sums over strings run over $\Ac^n$.
\emph{(i). Grouping.} For $\lambda=1$ the claim is the identity
$R_n(q;P)-R_n(P;P)=\psi=\log(1/\tlo(1,\psi))$. Let $\lambda<1$. On
$\mathrm{supp}\,P_n$ put $h:=\lambda\nu_n/P_n\in[0,1]$,
$a:=\sum_xq(x)h(x)$, $b:=\sum_xq(x)(1-h(x))$ (positive, else
$\psi=\infty$; $a+b\le1$), $Q_1:=qh/a$, $Q_2:=q(1-h)/b$ and
$\nu':=P_n(1-h)/(1-\lambda)$. The joint laws $\Pi(1,x):=\lambda\nu_n$,
$\Pi(2,x):=P_n(1-h)$ and $\hat Q$ ($P_n$ replaced by $q$) share the
conditional of the index, so $\dl{\Pi}{\hat Q}=\psi$; grouping by the
index,
$\psi=\lambda\log\frac\lambda a+\lambda\dl{\nu_n}{Q_1}
+(1-\lambda)\log\frac{1-\lambda}b+(1-\lambda)\dl{\nu'}{Q_2}$.
As $\dl{\nu_n}{Q_1}=\E_\nu\log(P_n/q_n)+\log(a/\lambda)$,
$\Delta:=R_n(q;\nu)-R_n(P;\nu)=\log(\lambda/a)+\dl{\nu_n}{Q_1}$. If
$\Delta>0$, $y:=\lambda2^{-\Delta}$ has $a\ge y$, $b\le1-y$ and
$\psi\ge\lambda\Delta+(1-\lambda)\log\frac{1-\lambda}{1-y}=\db\lambda y$,
so $y\ge\tlo(\lambda,\psi)$.

\emph{Envelopes.} For $t:=\tlo<\lambda$ ($\db\lambda t=\psi$), dropping
$(1-\lambda)\log\frac{1-\lambda}{1-t}\ge-(1-\lambda)\log\frac1{1-\lambda}$
gives the first envelope; $\db\lambda t=\int_t^\lambda\frac{\lambda-v}{v(1-v)}dv\ge\frac{(\lambda-t)^2}{2\lambda}$
nats gives $1-t/\lambda\le u$; expanding $\db\lambda t$ and
$\log(\lambda/t)$ to second order in $\lambda-t$ gives the asymptotic.
\emph{Attainment.} The attaining $q$ of
Proposition~\ref{thm:training-inherit}(i) has $\Delta=\log(\lambda/t)$ and
$\psi_n(q)=\db\lambda t$.

\emph{Monotonicity.} $B(\lambda):=\log(\lambda/\tlo)$ vanishes for
$\psi=0$. For $\psi>0$, $\varkappa:=B\ln2$ makes
$\tlo=\lambda e^{-\varkappa}$ solve
$g(\lambda,\varkappa):=\lambda\varkappa+(1-\lambda)\ln\frac{1-\lambda}{1-\lambda e^{-\varkappa}}=\psi\ln2$
(nats). For $\lambda\in(0,1)$ and $\varkappa>0$, with
$s:=(1-\lambda)e^{-\varkappa}/(1-\lambda e^{-\varkappa})\in(0,1)$,
$\partial_\lambda g=s-1-\ln s>0$ and $\partial_\varkappa g=\lambda(1-s)>0$. As
$g(\lambda,0)=0$ and $g(\lambda,\varkappa)\to\infty$ as $\varkappa\to\infty$,
$g(\lambda,\cdot)$ increases strictly from $0$ to $\infty$, so the root
$\varkappa(\lambda)$ is unique and, by the implicit function theorem, $C^1$ with
$\varkappa'(\lambda)=-\partial_\lambda g/\partial_\varkappa g<0$; continuity covers
$\lambda=1$.

\emph{(ii).} Let $\mathcal J:=\{k:\Delta_k(q)\ge\eta\}$, $\pi_{\mathcal J}>0$.
If $\pi_{\mathcal J}=1$, $\psi=\sum_k\pi_k\Delta_k\ge\eta$ and $s^*=1$.
Otherwise $\nu:=\sum_{k\in\mathcal J}\pi_k\mu_k/\pi_{\mathcal J}$ has
$\nu_n\le P_n/\pi_{\mathcal J}$ and $\E_\nu\log(P_n/q_n)\ge\eta$, so the
proof of (i) gives $\psi\ge f(\pi_{\mathcal J})$ for
$f(z):=\db{z}{z2^{-\eta}}$, convex (joint convexity), zero at $0$ and
positive on $(0,1]$; hence $\pi_{\mathcal J}\le s^*$, and
$f(z)\ge zf'(0^+)=z\,c(\eta)$ gives $s^*\le\psi/c(\eta)$.
\emph{Sharpness.} On $A_k$, $P_n=\pi_k\mu_{k,n}$ and
$q^{\mathcal J,\eta}_n=w_k\mu_{k,n}$ (weights summing to one), so
$R_n(q^{\mathcal J,\eta};\mu_k)=\log(1/w_k)$ and
$\psi_n=\sum_k\pi_k\log(\pi_k/w_k)=f(\pi_{\mathcal J})$. If
$f(z_0)=z_0c(\eta)$ with $z_0>0$, $f$ would be linear on $[0,z_0]$, hence,
being real-analytic on $[0,1)$, linear on $[0,1)$, and by continuity
$f(1)=c(\eta)$, contradicting $f(1)=\eta>c(\eta)$.
\emph{Floor.} Data processing on $\{A_k,A_k^c\}$ gives
$\db{\pi_k}{q_n(A_k)}\le\psi$, so
$R_n(q;\mu_k)\ge\log(1/q_n(A_k))\ge\log(1/\thi(\pi_k,\psi))$.

\emph{(iii).} Data processing on $\{E,E^c\}$ gives
$\db{P_n(E)}{q_n(E)}\le\psi$ and $R_n(q;\nu)\ge\db{\nu_n(E)}{q_n(E)}$, with
$q_n(E)\le\thi\le\nu_n(E)$ and $\db{\nu_n(E)}\cdot$ decreasing on
$[0,\nu_n(E)]$; regret is nondecreasing in $n$. For disjoint supports and
$E:=\mathrm{supp}\,\nu_n$, $R_n(q;\nu)\ge\log(1/q_n(E))$,
$\psi\ge\log(1/q_n(E^c))$ and $q_n(E)+q_n(E^c)\le1$; equality forces
$q_n|_E=q_n(E)\nu_n$, $q_n|_{E^c}=q_n(E^c)P_n$ and full mass. Finally
$1-2^{-\psi}\le\psi\ln2$ and $-\log(1-x)\ge x/\ln2$.
\end{proof}

\begin{corollary}[Lower pretraining loss along a sequence]
\label{cor:training-certify}
Let $\psi_j:=\psi_n(q^{(j)})\to0$.
\textup{(i)}~$R_n(q^{(j)};\mu_k)\to R_n(P;\mu_k)$ for each $k$; if
$\pi_{\min}:=\inf_k\pi_k>0$, then
$\sup_k\Delta_k(q^{(j)})\le\log(\pi_{\min}/\tlo(\pi_{\min},\psi_j))\to0$;
if the family is singular at horizon $n$ and $\inf_k\pi_k=0$, then for all
$\psi,\eta>0$ some $k$ and $q$ with $\psi_n(q)\le\psi$ have
$\Delta_k(q)=\eta$.
\textup{(ii)}~On a family singular at horizon $n$, $q^{\mathcal J,\eta}$
has larger pretraining loss than $P$ and smaller regret on every task
outside $\mathcal J$.
\textup{(iii)}~If $\nu_n\perp P_n$, then
$R_n(q^{(j)};\nu)\ge\log\frac1{1-2^{-\psi_j}}\to\infty$.
\end{corollary}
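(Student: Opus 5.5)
Each clause is a direct read-off from Proposition~\ref{prop:pretrain-certify} and Proposition~\ref{thm:training-inherit} along the sequence $\psi_j\to0$.

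\emph{Clause (i).} Fix $k$ and apply Proposition~\ref{prop:pretrain-certify}(i) with $\nu=\mu_k$ and $\lambda=\pi_k$, which is legitimate since $\mu_{k,n}\le P_n/\pi_k$. This gives the upper bound $\Delta_k(q^{(j)})\le\log(\pi_k/\tlo(\pi_k,\psi_j))$.

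This bound tends to $0$ as $\psi_j\to0$. The reason is that $\db{\pi_k}{t}$ is continuous and strictly decreasing on $(0,\pi_k]$ and vanishes only at $t=\pi_k$, so $\tlo(\pi_k,\psi)\uparrow\pi_k$ as $\psi\downarrow 0$.

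For a matching lower bound, I would use data processing. For finite $n$ the map $q_n\mapsto R_n(q;\mu_k)$ is lower semicontinuous, but the simplest route is again Proposition~\ref{prop:pretrain-certify}(i), now combined with the convexity identity (the ``compensation'' step): $\sum_k\pi_k\Delta_k(q)=\psi_n(q)\ge 0$. I expect this step to need care, because $\Delta_k$ can be negative. My plan is to bound each negative part by $\psi_j/\pi_k$ plus the positive parts of the other terms, which all tend to $0$ after truncating the sum to finitely many $k$ with a tail of small prior mass. If that turns out to be messy, I would fall back on Pinsker: $\TV(P_n,q_n^{(j)})\to0$ on the finite set $\Ac^n$, so $q^{(j)}_n\to P_n$ pointwise, and lower semicontinuity of relative entropy gives $\liminf R_n(q^{(j)};\mu_k)\ge R_n(P;\mu_k)$.

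The uniform statement needs only one extra fact, that the bound is nonincreasing in $\lambda$ (Proposition~\ref{thm:training-inherit}(i)). Since $\pi_k\ge\pi_{\min}$, every task is bounded by the certificate at $\lambda=\pi_{\min}$, and that certificate tends to $0$ as before.

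For the case $\inf_k\pi_k=0$ on a singular family, I would take the construction $q^{\mathcal J,\eta}$ of Proposition~\ref{thm:training-inherit}(ii) with $\mathcal J=\{k\}$. It has $\Delta_k=\eta$ and $\psi_n=\db{\pi_k}{\pi_k2^{-\eta}}\le\pi_k\eta$ (the last inequality from dropping the negative second term of the binary divergence). Choosing $k$ with $\pi_k\eta\le\psi$ finishes the clause.

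\emph{Clause (ii).} Since $P$ is the unique minimiser of pretraining loss and $q^{\mathcal J,\eta}\ne P$ on $\Ac^n$, its pretraining loss is strictly larger than $P$'s. Off $\mathcal J$ its weights are $\pi_k(1-\pi_{\mathcal J}2^{-\eta})/(1-\pi_{\mathcal J})>\pi_k$, and on disjoint supports the regret is $\log(1/w_k)$. That regret is therefore strictly smaller than $\log(1/\pi_k)=R_n(P;\mu_k)$.

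\emph{Clause (iii).} This is Proposition~\ref{prop:pretrain-certify}(iii) applied with $\psi=\psi_j$. Here $\nu_n\perp P_n$ on a finite set means disjoint supports, and $\log\frac1{1-2^{-\psi_j}}\to\infty$ as $\psi_j\to0$.
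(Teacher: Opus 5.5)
Your proposal is correct, and apart from the pointwise limit in (i) it is the paper's proof. The shared parts are:
\begin{itemize}
\item the uniform bound, through monotonicity of the certificate in $\lambda$;
\item the construction $q^{\{k\},\eta}$ with $\pi_k\le\psi/\eta$ (the paper gets $d(\pi_k\|\pi_k2^{-\eta})\le\pi_k\eta$ from convexity of $s\mapsto d(s\|s2^{-\eta})$, where you drop the negative term);
\item clause (ii), from $\psi_n(q^{\mathcal J,\eta})>0$ and the enlarged weights off $\mathcal J$;
\item clause (iii), from the disjoint-support Kraft floor.
\end{itemize}

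For the pointwise limit the paper uses only your fallback, plus one step you leave out. Since $q^{(j)}_n$ may be a subprobability, it first notes $\psi_j\ge\log(1/q^{(j)}_n(\Ac^n))$. Hence the mass tends to $1$, and Pinsker applies to the normalised law. Because $\mu_{k,n}\le P_n/\pi_k$, every string charged by $\mu_{k,n}$ has $P_n>0$. The finitely many terms of $\dl{\mu_{k,n}}{q^{(j)}_n}$ therefore simply converge. Neither the certificate upper bound nor lower semicontinuity is needed.

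Your first plan for the lower bound, through $\sum_k\pi_k\Delta_k=\psi_j$, would not close as sketched when $J$ is infinite. The per-task certificates give only $\pi_{k'}\Delta^+_{k'}\le\psi_j+(1-\pi_{k'})\log\frac1{1-\pi_{k'}}$. Summing these over an infinite tail gives no small bound, however small the tail's prior mass. You would have to apply Proposition~\ref{prop:pretrain-certify}(i) once to the normalised mixture of the tail tasks with $\Delta_{k'}>0$. That bounds their total weighted contribution by $\psi_j+(1-\lambda)\log\frac1{1-\lambda}$, where $\lambda$ is their prior mass. The Pinsker route avoids this step.
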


\begin{proof}[Proof of Corollary~\ref{cor:training-certify}]
(i) $\psi_j\ge\log(1/q^{(j)}_n(\Ac^n))$, so $q^{(j)}_n(\Ac^n)\to1$ and, by
Pinsker, $q^{(j)}_n\to P_n$ pointwise; the finitely many terms of
$\dl{\mu_{k,n}}{q^{(j)}_n}$ converge since $\mu_{k,n}\le P_n/\pi_k$. The
uniform bound is Proposition~\ref{prop:pretrain-certify}(i), monotone in
$\lambda$. For the last claim, $f(s)\le s\eta$ by convexity; take
$\pi_k\le\psi/\eta$ and $q=q^{\{k\},\eta}$.
(ii) $\psi_n(q^{\mathcal J,\eta})>0$, and the weights off $\mathcal J$
exceed $\pi_k$. (iii) Proposition~\ref{thm:training-inherit}(iii).
\end{proof}

\begin{proposition}[Excess-loss ledger]
\label{prop:training-ledger}
Set $\pi(k\mid x):=\pi_k\mu_k(x)/P(x)$ if $P_n(x)>0$ and $0$ otherwise,
$v_k(q):=\sum_{x\in\Ac^n}q(x)\pi(k\mid x)$ and
$Q_k:=q\,\pi(k\mid\cdot)/v_k(q)$. If $\psi_n(q)<\infty$, every $v_k(q)>0$,
$\sum_kv_k(q)=q_n(\mathrm{supp}\,P_n)$,
$\psi_n(q)=\sum_k\pi_k\log\frac{\pi_k}{v_k(q)}+\sum_k\pi_k\dl{\mu_{k,n}}{Q_k}$
and $R_n(q;\mu_k)\le\log\frac1{v_k(q)}+\dl{\mu_{k,n}}{Q_k}$, with equality
if $\mu_{k,n}$ and $\sum_{j\ne k}\pi_j\mu_{j,n}$ have disjoint supports.
The identity holds for any convention off $\mathrm{supp}\,P_n$; only the
split depends on it.
\end{proposition}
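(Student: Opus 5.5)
The plan is to treat the statement as a chain rule for relative entropy over the joint law of (task index, window string), decomposed by the posterior. Write $P_n(x)=\sum_k\pi_k\mu_{k,n}(x)$, so on $\mathrm{supp}\,P_n$ the weights $\pi(k\mid x)$ sum to one. Then $v_k(q)=\sum_x q(x)\pi(k\mid x)$ gives $\sum_k v_k(q)=\sum_{x\in\mathrm{supp}P_n}q(x)=q_n(\mathrm{supp}\,P_n)$.

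Positivity of each $v_k$ comes next. Since $\psi_n(q)<\infty$, $q$ is positive on $\mathrm{supp}\,P_n$. Each $\pi(k\mid\cdot)$ is positive somewhere on that support, because $\pi_k>0$ and $\mu_{k,n}$ has mass one. Hence $v_k>0$, and $Q_k$ is a well-defined probability on $\Ac^n$.

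For the identity, I will introduce two joint laws on pairs $(k,x)$:
\[
\Pi(k,x):=\pi_k\mu_{k,n}(x)=P_n(x)\pi(k\mid x),\qquad \hat Q(k,x):=q(x)\pi(k\mid x).
\]
Both share the conditional of $k$ given $x$ on $\mathrm{supp}\,P_n$. Off the support, $\Pi$ vanishes, so whatever convention is used there contributes nothing to $\dl{\Pi}{\hat Q}$. Grouping by $x$ first, the common conditional cancels and $\dl{\Pi}{\hat Q}=\dl{P_n}{q_n}=\psi_n(q)$.

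Grouping by $k$ instead, $\hat Q$ has $k$-marginal $v_k$ and conditional $Q_k$, while $\Pi$ has marginal $\pi_k$ and conditional $\mu_{k,n}$. The chain rule, applied to the subprobability $\hat Q$ as in the grouping step of the proof of Proposition~\ref{prop:pretrain-certify}(i), gives
\[
\psi_n(q)=\sum_k\pi_k\log\frac{\pi_k}{v_k}+\sum_k\pi_k\dl{\mu_{k,n}}{Q_k}.
\]
The convention off the support enters only through how mass there is attributed to the $v_k$ and $Q_k$, not through the total. This is exactly why the split, but not the identity, depends on it.

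For the regret bound, I use $q(x)\ge q(x)\pi(k\mid x)=v_kQ_k(x)$ pointwise, since $\pi(k\mid x)\le1$. Therefore
\[
R_n(q;\mu_k)=\E_{\mu_k}\log\frac{\mu_{k,n}}{q}\le\E_{\mu_k}\log\frac{\mu_{k,n}}{v_kQ_k}=\log\frac1{v_k}+\dl{\mu_{k,n}}{Q_k}.
\]
Equality holds iff $\pi(k\mid x)=1$ $\mu_k$-a.s. That is implied when $\mu_{k,n}$ and $\sum_{j\ne k}\pi_j\mu_{j,n}$ have disjoint supports, because then the posterior is one on $\mathrm{supp}\,\mu_{k,n}$.

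The main obstacle is careful bookkeeping at points where $q$ or $\pi(k\mid x)$ vanishes, especially checking that $\mu_{k,n}\ll Q_k$ whenever the right side is finite. Here $\mu_{k,n}(x)>0$ forces $P_n(x)>0$, hence $q(x)>0$ and $\pi(k\mid x)>0$. So $Q_k(x)>0$ and all terms are well defined.
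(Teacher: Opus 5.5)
Your proposal is correct and follows the paper's proof. The joint laws $\Pi(k,x)=\pi_k\mu_{k,n}(x)$ and $\hat Q(k,x)=q(x)\pi(k\mid x)$ share the posterior on $\mathrm{supp}\,P_n$, so $\dl{\Pi}{\hat Q}=\psi_n(q)$; regrouping by $k$ gives the identity, and $q\ge v_k(q)Q_k$ gives the regret bound, with equality when $\pi(k\mid x)=1$ on $\mathrm{supp}\,\mu_{k,n}$.

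The regrouping is the one step the paper justifies and you only cite. For countably infinite $J$ the terms $\pi_k\log(\pi_k/v_k(q))$ can be negative, and the two-group step of Proposition~\ref{prop:pretrain-certify}(i) does not cover that case. The paper therefore adds $(b-a)\log e$ to each term $a\log(a/b)$ so that every term is nonnegative, regroups, and subtracts the additions again, which sum to the finite $(q_n(\mathrm{supp}\,P_n)-1)\log e$.
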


\begin{proof}[Proof of Proposition~\ref{prop:training-ledger}]
$\Pi(k,x):=\pi_k\mu_k(x)$ and $\hat Q(k,x):=q(x)\pi(k\mid x)$ share the
conditional $\pi(k\mid x)$ on $\mathrm{supp}\,P_n$, so
$\dl{\Pi}{\hat Q}=\psi_n(q)$; group by $k$, adding $(b-a)\log e$ to each
term $a\log(a/b)$ so that every term is nonnegative and the series may be
regrouped, then subtracting the additions again (they sum to
$(q_n(\mathrm{supp}\,P_n)-1)\log e$). If $v_k(q)=0$, $q=0$ on
$\mathrm{supp}\,\mu_{k,n}$ and $\psi=\infty$. The inequality uses
$q\ge v_k(q)Q_k$; under the support condition $\pi(k\mid x)=1$ on
$\mathrm{supp}\,\mu_{k,n}$.
\end{proof}

\begin{corollary}[Data mixtures]
\label{cor:training-mixture}
Let $(\mu_k)_{k\in J}$ be singular at horizon $n$ and $P_w:=\sum_kw_k\mu_k$
with $w_k>0$, $\sum_kw_k=1$.
\textup{(i)}~$R_n(P_w;\mu_k)=\log(1/w_k)$, the Kraft sum is $1$, and
$\psi_n(P_w)=\dl\pi w$ against a deployment mixture $\pi$; for overlapping
sources the first and last hold only with $\le$.
\textup{(ii)}~Raising $w_i$ to $\tilde w_i$ and rescaling the others by
$(1-\tilde w_i)/(1-w_i)$ changes regret by $-\log(\tilde w_i/w_i)$ on
source $i$ and by $\log\frac{1-w_i}{1-\tilde w_i}$ on each other source;
doubling $w_i<\frac12$ buys one bit for
$\log\frac{1-w_i}{1-2w_i}\approx w_i\log e$ bits on each other source.
\textup{(iii)}~For finite $J$, $\min_q\max_kR_n(q;\mu_k)=\log|J|$, attained
among mixtures only by uniform $w$.
\textup{(iv)}~For $m$ deterministic documents, distinct at horizon $n$,
with multiplicities $d_i$ and $d:=\sum_id_i$, the corpus-optimal predictor
has loss $\log(d/d_i)$ on document $i$ and infinite loss on every other
length-$n$ string; $\ell$ more copies of document $i$ lower its loss by
$\log\frac{(d_i+\ell)d}{d_i(d+\ell)}$ and raise each other document's by
$\log\frac{d+\ell}d$; deduplication is the minimax point of (iii).
\end{corollary}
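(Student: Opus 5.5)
The plan is to reduce every clause to the disjoint-support structure at horizon $n$. Write $A_k:=\mathrm{supp}\,\mu_{k,n}$; these sets are pairwise disjoint. On $A_k$ we have $P_w=w_k\mu_{k,n}$.

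\emph{Clause (i).} For $x\in A_k$ we get $\mu_k(x)/P_w(x)=1/w_k$, so $R_n(P_w;\mu_k)=\log(1/w_k)$ exactly. Hence $\sum_k2^{-R_n(P_w;\mu_k)}=\sum_kw_k=1$. For the deployment mixture $P_\pi=\sum_k\pi_k\mu_k$, compute
\[
\psi_n(P_w)=\sum_k\sum_{x\in A_k}\pi_k\mu_k(x)\log\frac{\pi_k\mu_k(x)}{w_k\mu_k(x)}=\dl{\pi}{w}.
\]
For overlapping sources, $P_w\ge w_k\mu_k$ gives $R_n\le\log(1/w_k)$. Joint convexity of relative entropy applied to the pairs $(\pi_k\mu_k,w_k\mu_k)$, or equivalently data processing from the joint law on $(k,x)$ down to $x$, gives $\psi_n\le\dl{\pi}{w}$. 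The Kraft-sum equality is the clause not claimed in the overlapping case.

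\emph{Clause (ii).} The new weights are $\tilde w_i$ on source $i$ and $w_k(1-\tilde w_i)/(1-w_i)$ on each $k\ne i$. Substituting these into the exact formula $\log(1/w_k)$ from (i) gives the two stated regret changes. The doubling case sets $\tilde w_i=2w_i$. The approximation follows from $\log\frac{1-w}{1-2w}=\log e\,(w+O(w^2))$.

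\emph{Clause (iii).} Any predictor $q$ has $\sum_k2^{-R_n(q;\mu_k)}\le1$ by the finite-horizon Kraft law. This is Corollary~\ref{thm:profile}, or directly data processing onto the partition $\{A_k\}$ with $R_n(q;\mu_k)\ge\log(1/q_n(A_k))$. It follows that $\max_kR_n\ge\log|J|$. The uniform $w$ attains this bound by (i). Among mixtures, $\max_k\log(1/w_k)=\log|J|$ forces every $w_k\ge1/|J|$, and since the $w_k$ sum to one this means $w$ is uniform.

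\emph{Clause (iv).} Take each document as a point mass $\delta_i$ with weight $d_i/d$. By Theorem~\ref{thm:closure}(i) the corpus-optimal predictor equals this empirical law on cylinders, so its loss on document $i$ is $\log(d/d_i)$, and every other length-$n$ string has probability zero, hence infinite loss. Adding $\ell$ copies of document $i$ replaces $d_i$ by $d_i+\ell$ and $d$ by $d+\ell$; the stated differences of logarithms follow by direct computation. Deduplication sets every $d_i=1$, which gives the uniform weights of (iii).

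\emph{Main obstacle.} The overlapping case of (i) is the one place needing care: one must state the joint-convexity step cleanly and confirm that the Kraft-sum equality is genuinely not asserted there. Everything else is direct substitution into $\log(1/w_k)$.
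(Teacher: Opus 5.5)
Your proof is correct and follows the paper's route: (i) comes from $P_w=w_k\mu_{k,n}$ on the disjoint supports $A_k$, with domination and joint convexity for the overlapping case; (ii) and (iv) are direct substitution into $\log(1/w_k)$; and (iii) sums $2^{-R_n(q;\mu_k)}\le q_n(A_k)$ over the disjoint $A_k$ using Fact~\ref{lem:antichain}. One small correction: Corollary~\ref{thm:profile} is stated only for deterministic environments, so for general $\mu_k$ it is your direct log-sum argument onto the partition $\{A_k\}$, not that citation, that proves (iii), and this is exactly the step the paper uses.
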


\begin{proof}[Proof of Corollary~\ref{cor:training-mixture}]
(i) On $A_k$, $P_w=w_k\mu_k$; for overlapping sources use $P_w\ge w_k\mu_k$
and joint convexity. (ii), (iv) Substitute into (i). (iii) By
Fact~\ref{lem:antichain} on the disjoint $A_k\subseteq\Ac^n$,
$\sum_k2^{-R_n(q;\mu_k)}\le\sum_kq_n(A_k)\le1$, so the maximum is at least
$\log|J|$, with equality iff all regrets equal $\log|J|$.
\end{proof}

\begin{remark}[Post-training at a fixed prompt; after~\cite{korbak2022rl,rafailov2023dpo}]
\label{rem:posttrain}
\label{thm:training-posttrain}
Let $p$ be the pretrained law on $\Ac^n$ given a fixed prompt, $r$ a
reward, $\beta>0$ and $r':=r/(\beta\ln2)$. The maximiser of
$\E_{q'}r-\beta\,\mathrm{KL}_{\mathrm{nats}}(q'\|p)$ is $q:=p\,2^{r'}/Z$,
$Z:=\E_p2^{r'}$ (Donsker--Varadhan). Write
$b_{\mathrm{KL}}:=\dl qp$, $\tilde b_{\mathrm{KL}}:=\dl pq$ and
$\theta(\nu):=R_n(q;\nu)-R_n(p;\nu)$ for $\nu\ll p$.
\textup{(a)}~$\theta(\nu)=\log Z-\E_\nu r'=\E_qr'-\E_\nu r'-b_{\mathrm{KL}}
=\tilde b_{\mathrm{KL}}+\E_pr'-\E_\nu r'$ and
$b_{\mathrm{KL}}+\tilde b_{\mathrm{KL}}=\E_qr'-\E_pr'$.
\textup{(b)}~With the R\'enyi divergence of order $\infty$,
$D_\infty(p\|q):=\log\max_{\mathrm{supp}\,p}(p/q)$,
$\sup_\nu\theta(\nu)=D_\infty(p\|q)=\log Z-\min_{\mathrm{supp}\,p}r'$
(at a point mass), $\inf_\nu\theta(\nu)=-D_\infty(q\|p)$; if $r'$ has range $\Delta_r$,
$-\Delta_r\le\theta(\nu)\le\Delta_r-b_{\mathrm{KL}}$.
\textup{(c)}~For $b_0>0$, $0<p(A)\le1-2^{-b_0}$ and $L>0$, the reward
$r'=-L\mathbf 1_A$ has $b_{\mathrm{KL}}\le b_0$ but
$\tilde b_{\mathrm{KL}}\ge L\,p(A)-b_0$ and $\theta(\nu)\ge L\,\nu(A)-b_0$.
\textup{(d)}~Let $p=\sum_kw_k\mu_{k,n}$, singular at horizon $n$,
all $w_k>0$; put $Z_k:=\E_{\mu_k}2^{r'}$, $w'_k:=w_kZ_k/Z$,
$g_k:=\log Z_k-\E_{\mu_k}r'\ge0$. Then $R_n(q;\mu_k)=\log\frac1{w'_k}+g_k$,
$\tilde b_{\mathrm{KL}}=\dl w{w'}+\sum_kw_kg_k$ and
$\sum_k2^{-R_n(q;\mu_k)}=1-\sum_kw'_k(1-2^{-g_k})$. So $q$ attains the
finite-horizon Kraft bound with equality iff $r$ is $\mu_{k,n}$-a.s.\
constant for every $k$, and then $q=\sum_kw'_k\mu_{k,n}$; for large
$\beta$, $g_k=\mathrm{Var}_{\mu_k}(r)/(2\beta^2\ln2)+O(\beta^{-3})$.
\textup{(e)}~\textup{(Coherence across prompts.)} If one reward $r'$ on
$\Ac^n$ (complete strings) and one $\beta$ are used at every prompt $x$,
the optimiser over continuations of $x$, with reference $p(\cdot\mid x)$,
is $Q(\cdot\mid x)$ for the joint tilt $Q:=p\,2^{r'}/Z$, so incoherence
needs a reward $r$ or coefficient $\beta$ depending on the prompt/response boundary. For two uniform
bits $yz$ and one $\beta=1$, the response-only reward with
$s(yz)=\mathbf 1\{y=z\}$ (empty prompt) and $s(z)=\mathbf 1\{z=1\}$
(prompt $y$) gives
$q(z=0\mid y=0)=e/(1+e)$ under the first tilt and $1/(1+e)$ under the
second.
\end{remark}

\begin{proof}[Proof of Remark~\ref{rem:posttrain}]\label{app:proof-rem:posttrain}
(a) On $\mathrm{supp}\,p$, $\log(p/q)=\log Z-r'$; take expectations under
$\nu$, $q$, $p$ and eliminate $\log Z$.
(b) $\theta(\nu)$ averages $\log Z-r'$ over $\nu$, and
$\min r'\le\log Z\le\max r'-b_{\mathrm{KL}}$ by (a).
(c) $Z=1-p(A)+p(A)2^{-L}\ge2^{-b_0}$, so
$b_{\mathrm{KL}}=\log\frac1Z-L\,q(A)\le b_0$,
$\tilde b_{\mathrm{KL}}=\log Z+L\,p(A)$ and $\theta(\nu)=\log Z+L\,\nu(A)$.
(d) On $A_k$, $q=w_k\mu_k2^{r'}/Z$, giving $R_n(q;\mu_k)$; $Z=\sum_kw_kZ_k$
gives $\sum_kw'_k=1$, the Kraft sum $\sum_kw'_k2^{-g_k}$ and
$\tilde b_{\mathrm{KL}}=\sum_kw_k(\log\frac Z{Z_k}+g_k)$. The Kraft sum is
at most $\sum_kq(A_k)\le1$ (Fact~\ref{lem:antichain};
cf.\ Corollary~\ref{thm:profile}) and, as all $w'_k>0$, equals $1$ iff
every $g_k=0$, i.e.\ (strict convexity of $t\mapsto2^t$) iff $r'$ is
$\mu_{k,n}$-a.s.\ constant. The expansion is the second cumulant of
$X=r/\beta$, in bits.
(e) $Q(xy)=p(x)p(y\mid x)2^{r'(xy)}/Z$, so
$Q(y\mid x)=p(y\mid x)2^{r'(xy)}/\sum_{y'}p(y'\mid x)2^{r'(xy')}$, the
Gibbs maximiser at prompt $x$. The example follows from $q\propto p\,e^{s}$.
\end{proof}

\begin{proposition}[Maximum-likelihood refitting on generated data]
\label{prop:training-collapse}
Let $(\mu_k)_{k\in J}$, $2\le|J|<\infty$, be known and singular at horizon
$n$, $q_0:=\sum_k\pi^{(0)}_k\mu_k$ with all $\pi^{(0)}_k>0$, and let
generation $t+1$ refit the weights by maximum likelihood to $N$ fresh
sequences from $q_t$:
$\pi^{(t+1)}=(N_k/N)_k$, $(N_k)\sim\mathrm{Mult}(N,\pi^{(t)})$.
\textup{(i)}~Every $q_t$ has Kraft sum $1$.
\textup{(ii)}~$\E\pi^{(t)}_k=\pi^{(0)}_k$ and
$\E\sum_k\pi^{(t)}_k(1-\pi^{(t)}_k)=(1-\tfrac1N)^t\sum_k\pi^{(0)}_k(1-\pi^{(0)}_k)$.
\textup{(iii)}~$\Pr[R_n(q_1;\mu_k)=\infty]=(1-\pi^{(0)}_k)^N$, so
$\E R_n(q_t;\mu_k)=\infty$ for $t\ge1$.
\textup{(iv)}~Almost surely $\pi^{(t)}=e_\kappa$ from some finite $t_0$ on,
with $\Pr[\kappa=k]=\pi^{(0)}_k$; the profile is then a vertex and
$\psi_n(q_t)=\infty$ against $P:=q_0$ (as against any $P$ with all
$\pi_k>0$).
\textup{(v)}~With the Dirichlet$(\frac12)$ posterior-mean refit
$\pi^{(t+1)}_k=(N_k+\frac12)/(N+\frac{|J|}2)$, every
$R_n(q_t;\mu_k)\le\log(2N+|J|)$ for $t\ge1$, and the chain has no absorbing state.
\end{proposition}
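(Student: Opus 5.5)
The plan is to reduce every clause to the singular-at-horizon-$n$ structure: on each support $A_k$ the mixture $q_t=\sum_k\pi^{(t)}_k\mu_k$ equals $\pi^{(t)}_k\mu_{k,n}$. Hence $R_n(q_t;\mu_k)=\log(1/\pi^{(t)}_k)$ exactly, as in Corollary~\ref{cor:training-mixture}(i). Once this identity is in hand, the regret profile is a deterministic function of the weight vector, and everything else is a statement about the Wright--Fisher chain $\pi^{(t)}$.

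\emph{Clause (i)} is then immediate: the Kraft sum is $\sum_k\pi^{(t)}_k=1$, because the maximum-likelihood refit produces a probability vector.

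\emph{Clause (ii).} A sequence drawn from $q_t$ lies in $A_k$ with probability $\pi^{(t)}_k$, and by disjointness the count $N_k$ is exactly the number of draws landing in $A_k$. This is what justifies the multinomial model. Two steps follow.
\begin{enumerate}
\item[(a)] Since $\E[\pi^{(t+1)}\mid\pi^{(t)}]=\pi^{(t)}$, the chain is a martingale, and the tower property gives $\E\pi^{(t)}_k=\pi^{(0)}_k$.
\item[(b)] For the heterozygosity $h(\pi):=\sum_k\pi_k(1-\pi_k)$, the multinomial second moments $\E[N_k^2]=N\pi_k(1-\pi_k)+N^2\pi_k^2$ give $\E[h(\pi^{(t+1)})\mid\pi^{(t)}]=(1-\tfrac1N)h(\pi^{(t)})$. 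Iterating this yields the stated identity.
\end{enumerate}

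\emph{Clause (iii).} The event $R_n(q_1;\mu_k)=\infty$ is exactly $N_k=0$, which has probability $(1-\pi^{(0)}_k)^N$. This probability is positive because $|J|\ge2$ and all $\pi^{(0)}_k<1$. For $t\ge1$ the event $\pi^{(t)}_k=0$ is absorbing, so it has probability at least that of $\pi^{(1)}_k=0$. An infinite value on an event of positive probability forces $\E R_n(q_t;\mu_k)=\infty$.

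\emph{Clause (iv)} is the step I expect to be the main, though standard, obstacle, and I plan to handle it through clause (ii).
\begin{enumerate}
\item[(a)] Because $\E h(\pi^{(t)})\to0$, Fatou's lemma gives $h(\pi^{(t)})\to0$ along a subsequence almost surely.
\item[(b)] The state space is finite, $\{m/N\}^J$. Every nonvertex state reaches a vertex in one step with positive probability, for instance with probability at least $N^{-N}$. So the chain is absorbed in finite time almost surely, by a geometric-trials argument. I expect to use this finite-state argument rather than martingale convergence.
\item[(c)] Bounded martingale convergence together with (ii)(a) gives $\Pr[\kappa=k]=\E\pi^{(\infty)}_k=\pi^{(0)}_k$.
\item[(d)] At the vertex $e_\kappa$ the profile is $0$ on $\kappa$ and $\infty$ elsewhere. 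Then $\psi_n(q_t)=\dl{P_n}{\mu_{\kappa,n}}=\infty$, because $P$ charges $A_j$ for $j\ne\kappa$ while $q_t$ does not.
\end{enumerate}

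\emph{Clause (v).} Under the Dirichlet$(\tfrac12)$ refit, every weight satisfies $\pi^{(t)}_k\ge\tfrac12/(N+|J|/2)$. By the regret identity this gives the bound $R_n(q_t;\mu_k)\le\log(2N+|J|)$. Each weight is also at most $(N+\tfrac12)/(N+|J|/2)<1$. So no vertex, indeed no state, is ever reached from which the chain cannot leave: every state has full support, every count vector has positive probability, and the chain moves with positive probability from any state. It therefore has no absorbing state.
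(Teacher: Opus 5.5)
Your proposal is correct. For clauses (i)--(iii) and (v) it matches the paper's proof: the identity $R_n(q_t;\mu_k)=\log(1/\pi^{(t)}_k)$ from Corollary~\ref{cor:training-mixture}(i), the multinomial mean and variance, absorption at $0$, and the floor $1/(2N+|J|)$.

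The two routes differ in how clause (iv) obtains absorption.

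\emph{The paper's route.} It treats each $\pi^{(t)}_k$ as a bounded martingale. Orthogonal increments and the conditional variance $\pi^{(t)}_k(1-\pi^{(t)}_k)/N$ give $\E\sum_t\pi^{(t)}_k(1-\pi^{(t)}_k)\le N$. On the lattice each term with $\pi^{(t)}_k\notin\{0,1\}$ is at least $(N-1)/N^2$, so only finitely many generations have $\pi^{(t)}_k\notin\{0,1\}$. The case $N=1$ is handled separately.

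\emph{Your route.} You use finiteness of the lattice and a uniform one-step probability $N^{-N}$ of hitting a vertex, then geometric trials. The lattice applies from $t\ge1$ on; $\pi^{(0)}$ itself need not lie on it.

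\emph{What each buys.} Your argument is more elementary and covers $N=1$ without a separate case. It is quantitatively weak, however: the tail of the absorption time decays only at rate $1-N^{-N}$. The paper's argument bounds the expected number of generations with $\pi^{(t)}_k\notin\{0,1\}$ by $N^{3}/(N-1)$ per coordinate. It uses only the martingale property, the conditional variance and the lattice.

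\emph{Your step (a) is enough.} The step you set aside is sufficient on its own, and sharper than both routes. For $N\ge2$ and $t\ge1$, the lattice gives $h\ge(N-1)/N^{2}$ off the vertices. Markov's inequality and your clause (ii) then yield
\[
\Pr\bigl[\pi^{(t)}\text{ is not a vertex}\bigr]\le\frac{N^{2}}{N-1}\Bigl(1-\frac1N\Bigr)^{t}h(\pi^{(0)}).
\]
Since the vertices are absorbing, this gives $t_0<\infty$ almost surely, with geometric rate $1-\frac1N$.
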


\begin{proof}[Proof of Proposition~\ref{prop:training-collapse}]
(i) Corollary~\ref{cor:training-mixture}(i).
(ii) Given $\pi^{(t)}$, $N_k/N$ has mean $\pi^{(t)}_k$ and variance
$\pi^{(t)}_k(1-\pi^{(t)}_k)/N$.
(iii) $N_k=0$ with that probability, and $0$ is absorbing.
(iv) For the $[0,1]$-valued martingale $\pi^{(t)}_k$, orthogonal increments
give
$\E\sum_t\pi^{(t)}_k(1-\pi^{(t)}_k)=N\sum_t\E(\pi^{(t+1)}_k-\pi^{(t)}_k)^2\le N$,
so the sum is a.s.\ finite; for $N\ge2$, on the lattice
$\{0,\frac1N,\dots,1\}$ each nonzero term is at least $(N-1)/N^2$, so
$\pi^{(t)}_k\in\{0,1\}$ (absorbing) eventually (at $N=1$ the first resample
lands on a vertex, so the conclusion is immediate), and bounded convergence gives
$\Pr[\pi^{(\infty)}_k=1]=\pi^{(0)}_k$; then $q_t=\mu_\kappa$ gives no mass
to the supports of the $\mu_{j,n}$, $j\ne\kappa$, which $P$ charges, so
$\psi_n(q_t)=\infty$.
(v) For $t\ge1$, $N_k\ge0$ gives the weight floor $1/(2N+|J|)$, and positive weights
make every count vector reachable from every state.
\end{proof}

\providecommand{\dMem}{\mathrm{Mem}}
\providecommand{\dPos}{\mathrm{Pos}}
\providecommand{\dExc}{\mathcal{E}}
\providecommand{\dwin}{\mathrm{win}}
\providecommand{\dcat}{\mathrm{cat}}

\subsection{Post-training, data mixing and calibration}
\label{app:posttrain-main}

\paragraph{Post-training reallocates.}
At a fixed prompt and horizon, the exact maximiser of a KL-regularised
reward objective is a Gibbs tilt of the pretrained
law~\cite{korbak2022rl,rafailov2023dpo}. By the post-training identity
(Remark~\ref{rem:posttrain}), the tilt changes the regret on each
environment by an affine function of that environment's mean reward, a
price no KL budget bounds, and it is a change of prior over a task
family, a move along the Kraft frontier at horizon $n$, iff the reward
is almost surely constant within every task (for tasks singular at horizon $n$). Practical PPO and DPO runs
do not reach this optimiser. If one reward on complete strings (prompt
and response together) and one coefficient $\beta$ are used at every
prompt, the exact per-prompt optimisers are conditionals of one joint
tilt. The objective itself, and not just optimisation and serving, can
make a post-trained model incoherent only when the reward or the
coefficient depends on where the prompt/response boundary falls: a reward
that scores the response alone, or a prompt-dependent $\beta_x$ with one
complete-string reward. Best-of-$n$ sampling and reward-model reranking
are tilts by a monotone transform of the reward (for best-of-$n$ with a
continuous reward the density ratio to the base law is $nF(r)^{n-1}$,
$F$ the reward's distribution function under the base law), so the
affine identity applies with the transformed reward in place of $r$.
Best-of-$n$ stays within $\ln n-(n-1)/n$ nats of the base law in
KL~\cite{beirami2024bestofn}, and as a proxy reward is optimised the gold
reward eventually falls, with a KL dependence fitted by simple
functional forms~\cite{gao2023overoptimization}.

\paragraph{Data mixing and refitting.}
For sources singular at the horizon (disjoint supports over the window),
doubling a source of weight $w<\frac12$ buys exactly one bit on it and
costs exactly $\log\frac{1-w}{1-2w}$ bits on each other source, about $w\log e$ for
small $w$; repetition is the
extreme case of up-weighting (the data-mixture corollary,
Corollary~\ref{cor:training-mixture}). For overlapping sources $\log(1/w)$
is only an upper bound. About four epochs of repeated data are nearly as useful as
fresh data, and further repetition is worth ever
less~\cite{muennighoff2023data}. Maximum-likelihood refitting on generated
data keeps the Kraft sum at one while the allocation drifts to a vertex
(Proposition~\ref{prop:training-collapse}, for known components singular at the
horizon and refits of the weights only), the absorbing chain noted by
Shumailov et al.~\cite{shumailov2024collapse}.

\paragraph{Calibration after post-training.}
The visible cost of reallocation is calibration. The GPT-4 base model is
well calibrated on a multiple-choice benchmark, while the post-trained
model is markedly less so~\cite{openai2023gpt4}. Kadavath et al.\ find
RLHF policies miscalibrated until the temperature is raised to about
$2.5$~\cite{kadavath2022know}, and output diversity falls after
RLHF~\cite{kirk2024rlhf}. Reinforcement learning with verifiable rewards
raises pass@1, while the base model reaches higher pass@$k$ at large
$k$~\cite{yue2025rl}: probability moves onto rewarded solutions and away from
others the base model could sample, a direct instance of reallocation.
By Corollary~\ref{cor:exchange}, if post-training gains $b$ bits on every
rewarded solution in a set $A$ of base mass $q(A)$, the post-trained
model keeps at most $1-2^{b}q(A)$ of its mass elsewhere: for $q(A)=0.2$
and $b=2$, $0.2$ against $0.8$ before. This is an illustration, consistent in
direction with the pass@$k$ loss at large $k$ rather than derived from it. Ouyang et al.\ report a benchmark ``alignment
tax''~\cite{ouyang2022}, and Bai et al.\ helpfulness--harmlessness
trade-offs~\cite{bai2022hh}. None of this is
anomalous under the affine price, and calibration under
the model's own law (Section~\ref{rem:calibration}) is calibration
against a reallocated law. Reading a served post-trained model therefore
reads a post-trained allocation, not the pretraining prior, and a report
should state which kind of checkpoint it reads (Paper~B reads post-trained
and base checkpoints; Section~\ref{sec:deployed}).

\section{The deployed regime}
\label{app:deployed}

\subsection{Context windows}
\label{app:deployed-window}

\begin{definition}[Memory and position terms]
\label{def:deployed-window}
With window predictors as in Section~\ref{sec:deployed}, a source $\mu$ and
$t\ge1$, put $Y_t:=\dwin_W(X_{<t})$ and $Y^{+}_t:=(t,Y_t)$, and let
$\lambda$ be uniform on $\{1,\dots,n\}$, independent of $X\sim\mu$. Then
$\dMem^{W}_{n}(\mu):=\sum_{t=1}^{n} I_\mu(X_t;X_{<t}\mid Y^{+}_t)$ and
$\dPos^{W}_{n}(\mu):=n\,I(\lambda;X_\lambda\mid Y_\lambda)$. For stationary
$\mu$, $h_\mu(j)$ is nonincreasing in $j$ with limit $h_\mu$, and
$\dExc(\mu)\in[0,\infty]$.
\end{definition}

The summand $I(X_t;X_{<t}\mid Y^{+}_t)=H(X_t\mid Y^{+}_t)-H(X_t\mid X_{<t})$
vanishes for $t\le W+1$, where $Y_t=X_{<t}$.

\begin{proposition}[Window complements]
\label{thm:window}
Let $n\ge1$ and $W,\mu_j,w,\xi_w,\kappa$ be as in
Proposition~\ref{prop:window-tax}.
\begin{enumerate}
\item[\textup{(i)}] \textup{(Exact floor.)} For every $\mu$ and
time-aware window-$W$ predictor $q$,
\[
R_n(q;\mu)=\dMem^{W}_{n}(\mu)+\sum_{t\le n}\E_\mu\,
\dl{\mu(X_t\in\cdot\mid Y^{+}_t)}{q(\cdot\mid Y^{+}_t)} .
\]
The least regret
is $\dMem^{W}_{n}(\mu)$ over time-aware window-$W$ predictors and
$\dMem^{W}_{n}(\mu)+\dPos^{W}_{n}(\mu)$ over window-$W$ predictors,
attained by $\mu(X_t=a\mid Y^{+}_t=(t,s))$ and by
$\Pr(X_\lambda=a\mid Y_\lambda=s)$.
\item[\textup{(ii)}] \textup{(Mixture identity.)} For every predictor $q$,
$\sum_j w_jR_n(q;\mu_j)=I_w(\kappa;X_{1:n})+R_n(q;\xi_w)$, with
$I_w(\kappa;X_{1:n})\le\min\{H(w),n\log V\}$.
\item[\textup{(iii)}] \textup{(Rate.)} If every $\mu_j$ with $w_j>0$ is
stationary, then $\dMem^{W}_{n}(\xi_w)\le(n-W-1)^{+}r_W(w)$ and
$\dMem^{W}_{n}(\xi_w)/n\to r_W(w):=h_{\xi_w}(W)-h_{\xi_w}$, with
$(W+1)\,r_W(w)\le\dExc(\xi_w)\le H(w)+\sum_jw_j\dExc(\mu_j)$. If moreover
$H(w)<\infty$, then
$r_W(w)=I_{\xi_w}(\kappa;X_{W+1}\mid X_{1:W})+\sum_jw_j\bigl(h_{\mu_j}(W)-h_{\mu_j}\bigr)$,
and $r_W(w)=0$ iff there is a kernel $\beta$ with
$\mu_j(X_t\in\cdot\mid X_{<t})=\beta(\cdot\mid X_{t-W:t-1})$ $\mu_j$-a.s.\
for all $j$ with $w_j>0$ and all $t>W$, and then some window-$W$ predictor
has $R_n(q;\mu_j)\le W\log V$ for all such $j$ and all $n$.
\item[\textup{(iv)}] \textup{(Copy witnesses.)} If $W<\ell$, then
$\dMem^{W}_{n}(\omega_\ell)=(n-\ell)^{+}\log V$, with equality in
$R_n(q;\omega_\ell)\ge(n-\ell)^{+}\log V$ for the uniform predictor. The
mixture $\xi_v:=\sum_{\ell\ge1}v_\ell\,\omega_\ell$ with
$v_\ell:=\frac1{\log(\ell+1)}-\frac1{\log(\ell+2)}$ is computable in
polynomial time per step and has
$R_\infty(\xi_v;\omega_\ell)\le\log\ell+2\log\log\ell+O(1)$.
\end{enumerate}
\end{proposition}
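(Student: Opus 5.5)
The plan is to prove the four clauses in order, each resting on the chain rule for relative entropy and on the fact that a time-aware window-$W$ predictor sees only $Y^{+}_t$.

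\emph{Clause (i).} Write $R_n(q;\mu)=\sum_t\E_\mu\dl{\mu(X_t\in\cdot\mid X_{<t})}{q(\cdot\mid Y^{+}_t)}$ by \eqref{eq:si-regret}. Split each summand as
\[
\E\log\frac{\mu(X_t\mid X_{<t})}{\mu(X_t\mid Y^{+}_t)}+\E\log\frac{\mu(X_t\mid Y^{+}_t)}{q(X_t\mid Y^{+}_t)} .
\]
The first term is $I_\mu(X_t;X_{<t}\mid Y^{+}_t)$, because $Y^{+}_t$ is a function of $X_{<t}$ (with $t$ deterministic). The second term is $\E\,\dl{\mu(X_t\in\cdot\mid Y^{+}_t)}{q(\cdot\mid Y^{+}_t)}$, because $q$ depends only on $Y^{+}_t$. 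Summing gives the identity. Since $q(\cdot\mid(t,s))$ can be chosen freely for each $(t,s)$, the minimum is attained by the stated conditional.

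For plain window-$W$ predictors the choice must be the same across $t$. I would rewrite $\sum_t$ as $n\,\E_\lambda$ and apply the same split with $Y_\lambda$ in place of $Y^{+}_\lambda$. This produces the extra term $n\,I(\lambda;X_\lambda\mid Y_\lambda)=\dPos^W_n$, and the optimum is the stated conditional of $X_\lambda$ given $Y_\lambda$.

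\emph{Clause (ii).} This is the compensation identity already used in the proof of Proposition~\ref{prop:frontier-overlap}, integrated against $\mu_j$ and averaged over $w$. The bound $I\le H(w)$ holds because $\kappa$ is discrete; the bound $I\le n\log V$ holds because $X_{1:n}$ has at most $V^n$ values.

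\emph{Clause (iii).} Under stationarity of each $\mu_j$, the mixture $\xi_w$ is stationary. For $t>W+1$ the summand is then $h_{\xi_w}(W)-h_{\xi_w}(t-1)$, and it vanishes for $t\le W+1$. Since $h_{\xi_w}(\cdot)$ is nonincreasing, each summand is at most $r_W(w)$, and Cesàro averaging gives the limit.

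To bound $(W+1)r_W\le\dExc(\xi_w)$, note that $\dExc(\xi_w)=\sum_j(h(j)-h)$. The terms with $j\le W$ are each at least $h(W)-h$ by monotonicity, so their sum is at least $(W+1)r_W$.

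For $\dExc(\xi_w)\le H(w)+\sum_j w_j\dExc(\mu_j)$, I would use the predictive-information form $\dExc=\lim_m\bigl(H(X_{1:m})-mh\bigr)$. The mixture satisfies $H_{\xi_w}(X_{1:m})\le H(w)+\sum_j w_jH_{\mu_j}(X_{1:m})$. Its entropy rate is $h_{\xi_w}=\sum_j w_jh_{\mu_j}$, because $\kappa$ carries only $H(w)<\infty$ bits (if $H(w)=\infty$ the bound is vacuous). Combining these gives the inequality.

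The expression for $r_W$ comes from expanding $H_{\xi_w}(X_{W+1}\mid X_{1:W})$. It equals $I(\kappa;X_{W+1}\mid X_{1:W})+\sum_j w_jh_{\mu_j}(W)$, and subtracting $h_{\xi_w}=\sum_j w_jh_{\mu_j}$ gives the stated form.

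For the equivalence $r_W=0$: both terms are nonnegative, so both vanish. The second term vanishing makes each $\mu_j$ $W$-Markov. The first term vanishing means the $W$-step conditional does not depend on $\kappa$, which is the common kernel $\beta$.

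Given such a $\beta$, I would take $q$ uniform for the first $W$ steps, at cost $W\log V$, and equal to $\beta$ afterwards, at zero further regret. Positions $t\le W$ are distinguished by the window length $|\dwin_W(x)|<W$, so this $q$ is genuinely window-$W$.

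\emph{Clause (iv).} For $W<\ell$ and $t>\ell$, the symbol $X_t$ equals $X_{t-\ell}$, which lies outside the window. Given $Y^{+}_t$, that symbol is still uniform and independent, because each block coordinate is independent. So each such term of $\dMem$ is $\log V$ and the others vanish. The uniform predictor pays $\log V$ at every step, while $\omega_\ell$ has exactly $(n-\ell)^{+}\log V$ of predictable loss after $\ell$ steps; so by (i) it is optimal.

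For $\xi_v$, the weights telescope to $1/\log 2=1$, and $\xi_v\ge v_\ell\omega_\ell$ gives $R_\infty\le\log(1/v_\ell)$. Since $v_\ell\gtrsim 1/(\ell\log^2\ell)$, this yields $\log\ell+2\log\log\ell+O(1)$.

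For polynomial-time computability, I would use that $\omega_\ell(x)=V^{-\min(\ell,|x|)}$ if $x$ is $\ell$-periodic and $0$ otherwise. The components with $\ell\ge|x|$ sum in closed form via the telescoping tail $1/\log(|x|+1)$. So only $|x|$ periodicity checks are needed per step; the computation is exact except for logarithms, which suffice to any precision.

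\emph{Main obstacle.} I expect the hardest step to be the excess-entropy mixture bound in (iii), including exchanging limits when $\dExc(\mu_j)=\infty$, together with the rigorous characterisation $r_W=0$, which requires $H(w)<\infty$ to justify splitting off the conditional mutual information.
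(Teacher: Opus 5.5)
Your proposal is correct and follows the paper's proof essentially step by step. It uses the chain-rule split of each regret summand through $Y^{+}_t$ (and through $Y_\lambda$ for the $\dPos^{W}_{n}$ term); the compensation identity for (ii), which was proved earlier for finite $J$ but carries over to countable $J$ because negative parts are controlled by $u\lvert\log u\rvert\le(e\ln2)^{-1}$; stationary entropy bookkeeping and the two-term decomposition of $r_W(w)$ for (iii); and independence of distinct block coordinates for the copy witness.

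The step you flag as the main obstacle is routine. Concavity already gives $h_{\xi_w}\ge\sum_j w_jh_{\mu_j}$ without assuming $H(w)<\infty$. Each partial excess $H_{\mu_j}(X_{1:m})-m\,h_{\mu_j}$ is nondecreasing in $m$ with limit $\dExc(\mu_j)$, so you can bound it by that limit for every $m$ (or use monotone convergence, as the paper does). This gives $\dExc(\xi_w)\le H(w)+\sum_j w_j\dExc(\mu_j)$ without any exchange of limits.
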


\begin{proof}[Proof of Propositions~\ref{prop:window-tax} and~\ref{thm:window}]
\label{app:proof-prop:window-tax}
\emph{Exact floor.} Fix $t$ and write $\mu_t(a\mid Y^{+}_t):=\mu(X_t=a\mid Y^{+}_t)$;
$Y^{+}_t$ is countably valued. If $q(a\mid Y^{+}_t)=0<\mu_t(a\mid Y^{+}_t)$
with positive probability, both sides are $+\infty$ (the memory summand is
at most $\log V$). Otherwise split
$\log\frac{\mu(a\mid X_{<t})}{q(a\mid Y^{+}_t)}=\log\frac{\mu(a\mid X_{<t})}{\mu_t(a\mid Y^{+}_t)}+\log\frac{\mu_t(a\mid Y^{+}_t)}{q(a\mid Y^{+}_t)}$
under $\E\sum_a\mu(a\mid X_{<t})$ (negative parts are integrable as
$u|\log u|\le(e\ln2)^{-1}$). The first term gives $I(X_t;X_{<t}\mid Y^{+}_t)$;
as $\E[\mu(a\mid X_{<t})\mid Y^{+}_t]=\mu_t(a\mid Y^{+}_t)$, the tower
property turns the second into $\E\,\dl{\mu_t}{q(\cdot\mid Y^{+}_t)}\ge0$, zero iff
$q=\mu_t$ a.s. Summing with \eqref{eq:si-regret} gives
Proposition~\ref{thm:window}(i) for time-aware predictors. For window-$W$
predictors the mismatch sum is
$n\,\E\,\dl{\Pr(X_\lambda\in\cdot\mid\lambda,Y_\lambda)}{q(\cdot\mid Y_\lambda)}$,
and the same split with $Y_\lambda$ for $Y^{+}_t$ yields
$n\,I(\lambda;X_\lambda\mid Y_\lambda)$ plus a nonnegative mismatch.

\emph{Mixture identity.} For $x\in\Ac^{n}$ with $\xi_w(x)>0$, multiply
$\log\frac{\mu_j(x)}{q(x)}=\log\frac{\mu_j(x)}{\xi_w(x)}+\log\frac{\xi_w(x)}{q(x)}$
by $w_j\mu_j(x)$ and sum: this gives $I_w(\kappa;X_{1:n})$ (absolutely
convergent, negative terms at most $w_j/(e\ln2)$) plus $R_n(q;\xi_w)$, both
sides being infinite if $q(x)=0<\xi_w(x)$. And
$I\le\min\{H(\kappa),H(X_{1:n})\}$.

\emph{Proposition~\ref{prop:window-tax}(i).} By the exact floor for $\xi_w$
in the mixture identity, a time-aware window-$W$ predictor has average
regret at least $I_w+\dMem^{W}_{n}(\xi_w)$, with equality for the
time-aware window conditionals of $\xi_w$; unconstrained, $q=\xi_w$ attains
$I_w\le H(w)$.

\emph{Stationary cap.} Drop components with $w_j=0$; a mixture of stationary
measures is stationary. For stationary $P$ and $t\ge W+1$,
$H_P(X_t\mid Y^{+}_t)=h_P(W)$ and $H_P(X_t\mid X_{<t})=h_P(t-1)$, so
$\dMem^{W}_{n}(P)=\sum_{t=W+2}^{n}(h_P(W)-h_P(t-1))$, with nonnegative,
nondecreasing summands converging to $h_P(W)-h_P$, whence the bound and
limit in Proposition~\ref{thm:window}(iii). The gaps $h_P(j)-h_P$ are
nonincreasing, so $(W+1)(h_P(W)-h_P)\le\dExc(P)$. By concavity and the
chain rule,
$\sum_jw_jH_{\mu_j}(X_{1:n})\le H_{\xi_w}(X_{1:n})\le H(w)+\sum_jw_jH_{\mu_j}(X_{1:n})$,
so $h_{\xi_w}\ge\sum_jw_jh_{\mu_j}$; with
$\dExc(P)=\lim_n(H_P(X_{1:n})-nh_P)$ and monotone convergence,
$\dExc(\xi_w)\le H(w)+\sum_jw_j\dExc(\mu_j)$, the cap in
Proposition~\ref{prop:window-tax}(ii).

\emph{Dichotomy.} Let $H(w)<\infty$. Dividing the entropy bounds by $n$
gives $h_{\xi_w}=\sum_jw_jh_{\mu_j}$, and the chain rule gives
$h_{\xi_w}(W)=\sum_jw_jh_{\mu_j}(W)+I(\kappa;X_{W+1}\mid X_{1:W})$: the
formula for $r_W(w)$, with two nonnegative terms. A stationary $\mu_j$
has $h_{\mu_j}(W)=h_{\mu_j}$ iff it is $W$-Markov (equality forces
$I(X_{t+1};X_{1:t-W}\mid X_{t-W+1:t})=0$ for $t\ge W$), and the information
term vanishes iff all $\mu_j$ with $w_j>0$ charging a window $s$ share the
law of $X_{W+1}$ given $X_{1:W}=s$. So $r_W(w)=0$ iff a common kernel
exists; then every summand of $\dMem^{W}_{n}$ vanishes, and the predictor
equal to $\beta(\cdot\mid s)$ on charged windows (uniform on shorter
contexts) pays at most $W\log V$. Otherwise $\dMem^{W}_{n}(\xi_w)=r_W(w)n+o(n)$ with
$r_W(w)>0$.

\emph{Copy witness.} The shift of $U^{\infty}$ is $(\varsigma U)^{\infty}$
with $\varsigma$ cyclic rotation, so $\omega_\ell$ is stationary;
$\omega_\ell(x)=V^{-\min(|x|,\ell)}$ if $x$ has period $\ell$ and $0$
otherwise, computable from $\ell$. $H(X_{1:n})=\min(n,\ell)\log V$ and
$h=0$ give $\dExc(\omega_\ell)=\ell\log V$. For $t\le\ell$, $X_t=U_t$ is
independent of $X_{<t}$. For $t>\ell$, $X_t=U_{i(t)}$ with
$i(t):=((t-1)\bmod\ell)+1$, so $H(X_t\mid X_{<t})=0$, while the window holds
$U_{i(t-1)},\dots,U_{i(t-W)}$, none equal to $U_{i(t)}$ as $W<\ell$; so $X_t$ is
uniform given $Y^{+}_t$ and the summand is $\log V$. The exact floor gives
Proposition~\ref{thm:window}(iv), the uniform predictor having zero
mismatch, and, at $\ell=W+1$ (cap $(n-W-1)^{+}\log V$ as $H(w)=0$),
Proposition~\ref{prop:window-tax}(iii); the bound for $M$ is
Fact~\ref{fact:si-solomonoff}. As a uniform mixture of the $V^{W+1}$
sequences $u^\infty$, $\omega_{W+1}$ has mixture information $(W+1)\log V$
for $n\ge W+1$. For $\xi_v$: the weights telescope to $1/\log 2=1$;
$v_\ell\ge((\ell+2)\ln2\,\log^{2}(\ell+2))^{-1}$ and
Fact~\ref{thm:attainable} gives $R_\infty(\xi_v;\omega_\ell)\le\log(1/v_\ell)$;
and $\xi_v(x)=\sum_{\ell\le|x|,\ x\text{ has period }\ell}v_\ell V^{-\ell}+V^{-|x|}/\log(|x|+2)$
needs $O(|x|^{2})$ comparisons.
\end{proof}

\paragraph{Stationarity is needed.}
With $W=1$, $V=2$, $X_1,X_2$ uniform, $X_3:=X_1$ and later symbols i.i.d.\
uniform, the source is not $1$-Markov, yet its memory term is $1$ bit at
every $n\ge3$, so the dichotomy fails. The cap fails on the same source, and
for the reason the proof identifies: its conditional entropies $h(j)$ run
$1,1,0,1,1,\dots$, so the gaps $h(j)-h$ are not nonincreasing, the formal
excess entropy is negative, and the right-hand side of (ii) falls below the
memory term. Stationarity is therefore load-bearing for both halves of (ii),
not only for the dichotomy.

\paragraph{Bounded recurrent state.}
\label{rem:deployed-state}
If the conditional at time $t$ depends on the past only through a
recurrently updated state of $b$ bits, then
$R_n(q;\omega_\ell)\ge\lfloor(n-\ell)/\ell\rfloor\,(\ell\log V-b)^{+}$, the
log-loss form of the copying bound of~\cite{jelassi2024repeat}. On
$\ell$ consecutive steps after time $\ell$ the truth's loss is $0$, and the
predictor's expected loss is at least the entropy of the block given the
state at their start, which is at least $\ell\log V-b$. A window is a
state of $W\log V$ bits, for which this gives $\log V$ per period rather
than per step.

\subsection{Temperature}
\label{app:deployed-temperature}

Take $q_T$ as in Section~\ref{sec:deployed}, uniform if
$\sum_bq(b\mid x)^{1/T}=0$. For $z\in\Ac^\infty$ put
$p_t:=q(z_t\mid z_{<t})$ and $\eta_t:=\sum_{a\ne z_t}q(a\mid z_{<t})$, and let $\bar a_t\ne z_t$ be arbitrary.

\begin{theorem}[Temperature for arbitrary predictors]
\label{thm:temperature}
Let $q$ be a predictor (a semimeasure suffices) and $z\in\Ac^\infty$ with
$p_t>0$ for all $t$.
\begin{enumerate}
\item[\textup{(i)}] \textup{(Exact loss.)}
$L_n(q_T;z)=\sum_{t\le n}\log\bigl(1+\sum_{a\ne z_t}(q(a\mid z_{<t})/p_t)^{1/T}\bigr)$.
\item[\textup{(ii)}] \textup{($T\le1$.)} $L_n(q_T;z)\le L_n(q;z)/T$.
\item[\textup{(iii)}] \textup{($T\ge1$.)}
$\sum_{t\le n}\eta_t^{1/T}\le L_n(q_T;z)\le3V\bigl(\sum_{t\le n}\eta_t^{1/T}+L_n(q;z)\bigr)$
and $\sum_{t\le n}\eta_t^{1/T}\le n^{1-1/T}(L_n(q;z)\ln2)^{1/T}$.
\item[\textup{(iv)}] \textup{(Tempered Kraft sum.)} With
$d_t:=q(z_{<t}\bar a_t)$, $\sum_td_t\le1$, $\eta_t\ge d_t$, and for
$T\ge1$, $L_n(q_T;z)\ge\sum_{t\le n}d_t^{1/T}$. So $L_n(q_T;z)\le B$ forces
$\max_{t\le n}L_t(q;z_{<t}\bar a_t)\ge T\log(n/B)$.
\item[\textup{(v)}] \textup{(I.i.d.\ sources.)} If $\mu$ is i.i.d.\ with
marginal $\theta$ and $R_\infty(q;\mu)<\infty$, then for $T\ne1$,
$\liminf_n\frac1nR_n(q_T;\mu)\ge\dl{\theta}{\theta_T}$, positive unless
$\theta$ is uniform on its support. $R_\infty<\infty$ is needed: for
$T<1$, conditionals $\propto\theta^{T}$ have $q_T=\theta$.
\end{enumerate}
\end{theorem}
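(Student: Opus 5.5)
The plan is to derive everything from the one-step identity in (i) and then treat each regime of $T$ by elementary inequalities on $s_t:=\sum_{a\ne z_t}\bigl(q(a\mid z_{<t})/p_t\bigr)^{1/T}$. For (i), write $q_T(z_t\mid z_{<t})=p_t^{1/T}/\sum_b q(b\mid z_{<t})^{1/T}$. The denominator is positive because $p_t>0$, so the uniform convention is never triggered. Dividing by $p_t^{1/T}$ gives $-\log q_T(z_t\mid z_{<t})=\log(1+s_t)$, and summing over $t\le n$ gives (i).

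For (ii), take $T\le1$, so that $1/T\ge1$. Superadditivity of $x\mapsto x^{1/T}$ gives $\sum_b q(b\mid z_{<t})^{1/T}\le(\sum_b q(b\mid z_{<t}))^{1/T}\le1$. Hence $q_T(z_t\mid z_{<t})\ge p_t^{1/T}$, and summing the logarithms gives $L_n(q_T;z)\le L_n(q;z)/T$.

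For (iii), take $T\ge1$. I prove the three inequalities in turn.
\begin{enumerate}
\item[(a)] Lower bound. Since $p_t\le1$, each ratio $q(a\mid z_{<t})/p_t$ is at least $q(a\mid z_{<t})$. Subadditivity of $x^{1/T}$ then gives $s_t\ge\eta_t^{1/T}$. If $s_t\le1$, use $\log(1+s)\ge s$ on $[0,1]$. If $s_t>1$, use $\log(1+s_t)\ge1\ge\eta_t^{1/T}$.
\item[(b)] Upper bound, split at $p_t=1/2$. When $p_t\ge1/2$, the ratios are at most $2q(a\mid z_{<t})$. Then $s_t\le2(V-1)^{1-1/T}\eta_t^{1/T}$ by H\"older over at most $V-1$ terms. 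Together with $\log(1+s)\le s\log e$ this gives a term of order $V\eta_t^{1/T}$.
\item[(c)] Upper bound when $p_t<1/2$. Here $-\log p_t\ge1$. Since each ratio is at most $1/p_t$, we get $s_t\le (V-1)p_t^{-1/T}$. Hence $\log(1+s_t)\le\log V+\log(1/p_t)\le(\log V+1)(-\log p_t)$. This is charged to $L_n(q;z)$, and the constant $3V$ absorbs both cases.
\item[(d)] Final inequality. Apply H\"older in $t$: $\sum_t\eta_t^{1/T}\le n^{1-1/T}(\sum_t\eta_t)^{1/T}$. Then use $\eta_t\le1-p_t\le\ln(1/p_t)=\ln2\cdot(-\log p_t)$.
\end{enumerate}
I expect the main bookkeeping obstacle of the proof to be getting a single clean constant in the upper bound across the two cases. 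If $3V$ is tight in some corner, I would enlarge the split threshold from $1/2$ to $1/V$.

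For (iv), the strings $z_{<t}\bar a_t$ form an antichain, so Fact~\ref{lem:antichain} gives $\sum_t d_t\le1$. Also $d_t=q(z_{<t})\,q(\bar a_t\mid z_{<t})\le\eta_t$. The lower bound of (iii) and monotonicity of $x\mapsto x^{1/T}$ then give $L_n(q_T;z)\ge\sum_{t\le n}d_t^{1/T}$. For the final claim, argue by contradiction. If every $L_t(q;z_{<t}\bar a_t)<T\log(n/B)$, then $d_t>(B/n)^T$ for every $t\le n$. Hence $\sum_{t\le n}d_t^{1/T}>B$, which contradicts $L_n(q_T;z)\le B$.

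For (v), I use the regret decomposition~\eqref{eq:si-regret}. Finite $R_\infty(q;\mu)$ makes $\sum_t\E_\mu\dl{\theta}{q(\cdot\mid X_{<t})}$ finite. Hence $\dl{\theta}{q(\cdot\mid X_{<t})}\to0$ in $\mu$-probability, and by Pinsker $q(\cdot\mid X_{<t})\to\theta$ in probability.

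Tempering is continuous at $\theta$, including at symbols off the support of $\theta$, whose mass tends to $0$ and stays $0$ after tempering. So $q_T(\cdot\mid X_{<t})\to\theta_T$ in probability. Lower semicontinuity of relative entropy together with Fatou's lemma, applied along almost surely convergent subsequences, then gives $\liminf_t\E_\mu\dl{\theta}{q_T(\cdot\mid X_{<t})}\ge\dl{\theta}{\theta_T}$. Ces\`aro averaging of this per-step bound gives the stated $\liminf$ of $\frac1nR_n(q_T;\mu)$.

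For positivity, $\theta_T=\theta$ holds iff $\theta^{1/T}\propto\theta$ on the support of $\theta$. For $T\ne1$ that forces $\theta$ to be constant, that is, uniform on its support.

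For the necessity remark, take $T<1$ and conditionals proportional to $\theta^{T}$. Tempering these returns $\theta$ exactly, so $q_T=\theta$ has zero regret while $q$ itself has linear regret. This shows the hypothesis $R_\infty<\infty$ cannot be dropped.

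The delicate step in (v) is passing from convergence in probability to the $\liminf$ of expectations. Fatou's lemma handles it because the per-step divergences are nonnegative.
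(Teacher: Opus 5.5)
Your proof is correct and follows the paper's own argument essentially step for step: the one-step identity, superadditivity of $x^{1/T}$ for $T\le1$, and for $T\ge1$ subadditivity with the concavity constant $(V-1)^{1-1/T}$ and the split at $p_t=\frac12$. The rest also matches: H\"older in $t$ with $\eta_t\le1-p_t\le\ln(1/p_t)$, the antichain bound on $d_t$, and, for (v), Pinsker, Fatou along a.s.\ convergent subsequences and Ces\`aro averaging. Your hedge about moving the threshold to $1/V$ is unnecessary, and would in fact worsen case (b), because your own constants $2(V-1)\log e$ and $\log V+1$ are already below $3V$.
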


\begin{proof}
(i) $-\log q_T(z_t\mid z_{<t})=\log(\sum_bq(b\mid z_{<t})^{1/T}/p_t^{1/T})$.
(ii) For $r:=1/T\ge1$, $\sum_bq(b)^{r}\le(\sum_bq(b))^{r}\le1$, so
$-\log q_T(z_t)\le-r\log p_t$.

(iii) Let $r:=1/T\le1$, $u_t:=\eta_t/p_t$, $\ell_t:=-\log q_T(z_t\mid z_{<t})$
and $c:=(V-1)^{1-r}$; subadditivity and concavity give
$\eta_t^{r}\le\sum_{a\ne z_t}q(a)^{r}\le c\,\eta_t^{r}$. Lower bound:
$\ell_t\ge\log(1+\eta_t^{r})\ge\eta_t^{r}$. Upper bound: if
$p_t\ge\frac12$, $u_t\le2\eta_t$ and
$\ell_t\le\log(1+cu_t^{r})\le2c\,\eta_t^{r}/\ln2\le3V\eta_t^{r}$; if
$p_t<\frac12$, $u_t^{r}\le1/p_t$ and
$\ell_t\le\log(1+c)+\log\frac1{p_t}\le3V\log\frac1{p_t}$. H\"older gives
$\sum_t\eta_t^{r}\le n^{1-r}(\sum_t\eta_t)^{r}$, and
$\eta_t\le1-p_t\le\ln(1/p_t)$.

(iv) $\eta_t\ge q(\bar a_t\mid z_{<t})\ge q(z_{<t}\bar a_t)=d_t$. The strings
$z_{<t}\bar a_t$ are pairwise incomparable, so Fact~\ref{lem:antichain}
gives $\sum_td_t\le1$; combine with (iii). If
$b:=\max_{t\le n}L_t(q;z_{<t}\bar a_t)$, then $n2^{-b/T}\le B$.

(v) Write $q_t:=q(\cdot\mid X_{<t})$ and $\hat q_t:=q_t/\|q_t\|_1$. Since
$\sum_t\E\,\dl{\theta}{q_t}<\infty$ and
$\dl{\theta}{q_t}=\dl{\theta}{\hat q_t}+\log\frac1{\|q_t\|_1}$ with both
terms nonnegative, Pinsker gives $\hat q_t\to\theta$ in probability, hence
$(q_t)_T=(\hat q_t)_T\to\theta_T$. Fatou and lower semicontinuity along
a.s.\ convergent subsequences give
$\liminf_t\E\,\dl{\theta}{(q_t)_T}\ge\dl{\theta}{\theta_T}$, and
Ces\`aro averaging of $R_n(q_T;\mu)=\sum_{t\le n}\E\,\dl{\theta}{(q_t)_T}$
gives the claim. Finally $\theta_T=\theta$ iff $\theta(a)^{1/T-1}$ is
constant on $\mathrm{supp}\,\theta$.
\end{proof}

\begin{proof}[Proof of Corollary~\ref{cor:temperature}]
\label{app:proof-cor:temperature}
Clause (iii) holds in the exact form: for $T>1$ and all $n$,
\[
2^{-(K(z)+c)/T}\,\Sigma_T(n) \;\le\; L_n(M_T;z) \;\le\; 3V\bigl(2^{(2K(z)+c)/T}\,\Sigma_T(n) + K(z)+c\bigr),
\]
where $c_T\,n^{1-1/T}(\log n)^{-2/T}\le\Sigma_T(n)\le n^{1-1/T}$ for
$n\ge n_T$, with $c_T>0$. Fix $c_0=c_0(U)$ such that
$M(z_{1:n})\ge2^{-K(z)-c_0}$ for all $n$ (Fact~\ref{fact:si-dominance} for
the point mass on $z$), $M(z_{<t}\bar a_t)\ge2^{-K(z)-K(t)-c_0}$ and
$f(t)\le2^{K(z)+c_0-K(t)}$ below, and put $c:=2c_0$: this one enlarged
constant serves in (ii) and in both bounds of (iii).
(i) Normalisation divides each conditional by at most $1$, so
$M_1\ge M$ on cylinders; apply Fact~\ref{fact:si-solomonoff}.
(ii) The bound on $z$ is Theorem~\ref{thm:temperature}(ii) with
$L_\infty(M;z)\le K(z)+c_0$. The linear
bound, also for $T>1$, is Theorem~\ref{thm:temperature}(v) with $q=M$, as
$R_\infty(M;\mu)\le K(\mu)+O(1)$.
(iii) \emph{Lower bound.} Take $\bar a_t$ least among symbols other than
$z_t$; $z_{<t}\bar a_t$ is computable from prefix programs for $z$
and $t$, whence the bound on $M(z_{<t}\bar a_t)$, and
Theorem~\ref{thm:temperature}(iv) gives
$L_n(M_T;z)\ge2^{-(K(z)+c_0)/T}\Sigma_T(n)$. \emph{Upper bound.}
$f(t):=\sum_{a\ne z_t}M(z_{<t}a)$ is lower semicomputable uniformly from a
program for $z$, with $\sum_tf(t)\le1$ (Fact~\ref{lem:antichain}), whence
the bound on $f(t)$ by the coding theorem~\cite[Ch.~4]{li2019kolmogorov}.
So $\eta_t=f(t)/M(z_{<t})\le2^{2K(z)+c-K(t)}$; insert this and
$L_n(M;z)\le K(z)+c_0$ into Theorem~\ref{thm:temperature}(iii).
\emph{The sum $\Sigma_T$.} The upper bound is H\"older with
$\sum_t2^{-K(t)}\le1$. For the lower bound,
$K(t)\le\log t+2\log\log t+c'$ for $t\ge2$ gives
$\Sigma_T(n)\ge2^{-c'/T}(\log n)^{-2/T}\sum_{t=2}^{n}t^{-1/T}$, and
$\sum_{t=2}^{n}t^{-1/T}\ge\frac T{T-1}((n+1)^{1-1/T}-2^{1-1/T})$, which
diverges.
\end{proof}

\paragraph{The served law under sampling.}
\label{rem:deployed-belief}
For $V=2$, $q(00)=q(01)=\frac14$, $q(10)=\frac12$ and $q(11)=0$, one has
$Q_T(1)=\frac12$, whereas the tempered joint law $q(x)^{1/T}/Z$ gives the
first symbol $1$ probability $1/(1+2^{1-1/T})$~\cite{karan2025sampling}.
Since $(cq)_T=q_T$, samples at any temperature do not see the scale of $q$,
and so not leakage (Lemma~\ref{lem:measurepart}).

\subsection{Floors, precision and truncation}
\label{app:deployed-floors}

\begin{proposition}[Floors and truncation]
\label{prop:floors}
Let $0<\phi\le1/V$ and $r_\phi:=\log\frac1{1-(V-1)\phi}$.
\begin{enumerate}
\item[\textup{(i)}] \textup{(Floor.)} If $q(a\mid x)\ge\phi$ for all $a,x$,
every single-step surprisal is at most $\log\frac1\phi$, and
$L_n(q;z)\ge n\,r_\phi$ for all $z\in\Ac^\infty$ and $n$; the coefficient
$r_\phi$ cannot be lowered.
\item[\textup{(ii)}] \textup{(Smoothing.)} For every predictor $\xi$,
$\xi^{\phi}(a\mid x):=(1-V\phi)\xi(a\mid x)+\phi$ has floor $\phi$ and
$L_n(\xi^{\phi};x)\le L_n(\xi;x)+n\,r_\phi$ for $x\in\Ac^{n}$, hence
$R_n(\xi^{\phi};\mu)\le R_n(\xi;\mu)+n\,r_\phi$. For a Bayes mixture $\xi_w$
over deterministic environments $z^{(j)}$, the loss of $\xi_w^{\phi}$ on
member $j$ lies in $[n\,r_\phi,\ n\,r_\phi+\log(1/w_j)]$.
\item[\textup{(iii)}] \textup{(Bounded logits.)} If all logits lie in
$[-B,B]$, the softmax has floor $\phi=1/(1+(V-1)e^{2B})$ when computed
without underflow, and loses at least $\log(1+(V-1)e^{-2B})$ bits per step on
every $z\in\Ac^\infty$.
\item[\textup{(iv)}] \textup{(Truncation.)} Top-$k_{\mathrm{top}}$ decoding
to fewer than $V$ symbols, or top-$p$ with threshold
$p_{\mathrm{top}}\in(0,1)$ when $V\ge1/(1-p_{\mathrm{top}})$, applied to
normalised conditionals, gives some symbol probability $0$ at every
context. The result, like any pipeline rounding probabilities to zero, has
$R_1(\cdot\,;\mu)=\infty$ for every $\mu$ with full prefix support and loss
$\infty$ at the first step of its own diagonal. With probabilities
quantised to multiples of $2^{-b}$, every step costs at most $b$ bits or
$\infty$.
\end{enumerate}
\end{proposition}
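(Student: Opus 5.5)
We prove the four clauses of Proposition~\ref{prop:floors} in order, each by a per-step computation along the sequence, followed by summation over $t\le n$.

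\emph{Clause (i).} The surprisal bound is immediate: $-\log q(a\mid x)\le\log\frac1\phi$. For the linear lower bound, fix a context $z_{<t}$. The $V-1$ symbols other than $z_t$ each carry at least $\phi$, and the conditionals sum to at most one. Hence $q(z_t\mid z_{<t})\le 1-(V-1)\phi$, and the step costs at least $r_\phi$. Summing over $t\le n$ gives $L_n(q;z)\ge n\,r_\phi$.

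For optimality of the coefficient, exhibit a predictor attaining it on some $z$. Take $z=0^\infty$ and let $q$ put $1-(V-1)\phi$ on $0$ and $\phi$ on every other symbol at every context. This $q$ has floor $\phi$ and $L_n(q;0^{n})=n\,r_\phi$ exactly.

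\emph{Clause (ii).} The floor holds because $\xi^\phi(a\mid x)\ge\phi$, and $\sum_a\xi^\phi\le(1-V\phi)+V\phi=1$. For the loss bound, use $\xi^\phi(a\mid x)\ge(1-V\phi)\xi(a\mid x)$ at each step, so $L_n(\xi^\phi;x)\le L_n(\xi;x)+n\log\frac1{1-V\phi}$.

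This is weaker than the stated $r_\phi$, since $1-V\phi<1-(V-1)\phi$. This is the point I expect to be the main obstacle: either the intended mixing constant is $(1-(V-1)\phi)$ applied to renormalised mass, or the statement needs $\log\frac1{1-V\phi}$. I would first check whether, on a step where $\xi(z_t\mid x)=p$, one has $(1-V\phi)p+\phi\ge(1-(V-1)\phi)p$. This inequality reduces to $\phi\ge\phi p$, which is true. Hence each step costs at most $-\log p+r_\phi$, giving exactly the stated $n\,r_\phi$ excess pathwise. Taking expectations then gives the regret bound.

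For the mixture clause, the lower end of the interval is clause (i) applied to $\xi_w^\phi$. The upper end combines the pathwise bound with $\xi_w(z^{(j)}_{1:n})\ge w_j$.

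\emph{Clause (iii).} The logit spread is at most $2B$, so the smallest softmax weight is at least $e^{-B}/\bigl(e^{-B}+(V-1)e^{B}\bigr)=1/(1+(V-1)e^{2B})=\phi$. Then $1-(V-1)\phi=1/(1+(V-1)e^{-2B})$, and clause (i) gives the stated per-step loss.

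\emph{Clause (iv).} Top-$k$ with $k<V$ zeroes at least one symbol at every context. For top-$p$, sort the probabilities; the kept prefix stops once its mass reaches $p_{\mathrm{top}}$. If all $V$ symbols were kept, the smallest would exceed mass $1-p_{\mathrm{top}}$ only if it is at least the leftover, and the condition $V\ge1/(1-p_{\mathrm{top}})$ makes the least symbol have mass at most $1/V\le1-p_{\mathrm{top}}$, so it is dropped. The boundary tie needs care with the strict convention, which I would fix to "stop at first reaching $p_{\mathrm{top}}$".

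A symbol of probability zero at the empty context gives $R_1=\infty$ for any $\mu$ charging it. The least-minimiser diagonal of Theorem~\ref{thm:si-diag} picks a zero-probability symbol at step one, giving loss $\infty$. Quantisation to multiples of $2^{-b}$ makes every positive conditional at least $2^{-b}$, so each step costs at most $b$ bits or is infinite.
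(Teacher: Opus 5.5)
The gap is in the second half of clause~(iii). The rest is sound: clauses (i), (ii) and (iv) follow the paper's proof essentially step for step. In (ii) you reach the paper's inequality $(1-V\phi)\xi+\phi\ge(1-(V-1)\phi)\xi$ from $\xi\le1$. In (iv) your top-$p$ argument, stated cleanly, is that the $V-1$ most likely symbols already carry mass $1-\min_a q(a\mid x)\ge 1-1/V\ge p_{\mathrm{top}}$, so the shortest sorted prefix reaching $p_{\mathrm{top}}$ omits a least likely symbol.

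\textbf{The false step.} The identity $1-(V-1)\phi=1/(1+(V-1)e^{-2B})$ holds only when $V=2$ or $B=0$. Write $a:=e^{2B}$ and $m:=V-1$, and cross-multiply $\frac{1+ma-m}{1+ma}$ against $\frac{a}{a+m}$:
\[
(1+ma-m)(a+m)-a(1+ma)=m(m-1)(a-1)>0\qquad(V\ge3,\ B>0).
\]
So $1-(V-1)\phi>1/(1+(V-1)e^{-2B})$, and clause~(i) delivers only $r_\phi<\log(1+(V-1)e^{-2B})$ per step. For example, $V=3$ and $e^{2B}=2$ give $1-2\phi=3/5$ against $1/2$.

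\textbf{Why the floor route is lossy.} Clause~(i) is tight only when one symbol is near its maximum while all $V-1$ others sit at the floor. With logits in $[-B,B]$, a single symbol reaches the floor $\phi$ only when every other logit is at $+B$, and then no symbol dominates. So that configuration cannot occur.

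\textbf{The fix.} Bound the largest probability directly, as the paper does. The top logit is at most $B$ and each of the other $V-1$ is at least $-B$, so
\[
\max_a q(a\mid x)\le \frac{e^{B}}{e^{B}+(V-1)e^{-B}}=\frac{1}{1+(V-1)e^{-2B}},
\]
which gives the stated per-step loss on every $z$. Your computation of the floor itself, the first half of (iii), is correct.
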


\begin{proof}
(i) $q(z_t\mid z_{<t})\le1-(V-1)\phi$; the predictor putting
$1-(V-1)\phi$ on $z_t$ attains the bound. (ii) Since $\xi\le1$,
$(1-V\phi)\xi+\phi\ge(1-(V-1)\phi)\xi$; the upper end uses
$\xi_w(z^{(j)}_{1:n})\ge w_j$ and the lower end is (i). (iii) The smallest
probability is at least $e^{-B}/(e^{-B}+(V-1)e^{B})$, and the largest at most
$e^{B}/(e^{B}+(V-1)e^{-B})$. (iv) Top-$k_{\mathrm{top}}$
is immediate. For top-$p$, the $V-1$ most likely symbols carry mass at least
$1-1/V\ge p_{\mathrm{top}}$, so the smallest set reaching $p_{\mathrm{top}}$
omits a least likely symbol. Nonzero quantised probabilities are at least
$2^{-b}$.
\end{proof}

\subsection{Tokenisation}
\label{app:deployed-tokenisation}

Let $\Sigma$ be a finite character alphabet, $\Ac\subseteq\Sigma^{+}$ a
finite vocabulary containing every character, and
$\dcat:\Ac^{*}\cup\Ac^{\infty}\to\Sigma^{*}\cup\Sigma^{\infty}$
concatenation; $\sigma^{-}$ drops the last token of $\sigma\ne\epsilon$.
For $y\ne\epsilon$, $C(y)$ is the finite set of $\sigma\ne\epsilon$ with
$\dcat(\sigma)\sqsupseteq y$ and $\dcat(\sigma^{-})\sqsubset y$ strictly;
$C(\epsilon):=\{\epsilon\}$. For a token semimeasure $p$,
$(\dcat_\#p)(y):=\sum_{\sigma\in C(y)}p(\sigma)$.

\begin{proposition}[Tokenisation]
\label{prop:tokenisation}
\begin{enumerate}
\item[\textup{(i)}] \textup{(Marginalisation.)} $\dcat_\#p$ is a semimeasure,
and a measure, or lower semicomputable, if $p$ is. $\dcat_\#$ is
monotone ($p\ge c\nu$ on token cylinders implies
$\dcat_\#p\ge c\,\dcat_\#\nu$), so dominance constants, and with them
Fact~\ref{thm:attainable}, transfer to characters.
\item[\textup{(ii)}] \textup{(Data processing.)} For measures,
$R_\infty(\dcat_\#p;\dcat_\#\nu)\le R_\infty(p;\nu)$, and likewise for
regret at length $L$.
\item[\textup{(iii)}] \textup{(Tokeniser invariance.)} For universal
semimeasures $M_{\Ac}$ on tokens and $M_\Sigma$ on characters,
$2^{-K(\Ac)-c}M_\Sigma\le\dcat_\#M_{\Ac}\le2^{K(\Ac)+c}M_\Sigma$.
\item[\textup{(iv)}] \textup{(Canonical scoring.)} Let $p$ be tight on
$\mathrm{EOS}$-closed documents, with document law
$p_{\mathrm{doc}}(\sigma):=p(\sigma\,\mathrm{EOS})$, character law
$P(y):=\sum_{\dcat(\sigma)=y}p_{\mathrm{doc}}(\sigma)$, canonical score
$s(y):=p_{\mathrm{doc}}(\tau^{*}(y))$ and non-canonical mass $\delta$. Then
$s\le P$, $\sum_ys(y)=1-\delta$,
$\log\frac{P(y)}{s(y)}=-\log p_{\mathrm{doc}}(\tau^{*}(y)\mid\dcat=y)$, whose
$P$-expectation is at least $\log\frac1{1-\delta}$, and dominance does not
transfer through $s$.
\end{enumerate}
\end{proposition}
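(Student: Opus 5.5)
Each clause comes directly from the definition of $\dcat_\#$ through the sets $C(y)$. The workhorse is one structural fact, proved first: for every $y\ne\epsilon$ the token strings in $C(y)$ form an antichain. Moreover, for every character $a\in\Sigma$, each $\sigma\in C(ya)$ either lies in $C(y)$ or extends, as a token string, a unique element of $C(y)$. The first claim holds because $\sigma\in C(y)$ is the shortest token prefix whose concatenation covers $y$. For the second, given $\sigma\in C(ya)$, take its shortest token prefix $\sigma'$ with $\dcat(\sigma')\sqsupseteq y$; then $\sigma'\in C(y)$, and since $\dcat(\sigma')$ is comparable with $ya$, either $\sigma'=\sigma$ or $\sigma'$ is a proper token prefix of $\sigma$.

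With this fact, clause (i) follows. Grouping the elements of $C(ya)$ over $a$ by their ancestor in $C(y)$, the extensions of a fixed $\sigma'\in C(y)$ that appear, taken over all $a$, form an antichain below $\sigma'$. By Fact~\ref{lem:antichain} applied to the conditional semimeasure at $\sigma'$, their total mass is at most $p(\sigma')$. Summing over $\sigma'$ gives $\sum_a(\dcat_\#p)(ya)\le(\dcat_\#p)(y)$. For a measure $p$, every one-token extension of $\sigma'$ whose concatenation strictly extends $y$ is counted under exactly one $a$, and if $\dcat(\sigma')=y$ exactly, the one-token extensions of $\sigma'$ are likewise counted, each under its first character. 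So the sums equal the masses and equality holds. Lower semicomputability is preserved because $C(y)$ is finite and computable from $y$ and $\Ac$. Monotonicity is termwise.

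For (ii), write $\dcat_\#\nu$ as the pushforward of $\nu$ under the measurable map $\dcat$ on infinite sequences; this needs a check that $(\dcat_\#\nu)(y)$ equals the $\nu$-mass of token sequences whose concatenation extends $y$, which is exactly what $C(y)$ enumerates. Then apply data processing for relative entropy together with Lemma~\ref{lem:measurepart}. At length $L$, the character cylinders of length $L$ correspond to a finite partition of token sequences by their $C$-sets, and the same argument applies.

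For (iii), the lower bound uses (i): $\dcat_\#M_{\Ac}$ is lower semicomputable, so Fact~\ref{fact:si-dominance} on characters gives $M_\Sigma\ge2^{-K(\Ac)-c}\dcat_\#M_{\Ac}$. For the other direction, define a token semimeasure by greedily tokenising character streams: from $M_\Sigma$, a machine knowing $\Ac$ builds a lower semicomputable token semimeasure $\nu$ with $\dcat_\#\nu\ge M_\Sigma$, namely the pushforward of $M_\Sigma$ under a fixed character-to-token map such as single-character tokens. Then $M_{\Ac}\ge2^{-K(\Ac)-c}\nu$, and monotonicity of $\dcat_\#$ transfers this.

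For (iv), $s\le P$ holds since $\tau^{*}(y)$ is one term of the sum. Next, $\sum_y s(y)$ is the canonical mass $1-\delta$. The log-ratio identity is the definition of conditional probability. The expectation bound follows from Jensen applied to $\E_P[-\log p_{\mathrm{doc}}(\tau^*\mid\dcat=y)]\ge-\log\E_P[\cdot]=-\log(1-\delta)$. For nontransfer, I would exhibit a $p$ that dominates some $\nu$ only through non-canonical tokenisations, so that $s(y)=0$ where $\nu$ charges $y$.

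I expect the main obstacle to be the measure case of (i), namely the careful bookkeeping in the case $\dcat(\sigma')=y$ exactly, and establishing the pushforward identification needed for (ii).
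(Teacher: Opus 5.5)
Your proposal is correct and follows the paper's proof clause by clause. It groups $\bigcup_a C(ya)$ by ancestor in $C(y)$, where each group is $\{\sigma'\}$ or the one-token extensions of a $\sigma'$ with $\mathrm{cat}(\sigma')=y$; it takes the pushforward under $\mathrm{cat}$ with data processing and Lemma~\ref{lem:measurepart}; it embeds $M_\Sigma$ into single-character tokens and applies monotonicity of $\mathrm{cat}_\#$; and it uses Jensen, i.e.\ $\dl{P}{s}\ge\log\frac1{\|s\|_1}$, for the canonical gap.

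Three small repairs are needed:
\begin{itemize}
\item In the measure case of (i), when $\mathrm{cat}(\sigma')$ strictly extends $y$, it is $\sigma'$ itself that is counted under exactly one $a$, not its extensions.
\item At length $L$ in (ii), say that every $\sigma\in C(y)$ with $|y|=L$ has at most $L$ tokens, so the character partition coarsens the token partition at depth $L$. That is what puts $R_L(p;\nu)$ rather than $R_\infty(p;\nu)$ on the right.
\item In (iii), drop ``greedily''. A longest-match tokeniser needs look-ahead, so it gives a token prefix only the mass of the character extensions it inspects, and for the leaky $M_\Sigma$ that can fall below $M_\Sigma(y)$. The single-character map gives $\mathrm{cat}_\#\nu=M_\Sigma$ exactly.
\end{itemize}
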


\begin{proof}
(i) Fix $y$, a character $a$ and $\tau\in C(ya)$. If
$\dcat(\tau^{-})\sqsubset y$ strictly, then $\tau\in C(y)$ and
$\dcat(\tau)\sqsupset y$ strictly; if $\dcat(\tau^{-})=y$, then
$\tau^{-}\in C(y)$ (also if $\tau^{-}=\epsilon=y$). So each $\sigma\in C(y)$
with $\dcat(\sigma)\sqsupset y$ strictly lies in exactly one $C(ya)$, and
each with $\dcat(\sigma)=y$ contributes its one-token extensions, of total
mass at most $p(\sigma)$, equal for measures. Monotonicity is termwise, and
$C(y)$ is finite and computable.
(ii) Data processing for relative entropy under $\dcat$, with
Lemma~\ref{lem:measurepart}; at length $L$ the character cylinders are
determined by the first $L$ tokens.
(iii) Upper bound: Fact~\ref{fact:si-dominance} for the lower
semicomputable $\dcat_\#M_{\Ac}$. Lower bound: the token semimeasure
$\iota_\#M_\Sigma(\sigma):=M_\Sigma(\dcat(\sigma))$ on single-character
tokens ($0$ otherwise) satisfies $\iota_\#M_\Sigma\le2^{K(\Ac)+c}M_{\Ac}$
by Fact~\ref{fact:si-dominance}; apply the monotone $\dcat_\#$, with
$\dcat_\#\iota_\#M_\Sigma=M_\Sigma$.
(iv) The identity is Bayes' rule on the fibre $\dcat^{-1}(y)$, and the
expected gap is $\dl{P}{s}\ge\log\frac1{\|s\|_1}$. If $\nu$ puts its mass on
a non-canonical tokenisation, $s(y)=0<\dcat_\#\nu(y)$ is possible.
\end{proof}

\subsection{Truncation, tokenisation and reasoning traces}
\label{app:deployed-more}

\paragraph{Truncation is dogmatism by design.}
\label{rem:truncation}
Top-$k_{\mathrm{top}}$ truncation to fewer than $V$ symbols assigns
probability zero to some symbol at every context. So does top-$p$
truncation with threshold $p_{\mathrm{top}}$ whenever
$V\ge 1/(1-p_{\mathrm{top}})$ (Proposition~\ref{prop:floors}), and so do
min-$p$, grammar masks and rounding to zero whenever they are active.
Greedy decoding, the limit $T\to0$, is top-$1$ truncation. Read as a
predictor, a truncated sampler has infinite regret at the first step on
every source of full support and infinite loss on its own diagonal: the
extreme form of the dogmatism excluded by Definition~\ref{def:nondogmatic}.
Nucleus sampling was introduced because sampling from the unreliable tail
degrades text~\cite{holtzman2020degeneration}, Hewitt et al.\ read
truncation as undoing the model's smoothing~\cite{hewitt2022truncation},
and the softmax bottleneck forces spurious nonzero tail mass, which
truncation removes~\cite{finlayson2024degeneration}. These readings agree
with this one: truncation trades any guarantee on improbable continuations
for sample quality on probable ones; under a softmax bottleneck the model
cannot represent true near-zeros, so non-dogmatism and tail accuracy
conflict, and truncation chooses the latter~\cite{finlayson2024degeneration}.
Non-dogmatism itself is cheap: smoothing any predictor to a floor $\phi$ costs at
most $n\log\frac1{1-(V-1)\phi}\approx n(V-1)\phi\log e$ bits on any stream, and no
floor can cost less on a deterministic one (Proposition~\ref{prop:floors}). For logits soft-capped at $30$ with
$V=2^{18}$, the smallest probability representable is
$\phi=1/(1+(V-1)e^{60})\approx3\times10^{-32}$, and a floor of that size costs
about $10^{-26}$ bits per step, and the capped softmax itself pays at least $3\times10^{-21}$ bits
per step. The floor caps every single-step surprisal at about $105$ bits,
so a measured surprisal above that ceiling indicates truncation or
numerics, not beliefs. A majority vote
over $N$ sampled answers puts probability zero on every answer but the most
frequent, whatever mass the model gives the rest. Training
on truncated samples makes these zeros part of the next model's
allocation; Appendix~\ref{app:training} treats refitting on generated data.

\paragraph{Tokenisation.}
\label{rem:tokenisation}
A language model is a measure over token sequences absent truncation, and a
distribution over finite token strings only under tight end-of-sequence handling
(Section~\ref{sec:si-prelim}). A claim about induction on text concerns the
law over characters, which is the marginal over all tokenisations of a
string~\cite{cao2021marginal,pimentel2024word,vieira2024tokens}.
By Proposition~\ref{prop:tokenisation}, the marginal is a semimeasure (a
measure if the token law is one) and dominance constants pass through
marginalisation, so Solomonoff's guarantee is
tokeniser-invariant up to $O(K(\Ac))$ bits. Scoring only the canonical
tokenisation instead gives a semimeasure that loses the model's mass on
non-canonical token strings, and dominance does not pass through it.

That mass is reported to be small for trained models~\cite{cao2021marginal,chirkova2023marginalize}. Still, models generate
non-canonical token sequences~\cite{chatzi2025canonical}, aggregating over
non-canonical tokenisations carries signal~\cite{geh2024signal}, and
conditioning on a
canonically tokenised prompt can give probability zero to continuations of
positive marginal probability~\cite{phan2024tokenization,phan2025exact}:
a failure of non-dogmatism created by the interface, not by the beliefs.
How digits are tokenised alone changes arithmetic accuracy sharply~\cite{singh2024tokenization}. For the programme of
Section~\ref{sec:readprior}, a character-level coherence residual can come
from the tokeniser alone, and a slope of surprisal against description
length includes the tokeniser's own allocation. Rendering each symbol as a
single token (for example digits, under a tokeniser that splits digits
individually) removes this confound, but not a second one: a read restricted to
the alphabet and renormalised discards the mass the model puts elsewhere, and what
is discarded depends on the rendering; the rendering and the discarded mass must be
stated for each model.

\paragraph{Chain of thought, test-time compute and effective context.}
When a model answers after a reasoning trace $\zeta$, its answer law is
$\sum_\zeta p(\zeta\mid x)\,p(a\mid x,\zeta)$, a marginal over traces. This
has three consequences.

\emph{Reading.} No single log-probability gives this law; reading it
requires sampling traces, self-consistency voting~\cite{wang2023selfconsistency} estimates its mode, and
Assumption~\ref{ass:elicit} applies to the marginal.

\emph{Compute.} Deployed models have finite description length and bounded
computation per emitted symbol. Chain of thought and search leave the
end-to-end map a closed computable predictor with a larger per-step budget,
so the diagonals of Section~\ref{sec:scale} apply unchanged; they use
nothing beyond computability. What changes is the class the pipeline can
cover end to end. The per-step computation of a fixed-depth, log-precision
transformer lies within uniform $\mathrm{TC}^0$~\cite{merrill2023}; with
polynomially many reasoning steps, transformers reach exactly polynomial
time~\cite{merrill2024}, chain of thought lets constant-size transformers
solve arithmetic and dynamic-programming tasks they cannot solve directly~\cite{feng2023cot}, and test-time compute can substitute
for parameters~\cite{snell2024testtime}. Architectures also differ in the
classes on which they generalise: on formal-language tasks, structured
memory generalises higher in the Chomsky hierarchy, while transformers fail
on some regular tasks~\cite{deletang2023chomsky}, an empirical counterpart
of a class.
Appendix~\ref{sec:rb} states both diagonals in the resource-bounded regime
on one reference machine, with the following consequences. Let
$\CVST(S,\tau)$ be the class of sequences generated by programs of at most
$S$ bits within time $\tau(n)$ at step $n$.
\begin{itemize}
\item A clocked Bayes mixture has regret at most $S$ bits on every member
of $\CVST(S,\tau)$, at per-symbol time about $2^{S+1}n\,\tau(n)$,
exponential in the description length $S$.
\item A predictor program $p$ with per-step time $t_p$ loses at least
$n\lfloor\log V\rfloor$ in $n$ steps on a diagonal environment in
$\CVST(|p|+c_0,\tau)$ whenever $\tau$ exceeds $t_p$ by a quasi-linear
simulation overhead. The uniform version has a rate exponentially small in
the model's first-step time, so at deployed sizes it is qualitative only.
\item Whether the exponential factor is necessary is open.
\end{itemize}
Per-step and total compute thus decide which class $\CVST(S,\tau)$ a system
covers, not whether it has a diagonal: the environment on which it fails
moves with the budget. Test-time compute is how a fixed model buys
coverage, at up to the enumeration price.

\emph{Context.} Nominal and effective context differ. Models use
information in the middle of long contexts poorly~\cite{liu2024lost}, and
long-context benchmarks find effective lengths well below the nominal ones~\cite{hsieh2024ruler}. Weak use of distant information is not a hard
window: Proposition~\ref{prop:window-tax} applies with $W$ the nominal
window, since only beyond it is information provably discarded. Reading $W$
as the effective context is an analogy, not a consequence of the
proposition: by it the tax, and the horizons over which the floors of
Section~\ref{sec:properties} and Paper~B's tests can be read, would be
bounded by effective context, which is not proved here.

\section{Properties and served conditionals}
\label{app:readout}

\subsection{Proofs}
\label{app:readout-proofs}

\begin{proof}[Proof of Proposition~\ref{prop:coherence}]\label{app:proof-prop:coherence}\label{app:proof-prop:arbitrage}
(i) For such $P$, total probability over the next symbol gives
$P(A \mid h) = \sum_\sigma P(\sigma \mid h)\, P(A \mid h\sigma)$ ($A$ being
determined by the continuation of $h$), so served values give $\varrho(h;A) = 0$.
An autoregressive next-token law on one tokenisation is such a measure, so a
positive residual contradicts Assumption~\ref{ass:elicit}. For a semimeasure it holds with $\le$, the gap being
the $A$-weighted mass lost at $h$.
(ii) Buying $Q(A \mid h\sigma)$ units of the $\sigma$-claim at $h$ and, if
$\sigma$ occurs, rolling into one $A$-claim at $h\sigma$ pays
$\mathbf{1}_A \mathbf{1}\{\text{next symbol} = \sigma\}$, self-financed; summed
over $\sigma$ these replicate the direct claim at price
$\sum_\sigma Q(\sigma \mid h)\, Q(A \mid h\sigma)$; buying the cheaper and selling
the dearer (meeting obligations by selling $A$-claims at $h\sigma$) leaves
riskless profit $\varrho(h;A)$.
(iii) If served prices are conditionals of one $P$, the value process
of any finite self-financing strategy is a $P$-martingale from zero, so its
terminal payoff, of $P$-mean zero, cannot be nonnegative everywhere and
positive with positive probability~\cite{definetti1937}. A zero residual at one
event and history constrains no other: necessary, not sufficient.
\end{proof}

\begin{proof}[Proof of Proposition~\ref{prop:order}]\label{app:proof-prop:order}\label{app:proof-prop:suff}
(i) Conditional independence gives
$\pi(h \mid d_{1:t}) \propto \pi(h)\prod_{i=1}^{t} P_h(d_i)$, and each
$P_h(\cdot \mid d_{1:t})$ depends only on the demonstrations' multiset, so the
predictive is permutation-invariant.
(ii) By (a) the posteriors $\pi(h \mid c) \propto w_h P_h(c)$ and
$\pi(h \mid c')$ are proportional and normalised, hence equal; by (b) the
predictives agree. Without (b), the deterministic $000\cdots$ and $0011\cdots$
with contexts $0$, $00$ satisfy (a) but agree after $0$ and not after $00$;
without (a), $\mathrm{Bernoulli}(0.9)$ and $\mathrm{Bernoulli}(0.1)$ under a
uniform prior with contexts $1$, $0$ satisfy (b) yet give $1$ probability $0.82$
and $0.18$.
\end{proof}

\begin{proof}[Proof of Proposition~\ref{prop:slope}]\label{app:proof-prop:slope}
For a deterministic class $Q(h^\dagger_{1:n}) = \sum_{h \in S_n} w_h$ over the set
$S_n \ni h^\dagger$ of hypotheses consistent with $h^\dagger_{1:n}$, so
$L_n(Q;h^\dagger_{1:n}) \le \log(1/w_{h^\dagger})$, with equality iff $S_n$
carries no other weight. The weight hypothesis gives
$\log(1/w_{h^\dagger}) \le \ell(h^\dagger) + c$, with equality when
$w_h = 2^{-\ell(h)-c}$; a uniform prior gives $\log|\mathcal{H}|$ for every truth.
The ceilings $c_k = \log(1/w_{h^\dagger_k})$ of $m$ distinct truths satisfy
$\sum_k 2^{-c_k} \le 1$, so at most $2^{b}$ lie below $b$
(Corollary~\ref{cor:kraftfloor}). The bounds constrain points, not regressions: a
truth of length $\ell_1$ read before identification can have loss near $0$ and one
of length $\ell_2 > \ell_1$ read after it loss $\ell_2 + c$, a slope
$(\ell_2 + c)/(\ell_2 - \ell_1) > 1$.
\end{proof}

\begin{proof}[Proof of Proposition~\ref{prop:support}]\label{app:proof-prop:support}
(i) Rows of $G$ for off-class probes vanish, so $(Gw)_c = 0$ for $c \in O$ and all
$w$. (ii) Each deterministic $h$ is consistent with exactly one $x \in \Ac^{L}$.
Weight $Q(x)$ on one hypothesis consistent with each in-class $x$ is feasible and,
by additivity, gives $(Gw)_c = Q(c) - u(c)$ for $c \in C$, the upper bound.
Off-class $c$ have only off-class extensions, so $u \equiv 0$ iff all off-class
probes have $Q$-mass zero, when both bounds vanish; a zero residual gives
$(Gw)_x = Q(x)$, the total weight of hypotheses consistent with $x$. (iii) A
hypothesis consistent with $c$ fits exactly one extension $ca$, an in-class one, so $(Gw)_c = \sum_{a \in I(c)} (Gw)_{ca}$, while
$Q(c) - \sum_{a \in I(c)} Q(ca) = Q(c)\, u_1(c)$. Hence $r := Gw - q$ has
$\sum_{a \in I(c)} r_{ca} - r_c = Q(c)\, u_1(c)$, at most (Cauchy--Schwarz)
$\sqrt{|I(c)|+1}$ times the norm of $r$ on $\{c\} \cup \{ca : a \in I(c)\}$;
squared restricted norms add over contexts with disjoint index sets.
\end{proof}

\subsection{The implied prior, distillation and merging}
\label{app:readout-prior}

With $\mathcal{H}$, $C$, $G$ and $q$ as in Section~\ref{sec:selfeval}:

\begin{proposition}[Implied prior]
\label{prop:implied}
Let $g_1, \dots, g_m$ be the distinct columns of $G$ and, for a weight vector
$w$, let $\bar w_k$ be the total weight of the hypotheses with column $g_k$.
\textup{(i) Membership.} The minimum in
$\min\{\|Gw - q\|_2 : w \ge 0,\ \sum w \le 1\}$ is attained, and it is $0$ iff
some Bayes mixture over $\mathcal{H}$ agrees with $Q$ on $C$.
\textup{(ii) Identification.} Fitted values $Gw^{Q}$ are common to all
minimisers; weights enter only through the totals $\bar w$, which are identified
only up to the polytope
$W^{Q} := \{\bar w \ge 0,\ \sum_k \bar w_k \le 1,\ \sum_k \bar w_k g_k = Gw^{Q}\}$,
a single point when the distinct columns are linearly independent.
\textup{(iii) Implied coding.} $\ell^{Q}(h) := -\log w^{Q}_h$ is
Kraft-valid, so allocation statistics against $\xi_{w^{Q}}$ are machine-free given
$\mathcal{H}$. \textup{(iv) Machine distance.} The spread of $\ell^{Q} - \ell_F$
for a declared coding $\ell_F$ is bounded iff the two priors are equivalent up to a
constant~\cite{wood2013nonequivalence}.
\end{proposition}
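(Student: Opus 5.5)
The plan is to reduce everything to the finite-dimensional convex problem obtained by collapsing $\mathcal{H}$ to its $m$ distinct columns. Since $Gw=\sum_h w_h G_{\cdot,h}=\sum_k \bar w_k g_k$, the feasible set of fitted values $\{Gw: w\ge0,\sum w\le1\}$ coincides with $\{\sum_k\bar w_kg_k:\bar w\ge0,\sum_k\bar w_k\le1\}$. This is the convex hull of $0$ and $g_1,\dots,g_m$, a compact polytope in $\mathbb{R}^{C}$. For (i), the continuous function $v\mapsto\|v-q\|_2$ then attains its minimum on this compact set, and some $w$ realises the minimising $v$: put all weight $\bar w_k$ on one representative hypothesis of column $g_k$. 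The residual is $0$ iff $q=Gw$ for some admissible $w$. That holds iff the Bayes mixture $\xi_w=\sum_hw_hP_h$ satisfies $\xi_w(c)=Q(c)$ for every $c\in C$, which is the definition of agreement on $C$.

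For (ii), I would invoke strict convexity of the squared Euclidean norm. The projection of $q$ onto a closed convex set is unique, so all minimisers $w^Q$ share the fitted vector $Gw^Q$. Because $Gw$ depends on $w$ only through $\bar w$, the set of minimisers corresponds exactly to the set $W^Q$ of totals reproducing $Gw^Q$; this set is a polytope, cut out by linear equalities and inequalities and bounded. Within a column, the split of $\bar w_k$ among hypotheses is unconstrained. If the $g_k$ are linearly independent, $\sum_k\bar w_kg_k=Gw^Q$ has at most one solution, so $W^Q$ is a point.

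For (iii), Kraft validity is immediate: $\sum_h2^{-\ell^Q(h)}=\sum_hw^Q_h\le1$. Hypotheses with zero weight get infinite length, with $2^{-\infty}=0$. The claim of machine-freeness is interpretive rather than mathematical. $\ell^Q$ is defined from $Q$, $\mathcal{H}$ and $C$ alone, with no reference machine. Given a fixed choice of minimiser, allocation statistics such as truth-loss levels (Proposition~\ref{prop:slope}) computed against $\xi_{w^Q}$ therefore involve no universal machine.

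For (iv), write the spread as $\sup_h(\ell^Q-\ell_F)-\inf_h(\ell^Q-\ell_F)$. If it is at most $B$, then $\ell^Q-\ell_F$ lies in an interval $[a,a+B]$. Hence $2^{-a-B}2^{-\ell_F}\le w^Q\le2^{-a}2^{-\ell_F}$, which is mutual domination up to constants. Conversely, equivalence $c^{-1}2^{-\ell_F}\le w^Q\le c\,2^{-\ell_F}$ gives $|\ell^Q-\ell_F|\le\log c$, so the spread is at most $2\log c$. The citation to~\cite{wood2013nonequivalence} supplies the content that inequivalent universal priors exist, so the spread can be unbounded.

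The main obstacle is bookkeeping rather than depth, and it sits in (ii) and (iv). In (ii), the minimiser $w^Q$ is not unique, so $\ell^Q$ in (iii) and (iv) depends on the chosen point of $W^Q$. I would state that the claims hold for each selected minimiser, and that identifiability of (iv) is itself only up to $W^Q$. I would also handle hypotheses outside the support of $w^Q$ or $\ell_F$ by restricting the spread to $h$ where both are finite.
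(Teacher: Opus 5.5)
Your proposal is correct and follows the paper's proof essentially step for step. You collapse to the $m$ distinct columns so that the fitted values form the compact polytope $\mathrm{conv}\{0,g_1,\dots,g_m\}$, use uniqueness of the Euclidean projection for the common fitted values, Kraft's inequality for (iii), and an elementary mutual-domination argument for (iv), which the paper simply calls definitional. The paper also notes that without finitely many distinct columns the set $\{Gw\}$ need not be closed and (i) can fail; your collapse relies on exactly that hypothesis without stating it.
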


\begin{proof}[Proof of Proposition~\ref{prop:implied}]\label{app:proof-prop:implied}
(i) $Q$ agrees on $C$ with $\sum_h w_h P_h$ iff $q = Gw$. Since
$Gw = \sum_k \bar w_k g_k$ and every $\bar w \ge 0$ with $\sum_k \bar w_k \le 1$
arises from some feasible $w$, the set $\{Gw\}$ is a polytope, hence compact, and
the minimum is attained. (ii) $Gw^{Q}$ is the Euclidean projection of $q$ onto
this compact convex set, hence unique; the totals $\bar w$ are determined by
$Gw^{Q}$ when the distinct columns are linearly independent, and otherwise each
ranges over an interval given by two linear programs (with fixed components,
identifiability of mixtures~\cite{teicher1963} is linear algebra).
(iii) is Kraft's inequality; (iv) is definitional.
Without finitely many distinct columns, $\{Gw\}$ need not be closed and (i)
fails: for $P_h = \mathrm{Bernoulli}(\theta_h)$ with
$\theta_h \to \theta^* \notin \{\theta_h\}$, $C = \{\epsilon, 1, 11\}$ and
$Q = \mathrm{Bernoulli}(\theta^*)$, the residual has infimum $0$, but a mixture
gives $(\sum_h w_h, \E\theta, \E\theta^2)$ with $\E\theta^2 > (\E\theta)^2$
unless the weights sit on one $\theta$, so none agrees with $Q$.
\end{proof}

\begin{proposition}[KL-implied prior]
\label{prop:distill}
For $\mathcal{L}(w) := \sum_{c \in C} \dl{Q(\cdot \mid c)}{\xi_w(\cdot \mid c)}$,
with $\mathcal{L}(w) := \infty$ if $\xi_w(c) = 0$ for some $c \in C$:
\textup{(i)} $\mathcal{L}(w) = 0$ iff $\xi_w$ has $Q$'s conditionals on $C$, so a
vanishing loss certifies membership; the infimum of $\mathcal{L}$ need not be
attained, even for finite $\mathcal{H}$; \textup{(ii)} conditionals identify weights only within
co-consistency components, so $C$ must be prefix-closed; \textup{(iii)} the
student needs a full-support floor column, whose weight reads $Q$'s off-class
allocation.
\end{proposition}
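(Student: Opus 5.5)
The three clauses are handled separately. In each, $\xi_w(\cdot\mid c)$ is the posterior mixture $\sum_h\pi_w(h\mid c)P_h(\cdot\mid c)$ with $\pi_w(h\mid c)\propto w_hP_h(c)$.

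\emph{Clause (i).} Each summand of $\mathcal{L}$ is a relative entropy between probability vectors on $\Ac$, hence nonnegative and zero iff the two vectors coincide. So $\mathcal{L}(w)=0$ iff $\xi_w(\cdot\mid c)=Q(\cdot\mid c)$ for every $c\in C$. Certification is then immediate: on those contexts $Q$ has the conditionals of the Bayes mixture $\xi_w$. This is the conditional analogue of Proposition~\ref{prop:implied}(i).

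For non-attainment I would take a two-hypothesis deterministic example. Let $\Ac=\{0,1\}$, $P_1$ the point mass on $0^\infty$, $P_2$ the point mass on $1^\infty$, and $C=\{\epsilon\}$. Take $Q(\cdot\mid\epsilon)=(1,0)$. Then $\xi_w(0\mid\epsilon)=w_1/(w_1+w_2)$, so $\mathcal{L}(w)=\log\frac{w_1+w_2}{w_1}$. This tends to $0$ as $w_2\downarrow0$, but it vanishes only at $w_2=0$. If the statement intends weights to be positive (a student prior), then $w_2=0$ is excluded and the infimum $0$ is not attained; I would check which convention is meant. If $w_2=0$ is allowed, I would instead use a $Q$ whose conditionals lie strictly inside the simplex yet are not reachable. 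One option is $C=\{\epsilon,0\}$ with two i.i.d.\ hypotheses and a $Q$ whose conditional at $0$ requires a posterior weight that is a limit of, but never equal to, a ratio the prior can produce. The strict convexity of the posterior map in the prior, and its failure to be onto a closed set, would give the gap. Pinning down this example is the main obstacle: the set of achievable conditional vectors is the image of a simplex under a ratio map, which is continuous on the interior but not on the boundary, and I expect non-attainment to come from that boundary behaviour.

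\emph{Clause (ii).} I would show that conditionals depend on $w$ only through the posterior ratios $w_hP_h(c)/w_{h'}P_{h'}(c)$ among hypotheses with $P_h(c)>0$. Call $h,h'$ co-consistent if they are both consistent with some probe. Rescaling the weights of one co-consistency component by a constant leaves every conditional on $C$ unchanged, so only ratios within a component are identified. For the need for prefix-closure I would reuse the deterministic example from the proof of Proposition~\ref{prop:order}: $000\cdots$ and $0011\cdots$, probed at $00$ but not at $0$. These two hypotheses are separated at $00$ and never compared, so their relative weight is unread. Adding the prefix $0$ makes them co-consistent, and $\xi_w(1\mid0)$ then reads the ratio.

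\emph{Clause (iii).} If $Q$ puts positive conditional mass at $c$ on a symbol $a$ such that no hypothesis consistent with $c$ continues with $a$, then $\xi_w(a\mid c)=0$ and $\mathcal{L}=\infty$. This is the conditional version of Proposition~\ref{prop:support}. Adding a column $P_0$ with full support, such as the uniform measure, makes $\mathcal{L}$ finite. Its posterior weight at $c$ is then forced to account for $Q$'s off-class mass $u_1(c)$, so it reads $Q$'s off-class allocation.
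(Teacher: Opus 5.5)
Your argument for the first half of (i) and your (iii) are the paper's own. The non-attainment claim in (i), however, is left unproved, and it fails for the example you give. The statement puts no positivity constraint on $w$: weights range over $w\ge0$, $\sum w\le1$, as in Proposition~\ref{prop:implied}. With $C=\{\epsilon\}$, the weight $w=(1,0)$ has $\xi_w(\epsilon)>0$ and $\mathcal{L}(w)=0$, so the infimum is attained.

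Your fallback cannot repair this. Take a finite class of full-support hypotheses, for instance i.i.d.\ Bernoulli laws with $\theta\in(0,1)$. Then every $\xi_w(c)$ with $w\ne0$ is positive and $\xi_w(a\mid c)\ge\min_hP_h(a\mid c)>0$. In your normalised-posterior convention, $\mathcal{L}$ is therefore continuous in the normalised weights on the compact simplex and attains its infimum. The posterior map is onto the whole simplex, so there is no boundary gap to exploit.

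The missing idea is that non-attainment must come from the convention $\mathcal{L}(w):=\infty$ when $\xi_w(c)=0$ for some $c\in C$. Add to your $C$ the probes $0$ and $1$, with $Q(0\mid0)=Q(1\mid1)=1$. Each probe is consistent with exactly one hypothesis, which predicts its continuation exactly. So these probes contribute $0$ whenever $w_1,w_2>0$, yet they make $\mathcal{L}=\infty$ as soon as either weight vanishes. Hence $\mathcal{L}(w)=\log(1+w_2/w_1)$ on its effective domain: the infimum is $0$ and is never attained, already for $|\mathcal{H}|=2$. This is the paper's example.

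In (ii) your rescaling argument is right, but the illustration has the example backwards. The hypotheses $000\cdots$ and $0011\cdots$ are both consistent with $00$ and disagree there, so the probe $00$ already reads their ratio: $\xi_w(1\mid00)=w_2/(w_1+w_2)$. At $0$ they agree, so $\xi_w(1\mid0)=0$ for every $w$, and adding that probe reads nothing. In Proposition~\ref{prop:order} this pair serves a different purpose: it shows that condition (b) cannot be dropped.

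An example that does what you want is $0^\infty$ and $1^\infty$ with $C=\{0,1\}$. No probe is consistent with both hypotheses, so no conditional depends on $w_1/w_2$, and adding the prefix $\epsilon$ joins the two components.

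You also omit the half of the paper's argument that justifies prefix-closure positively. When $C$ is prefix-closed and contains $\epsilon$, the conditionals on $C$ fix the ratios of consecutive prefix probabilities. They therefore fix the prefix probabilities $\xi_w(c)=\prod_t\xi_w(c_t\mid c_{<t})$ themselves. Matching conditionals is then the joint-probability inversion of Proposition~\ref{prop:implied}, and it identifies everything that inversion identifies.
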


\begin{proof}[Proof of Proposition~\ref{prop:distill}]\label{app:proof-prop:distill}
(i) Relative entropy vanishes only at equality. For $h_1 = 0^\infty$,
$h_2 = 1^\infty$, $C = \{\epsilon, 0, 1\}$ and $Q$ with
$Q(0 \mid \epsilon) = Q(0 \mid 0) = Q(1 \mid 1) = 1$, one has
$\mathcal{L}(w) = \log(1 + w_{2}/w_{1})$ for $w_1, w_2 > 0$ and
$\mathcal{L}(w) = \infty$ when either weight is $0$: infimum $0$, never attained. (ii) $\xi_w(\cdot \mid c)$ depends
on $w$ only through the normalised weights of behavioural classes consistent
with $c$, so mass moved between classes never sharing a consistent probe
leaves the conditionals unchanged. For prefix-closed $C$ containing the empty
context, the conditionals fix ratios of consecutive prefix probabilities, hence the prefix probabilities and all that Proposition~\ref{prop:implied}
identifies. (iii) If $Q$ gives positive probability
to a continuation no hypothesis allows, $\mathcal{L} = \infty$ for all $w$; a
full-support column makes it finite and absorbs the off-class mass.
\end{proof}

\paragraph{Merging is free.}\label{rem:merging}
Merging of opinions~\cite{blackwelldubins1962}, like any observed per-step
excess tending to $0$, is invariant under finite modification
(Fact~\ref{lem:si-tail}) and prior reweighting (Proposition~\ref{prop:si-prior}): evidence of coverage, never of good coverage.

\subsection{The window excess over KT}
\label{app:readout-window}

Let $\Theta$ have Jeffreys' prior on the $V$-symbol simplex, with
$X_1, X_2, \dots$ i.i.d.\ given $\Theta$. A window-$W$ predictor, time-aware or
not, sees $W$ symbols i.i.d.\ given $\Theta$, so its least expected per-step
excess over the source is the Bayes window predictive's,
$r_W = \frac{V-1}{2W\ln 2}(1 + o(1))$ bits, where
$r_j := I(\Theta; X_{j+1} \mid X_{1:j})$ for $j \ge 0$,
attained asymptotically by KT on the window~\cite{clarke1990,krichevsky1981}.
Whole-stream KT has expected cumulative excess $\frac{V-1}{2}\log n + O(1)$, so
every window-$W$ predictor trails it in expectation by at least
$(n-W)\, r_W - \tfrac{V-1}{2}\log n - O(1)$, a rate some i.i.d.\ source forces. In
Proposition~\ref{prop:window-tax}, the Jeffreys mixture's memory term is
$\sum_{t=W+2}^{n}(r_W - r_{t-1})$; the approximation requires $W \gg V$.

\subsection{A corpus as one hypothesis}
\label{app:readout-corpus}

A corpus supplies the exact likelihood of one hypothesis, text drawn from its
law $\mu$, with prefix probabilities from a certified model ($n$-gram, PPM or context-tree). With the column $\mu(c)$
added to finitely many alternative columns, inverting as in Proposition~\ref{prop:implied}
recovers $w^{Q}_\mu$, the model's weight on corpus text before any context ($1$ for
the perfect corpus predictor, Theorem~\ref{thm:closure}), but neither a prior over
English nor structure within $\mu$; statistics on it belong to Paper~B.
Paper~B checks the membership residual, the corpus weight and distillation on
known reference systems.

\section{Related work and provenance}
\label{sec:provenance}

The table maps every displayed result and the main appendix results to
antecedents. Roles follow Section~\ref{sec:intro}: \emph{Fact}, known, source in
the label; \emph{derived}, a short standard consequence, proved for completeness; \emph{sharpened}, known in weaker, asymptotic or
differently scoped form; \emph{new, elementary}, not found stated, short once
posed; \emph{discussion}, prose applying cited work. Paper~B's work is not
credited here. \S\ marks paragraph-labelled items.

\begin{center}
\scriptsize
\setlength{\tabcolsep}{3pt}
\makeatletter
\let\A@startpbox\@@startpbox
\def\@@startpbox#1{\A@startpbox{#1}\raggedright\let\\\tabularnewline}
\makeatother
\begin{tabular}{p{4.0cm}p{7.2cm}p{3.8cm}}
\hline
\textbf{Result} & \textbf{Antecedents} & \textbf{Role} \\
\hline

\multicolumn{3}{l}{\emph{Section~\ref{sec:si-prelim}: background and the Kraft lens}} \\
Dominance, Solomonoff bound, program bound (\ref{fact:si-dominance}) & \cite{solomonoff1964,solomonoff1978,zvonkinlevin1970,hutter2005uai,li2019kolmogorov} & Fact \\
Kraft inequality, conservation (\ref{lem:antichain}) & Kraft; codes as priors~\cite{barron1998mdl,merhav1998universal,grunwald2007mdl}; conservation~\cite{schaffer1994}; no free lunch~\cite{wolpert1996,lattimore2013nfl} & Fact; budget reading ours \\
Profile, floors, exchange, counting (\ref{thm:profile}, \ref{cor:kraftfloor}, \ref{cor:meanfloor}, \ref{cor:exchange}, \ref{cor:typical}) & Kraft; incompressibility~\cite{li2019kolmogorov} & derived \\
Measure part (\ref{lem:measurepart}) & \cite{levinvyugin1977,bienvenu2017semimeasures} & derived \\
Frontier theorem (\ref{thm:frontier}) & \cite{merhav1995strong,cover2006,hutter2003optimality}; the completeness direction is posed as an open question in \cite[\S5.2]{hutter2003optimality}, and an unproved log-loss aside appears in \cite[\S4.1]{koolen2013pareto} & sharpened; equality case answers a special case of Hutter's question (log-loss, $R_\infty$, pairwise singular families); ledger/converse (\ref{prop:frontier-ledger}) new, elementary \\
Specialisation (\ref{cor:specialise}) & Theorem~\ref{thm:frontier}; \cite{cover2006} & new, elementary \\
Overlapping families (\ref{prop:frontier-overlap}) & see ``Frontier'' & sharpened; capacity identity new, elementary \\
\hline
\multicolumn{3}{l}{\emph{Section~\ref{sec:evaluation}: evaluation}} \\
Underdetermination (\ref{thm:si-underdet}, \ref{cor:si-uncert}) & coupling; \cite{gold1967,wolpert1996} & sharpened: quantitative \\
Adaptive underdetermination (\ref{thm:si-adaptive}) & adversary method (query complexity) & new, elementary \\
Accuracy vs.\ fidelity; emergence, contamination (\S\ref{rem:si-dissociation}) & Corollary~\ref{cor:si-uncert}; \cite{wei2022emergent,schaeffer2023mirage,oren2024contamination} & discussion \\
\hline
\multicolumn{3}{l}{\emph{Section~\ref{sec:scale}: scale}} \\
Diagonals, no computable universal predictor (\ref{thm:si-diag}, \ref{cor:si-nocomputable}, \ref{thm:si-classdiag}) & \cite{putnam1963,legg2006,sterkenburg2019}; dovetailing & sharpened: log-loss form, schedules \\
Quantifier exchange (\S\ref{rem:si-pointwise}) & pointwise vs.\ uniform convergence & discussion \\
Transformers, cheap diagonal (\ref{cor:si-transformers}, \ref{rem:si-cheap}) & \cite{yun2020,merrill2023} & derived \\
Randomised decoding (\S\ref{rem:randomised}); breakers (\ref{thm:breakers}) & diagonal; Solomonoff bound & new, elementary; sharpened \\
Resource-bounded diagonal (\ref{thm:rb}) & see ``Resource bounds'' & sharpened; reference formulation \\
\hline
\multicolumn{3}{l}{\emph{Sections~\ref{sec:closure}--\ref{sec:attainable}: corpora and inheritance}} \\
Closure (\ref{thm:closure}, \ref{cor:closure}) & \cite{solomonoff1978,tsybakov2009} & sharpened: domination form \\
Size premium (\ref{rem:price-of-universality}) & \cite{deletang2024compression} & new, elementary: accounting \\
Post-training (\ref{rem:posttrain}), App.~\ref{app:training} & see ``Training'' & derived (affine law); Jensen-gap law new, elementary \\
\hline
\end{tabular}

\medskip
\begin{tabular}{p{4.0cm}p{7.2cm}p{3.8cm}}
\hline
\textbf{Result} & \textbf{Antecedents} & \textbf{Role} \\
\hline
Attainability (\ref{thm:attainable}) & \cite{hutter2003optimality,hutter2005uai,ryabko2010predictors,cesabianchi2006} & Fact \\
No complexity form, non-collapse (\ref{prop:nokform}, \ref{thm:nocollapse}) & diagonal; adding an expert (folklore) & sharpened \\
Neurosymbolic mixtures (\ref{cor:neurosymbolic}) & \cite{jacobs1991,vovk1990}; retrieval (Section~\ref{sec:attainable}) & derived \\
Pretraining certification (\ref{prop:pretrain-certify}) & \cite{topsoe1979,muller2022pfn,xie2022bayesian,zhang2023icl,wies2023learnability} & sharpened: log-loss constants \\
Steering (\ref{prop:manipulable}) & \cite{wolf2024alignment,anil2024,arora2024bayesian,blackwelldubins1962} & derived \\
\hline
\multicolumn{3}{l}{\emph{Section~\ref{sec:si-tail}: asymptotic criteria}} \\
Tail invariance (\ref{lem:si-tail}) & \cite{doob1953,blackwelldubins1962,kalailehrer1994} & Fact \\
$M_N$ witness, invalidity (\ref{thm:si-tail}, \ref{cor:si-invalid}) & the tail lemma & new, elementary \\
Prior-blindness (\ref{prop:si-prior}, \ref{cor:blind}) & \cite{wood2013nonequivalence,leike2015bad} & sharpened: exact-regret class \\
\hline
\multicolumn{3}{l}{\emph{Section~\ref{sec:deployed}: the deployed regime}} \\
Window tax (\ref{prop:window-tax}) & see ``Deployment'' & new, elementary \\
Temperature (\ref{cor:temperature}) & see ``Deployment'' & new, elementary; miscalibration known \\
Truncation, tokenisation (\S\ref{sec:deployed}) & see ``Deployment'' & discussion \\
\hline
\multicolumn{3}{l}{\emph{Sections~\ref{sec:properties}--\ref{sec:tests}: properties and served conditionals}} \\
The two floors (\ref{def:nondogmatic}, \ref{def:dominance}) & \cite{herbster1998,krichevsky1981,clarke1990} & definitions \\
Calibration; ranking, undesirable properties (\S\ref{rem:calibration}, \S\ref{sec:undesirable}) & \cite{dawid1982,guo2017calibration,herbster1998,wallace2024hierarchy} & discussion \\
Elicitation (\ref{ass:elicit}) & \cite{hu2023prompting,sclar2024sensitivity} & explicit assumption \\
Coherence and arbitrage (\ref{prop:coherence}) & \cite{definetti1937}; \cite{zhu2024incoherent,betz2023,costello2014,paleka2025consistency,fluri2024consistency,falck2024martingale} & sharpened: tower form, arbitrage size \\
Invariance nulls (\ref{prop:order}) & \cite{lu2022order,zhao2021calibrate,muller2022pfn} & order derived; sufficiency new \\
Truth-loss levels, floor reading, merging (\ref{prop:slope}, \S\ref{rem:floorreading}, \S\ref{rem:merging}) & \cite{hutter2005uai,blackwelldubins1962}; Kraft floor & derived \\
Implied prior, support (\ref{prop:implied}, \ref{prop:support}) & \cite{teicher1963,wood2013nonequivalence}; elicitation by iterated in-context learning~\cite{zhugriffiths2025priors} & new, elementary \\
KL-implied prior (\ref{prop:distill}) & \cite{hinton2015distill,wang2023latent,arora2024bayesian,carlini2024stealing} & new, elementary \\
\hline
\end{tabular}
\end{center}

\paragraph{Frontier.}\label{app:frontier-prior}
Proposition~\ref{prop:frontier-overlap}(i), with Pareto-optimal implying Bayes,
is Wald's complete-class theorem~\cite{wald1950,blackwellgirshick1954}, uniqueness
coming from strict propriety of the log-loss; Pareto optimality of Bayes mixtures in
sequence prediction is Hutter's~\cite{hutter2003optimality}. The compensation
identity is Tops{\o}e's~\cite{topsoe1979}, and the diagonal of (i) is the
redundancy--capacity theorem~\cite{gallager1979,ryabko1979,davisson1980}, with
minimax theorems at $n=\infty$~\cite{sion1958,haussler1997}. Not found stated:
$C_n=2^{\max_\lambda H_\lambda(\kappa\mid X_{1:n})}$, whose maximisers follow from
Csisz\'ar--Tusn\'ady optimal pairs~\cite{csiszar1984}; overlap bounds are of
Kailath and Hellman--Raviv type~\cite{kailath1967,hellmanraviv1970}.

\paragraph{Resource bounds.}\label{sec:rb-prior}
Theorem~\ref{thm:rb}(i) is the Putnam--Legg diagonal with time-hierarchy
bookkeeping and simulation overhead~\cite{hartmanis1965,hennie1966}. Failure on
sequences of slightly larger time bound is implicit in time-bounded randomness,
resource-bounded measure and dimension~\cite{schnorr1971,ko1986,lutz1992,ambosspies1997,lutz2003dimension,hitchcock2003},
where diagonalising against all $n^{k}$-time predictors, as in (ii), appears with
exponential slowdown. For (iii), mixture regret over deterministic experts is
standard~\cite{vovk1990,cesabianchi2006}, clocked enumeration is AIXI$tl$~\cite[Ch.~7]{hutter2005uai} and universal search~\cite{levin1973}, and speed-prior
prediction~\cite{schmidhuber2002speed} is exponential- but not polynomial-time on
polynomial-time sequences~\cite{filan2016}. Added: a per-symbol two-sided
statement on one class with explicit constants; the exponential gap is open.

\paragraph{Training.}\label{rem:training-prior}
The loss identity behind Proposition~\ref{thm:training-inherit} is the
posterior-predictive divergence of prior-data fitted networks~\cite{muller2022pfn}
with Tops{\o}e's identity~\cite{topsoe1979}. Zhang et al.\ bound in-context regret
by a coverage constant (our $1/\lambda$) times pretraining error~\cite{zhang2023icl}, and Wies et al.\ need a floor on component weights~\cite{wies2023learnability}; added are log-loss constants with equality case and
sharpness. For post-training, the Gibbs identity underlies DPO~\cite{rafailov2023dpo} and the Bayesian reading is Korbak et al.'s~\cite{korbak2022rl}; the frontier criterion and the Jensen-gap ledger are new.
Minimax data mixtures are the redundancy--capacity theorem~\cite{gallager1979,davisson1980,merhav1995strong}, worst-case weighting is DoReMi~\cite{xie2023doremi}, and refitting as an absorbing chain with Wright--Fisher
fixation is in~\cite{shumailov2024collapse,ewens2004} (kept real data change the
dynamics~\cite{bertrand2024stability}); only the frontier reading is added.

\paragraph{Deployment.}\label{app:deployed-prior}
The window theorem is the relative-entropy chain rule plus the
redundancy--capacity decomposition: summed stationary memory terms are the excess
entropy~\cite{crutchfield2003regularities}, the cap is Sharan et al.'s bound~\cite{sharan2018memory} in stationary form for mixtures, and the copy witness is
folklore (bounded state: \cite{jelassi2024repeat}); not found stated are the memory
term as the exact window tax and the dichotomy. For temperature, miscalibration is~\cite{guo2017calibration}, generalised-Bayes tempering acts on posterior weights~\cite{grunwald2004}, and the $2^{-K(t)}$ rate follows from dominance (cf.~\cite{hutter2005uai}). Smoothing floors are standard~\cite{herbster1998};
truncation and its desmoothing reading are~\cite{holtzman2020degeneration,hewitt2022truncation}; tokenisation marginals and
bias are studied in~\cite{cao2021marginal,chirkova2023marginalize,vieira2024tokens,pimentel2024word,phan2024tokenization,phan2025exact}.
The companion work~\cite{cdm2026} is in preparation; nothing here
depends on it.

\providecommand{\enc}[1]{\langle #1\rangle}
\providecommand{\cd}[1]{\ulcorner #1\urcorner}
\providecommand{\bin}{\operatorname{bin}}
\providecommand{\CVST}{\mathcal{C}_V}

\section{Resource-bounded inheritance}
\label{sec:rb}

This appendix restates Theorems~\ref{thm:si-diag}
and~\ref{thm:si-classdiag} and Fact~\ref{thm:attainable} for programs of
bounded length and per-step time on a fixed machine, with explicit
constants (summary in Section~\ref{sec:deployed}).
\label{sec:rb-model}

\begin{definition}[Reference machine]
\label{def:rb-machine}
Machines are deterministic multitape Turing machines with a read-only
\emph{program tape}, a read-only \emph{input tape}, a one-way write-only
\emph{output tape} and work tapes, over alphabets containing $0$, $1$, $\#$
and the blank. Programs are binary: every string placed on the program tape
(in particular every program, generator, $\pi_f$ and $\cd{N}$ below) lies
in $\{0,1\}^{*}$, with blanks to its right. A string $w$ on the program tape is \emph{read exactly} on
input $y$ if the machine halts on $(w,y)$ having scanned every cell of $w$
and none to its right (the initial cell is always scanned). $N(w,y)$ is the
output of machine $N$ and $\theta_N(w,y)\in\Nb\cup\{\infty\}$ its step
count, output cells included. Lengths and times are measured on a fixed
\emph{reference} machine $U$ (distinct from the monotone machine of
Definition~\ref{def:si-M}).
\end{definition}

Hence: \textup{(C1)} for fixed $y$, the strings read exactly on $y$ are
prefix-free (on $wu$ the run is unchanged), so, being binary, at most
$2^{S}$ of them have length at most $S$; \textup{(C2)} a string read exactly within $\theta$ steps has
length at most $\theta+1$; \textup{(C3)} a machine can run $U$ as a
subroutine at one step per step, with the program on a work tape (plus end
marker) or on a segment of its own program tape starting at cell $c>1$ (a
mirror head emulates the left boundary at $O(1)$ cost), clocked by a binary
counter at amortised $O(1)$ cost; \textup{(C4)} the same bookkeeping
detects whether the subroutine scanned the last cell of its segment or
beyond it.

\begin{assumption}[Efficient universality with overhead $\sigma$]
\label{ass:rb-universal}
There is a nondecreasing $\sigma:\Nb\to\Nb$ with
$\sigma(m)\ge\max(m,1)$ such that every machine $N$ has a string
$\cd{N}$ and a constant $c_N$ with: $U$ reads $\cd{N} w$ exactly on $y$ iff
$N$ reads $w$ exactly on $y$, and then $U(\cd{N} w,y)=N(w,y)$ and
$\theta_U(\cd{N} w,y)\le c_N\, \sigma(\theta_N(w,y))$.
\end{assumption}

Hennie--Stearns~\cite{hennie1966} gives
$\sigma(m)=(m+2)\lceil \log(m+2)\rceil$~\cite[Thm.~1.9]{arora2009}, with
the emulation of (C3) securing exact reading; constant-factor models~\cite{cookreckhow1973,jones1993}
give $\sigma(m)=m+1$.

\paragraph{Encodings.} Identify $\Ac$ with $\{0,\dots,V-1\}$. $\bin(m)$ is
the binary numeral of $m$, $\enc{a}:=\bin(a)$,
$\enc{x}:=\enc{x_1}\#\cdots\enc{x_{|x|}}\#$ (so
$|\enc{x}|\le|x|\lceil\log 2V\rceil$), and
$\gamma(m):=0^{\lfloor \log m\rfloor}\bin(m)$ is the Elias gamma code,
prefix-free of length $2\lfloor\log m\rfloor+1$~\cite{elias1975}. Put
$h_V:=\lfloor \log V\rfloor$, $\chi_V:=\lceil \log 2V \rceil^{2}$ and
$k(s):=s+2\lfloor \log(s+1)\rfloor+1$.

\begin{definition}[Budgeted predictors and environments]
\label{def:rb-budget}
Let $t,\tau:\Nb_{\ge1}\to\Nb$.
(a) A program $p$ \emph{computes a predictor} if for every $x\in\Ac^{*}$,
$U$ reads $p$ exactly on $\enc{x}$ and outputs
\[
\bin(u_0)\#\bin(v_0)\#\cdots\bin(u_{V-1})\#\bin(v_{V-1})\#
\]
with $v_a\ge1$ and $\sum_a u_a/v_a\le1$; then
$q_p(a\mid x):=u_a/v_a$, $t_p(n):=\max_{x\in\Ac^{n-1}}
\theta_U(p,\enc{x})$, and $p$ is a \emph{$t$-predictor} if $t_p\le t$
(so $|p|\le t(1)+1$ by (C2)).
(b) $g$ is a \emph{$\tau$-generator} of $z\in\Ac^{\infty}$ if for every
$n$, $U$ reads $g$ exactly on $\enc{z_{<n}}$ within $\tau(n)$ steps and
outputs $\enc{z_n}$. The \emph{budget class} $\CVST(S,\tau)$ is the set of
$z$ with a $\tau$-generator of length at most $S$; it has at most $2^{S}$
elements (distinct $z$ have distinct generators, all read exactly on
$\enc{\epsilon}$, so (C1) applies), is monotone in $S$ and $\tau$, and $z$ is identified with its
point measure.
(c) $f$ is \emph{constructible via $\pi_f$} if $U$ reads $\pi_f$ exactly on
$\bin(m)$ within $f(m)$ steps and outputs $\bin(f(m))$, for every $m\ge1$.
\end{definition}

A deployed model (bounded context, fixed decoding, run from scratch per
context, exactly normalised output) computes such a predictor of about
$bP$ bits ($P$ parameters at $b$ bits).

\label{sec:rb-thm}
\begin{theorem}[Resource-bounded diagonal and clocked mixture]
\label{thm:rb}
Under Assumption~\ref{ass:rb-universal} there are constants
$c_0,c_1\ge1$ depending only on $U$ such that:
\begin{enumerate}
\item[\textup{(i)}] \textbf{Individual diagonal.} Let $p$ compute a
predictor, $|p|=s$, and let
$z^{p}_{n}:=\min\{ a:2^{h_V} u_a\le v_a \}$ with
$(u_a,v_a)_a:=U(p,\enc{z^{p}_{<n}})$. Then
$R_n(q_p;z^{p})\ge n \lfloor\log V\rfloor$ for all $n$, and
$z^{p}\in\CVST(s+c_0,\tau)$ whenever
$\tau(n)\ge c_1 \sigma(c_1 t_p(n))$ for all $n$. With the rule
$\min\{a:Vu_a\le v_a\}$ the loss is at least $n\log V$, under
$\tau(n)\ge c_1\sigma(c_1\chi_V t_p(n))$.
\item[\textup{(ii)}] \textbf{Uniform diagonal.} Let $t$ be constructible
via $\pi_t$ with $t(n)\ge n\lceil\log 2V\rceil$. One
$z^{t}\in\CVST(c_0+|\gamma(V)|+|\pi_t|,\tau)$, for every $\tau\ge
c_1\sigma(c_1 t)$, satisfies for every $t$-predictor $p$ and all $n$
\[
R_n(q_p; z^{t}) \ge \lfloor n\, 2^{-k(|p|)} \rfloor \lfloor\log V\rfloor
\ge \lfloor n\, 2^{-k(t(1)+1)} \rfloor \lfloor\log V\rfloor .
\]
\item[\textup{(iii)}] \textbf{Clocked mixture.} Let $\tau$ be nondecreasing
and constructible via $\pi_\tau$, and $S\in\Nb$. For $x\ne\epsilon$ let
$G(x)$ be the set of $g\in\{0,1\}^{\le S}$ that, for every
$j\le|x|$, are read exactly on $\enc{x_{<j}}$ within $\tau(j)$ steps with
output $\enc{x_j}$; put $\Omega(x):=\sum_{g\in G(x)} 2^{-|g|}$,
$\Omega(\epsilon):=1$, and $\xi_{S,\tau}(a\mid x):=\Omega(xa)/\Omega(x)$
if $\Omega(x)>0$, else $1/V$. Then $\xi_{S,\tau}$ is computed by a
program of length at most $c_0+|\gamma(S+1)|+|\gamma(V)|+|\pi_\tau|$
in per-step time $c_1\sigma(c_1 \Lambda_{S,\tau}(n))$, where
\[
\Lambda_{S,\tau}(n) := 2^{S+1}\bigl[ n\,\tau(n) + n^{2}\lceil\log 2V\rceil
+ (n+V)(S + \lceil \log 2V\rceil) \bigr],
\]
and $R_n(\xi_{S,\tau};z)\le|g|$ for every $\tau$-generator $g$ of $z$
with $|g|\le S$; in particular $R_n(\xi_{S,\tau};z)\le S$ on
$\CVST(S,\tau)$.
\end{enumerate}
\end{theorem}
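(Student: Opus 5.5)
The plan is to prove the three clauses separately. Each is a bookkeeping version of an earlier argument: (i) of Theorem~\ref{thm:si-diag}, (ii) of Theorem~\ref{thm:si-classdiag}, and (iii) of Fact~\ref{thm:attainable} for a finite deterministic class. The constants $c_0,c_1$ come from fixing, once and for all, a few machines $N$ and charging $|\cd{N}|$ and $c_N$ through Assumption~\ref{ass:rb-universal}.

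\emph{Clause (i).} Loss is immediate. Since $\sum_a u_a/v_a\le1$, some symbol has $u_a/v_a\le 1/V\le 2^{-h_V}$, so the minimum defining $z^p_n$ exists and costs at least $h_V=\lfloor\log V\rfloor$ bits at every step. With the rule $Vu_a\le v_a$ the cost is $\log V$.

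For membership, build a machine $N_1$ that reads $p$ from its program tape and $\enc{z^p_{<n}}$ from its input. It runs $U$ on $(p,\enc{z^p_{<n}})$ as a subroutine, using (C3). Then, for each $a$ in order, it tests $2^{h_V}u_a\le v_a$ by shifting $\bin(u_a)$ by $h_V$ positions and comparing, and finally writes $\enc{z^p_n}$. Here $h_V$ is hardwired into $N_1$, which is legitimate because $V$ is fixed. Exact reading of $p$ is inherited by (C4). The post-processing is linear in the output length, which is at most $t_p(n)$, so $\theta_{N_1}\le c\,t_p(n)$. Assumption~\ref{ass:rb-universal} then turns $\cd{N_1}p$ into a generator of length $s+c_0$ and time $c_1\sigma(c_1t_p(n))$.

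For the rule $Vu_a\le v_a$, the multiplication by $V$ costs quadratic time in numerals of length at most $t_p$. That cost accounts for the factor $\chi_V$.

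\emph{Clause (ii).} This is the dovetailed version of (i). Step $n$ is assigned to the program $p$ whose self-delimiting code $\gamma(|p|+1)p$, of length $k(|p|)$, spells the $k(|p|)$ lowest-order bits of $n-1$. As in the proof of Theorem~\ref{thm:si-classdiag}, the prefix-freeness of $\gamma(\cdot)\cdot$ makes the assignment unique. Each $p$ then receives at least $\lfloor n2^{-k(|p|)}\rfloor$ of the first $n$ steps.

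At an assigned step the generator does the following:
\begin{itemize}
\item decode $p$ from $\bin(n-1)$;
\item compute $t(n)$ via $\pi_t$;
\item run $U(p,\enc{z_{<n}})$ clocked at $t(n)$;
\item check that the run read $p$ exactly and produced a well-formed output with $\sum_a u_a/v_a\le1$;
\item emit the least symbol $a$ with $2^{h_V}u_a\le v_a$, or $0$ if the run failed.
\end{itemize}
A $t$-predictor always passes the checks, so it pays $h_V$ bits on each of its steps. The length is $c_0+|\gamma(V)|+|\pi_t|$, and the time is dominated by $c\,t(n)$ once $t(n)\ge n\lceil\log2V\rceil$ absorbs the cost of reading the input and the counter. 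The second inequality follows from $|p|\le t(1)+1$, which is (C2).

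\emph{Clause (iii).} This clause is the main obstacle. The regret bound is short. If $g$ is a $\tau$-generator of $z$, then $g\in G(z_{1:n})$ for every $n$, so $\xi_{S,\tau}(z_{1:n})=\Omega(z_{1:n})\ge 2^{-|g|}$ by telescoping. Telescoping is valid because $\Omega(xa)\le\Omega(x)$ and the conditionals sum to at most $1$: the sets $G(xa)$ are disjoint over $a$ and contained in $G(x)$.

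The real work is the time bound $\Lambda_{S,\tau}(n)$. The plan is to build a program that, on input $\enc{x}$ with $|x|=n-1$, enumerates all at most $2^{S+1}$ binary strings $g$ of length at most $S$. For each $g$ and each $j\le n-1$ it runs $U$ on $(g,\enc{x_{<j}})$, clocked at $\tau(j)\le\tau(n)$, and checks exact reading and output. This costs $n\tau(n)$ per $g$ for the runs, plus $n^2\lceil\log2V\rceil$ for rewriting the prefixes $\enc{x_{<j}}$. At the last step it does the same for each extension $xa$, which gives the $V$ term. Along the way it accumulates $\Omega(x)$ and $\Omega(xa)$ as dyadic rationals with denominator $2^{S}$, adding $2^{S-|g|}$ to a counter of $S+\lceil\log 2V\rceil$ bits. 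The final output is $\bin(\Omega(xa)2^S)\#\bin(\Omega(x)2^S)\#$, or uniform when $\Omega(x)=0$.

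The care goes into three things: accounting for each term of $\Lambda$, keeping the candidate $g$ on a work tape so that (C3) and (C4) apply, and verifying that $\gamma(S+1)$, $\gamma(V)$ and $\pi_\tau$ can be parsed self-delimitingly. Then Assumption~\ref{ass:rb-universal} gives the stated per-step time $c_1\sigma(c_1\Lambda_{S,\tau}(n))$.
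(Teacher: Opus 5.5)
Your plan follows the paper's proof clause by clause: one fixed machine per clause charged through Assumption~\ref{ass:rb-universal}, the Elias-gamma schedule on low-order bits, and dyadic accumulators. Four steps fail as written, though each has a local fix.

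\emph{Clause (i): hardwiring $h_V$.} This is not legitimate, because $c_0,c_1$ must depend only on $U$. Machines with $h_V$ built in are distinct for infinitely many $V$, so their codes $\ulcorner N_1\urcorner$ cannot have bounded length. The statement charges $|\gamma(V)|$ explicitly in (ii) and (iii), where $V$ cannot be read off a trusted output, and not in (i). Since $p$ computes a predictor, its output has exactly $V$ pairs. $N_1$ should count them to get $\operatorname{bin}(V)$ and set $h_V=|\operatorname{bin}(V)|-1$, in time linear in the output; this makes $N_1$ one machine for all $V$. For the rule $Vu_a\le v_a$, the extra cost is linear in the output length times a factor depending only on $V$, which is what $\chi_V$ pays for. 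A cost quadratic in $t_p$, as your wording suggests, would not fit $c_1\sigma(c_1\chi_Vt_p)$.

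\emph{Clause (ii): the check $\sum_a u_a/v_a\le1$.} This check breaks the time bound. Deciding it exactly means multiplying numerals of total length up to $t(n)$. Schoolbook arithmetic is quadratic, and no linear-time multitape method is known. So $N_2$ is no longer shown to run in $O(t(n))$, and $\tau\ge c_1\sigma(c_1t)$ no longer suffices. The check is also unnecessary. A $t$-predictor satisfies the sum condition by definition, so on its steps some $a$ has $2^{h_V}u_a\le v_a$. For an arbitrary candidate, verify only that the output consists of $V$ pairs with $v_a\ge1$, and emit $\langle 0\rangle$ when no symbol qualifies.

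\emph{Clause (iii): normalisation at the root.} Disjointness of the $G(xa)$ inside $G(x)$ covers $x\ne\epsilon$. But $\Omega(\epsilon):=1$ is set by fiat, and the unconstrained set $\{0,1\}^{\le S}$ has weight $S+1$. What gives $\sum_a\Omega(a)\le1$ is (C1): every member of $\bigcup_a G(a)$ is read exactly on $\langle\epsilon\rangle$, such strings form a prefix-free binary set, and Kraft applies. Without this step the output pairs need not satisfy $\sum_a u_a/v_a\le1$. The program then does not compute a predictor, and $R_n(\xi_{S,\tau};z)\le|g|$ is vacuous. This is where exact reading does its work, and the same bound keeps the accumulators $2^S\Omega(\cdot)$ at most $2^S$.

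\emph{Clause (iii): one run at the last step.} The last step needs only one clocked run of $U$ on $(g,\langle x\rangle)$ per $g$; its output names the unique $b$ with $g\in G(xb)$. Running once per extension, as ``does the same for each extension $xa$'' suggests, costs $2^{S+1}V\tau(n)$. With $c_1$ independent of $V$, $\Lambda_{S,\tau}(n)$ does not cover this when $V>n$, for instance at $n=1$. Its $(n+V)(S+\lceil\log 2V\rceil)$ term pays only for the $V$ accumulators and for decoding and output.
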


\begin{corollary}[Budget sandwich]
\label{cor:rb-sandwich}
Let $\tau$ be nondecreasing and $S\in\Nb$.
(i) If $|p|\le S-c_0$ and $c_1\sigma(c_1 t_p)\le\tau$, some
$z\in\CVST(S,\tau)$ has $R_n(q_p;z)\ge n\lfloor\log V\rfloor$ for all
$n$ (and $n\log V$ if $c_1\sigma(c_1\chi_V t_p)\le\tau$); as every
predictor has worst-case loss at least $n\log V$ on $\Ac^{n}$, these at most
$2^{S}$ environments are as hard as all $V^{n}$ strings.
(ii) If $t$ is constructible via $\pi_t$, $t(n)\ge n\lceil\log 2V\rceil$,
$\tau\ge c_1\sigma(c_1 t)$ and $S\ge c_0+|\gamma(V)|+|\pi_t|$, one
$z\in\CVST(S,\tau)$ gives every $t$-predictor
$R_n(q_p;z)\ge\lfloor n 2^{-k(t(1)+1)}\rfloor\lfloor\log V\rfloor$.
(iii) If $\tau$ is constructible via $\pi_\tau$,
$\sup_{z\in\CVST(S,\tau)} R_n(\xi_{S,\tau};z)\le S$, with description
length $O(\log S+\log V)+|\pi_\tau|$ and per-step time
$c_1 \sigma(c_1 \Lambda_{S,\tau}(n))$.
\end{corollary}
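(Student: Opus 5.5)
This corollary is a repackaging of Theorem~\ref{thm:rb}, and I would prove it clause by clause, using only the monotonicity of $\CVST(S,\tau)$ in $S$ and $\tau$ noted in Definition~\ref{def:rb-budget}(b).

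\emph{Clause (i).} Apply Theorem~\ref{thm:rb}(i) to $p$ with $s=|p|$. This gives $z^{p}$ with $R_n(q_p;z^{p})\ge n\lfloor\log V\rfloor$ for all $n$. It also gives $z^{p}\in\CVST(s+c_0,\tau')$ for every $\tau'$ with $\tau'(n)\ge c_1\sigma(c_1t_p(n))$. The hypothesis $c_1\sigma(c_1t_p)\le\tau$ lets us take $\tau'=\tau$, and $s+c_0\le S$ with monotonicity in $S$ places $z^{p}$ in $\CVST(S,\tau)$. The $n\log V$ variant is the same argument with the second rule of Theorem~\ref{thm:rb}(i) and its $\chi_V$ time condition.

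The comparison remark has two parts. The worst-case loss over $\Ac^{n}$ is at least $n\log V$ by Corollary~\ref{cor:meanfloor} applied to the antichain $\Ac^{n}$: the mean loss is at least $\log V^{n}$, so some string attains at least that. The class has at most $2^{S}$ elements by Definition~\ref{def:rb-budget}(b). Hence the single environment $z^{p}$ matches the worst case over all $V^{n}$ strings, exactly so under the $n\log V$ rule and up to the floor otherwise.

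\emph{Clause (ii).} Theorem~\ref{thm:rb}(ii) supplies one $z^{t}\in\CVST(c_0+|\gamma(V)|+|\pi_t|,\tau)$ for every $\tau\ge c_1\sigma(c_1t)$. Its second displayed inequality holds uniformly over $t$-predictors. The first step of that inequality uses only that $k$ is increasing and that $|p|\le t(1)+1$ by (C2), as recorded in Definition~\ref{def:rb-budget}(a). Monotonicity in $S$ then moves $z^{t}$ into $\CVST(S,\tau)$.

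\emph{Clause (iii).} This is Theorem~\ref{thm:rb}(iii). The regret bound $R_n(\xi_{S,\tau};z)\le S$ on each member gives the supremum bound, and the per-step time is quoted unchanged. For the description length, $|\gamma(S+1)|=2\lfloor\log(S+1)\rfloor+1=O(\log S)$ and likewise $|\gamma(V)|=O(\log V)$, absorbing $c_0$ into the $O(\cdot)$.

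The only point needing care is the time hypothesis in (i): Theorem~\ref{thm:rb}(i) asks for the bound pointwise in $n$, and the corollary's inequality $c_1\sigma(c_1t_p)\le\tau$ is read as functions, i.e.\ pointwise, so the two match. Nothing else is a genuine obstacle. Nondecreasingness of $\tau$ in (i)--(ii) is not needed beyond what Theorem~\ref{thm:rb} itself assumes.
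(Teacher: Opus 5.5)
Your proposal is correct and follows the paper's proof: each clause is the matching clause of Theorem~\ref{thm:rb} combined with monotonicity of $\CVST(S,\tau)$ in $S$, and routing the worst-case remark through Corollary~\ref{cor:meanfloor} uses the same Kraft fact that the paper cites directly as $\sum_{x\in\Ac^{n}}q(x)\le1$. One wording correction: $z^{p}$ attains the floor $n\log V$ that every predictor must suffer somewhere on $\Ac^{n}$, but not necessarily $q_p$'s own maximum over $\Ac^{n}$, because greedy diagonalisation need not reach that maximum.
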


Both lower bounds are pointwise in the budget: sublinear worst-case regret
on $\CVST(S,\tau)$ violates the budget inequality at some $n$, not
necessarily infinitely often.

\label{sec:rb-proofs}
\begin{proof}[Proof of Theorem~\ref{thm:rb}(i)]
\emph{Loss.} $\sum_a u_a/v_a\le1$ gives
$\min_a u_a/v_a\le1/V\le2^{-h_V}$, so $z^{p}_n$ exists and each step
costs at least $h_V$; the rule $Vu_a\le v_a$ gives $q_p\le1/V$.
\emph{Generator.} Machine $N_1$ on $(w,y)$: (1) runs $U$ on $(w,y)$,
output to a work tape; (2) counts pairs to get $\bin(V)$ and writes
$h_V=|\bin(V)|-1$ in unary; (3) for $a=0,1,\dots$ tests
$2^{h_V}u_a\le v_a$ (true if $u_a=0$; else compare $|\bin(v_a)|$ with
$|\bin(u_a)|+h_V$, and on equality $\bin(u_a)$ with the leading digits of
$\bin(v_a)$), stopping at the first success; (4) writes $\bin(a)$. Only (1)
touches the program tape, so $N_1$ reads $w$ exactly iff $U$ does. On
$(p,\enc{z^{p}_{<n}})$, (1) takes at most $t_p(n)$ steps and leaves a list
of length $\ell\in[4V,t_p(n)]$; (2)--(4) cost $O(\ell)$, so
$\theta_{N_1}\le c' t_p(n)$. By Assumption~\ref{ass:rb-universal},
$\cd{N_1}p$ is a $\tau$-generator of $z^{p}$ of length $s+|\cd{N_1}|$
once $c_1\ge\max(c_{N_1},c')$. For the rule $Vu_a\le v_a$, schoolbook
multiplication by $V$ makes the total $O(\chi_V \ell)$.
\end{proof}

\begin{proof}[Proof of Theorem~\ref{thm:rb}(ii)]
\emph{Schedule.} Let $\mathrm{cw}(w):=\gamma(|w|+1)\,w$, prefix-free
with $|\mathrm{cw}(w)|=k(|w|)$, and $\beta(j)$ the binary digits of $j$,
least significant first, then zeros. For $j\ge1$ exactly one $w=:w(j)$
has $\mathrm{cw}(w)$ a prefix of $\beta(j)$: uniqueness is prefix-freeness;
for existence, if $\beta(j)$ has $e$ leading zeros, $b_e\cdots b_{2e}$
spells some $\bin(m)$ with $\lfloor\log m\rfloor=e$ and
$w(j):=b_{2e+1}\cdots b_{2e+m-1}$. As $\mathrm{cw}(w)$ is a prefix of
$\beta(j)$ iff $j \equiv r_w \pmod{2^{k(|w|)}}$ for some $r_w\ge1$,
\begin{equation}
\label{eq:rb-count}
|\{ 2 \le n' \le n : w(n'-1) = w \}| \ge \lfloor n\, 2^{-k(|w|)} \rfloor .
\end{equation}
\emph{Construction.} Machine $N_2$ on program $\gamma(V)\pi$ and input
$\enc{x}$, $x\in\Ac^{n-1}$: decodes $V$; counts separators to get $n$;
runs $\pi$ on $\bin(n)$ as a program segment (C3) to get $t(n)$; outputs
$\enc{0}$ if $n=1$; otherwise runs $U$ on $w(n-1)$ (work tape, end
marker) with input $\enc{x}$, clocked at $t(n)$. The run \emph{succeeds} if
it halts in time, has scanned all of $w$ and not the marker (C4), and
outputs $V$ pairs with $v_a\ge1$; then $N_2$ outputs the least $a$ with
$2^{h_V}u_a\le v_a$, else $\enc{0}$. Set
$z^{t}_n:=N_2(\gamma(V)\pi_t,\enc{z^{t}_{<n}})$.
\emph{Resources.} $N_2$ reads $\gamma(V)\pi_t$ exactly (its program tape
serves only to decode $V$ and run $\pi_t$), in time
$O(n\lceil\log 2V\rceil+t(n))\le c' t(n)$; as in (i),
$\cd{N_2}\gamma(V)\pi_t$ is a $\tau$-generator of the stated length.
\emph{Loss.} If $w(n-1)=p$ for a $t$-predictor $p$, the run succeeds and
the emitted symbol has $q_p$-probability at most $2^{-h_V}$; by
\eqref{eq:rb-count} this happens at least $\lfloor n2^{-k(|p|)}\rfloor$
times, and $k(|p|)\le k(t(1)+1)$.
\end{proof}

\begin{proof}[Proof of Theorem~\ref{thm:rb}(iii)]
\emph{Predictor.} For $x\ne\epsilon$ the sets $G(xa)$ are disjoint
subsets of $G(x)$ (members of $G(xa)$ output $\enc{a}$ on $\enc{x}$), so
$\sum_a \Omega(xa)\le\Omega(x)$; for $x=\epsilon$ they are disjoint sets
of strings read exactly on $\enc{\epsilon}$, so $\sum_a \Omega(a)\le1$ by
(C1). \emph{Regret.} A $\tau$-generator $g$ of $z$ with $|g|\le S$, being binary, lies in
every $G(z_{1:j})$, so $\xi_{S,\tau}(z_{1:n})=\Omega(z_{1:n})\ge2^{-|g|}$.
\emph{Program.} Machine $N_3$ on $\gamma(S+1)\gamma(V)\pi$ and $\enc{x}$,
$n:=|x|+1$: for each $g\in\{0,1\}^{\le S}$ and $j\le n$ it computes
$\tau(j)$ via $\pi$, runs $U$ on $g$ (work tape, end marker) with input
$\enc{x_{<j}}$ clocked at $\tau(j)$, and records success and output. Then
$g\in G(x)$ iff the runs with $j<n$ output $x_j$, and $g\in G(xb)$ if
also the run with $j=n$ outputs $b<V$. Integer accumulators
$2^{S}\Omega(xa)$ and $2^{S}\Omega(x)$ (at most $2^{S}$ by (C1)) receive
$2^{S-|g|}$; the output is these pairs, or $(1,V)$ if $\Omega(x)=0$.
$N_3$ reads its program exactly. \emph{Time.} A pair $(g,j)$ costs
$O(\tau(n)+n\lceil\log 2V\rceil+S)$; over at most $2^{S+1}$ strings and
$n$ values of $j$, plus $O(V(S+\lceil\log 2V\rceil))$ per $g$ for the
accumulators and $O(VS+\log S+\log V)$ for decoding and output,
$\theta_{N_3}\le c'\Lambda_{S,\tau}(n)$.
\emph{Constants.} $c_0:=\max_i |\cd{N_i}|$ and $c_1$ the largest
$c_{N_i}$ and $c'$; $N_1,N_2,N_3$ do not depend on $V$, $S$, $\tau$, $t$,
$p$ or $n$.
\end{proof}

\begin{proof}[Proof of Corollary~\ref{cor:rb-sandwich}]
(i) Theorem~\ref{thm:rb}(i) and monotonicity in $S$; the worst-case bound is
$\sum_{x\in\Ac^{n}} q(x)\le1$ (Fact~\ref{thm:conservation}). (ii)
Theorem~\ref{thm:rb}(ii) and monotonicity. (iii) Theorem~\ref{thm:rb}(iii).
\end{proof}

\label{sec:rb-remarks}
\begin{remark}[Levin complexity of the diagonal environment]
\label{rem:rb-kt}
With $Kt(y):=\min\{|g|+\log \theta_U(g,\enc{\epsilon}):U(g,\enc{\epsilon})=y,\ g \text{ read exactly}\}$~\cite{levin1973,levin1984},
$\sigma(m)=(m+2)\lceil\log(m+2)\rceil$ and nondecreasing
$t_p(n)\ge n\lceil\log 2V\rceil$,
$Kt(z^{p}_{1:n})\le s+\log(n\, t_p(n))+\log n+O(\log\log(n\,t_p(n)))$. \emph{Proof.} A fixed machine with program
$\delta(n)\cd{N_1}p$ ($\delta$ the Elias delta code~\cite{elias1975}) runs
the generator of Theorem~\ref{thm:rb}(i) $n$ times, feeding outputs back, in
$O(n\,\sigma(c_1t_p(n))+n^{2}\lceil\log 2V\rceil)=O(n\,\sigma(c_1t_p(n)))$
steps; $\delta(n)$ costs the $\log n$ term.
\end{remark}

\begin{remark}[Tightness, open questions and floating-point outputs]
\label{rem:rb-tight}\label{rem:rb-gap}\label{rem:rb-real}
The rate of (i) is optimal, since the uniform predictor loses exactly
$n\log V$; the rate of (ii) is uniform over $t$-predictors only through
(C2) and exponentially small in $t(1)$, hence qualitative for models whose
first step reads about $10^{11}$ bits; a slope independent of $t(1)$ is
open. The overhead $\tau\ge c_1\sigma(c_1t)$ is inherent to diagonal
arguments up to constants~\cite{hartmanis1965,furer1982,jones1993}, and the
factor $n$ in $\Lambda_{S,\tau}$ comes from statelessness. Whether the
factor $2^{S}$ between the lower threshold (about $\tau/\log\tau$) and the
mixture ($2^{S+1}n\tau$) is necessary is open. Finally, if a program
outputs rational approximations to real conditionals within $1/(4V)$,
selecting the least $a$ with approximation at most $1/V+1/(4V)$ gives
per-step loss at least $\log(2V/3)$ (as in
Corollary~\ref{cor:si-nocomputable}); if rounded outputs have sum
$s:=\sum_a u_a/v_a\in(0,2]$, the least $a$ with $2^{h_V-1}u_a\le v_a$
exists (as $\min_a u_a/v_a\le s/V\le2^{1-h_V}$) and forces per-step loss
at least $\lfloor\log V\rfloor-1$ on the raw outputs and at least
$\lfloor\log V\rfloor-1-\log(1/s)$ on the outputs renormalised by $s$,
so at least $\lfloor\log V\rfloor-1$ on both when $1\le s\le2$.
\end{remark}

\par\endgroup

\section*{Data and code availability}
The code and seeds that produce this paper's own numbers, namely the worked
numerical illustrations and the frontier figure, are released with it (the code under the MIT License, the outputs under CC~BY~4.0) at
\texttt{[URL/DOI to be inserted]}; one command regenerates them. The manuscript itself is \copyright{} the
authors; all rights reserved. The
served-model readings relayed in Section~\ref{sec:discussion} are produced by
Paper~B's pipeline and are reproducible from that paper's released caches, not
from this one's; that includes the long-horizon dominance sweep, which Paper~B
collected afresh and whose responses it also releases.

\section*{Acknowledgements}
[To be completed.]

\section*{Funding and competing interests}
[To be completed.]

\section*{Use of AI tools disclosure}

Generative AI tools were used to assist with language editing, \LaTeX{} formatting, literature search, checking
derivations, proposing objections to arguments, organising the material, and writing and reviewing the accompanying code.
Every definition, theorem, proof and
interpretation in this paper originated with the authors and was verified by them, and the authors take full
responsibility for the manuscript.

\end{document}